\title{Beyond the Low-Degree Algorithm: \\Mixtures of Subcubes and Their Applications}
\author{Sitan Chen\thanks{This work was supported in part by an MIT Presidential Fellowship, a Paul and Daisy Soros Fellowship, and NSF CAREER Award CCF-1453261 and NSF Large CCF-1565235.} \\MIT  \and Ankur Moitra\thanks{This work was supported in part by NSF CAREER Award CCF-1453261, NSF Large CCF-1565235, a David and Lucile Packard Fellowship, and an Alfred P. Sloan Fellowship.}\\ MIT}
\newcommand{\R}{\mathbb{R}}
\newcommand{\m}{{\bf m}}
\newcommand{\M}{{\bf M}}
\newcommand{\N}{{\bf N}}
\newcommand{\D}{\mathcal{D}}
\newcommand{\DD}{{\bf D}}
\newcommand{\diag}{\text{diag}}
\newcommand{\poly}{\text{poly}}
\newcommand{\PP}{\mathcal{R}}
\newcommand{\B}{\mathcal{B}}
\newcommand{\sig}{\sigma^{\infty}_{\min}}
\newcommand{\STAT}{\text{STAT}}
\newcommand{\VSTAT}{\text{VSTAT}}
\newcommand{\Var}{\text{Var}}
\newcommand{\SD}{\text{SD}}
\newcommand{\J}{{\bf J}}
\newcommand{\EE}{{\bf C}}
\newcommand{\one}{\vec{\bf 1}}
\newcommand{\A}{\mathbf{A}}
\newcommand{\Bi}{\mathbf{B_i}}
\newcommand{\esamp}{\epsilon_{\text{samp}}}
\newtheorem{Alg}{Algorithm}
\newcounter{cnstcnt}
\newcommand{\sitan}[1]{#1}
\newcommand{\myalg}[3]{
\medskip
\footnotesize{
\fbox{
\parbox{5.5in}{
\begin{Alg}\label{#1}{\sc #2}\\
{\tt #3}
\end{Alg}
}}
\medskip
}}
\begin{document}

\hypersetup{pageanchor=false}
\begin{titlepage}
\maketitle

\begin{abstract}
\normalsize
We introduce the problem of learning mixtures of $k$ subcubes over $\{0,1\}^n$, which contains many classic learning theory problems as a special case (and is itself a special case of others). We give a surprising $n^{O(\log k)}$-time learning algorithm based on higher-order multilinear moments. It is not possible to learn the parameters because the same distribution can be represented by quite different models. Instead, we develop a framework for reasoning about how multilinear moments can pinpoint essential features of the mixture, like the number of components.

We also give applications of our algorithm to learning decision trees with stochastic transitions (which also capture interesting scenarios where the transitions are deterministic but there are latent variables). Using our algorithm for learning mixtures of subcubes, we can approximate the Bayes optimal classifier within additive error $\epsilon$ on $k$-leaf decision trees with at most $s$ stochastic transitions on any root-to-leaf path in $n^{O(s + \log k)}\cdot\poly(1/\epsilon)$ time. In this stochastic setting, the classic $n^{O(\log k)}\cdot\poly(1/\epsilon)$-time algorithms of \cite{rivest,blum1992rank,ehrenfeucht1989learning} for learning decision trees with zero stochastic transitions break down because they are fundamentally Occam algorithms. 
The low-degree algorithm \cite{linial1993constant} is able to get a constant factor approximation to the optimal error (again within an additive $\epsilon$), while running in time $n^{O(s + \log(k/\epsilon))}$. The quasipolynomial dependence on $1/\epsilon$ is inherent to the low-degree approach because the degree needs to grow as the target accuracy decreases, which is undesirable when $\epsilon$ is small. 

In contrast, as we will show, mixtures of $k$ subcubes are {\em uniquely} determined by their $2 \log k$ order moments and hence provide a useful abstraction for simultaneously achieving the polynomial dependence on $1/\epsilon$ of the classic Occam algorithms for decision trees and the flexibility of the low-degree algorithm in being able to accommodate stochastic transitions. Using our multilinear moment techniques, we also give the first improved upper and lower bounds since the work of \cite{fos} for the related but harder problem of learning mixtures of binary product distributions.
\end{abstract}

\thispagestyle{empty}
\end{titlepage}

\hypersetup{pageanchor=true}

\thispagestyle{empty}

\newpage

\tableofcontents

\setcounter{page}{0}

\newpage

%!TEX root = fullpaper.tex

\section{Introduction}

\subsection{Background}

In this paper, we introduce and study the following natural problem: A {\em mixture of subcubes} is a distribution on the \sitan{Boolean} hypercube where each sample is drawn as follows:

\begin{enumerate}

\item[(1)] There are $k$ mixing weights $\pi^1, \pi^2, \cdots, \pi^k$ and centers $\mu^1, \mu^2, \cdots, \mu^k \in \{0, 1/2, 1\}^n$. 

\item[(2)] We choose a center proportional to its mixing weight and then sample a point uniformly at random from its corresponding subcube. More precisely, if we choose the $i^{th}$ center, each coordinate is independent and the $j^{th}$ coordinate has expectation $\mu_i^j$. 

\end{enumerate}

Our goal is to give efficient algorithms for estimating the distribution in the PAC-style model of Kearns et al. \cite{kearns1994learnability}. It is not always possible to learn the parameters because \sitan{two} mixtures of subcubes\footnote{Even with different numbers of components.} can give rise to identical distributions. Instead, the goal is to output a distribution that is close to the true distribution in total variation distance. 

The problem of learning mixtures of subcubes contains various classic problems in computational learning theory as a special case, and is itself a special case of others. For example, for any $k$-leaf decision tree, the uniform distribution on assignments that satisfy it is a mixture of $k$ subcubes. Likewise, for any function that depends on just $j$ variables (a $j$-junta), the uniform distribution on assignments that satisfy it is a mixture of $2^j$-subcubes. And when we allow the centers $\mu^i$ to instead be in the set $[0, 1]^n$ it becomes the problem of learning mixtures of binary product distributions. 

Each of these problems has a long history of study. Ehrenfeucht and Haussler \cite{ehrenfeucht1989learning} gave an $n^{O(\log k)}$ time algorithm for learning $k$-leaf decision trees. Blum \cite{blum1992rank} showed that $k$-leaf decision trees can be represented as a $\log k$-width decision list and Rivest \cite{rivest} gave an algorithm for learning $\ell$-width decision lists in time $n^{O(\ell)}$. Mossel, O'Donnell and Servedio \cite{mossel2003learning} gave an $n^{j \frac{\omega}{\omega +1}}$ time algorithm for learning $j$-juntas where $\omega$ is the exponent for fast matrix multiplication. Valiant \cite{valiant2012finding} gave an improved algorithm that runs in $n^{j \frac{\omega}{4}}$ time. Freund and Mansour \cite{freund1999estimating} gave the first algorithm for learning mixtures of two product distributions. Feldman, O'Donnell and Servedio \cite{fos} gave an $n^{O(k^3)}$ time algorithm for learning mixtures of $k$ product distributions.

What makes the problem of learning mixtures of subcubes an interesting compromise between expressive power and structure is that it admits surprisingly efficient learning algorithms. The main result in our paper is an $n^{O(\log k)}$ time algorithm for learning mixtures of subcubes. We also give applications of our algorithm to learning $k$-leaf decision trees with at most $s$ stochastic transitions on any root-to-leaf path (which also capture interesting scenarios where the transitions are deterministic but there are latent variables). \sitan{Using our algorithm for learning mixtures of subcubes}, we can approximate the error of the Bayes optimal classifier within an additive $\epsilon$ in $n^{O(s + \log k)}\cdot\poly(1/\epsilon)$ time with an inverse polynomial dependence on the accuracy parameter $\epsilon$. \sitan{The classic algorithms of \cite{rivest,blum1992rank,ehrenfeucht1989learning} for learning decision trees with zero stochastic transitions achieve this runtime, but because they are Occam algorithms, they break down in the presence of stochastic transitions. Alternatively, the low-degree algorithm \cite{linial1993constant} is able to get a constant factor approximation to the optimal error (again within an additive $\epsilon$), while running in time $n^{O(s + \log(k/\epsilon))}$. The quasipolynomial dependence on $1/\epsilon$ is inherent to the low-degree approach because the degree needs to grow as the target accuracy decreases, which is undesirable when $\epsilon$ is small as a function of $k$.}

\sitan{In contrast, we show that mixtures of $k$ subcubes are uniquely identified by their $2 \log k$ order moments. Ultimately our algorithm for learning mixtures of subcubes will allow us to simultaneously match the polynomial dependence on $1/\epsilon$ of Occam algorithms and achieve the flexibility of the low-degree algorithm in being able to accommodate stochastic transitions. We emphasize that proving identifiability from $2 \log k$ order moments is only a first step in a much more technical argument: There are many subtleties about how we can algorithmically exploit the structure of these moments to solve our learning problem.}

\subsection{Our Results and Techniques}
\label{subsec:ourresults}

Our main result is an $n^{O(\log k)}$ time algorithm for learning mixtures of subcubes. 

\begin{thm}
Let $\epsilon, \delta > 0$ be given and let $\mathcal{D}$ be a mixture of $k$ subcubes. There is an algorithm that given samples from $\mathcal{D}$ runs in time $O_k(n^{O(\log k)} (1/\epsilon)^{O(1)} \log 1/\delta)$ and outputs a mixture $\mathcal{D}'$ of $f(k)$ subcubes that satisfies $d_{TV}(\mathcal{D}, \mathcal{D}') \leq \epsilon$ with probability at least $1-\delta$. Moreover the sample complexity is $O_k((\log n/\epsilon)^{O(1)} \log 1/\delta)$.\footnote{Throughout, the hidden constant depending on $k$ will be $O(k^{k^3})$, which we have made no attempt to optimize.} \label{thm:main012}
\end{thm}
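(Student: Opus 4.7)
The plan is to combine three ingredients: empirical moment estimation, an algorithmic low-rank recovery procedure, and a robust identifiability result translating moment closeness into total variation closeness.

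\textbf{Step 1 (Empirical moments).} Draw $N = O_k\bigl((\log n/\epsilon)^{O(1)}\log(1/\delta)\bigr)$ samples and, for every $S \subseteq [n]$ with $|S| \le 2\log k$, compute the empirical average $\widehat{m}_S$ of $\prod_{i \in S} x_i$. Since each such monomial is bounded in $[0,1]$, Hoeffding together with a union bound over the $\binom{n}{\le 2\log k} = n^{O(\log k)}$ relevant subsets ensures that, with probability at least $1-\delta$, all $\widehat{m}_S$ approximate the true moments $m_S = \mathbb{E}_{x\sim\D}\bigl[\prod_{i\in S}x_i\bigr]$ within an error $\esamp$ that we can make polynomially small in $\epsilon/k$.

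\textbf{Step 2 (Low-rank algorithmic recovery).} The central structural observation is that the multilinear moments of a mixture of $k$ subcubes admit a rank-$k$ factorization: indexing rows and columns by subsets $S, T \subseteq [n]$ of size at most $\log k$, the matrix $\M$ with $\M_{S,T} = \sum_{j=1}^{k}\pi^j \prod_{i \in S \cup T}\mu^j_i$ decomposes as $\A^\top \diag(\pi)\,\A$, where $\A_{j,S} = \prod_{i\in S}\mu^j_i$; in particular $\mathrm{rank}(\M) \le k$. Building on this, the bulk of the algorithmic work is to use noisy access to the $m_S$ for $|S|\le 2\log k$ to identify a mixture $\D'$ of at most $f(k)$ subcubes whose moments of order at most $2\log k$ agree with $\widehat{m}_S$ up to $O(\esamp)$. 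Concretely, I would search over sets of $O(\log k)$ ``pivot'' coordinates that witness the rank structure of $\M$, use these pivots to pin down the columns of $\A$, and then solve a small linear system for the mixing weights. Since only $n^{O(\log k)}$ pivot sets need to be considered, the overall runtime is $O_k\bigl(n^{O(\log k)}(1/\epsilon)^{O(1)}\log(1/\delta)\bigr)$.

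\textbf{Step 3 (Moments to TV).} Finally I would invoke a quantitative identifiability statement: any two mixtures of at most $f(k)$ subcubes whose $(\le 2\log k)$-moments agree entrywise within $\esamp$ are within $O_k(\esamp^{\Omega(1)})$ in total variation. Combined with Steps 1--2, setting $\esamp$ so that $O_k(\esamp^{\Omega(1)}) \le \epsilon$ yields $d_{TV}(\D,\D')\le \epsilon$ and the claimed sample and runtime bounds.

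The hardest step is the algorithmic moment matching in Step 2. The mixture representation is highly non-unique -- even the number of components is not invariant under equivalent representations -- and the centers are constrained to $\{0,1/2,1\}$, so off-the-shelf tensor or matrix decompositions do not directly suffice; a bespoke procedure based on greedily choosing coordinates that grow the rank of the observed moment slice, together with careful handling of the case where the empirical slice appears full-rank due to noise, seems essential. The quantitative identifiability statement in Step 3 is also delicate, since it must degrade only polynomially in $\esamp$ for the final dependence on $\epsilon$ to be polynomial rather than quasipolynomial.
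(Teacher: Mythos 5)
Your Steps 1 and 3 match the paper's ingredients closely: Lemma~\ref{lem:pigeon} establishes that $2\log k$ order moments span all of $\M$, Lemma~\ref{lem:hypo} gives the robust identifiability you invoke in Step~3 (in fact with a cleaner $\epsilon\cdot k^{-O(k)}$ dependence rather than $\esamp^{\Omega(1)}$), and the condition number bound of Lemma~\ref{lem:cond_number} is what makes everything well behaved under $\esamp$ sampling noise.

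The gap is in Step 2, and it is the heart of the matter. First, the quantitative claim that ``only $n^{O(\log k)}$ pivot sets need to be considered'' is not correct as stated. The matrix $\A$ you define has $k$ columns, and pinning down a basis for the rows of the moment matrix $\M$ requires up to $k$ sets $T_1,\dots,T_k$, each of size up to $2\log k$. A brute-force search over such collections is $n^{O(k\log k)}$, not $n^{O(\log k)}$. Conversely, a set of only $O(\log k)$ coordinates cannot in general ``pin down the columns of $\A$'' when there are $k$ columns to determine. The paper resolves this tension with an incremental greedy procedure (\textsc{GrowByOne}) that ensures the union $J = T_1\cup\cdots\cup T_r$ has size at most $k$, so that guessing $\m\vert_J \in \{0,1/2,1\}^{|J|\times r}$ costs only $3^{O(k^2)}$ --- a factor independent of $n$; the $n^{O(\log k)}$ is paid only for checking low-degree moments against the empirical estimates, not for searching over bases.

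Second, and more fundamentally, you do not address the degenerate case where the proxy test (linear algebra on observable entries of the cross-check matrix $\EE$) is inconclusive because $\rank(\M\vert_{\PP(J\cup\{i\})}) < k$. In that regime you can be stuck believing you have a basis when you do not, no matter how accurate your moment estimates are --- this is not a sampling-noise artifact, it's a structural obstruction even with exact moments. The paper's algorithm makes progress here by locating an ``impostor'' coordinate (via a low-degree moment on which the learned mixture disagrees with $\D$), conditioning on $x_{J\cup S} = s$ for all $s$, and recursing on the resulting mixtures which provably have strictly fewer components (Lemma~\ref{lem:collapse} and Lemma~\ref{lem:condition}). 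This recursive conditioning, together with the impostor-tracking analysis (Lemma~\ref{lem:goodguess} and its noisy analogue), is what actually makes the $n^{O(\log k)}$ runtime and polynomial $1/\epsilon$ dependence hold; it is missing from your outline. Your closing remark that ``careful handling of the case where the empirical slice appears full-rank due to noise seems essential'' hints at the right worry but does not identify this mechanism.
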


The starting point for our algorithm is the following simple but powerful identifiability result:

\begin{lem}[Informal]
A mixture of $k$ subcubes is uniquely determined by its $2 \log k$ order moments.\label{lem:informal_id}
\end{lem}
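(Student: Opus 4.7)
The plan is to arrange the low-order moments into a ``flattened moment matrix'' whose rank exposes the mixture structure. Specifically, index rows and columns by subsets $S \subseteq [n]$ of size at most $\log k$ and, restricting to pairs with $S \cap T = \emptyset$, set $\M[S,T] = m_{S \cup T}$; every such entry is a moment of order at most $2\log k$. Because $S$ and $T$ are disjoint, the conditional expectation factors as $\prod_{j\in S \cup T}\mu_i^j = \bigl(\prod_{j\in S}\mu_i^j\bigr)\bigl(\prod_{j\in T}\mu_i^j\bigr)$, so
\[
\M[S,T] \;=\; \sum_{i=1}^k \pi^i \Bigl(\prod_{j\in S}\mu_i^j\Bigr)\Bigl(\prod_{j\in T}\mu_i^j\Bigr) \;=\; \bigl(V^{\top} \diag(\pi)\, V\bigr)[S,T],
\]
where $V$ is the $k \times \binom{n}{\leq \log k}$ matrix with $V[i,S] = \prod_{j\in S}\mu_i^j$. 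In particular, $\M$ has rank at most $k$ and is completely determined by moments of order $\leq 2\log k$.

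Identifiability then follows from a linear-algebraic argument on $\M$. Given two mixtures $\D, \D'$ of at most $k$ subcubes agreeing on moments of order $\leq 2\log k$, the signed difference $\Delta = \D - \D'$ is a signed combination of at most $2k$ subcubes whose analogous factored matrix $V_\Delta^{\top} \diag(c)\, V_\Delta$ vanishes on all disjoint $(S,T)$ with $|S|, |T| \leq \log k$. My plan is to argue, after merging duplicate centers, that the ``low-weight'' rows (for $|S| \leq \log k$) span the full row space of $V_\Delta$; combined with the zero block on the difference matrix this forces $c = 0$ and hence $\Delta \equiv 0$. This is essentially the same rank/tensor-decomposition structure that the algorithmic part of the paper will exploit to actually recover the components of the mixture.

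The main obstacle is the presence of the center value $\mu_i^j = 1/2$. In the Boolean subcase $\mu_i^j \in \{0,1\}$ the idempotence $x_j^2 = x_j$ would let us drop the disjointness requirement on $(S,T)$ and use a symmetric Hankel moment matrix, just as in the classical identifiability results for mixtures of point masses. But when $\mu_i^j = 1/2$ we have $(\mu_i^j)^2 = 1/4 \neq 1/2$, so overlapping $(S,T)$ would break the clean rank-$k$ factorization. Insisting on $S \cap T = \emptyset$ sidesteps this at the cost of using a ``flattening'' of the moment tensor rather than a Hankel matrix; handling this flattening carefully (and in particular the mild linear-independence argument needed for the identifiability step) is where the bulk of the technical work lies.
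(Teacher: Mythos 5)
Your flattened-matrix framing is reasonable bookkeeping but does not by itself advance the argument: for disjoint $S,T$ one has $V[i,S]\,V[i,T]=V[i,S\cup T]$, so the ``zero block'' $\bigl(V_\Delta^{\top}\diag(c)V_\Delta\bigr)[S,T]=0$ over disjoint $|S|,|T|\le\log k$ is \emph{literally} the hypothesis (agreement of moments of order $\le 2\log k$) re-indexed, and nothing has yet been gained over working directly with the moment vectors $\M_S$.

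The real content of the lemma --- which you defer as a ``mild linear-independence argument'' --- is the claim that the low-degree rows $\{\M_S:|S|<2\log(2k)\}$ of the combined moment matrix span its entire row space, so that vanishing of the low-degree moments of $\Delta$ forces vanishing of \emph{all} moments of $\Delta$, hence $\Delta\equiv 0$. That claim is not mild: for general $[0,1]$-valued centers it fails until degree $k-1$ (Vandermonde), and it holds here precisely because the centers are constrained to $\{0,1/2,1\}$. The paper proves it (Lemma~\ref{lem:pigeon}) by a pigeonhole construction: one incrementally builds a multilinear polynomial $p(x)=\prod_i(x_i-\lambda_i)$ with each $\lambda_i\in\{0,1/2,1\}$ chosen to annihilate at least a third of the surviving centers, so that after fewer than $2\log k$ factors $p$ vanishes on every center; expanding $p$ then exhibits the top-degree row as a linear combination of lower-degree rows. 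That construction is exactly what is missing from your proposal. Two secondary points: your degree budget of $\log k$ is too small, since $\Delta$ involves up to $2k$ centers and so you need the span property at degree roughly $2\log(2k)$, a bound that is tight up to constants already for the difference between a sparse-parity distribution and the uniform distribution; and you mislocate the obstacle --- the $\mu_i^j=1/2$ issue is resolved entirely by restricting to disjoint $(S,T)$ as you note and costs nothing further, while the genuine difficulty is the spanning lemma itself.
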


In contrast, for many sorts of mixture models with $k$ components, typically one needs $\Theta(k)$ moments to establish identifiability \cite{mv} and this translates to algorithms with running time at least $n^{\Omega(k)}$ and sometimes even much larger than that. In part, this is because the notion of identifiability we are aiming for needs to be weaker and as a result is more subtle. We cannot hope to learn the subcubes and their mixing weights because there are mixtures of subcubes that can be represented in many different ways, sometimes with the same number of subcubes. But as distributions, two mixtures of subcubes are the same if they match on their first $2 \log k$ moments. It turns out that proving this is equivalent to the following basic problem in linear algebra: 

% \begin{question}
% For a given $k$, what is the largest possible collection of vectors $v_1, v_2, \cdots, v_m \in \{-1, 0, +1\}^k$ for which $(1)$ the entrywise product of any $t < m$ vectors sums to zero and $(2)$ the entrywise product of all $m$ vectors does not sum to zero? 
% \end{question}

\begin{question}
	\sitan{Given a matrix $M \in\{0,1/2,1\}^{n\times k}$, what is the minimum $d$ for which the set of all entrywise products of at most $d$ rows of $M$ spans the set of all entrywise products of rows of $M$?}\label{question:pre_span}
\end{question}

\noindent We show that $d$ can be at most $2\log k$, which is easily shown to be tight up to constant factors. We will return to a variant of this question later when we discuss why learning mixtures of product distributions requires much higher-order moments.

Unsurprisingly, our algorithm for learning mixtures of subcubes is based on the method of moments. But there is an \sitan{essential} subtlety. For any distribution on the hypercube, $x_i^2 = x_i$. From a technical standpoint, this means that when we compute moments, there is never any reason to take a power of $x_i$ larger than one. We call these {\em multilinear moments}, and characterizing the way that the multilinear moments determine the distribution (but cannot determine its parameters) is the central challenge. \sitan{Note that multilinearity makes our problem quite different from typical settings where tensor decompositions can be applied.}

Now collect the centers $\mu^1, \mu^2, \cdots, \mu^k$ into a $n \times k$ size matrix that we call the {\em \sitan{marginals matrix}} and denote by $\m$. The key step in our algorithm is constructing a basis for the entrywise products of rows from this matrix. However we cannot afford to \sitan{simply brute-force search for this basis among all sets of at most $k$  entrywise products of up to $2 \log k$ rows of $\m$} because the resulting algorithm would run in time $n^{O(k \log k)}$. Instead we construct a basis incrementally. 

The first challenge that we need to overcome is that we cannot directly observe the entrywise product of a set of rows of the \sitan{marginals matrix}. But we can observe its weighted inner-product with various other vectors. More precisely, if $u, v$ are respectively the entrywise products of subsets $S$ and $T$ of rows of \sitan{some marginals matrix $\m$ that realizes the distribution and $\pi$ is the associated vector of mixing weights}, then the relation
$$\sum_{i =1}^k \pi^i u_i v_i = \E\left[\prod_{i \in S \cup T} x_i\right]$$
holds if $S$ and $T$ are disjoint. When $S$ and $T$ intersect, this relation is no longer true because in order to express the left hand side in terms of the $x_i$'s we would need to take some powers to be larger than one, which no longer correspond to multilinear moments that can be estimated from samples. 

Now suppose we are given a collection $\mathcal{B} = \{T_1, T_2, \cdots, T_k\}$ of subsets of rows of $\m$ and we want to check if the vectors $\{v_1, v_2, \cdots, v_k\}$ (where $v_i$ is the entrywise product of the rows in $T_i$) are linearly independent. Set $J = \cup_i T_i$. We can define a helper matrix whose columns are indexed by the $T_i$'s and whose rows are indexed by subsets of $[n] \backslash J$. The entry in column $i$, row $S$ is $\E[\prod_{j \in S \cup T_i} x_j]$ and it is easy to show that if this helper matrix has full row rank then the vectors $\{v_1, v_2, \cdots, v_k\}$ are indeed linearly independent.

The second challenge is that this is an imperfect test. Even if the helper matrix is not full rank, $\{v_1, v_2, \cdots, v_k\}$ might still be linearly independent. Even worse, we can encounter situations where our current collection $\mathcal{B}$ is not yet a basis, and yet for any set we try to add, we cannot certify that the associated entrywise product of rows is outside the span of the vectors we have so far. Our algorithm is based on a win-win analysis. We show that when we get stuck \sitan{in this way}, it is because there is some $S \subseteq [n]$ with $|S| \leq 2 \log k$ where the order $2 \log k$ entrywise products of subets of rows from $[n] \backslash (J \cup S)$ do not span the full $k$-dimensional space.
% (see Lemmas~\ref{lem:goodguess}, \ref{lem:goodguess_noisy}, and \ref{lem:subcubeeitheror}).
We show how to identify such an $S$ by repeatedly solving systems of linear equations. Once we identify such an $S$ it turns out that for any string $s \in \{0, 1\}^{|J \cup S|}$ we can condition on $x_{J \cup S} = s$ and the resulting conditional distribution will be a mixture of strictly fewer subcubes, \sitan{which we can then recurse on}.

\subsection{Applications}

We demonstrate the power of our $n^{O(\log k)}$ time algorithm for learning mixtures of subcubes by applying it to learning decision trees with stochastic transitions. Specifically suppose we are given a sample $x$ that is uniform on the hypercube, but instead of computing its label based on a $k$-leaf decision tree with deterministic transitions, some of the transitions are stochastic \---- they read a bit and based on its value proceed down either the left or right subtree with some unknown probabilities. Such models are popular in medicine \cite{hazen1998stochastic} and finance \cite{hespos1965stochastic} when features of the system are partially or completely unobserved and the transitions that depend on these features appear to an outside observer to be stochastic. Thus we can also think about decision trees with deterministic transitions but with latent variables as having stochastic transitions when we marginalize on the observed variables. 

With stochastic transitions, it is no longer possible to perfectly predict the label even if you know the stochastic decision tree. This rules out many forms of learning like Occam algorithms such as \cite{ehrenfeucht1989learning,blum1992rank,rivest} that are based on succinctly explaining a large portion of the observed samples. It turns out that by accurately estimating the distribution on positive examples \---- via our algorithm for learning mixtures of subcubes \---- it is possible to approach the Bayes optimal classifier in $n^{O(\log k)}$ time and with only a polylogarithmic number of samples:

\begin{thm}
Let $\epsilon, \delta > 0$ be given and let $\mathcal{D}$ be a distribution on labelled examples from a stochastic decision tree under the uniform distribution. Suppose further that the stochastic decision tree has $k$ leaves and along any root-to-leaf path there are at most $s$ stochastic transitions. There is an algorithm that given samples from $\mathcal{D}$ runs in time $O_{k,s}(n^{O(s + \log k)} (1/\epsilon)^{O(1)} \log 1/\delta)$ and with probability at least $1 - \delta$ outputs a classifier whose probability of error is at most $\mbox{opt} + \epsilon$ where $\mbox{opt}$ is the error of the Bayes optimal classifier. 
Moreover the sample complexity is $O_{k,s}((\log n/\epsilon)^{O(1)} \log 1/\delta)$. \label{thm:sdts}
\end{thm}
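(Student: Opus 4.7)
The plan is to reduce the classification problem to two instances of learning mixtures of subcubes and then output a plug-in Bayes classifier.

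The crux is the structural observation that, for each label $b \in \{0,1\}$, the conditional distribution $\mathcal{D}_b := \Pr[x \mid y = b]$ is a mixture of at most $k \cdot 2^s$ subcubes with marginals in $\{0,1/2,1\}$. To see this, fix a leaf $\ell$ with label $b$. Along its root-to-leaf path, the deterministic transitions fix the coordinates in some set $D_\ell$ to specific bits, and the at most $s$ stochastic transitions read disjoint coordinates in some set $T_\ell$. By Bayes' rule applied to the uniform prior on $x$, the conditional $\Pr[x \mid \text{reach } \ell]$ factorizes: coordinates in $D_\ell$ are fixed, coordinates outside $D_\ell \cup T_\ell$ are uniform, and coordinates in $T_\ell$ are each an independent Bernoulli whose parameter is determined by the forward transition probabilities. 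Expanding that product of Bernoullis on $T_\ell$ as a convex combination of its $2^{|T_\ell|} \leq 2^s$ point masses rewrites $\Pr[x \mid \text{reach } \ell]$ as a mixture of at most $2^s$ subcubes with $\{0,1/2,1\}$-valued marginals; summing over the at most $k$ leaves labeled $b$ gives the claim.

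Given this, I would partition the samples by label and invoke Theorem~\ref{thm:main012} on each part with target TV error $\epsilon/8$ and failure probability $\delta/3$. Since each target is a mixture of at most $k \cdot 2^s$ subcubes, this runs in $O_{k,s}(n^{O(\log(k \cdot 2^s))}(1/\epsilon)^{O(1)}\log(1/\delta)) = O_{k,s}(n^{O(s + \log k)}(1/\epsilon)^{O(1)}\log(1/\delta))$ time and produces explicit mixtures $\hat{\mathcal{D}}_0, \hat{\mathcal{D}}_1$. Estimating the class probabilities $\pi_b := \Pr[y=b]$ by empirical frequencies $\hat\pi_b$ to additive error $\epsilon/8$, I would output the plug-in classifier $\hat C(x) := \mathbf{1}\{\hat\pi_1 \hat{\mathcal{D}}_1(x) \geq \hat\pi_0 \hat{\mathcal{D}}_0(x)\}$, which is evaluable pointwise in $\text{poly}(n,k)$ time from the explicit output of Theorem~\ref{thm:main012}.

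To bound the excess error, let $p_b(x) := \pi_b \mathcal{D}_b(x)$ and $\hat p_b(x) := \hat\pi_b \hat{\mathcal{D}}_b(x)$. Whenever $\hat C(x)$ disagrees with the Bayes classifier $C^*$ on input $x$, the Bayes margin $|p_1(x) - p_0(x)|$ is at most $|p_1(x) - \hat p_1(x)| + |p_0(x) - \hat p_0(x)|$ because the estimated scores order the two classes the wrong way. Summing over $x$, the excess error is at most $\|p_1 - \hat p_1\|_1 + \|p_0 - \hat p_0\|_1$, and splitting each term via the triangle inequality and applying $d_{TV}(\mathcal{D}_b, \hat{\mathcal{D}}_b) \leq \epsilon/8$ together with $|\pi_b - \hat \pi_b| \leq \epsilon/8$ bounds the total by $\epsilon$. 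The sample complexity then follows from that of Theorem~\ref{thm:main012}. The main obstacle is the structural reduction to mixtures of subcubes; once it is in hand, the reduction and plug-in analysis are routine. A minor subtlety is that if $\pi_b$ is much smaller than $\epsilon$, the label-$b$ subsample may be too small to learn $\mathcal{D}_b$ accurately, but in that case class $b$ contributes at most $O(\epsilon)$ to the Bayes risk, so one can harmlessly default $\hat C$ to the opposite label and absorb the loss into the additive $\epsilon$.
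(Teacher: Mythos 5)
Your proposal is correct and follows essentially the same route as the paper: reduce to learning the class-conditional(s) as mixtures of subcubes via Theorem~\ref{thm:main012} and then output a plug-in Bayes classifier. The only substantive difference is in how the $2^s$ factor arises: Appendix~\ref{app:concept_classes} formalizes stochastic nodes as having probabilistic outgoing edges chosen independently of the input, so Lemma~\ref{lem:sdts} directly yields a mixture of just $k$ subcubes (the $s$ in the runtime is then accounted for only implicitly, by converting the introduction's model --- where a stochastic transition reads an input bit --- into the appendix model by splitting each such node into a decision node followed by two outgoing-edge-only stochastic nodes, inflating the leaf count by up to $2^s$); your explicit expansion of the product of Bernoullis on the stochastically-read coordinates $T_\ell$ into $2^{|T_\ell|}$ point masses is exactly this conversion and makes the $s$ dependence transparent. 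You also learn both class-conditionals, whereas Lemma~\ref{lem:pre_sdts} learns only the majority-class conditional and reconstructs the other from the known uniform marginal on $x$, which sidesteps the small-class-probability corner case you handle by defaulting the classifier; your handling of that case is nonetheless sound.
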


Recall that the low-degree algorithm \cite{linial1993constant} is able to learn $k$-leaf decision trees in time $n^{O(\log(k/\epsilon))}$ by approximating them by $O(\log(k/\epsilon))$ degree polynomials. These results also generalize to \sitan{stochastic} settings \cite{aiello1991learning}. Recently, Hazan, Klivans and Yuan \cite{hazan2017hyperparameter} were able to improve the sample complexity \sitan{even in the presence of adversarial noise using the low-degree Fourier approximation approach together with ideas from compressed sensing for learning low-degree, sparse Boolean functions \cite{stobbe2012learning}}. Although our algorithm is tailored to handle stochastic rather than adversarial noise, our algorithm has a much tamer dependence on $\epsilon$ which yields much faster algorithms when $\epsilon$ is small as a function of $k$. Moreover we achieve a considerably stronger (and nearly optimal) error guarantee of $\mbox{opt} + \epsilon$ rather than $c\cdot \mbox{opt} + \epsilon$ for some constant $c$. Our algorithm even works in the natural variations of the problem \cite{denis1998pac, letouzey2000learning, de2014learning} where it is only given positive examples.

\sitan{Lastly, we remark that \cite{de2014learning} studied a similar setting where the learner is given samples from the uniform distribution $\D$ over satisfying assignments of some Boolean function $f$ and the goal is to output a distribution close to $\D$.  Their techniques seem quite different from ours and also the low-degree algorithm. Among their results, the one most relevant to ours is the incomparable result that there is an $n^{O(\log (k/\epsilon))}$-time learning algorithm for when $f$ is a $k$-term DNF formula.}

\subsection{More Results}

As we discussed earlier, mixtures of subcubes are a special case of mixtures of binary product distributions. The best known algorithm for learning mixtures of $k$ product distributions is due to Feldman, O'Donnell and Servedio \cite{fos} and runs in time $n^{O(k^3)}$. A natural question which a number of researchers have thought about is whether the dependence on $k$ can be improved, perhaps to $n^{O(\log k)}$. This would match the best known statistical query (SQ) lower bound for learning mixtures of product distributions, \sitan{which follows from the fact that the uniform distribution over inputs accepted by a decision tree is a mixture of product distributions and therefore from Blum et al.'s $n^{O(\log k)}$ SQ lower bound \cite{blumsq}.}

\sitan{As we will show}, it turns out that mixtures of product distributions require much higher-order moments even to distinguish a mixture of $k$ product distributions from the uniform distribution on $\{0,1\}^n$. As before, this turns out to be related to a basic problem in linear algebra:

\begin{question}
For a given $k$, what is the largest possible collection of vectors $v_1, v_2, \cdots, v_m \in \R^k$ for which $(1)$ the entries in the entrywise product of any $t < m$ vectors sum to zero and $(2)$ the entries in the entrywise product of all $m$ vectors do not sum to zero?\footnote{\sitan{In Section~\ref{subsec:sq_overview} we discuss the relationship between Questions~\ref{question:pre_span} and \ref{question:hard}.}}\label{question:hard}
\end{question}

\noindent We show a rather surprising construction that achieves $m = c \sqrt{k}$. An obvious upper bound for $m$ is $k$. It is not clear what the correct answer ought to be. In any case, we show that this translates to the following negative result:

\begin{lem}[Informal]
There is a family of mixtures of product distributions that are all different as distributions but which match on all $c \sqrt{k}$ order moments. \label{lem:informal_family}
\end{lem}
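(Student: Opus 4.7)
The plan is to reduce Lemma~\ref{lem:informal_family} directly to Question~\ref{question:hard}. Assume vectors $v_1,\ldots,v_m\in\R^k$ with $m=c\sqrt{k}$ satisfying properties (1) and (2) are in hand. I will build a one-parameter family of mixtures of $k$ product distributions over $\{0,1\}^n$ (for any $n\ge m$) that are pairwise distinct as distributions yet agree on every moment of order strictly less than $m$. Let $B=\max_{j,i}|(v_j)_i|$ and fix any $\alpha\in(0,\tfrac{1}{2B}]$. Define $\D_\alpha$ to have uniform weights $\pi^i=1/k$ and $i$-th component equal to the product distribution with marginals $\mu^i_j = \tfrac{1}{2}+\alpha(v_j)_i$ for $j\in[m]$ and $\mu^i_j=\tfrac{1}{2}$ for $j>m$. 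The choice of $\alpha$ keeps every $\mu^i_j\in[0,1]$, so $\D_\alpha$ is genuinely a mixture of $k$ product distributions.

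Next I would compute the multilinear moments. For $S\subseteq[m]$,
\begin{align*}
\E_{\D_\alpha}\Bigl[\prod_{j\in S}x_j\Bigr] &= \frac{1}{k}\sum_{i=1}^k\prod_{j\in S}\bigl(\tfrac{1}{2}+\alpha(v_j)_i\bigr) = \sum_{T\subseteq S}\bigl(\tfrac{1}{2}\bigr)^{|S|-|T|}\alpha^{|T|}\cdot\frac{1}{k}\sum_{i=1}^k\prod_{j\in T}(v_j)_i.
\end{align*}
By property (1) of Question~\ref{question:hard} the inner sum vanishes whenever $0<|T|<m$, so for $|S|<m$ only the $T=\emptyset$ term survives and the moment equals $(1/2)^{|S|}$. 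Since the coordinates $j>m$ contribute independent fair coins in every component, this same value $(1/2)^{|S|}$ is obtained for every $S\subseteq[n]$ with $|S|<m$, exactly matching the uniform distribution on $\{0,1\}^n$ (itself a trivial mixture of one product distribution). For $S=[m]$, the $T=[m]$ term contributes $\alpha^m\cdot\frac{1}{k}\sum_{i}\prod_{j\in[m]}(v_j)_i$, a nonzero constant multiple of $\alpha^m$ by property (2).

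Hence $\{\D_\alpha\}_{\alpha\in(0,1/(2B)]}$ is a family of mixtures of $k$ product distributions whose $m$-th moments $\E[\prod_{j\in[m]}x_j]$ are all distinct — so the $\D_\alpha$ are pairwise distinct as distributions — while every moment of order less than $m=c\sqrt{k}$ is the same across the family, yielding the informal lemma. The real obstacle is not this reduction, which is essentially a binomial expansion, but the linear-algebraic construction underlying Question~\ref{question:hard}: producing $m=\Omega(\sqrt{k})$ vectors in $\R^k$ such that every proper-subset entrywise product sums to zero while the full one does not. Pushing $m$ all the way up to $\Omega(\sqrt{k})$ — rather than the $O(\log k)$ one would get from a naive characters-of-$\mathbb{F}_2^{\log k}$ attempt, which quickly collides with Singleton-type bounds for binary MDS codes — appears to require a genuinely nontrivial combinatorial design, and this is where I expect the heavy lifting to live.
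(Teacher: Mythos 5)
Your reduction is correct and is essentially the paper's argument: the binomial expansion you carry out is exactly the content of Lemma~\ref{lem:reducetosuperortho} (agreement with $U_m$ on all degree-$<m$ moments $\Leftrightarrow$ $(m-1)$-wise superorthogonality of the centered marginals with respect to $\pi$), applied with uniform weights $\pi=\frac{1}{k}\one$ to vectors solving Question~\ref{question:hard}, and you correctly identify the heavy lifting as the $\Theta(\sqrt{k})$-wise superorthogonal Vandermonde-style construction. The only real difference is the choice of family: you fix the embedding coordinates $[m]$ and vary a scaling parameter $\alpha$, whereas the paper fixes the low-dimensional moment-matching distribution $A$ and varies the set $I$ of $m$ coordinates into which it is embedded, yielding $\{\D_I\}_{|I|=m}$. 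Both choices prove the informal lemma; the paper's choice is dictated by the SQ dimension argument in Proposition~\ref{prop:sqdim}, which needs roughly $\binom{n}{m}$ distributions that are pairwise uncorrelated relative to $U_n$, a property the embedding family has (Lemma~\ref{lem:corr}) but a one-parameter scaling family would not.
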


Given a construction for Question~\ref{question:hard}, the idea for building this family is the same idea that goes into the $n^{\Omega(s)}$ SQ lower bound for $s$-sparse parity \cite{kearns1998efficient} and the $n^{\Omega(k)}$ SQ lower bound for density estimation of mixtures of $k$ Gaussians \cite{diakonikolas2016statistical}, namely that of hiding a low-dimensional moment-matching example inside a high-dimensional product measure. We leverage Lemma~\ref{lem:informal_family} to show an SQ lower bound for learning mixtures of product distributions that holds for small values of $\epsilon$, which is exactly the scenario we are interested in, particularly in applications to learning stochastic decision trees. 

\begin{thm}[Informal]
Any algorithm given $\Omega(n^{-\sqrt{k}/3})$-accurate statistical query access to a mixture $\mathcal{D}$ of $k$ binary product distributions that outputs a distribution $\mathcal{D}'$ satisfying $d_{TV}(\mathcal{D}, \mathcal{D}') \leq \epsilon$ for $\epsilon \leq k^{-c \sqrt{k}}$ must make at least $n^{c' \sqrt{k}}$ queries. \label{thm:mainsq_informal}
\end{thm}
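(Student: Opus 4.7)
The plan is to lift the $m = c\sqrt{k}$-dimensional moment-matching construction from Lemma~\ref{lem:informal_family} into a large family of high-dimensional distributions that are pairwise orthogonal perturbations of the uniform distribution $\mathcal{U}$, and then apply the standard Parseval-based SQ packing argument used in prior lower bounds for sparse parities and Gaussian mixtures~\cite{kearns1998efficient,diakonikolas2016statistical}. Let $D$ be the mixture of $k$ binary product distributions on $\{0,1\}^m$ whose moments of order strictly less than $m$ match those of $\mathcal{U}_m$. For each $A \subseteq [n]$ of size $m$, let $P_A$ be the distribution on $\{0,1\}^n$ that draws the coordinates indexed by $A$ according to $D$ and the remaining $n - m$ coordinates independently and uniformly. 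Each $P_A$ is still a mixture of $k$ binary product distributions, and the family has $\binom{n}{m} \geq n^{\Omega(\sqrt{k})}$ members.

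The orthogonality claim is then a short Fourier computation. Writing $f_A := P_A/\mathcal{U} - 1$ in the $\{-1,+1\}$ parity basis, the independence and uniformity of $P_A$ outside $A$ forces $\widehat{f_A}(T) = 0$ whenever $T \not\subseteq A$, while the low-order moment matching of $D$ forces $\widehat{f_A}(T) = 0$ for $T \subsetneq A$. Hence the only non-zero Fourier coefficient of $f_A$ is $\alpha := \widehat{f_A}(A)$, whose magnitude is determined by the deviation of $D$ from $\mathcal{U}_m$ at its single non-trivial moment. In particular $f_A(x) = \alpha\,\chi_A(x)$, so by Parseval the pairwise $\chi^2$-correlation $\chi_{\mathcal{U}}(P_A, P_B)$ vanishes whenever $A \neq B$, and an elementary calculation gives $d_{TV}(P_A, \mathcal{U}) = d_{TV}(P_A, P_B) = |\alpha|/2$ for $A \neq B$.

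With these estimates in hand, the SQ lower bound follows from the usual packing argument: for any query $q : \{0,1\}^n \to [-1,1]$ we have $|\mathbb{E}_{P_A}[q] - \mathbb{E}_{\mathcal{U}}[q]| = |\widehat{q}(A)| \cdot |\alpha|$, and by Parseval at most $|\alpha|^2/\tau^2$ indices $A$ can give a response exceeding the tolerance $\tau$. Hence any SQ algorithm with tolerance $\tau$ must issue at least $\binom{n}{m} \cdot \tau^2 / |\alpha|^2$ queries to identify the hidden $A$, and the pairwise $d_{TV}$ separation $|\alpha|/2$ ensures that any $\mathcal{D}'$ with $d_{TV}(\mathcal{D}, \mathcal{D}') \leq \epsilon < |\alpha|/4$ identifies $A$ uniquely, reducing the learning task to this identification task. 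Plugging in the magnitude estimate on $|\alpha|$ that comes from the explicit vector construction answering Question~\ref{question:hard} (which should yield $|\alpha| \asymp k^{-c\sqrt{k}}$) together with $\tau = n^{-\sqrt{k}/3}$ produces both the advertised query lower bound $n^{c'\sqrt{k}}$ and the accuracy regime $\epsilon \leq k^{-c\sqrt{k}}$. The main obstacle I expect is carefully extracting the quantitative estimate $|\alpha| \asymp k^{-c\sqrt{k}}$ from the construction in Question~\ref{question:hard}, and then checking that the three exponents --- tolerance $\sqrt{k}/3$, accuracy $c\sqrt{k}$, and query lower bound $c'\sqrt{k}$ --- are mutually compatible so that the packing argument delivers the stated quantitative guarantees.
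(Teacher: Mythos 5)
Your proposal follows essentially the same blueprint as the paper: embed a hidden $m$-dimensional moment-matching distribution $A$ into $\{0,1\}^n$, show the resulting family $\{\mathcal{D}_I\}$ is pairwise uncorrelated relative to $U_n$, and combine with a separation in $d_{TV}$ to get an SQ lower bound. Your Fourier phrasing is a clean reformulation of the paper's Observation~\ref{obs:zerobits} (which is exactly the statement that $f_A := P_A/U_n - 1$ is $\alpha\,\chi_A$ with $\alpha = 2^m\delta(A)$) and of the pairwise correlation computations in Lemmas~\ref{lem:corr}--\ref{lem:tvdbound}; the direct Parseval packing argument you invoke is a standard substitute for the SQ-dimension machinery (Lemma~\ref{lem:fgr}) the paper cites, and yields the same $\STAT$ bound (it does not immediately give the $\VSTAT$ half of the formal Theorem~\ref{thm:mainsq}, but that is not part of the informal statement). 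Where you correctly flag the real work --- and where your sketch is genuinely incomplete --- is in producing the quantitative bound $|\alpha| \asymp k^{-c\sqrt{k}}$. That does not follow from Question~\ref{question:hard} alone: one needs the explicit $(m-1)$-wise superorthogonal construction $\mathcal{E}(x_1,\dots,x_m)$ with $k = m^2$, a careful choice of $x_i = i/2m^2$, the $\lambda_i$'s and $a_i$'s, and a Fermat's-little-theorem argument to verify that $\sum_i(-1)^i\binom{m}{i}i^m$ is a nonzero integer, whence $|\delta(A)| \geq (2m)^{-2m}$. That construction is the bulk of Section~\ref{sec:sq} and is the content your sketch leaves open.
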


\noindent This improves upon the previously best known SQ lower bound of $n^{\Omega(\log k)}$, although for larger values of $\epsilon$ our construction breaks down. In any case, in a natural dimension-independent range of parameters, mixtures of product distributions are substantially harder to learn using SQ algorithms than the special case of mixtures of subcubes.

\sitan{Finally, we leverage the insights we developed for reasoning about higher-order multilinear moments to give improved algorithms for learning mixtures of binary product distributions:}

\begin{thm}
Let $\epsilon, \delta > 0$ be given and let $\mathcal{D}$ be a mixture of $k$ binary product distributions. There is an algorithm that given samples from $\mathcal{D}$ runs in time $O_k((n/\epsilon)^{O(k^2)} \log 1/\delta)$ and outputs a mixture $\mathcal{D}'$ of $f(k)$ binary product distributions that satisfies $d_{TV}(\mathcal{D}, \mathcal{D}') \leq \epsilon$ with probability at least $1-\delta$. \label{thm:maingeneral}
\end{thm}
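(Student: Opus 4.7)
The plan is to adapt the multilinear moment framework from Theorem~\ref{thm:main012} to real-valued marginals matrices $\m \in [0,1]^{n \times k}$. The helper-matrix test for linear independence of entrywise products of rows transfers verbatim, since for any disjoint $S, T \subseteq [n]$ with $u, v$ the entrywise products of the corresponding rows we still have
\[
\sum_{i=1}^k \pi^i u_i v_i \;=\; \E\Bigl[\prod_{j \in S \cup T} x_j\Bigr].
\]
So all of the algorithmic primitives for incrementally constructing a basis for the span of entrywise products of rows of $\m$ can be reused.

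First, I would replace Lemma~\ref{lem:informal_id} by a weaker identifiability bound suited to the real-valued setting. Let $V_d \subseteq \R^k$ denote the span of entrywise products of at most $d$ rows of $\m$, and let $V$ denote the span of entrywise products of all subsets of rows. The chain $V_0 \subseteq V_1 \subseteq \cdots$ stabilizes after at most $k$ strict increases, and once $V_{d^*} = V_{d^* + 1}$ one checks inductively that $V_{d^*} = V$. Hence a basis of at most $k$ vectors can be located among entrywise products of at most $O(k)$ rows, so mixtures of $k$ binary product distributions are determined by their multilinear moments of order $O(k)$ --- consistent with the $c\sqrt{k}$ SQ lower bound of Theorem~\ref{thm:mainsq_informal}. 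Running the incremental basis-construction procedure from Theorem~\ref{thm:main012} now brute-forces over row subsets of size $O(k)$ rather than $2\log k$, giving $n^{O(k)}$ per candidate basis vector and $n^{O(k^2)}$ overall.

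The main obstacle is the analog of the subcube reduction step. For subcubes, conditioning on $x_{J \cup S} = s$ eliminates components whose center disagrees with $s$ on a $\{0,1\}$ entry, yielding a strictly smaller instance to recurse on. This genuinely fails for general product distributions because every component assigns positive mass to every outcome. I would handle this by arguing that when we get stuck, the helper matrix's rank deficiency pins down the unknown parameters up to a small algebraic subsystem, which can then be resolved by enumerating over an $\epsilon$-net of size $(1/\epsilon)^{O(k)}$ for each of $O(k)$ parameters. This yields a polynomial-sized list of candidate mixtures from which a Scheffe-style hypothesis selector extracts an output within $\epsilon$ of $\D$ in total variation, giving the claimed runtime $O_k((n/\epsilon)^{O(k^2)} \log 1/\delta)$; standard Chernoff and union-bound arguments on the empirical moments then control the sample complexity.
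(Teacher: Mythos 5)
Your outline captures some genuine ingredients (the helper-matrix relation, the fact that degree $< k$ entrywise products span everything, and the Scheff\'e selection at the end), but it misses the two obstacles that actually drive the paper's argument for general product distributions.

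First, the paper does not reuse the incremental \textsc{GrowByOne}-style basis search here; it brute-forces a \emph{barycentric spanner} of the rows of $\m$ (not $\M$). This is essential, not cosmetic. For subcubes, any full-rank submatrix of $\M$ has entries in $\{0,1/2,\dots,1/2^{O(\log k)}\}$, so Cramer's rule gives an unconditional bound on $\sig$ (Lemma~\ref{lem:cond_number}), and sampling noise can be absorbed. For general $\m\in[0,1]^{n\times k}$ there is no such bound: a full-rank moment matrix can be arbitrarily ill-conditioned, and your proposal has no mechanism for controlling the coefficients when expressing the remaining rows of $\m$ in terms of a basis. The barycentric spanner guarantees those coefficients lie in $[-1,1]$, which is precisely what makes the regression \eqref{eq:regressfindas2} and the $(\epsilon/n)$-granularity gridding of $\m\vert_J$ work. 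Your suggested ``$\epsilon$-net over $O(k)$ parameters'' does not explain how error propagates to the rows outside the basis when the system is ill-conditioned.

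Second, and more fundamentally, you are conflating rank deficiency with ill-conditioning. When the recursion ``gets stuck'' in the general setting it is usually because $\M\vert_{\PP'(J\cup\{i\})}$ has small $\sig$, not because it drops rank, so Lemma~\ref{lem:collapse} does not apply and conditioning does not exactly reduce the number of components. The paper's resolution is Lemma~\ref{lem:cond_collapse}: an ill-conditioned moment matrix implies that some mixture of $\le k-1$ product distributions matches $\D$ on all $O(k)$-degree moments, hence (after conditioning) the recursion can still make progress on a \emph{nearby} simpler mixture. But to go from ``moments nearly match'' to ``total variation close'' one needs the \emph{robust} low-degree identifiability result (Lemma~\ref{lem:hypo2}), proved by a delicate induction on $k_1+k_2$ using Observation~\ref{obs:psimin} and a barycentric-spanner argument in the base case. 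Your plan invokes only the easy \emph{exact} identifiability (Lemma~\ref{lem:pigeon_k}), which the paper explicitly notes cannot be bootstrapped to a robust statement here because the condition-number shortcut that works for subcubes is unavailable. Without a robust identifiability lemma, the ``pins down the parameters up to a small algebraic subsystem'' step has no way to certify that the candidate produced when stuck is actually $\epsilon$-close to $\D$, so the proof does not close.
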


Here we can afford to brute-force search for a basis. However a different issue arises. In the case of mixtures of subcubes, when a collection of vectors that come from entrywise products of rows are linearly independent we can also upper bound their condition number, \sitan{which allows us to get a handle on the fact that we only have access to the moments of the distribution up to some sampling noise}. But when the centers are allowed to take on arbitrary values in $[0, 1]^n$ there is no a priori upper bound on the condition number. \sitan{To handle sampling noise}, instead of finding just any basis, we find a barycentric spanner.\footnote{\sitan{Specifically, we find a barycentric spanner for just the rows of the \emph{\sitan{marginals matrix}}, rather than for the set of entrywise products of rows of the \sitan{marginals matrix}.}} \sitan{We proceed via a similar win-win analysis as for mixtures of subcubes: in the case that condition number poses an issue for learning the distribution, we argue that after conditioning on the coordinates of the barycentric spanner, the distribution is \emph{close} to a mixture of fewer product distributions. A key step in showing this is to prove the following \emph{robust} identifiability result that may be of independent interest:}

\begin{lem} [Informal]
	\sitan{Two mixtures of $k$ product distributions are $\epsilon$-far in statistical distance if and only if they differ by $\poly(n,1/\epsilon,2^k)^{-O(k)}$ on a $2k$-order moment.}
\end{lem}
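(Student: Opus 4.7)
The easy direction is immediate: each $2k$-order multilinear moment is a $[0,1]$-valued linear functional on the distribution, so two mixtures within $\epsilon$ in total variation must agree within $\epsilon$ on every such moment. The substance is the robust identifiability direction, namely that if every $2k$-order moment of two mixtures $\mathcal{D}_1,\mathcal{D}_2$ of $k$ binary product distributions agrees to within $\delta := \poly(n,1/\epsilon,2^k)^{-O(k)}$, then $d_{TV}(\mathcal{D}_1,\mathcal{D}_2) \leq \epsilon$. I would prove this by induction on $k$, following the win-win template already established in the subcube algorithm but executed purely at the level of statistical distance.

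Set up the signed measure $\nu := \mathcal{D}_1 - \mathcal{D}_2$, which is a signed mixture of $2k$ product distributions. Arrange the $2k$ centers as columns of a marginals matrix $\m \in [0,1]^{n \times 2k}$ and let $\tilde{\pi} \in \R^{2k}$ be the signed vector of mixing weights (positive on the $\mathcal{D}_1$-components, negative on the $\mathcal{D}_2$-components). The hypothesis becomes $|\tilde{\pi}^\top v_S| \leq \delta$ for every $S \subseteq [n]$ with $|S| \leq 2k$, where $v_S \in \R^{2k}$ is the entrywise product of the rows of $\m$ indexed by $S$. Exact identifiability guarantees that $\{v_S : |S| \leq 2k\}$ spans $\R^{2k}$, so the first case to handle is when one can extract $T_1,\ldots,T_{2k}$ such that the matrix $V$ with columns $v_{T_i}$ satisfies $\sigma_{\min}(V) \geq \sigma_* := \poly(n,1/\epsilon,2^k)^{-O(k)}$. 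Then $\|V^\top \tilde{\pi}\|_\infty \leq \delta$ forces $\|\tilde{\pi}\|_1 \leq 2k\cdot\delta/\sigma_* \leq 2\epsilon$, and hence $d_{TV}(\mathcal{D}_1,\mathcal{D}_2) \leq \tfrac{1}{2}\|\tilde{\pi}\|_1 \leq \epsilon$.

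The main obstacle is the complementary case, where no such well-conditioned collection of $2k$ entrywise products exists. Here the plan is to locate a small set $S_0 \subseteq [n]$ of coordinates such that conditioning on each setting $x_{S_0} = s$ makes both $\mathcal{D}_1$ and $\mathcal{D}_2$ statistically close to a mixture of strictly fewer than $k$ product distributions. This is the analogue of the ``stuck'' case in the subcube algorithm: near rank-deficiency of the candidate basis of entrywise products forces the joint structure of $\m$ to degenerate onto a lower-dimensional face of the simplex of product distributions once $S_0$ is conditioned out. After verifying that each conditional mixture is a mixture of $k' < k$ product distributions up to $\poly(\delta)$ error, apply the inductive hypothesis with parameters $(k',\epsilon/2,\delta')$ for an appropriately amplified $\delta'$, then average the resulting TV bounds over settings of $S_0$.

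The chief technical burden is propagating these polynomial losses through $O(k)$ levels of induction so that the final slack is exactly $\poly(n,1/\epsilon,2^k)^{-O(k)}$. Unlike in the subcube setting, where the entries of $\m$ lie in $\{0,1/2,1\}$ and yield clean a priori singular-value lower bounds for any linearly independent family of $v_S$'s, here the centers are arbitrary in $[0,1]$ and condition numbers can be unboundedly bad, so the right notion of ``well-conditioned basis'' must be formulated via a barycentric-spanner variant, and the conditioning step must be shown to genuinely reduce the number of effective components rather than simply redistribute mass among them.
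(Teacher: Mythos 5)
Your high-level plan — easy direction via the definition of total variation, hard direction by induction with a win-win split between "well-conditioned" and "condition and recurse" — is the right instinct, and matches the spirit of the paper's robust identifiability argument. But the specific mechanisms you propose have gaps that, if followed literally, would not close.

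First, your Case 1 asserts that when the ill-conditioned case fails one can extract $2k$ entrywise products forming a matrix $V$ with $\sigma_{\min}(V) \geq \sigma_* = \poly(n,1/\epsilon,2^k)^{-O(k)}$. In the general setting, unlike for subcubes (Lemma~\ref{lem:cond_number}), there is \emph{no} a priori lower bound on the conditioning of a full-rank family of entrywise products — the marginals are arbitrary reals in $[0,1]$, so the Vandermonde-style matrices involved can be exponentially ill-conditioned for every choice of $2k$ subsets. Your dichotomy is therefore not established; it's not "either some family is $\sigma_*$-conditioned or we can collapse," and you would need to prove that the failure of the first clause implies the second.

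Second, and more fundamentally, your Case 2 proposes to find $S_0 \subseteq [n]$ so that conditioning on each $x_{S_0} = s$ leaves mixtures of fewer than $k$ components. For mixtures of \emph{subcubes}, this follows from Lemma~\ref{lem:collapse} applied to a rank-deficient $\M\vert_{\PP(S_0)}$, but for general product distributions, conditioning on a literal bit-string does not kill any component: it just reweights them (Lemma~\ref{lem:condition}). The paper's proof of Lemma~\ref{lem:hypo2} avoids this obstacle entirely. Instead of conditioning on a bit-string, it reweights by the affine factor $(x_i - \ell_i)$ where $\ell_i$ is the \emph{smallest} marginal appearing in row $i$ across both mixtures (Observation~\ref{obs:psimin}); with this choice the mixing weight of the component achieving $\ell_i$ becomes exactly zero, so a component genuinely drops out. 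The inductive step then reassembles total variation from the $\ell$- and $u$-reweighted distributions via the identities $x = \alpha(x - \ell_i) + \beta(u_i - x)$ and $1-x = \alpha'(x-\ell_i) + \beta'(u_i - x)$, together with the degenerate case (Lemma~\ref{lem:singleprod}) where no coordinate separates the marginals, in which event both mixtures are already close to a single product distribution. Finally, be aware of a circularity risk: your plan invokes "ill-conditioned moment matrix implies closeness to a mixture of $k-1$ components," which is precisely the content of Lemma~\ref{lem:cond_collapse} — and in the paper, that lemma is a \emph{consequence} of the robust identifiability lemma you're trying to prove, not a tool available in its proof.
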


\sitan{In fact this is tight in the sense that $o(k)$-order moments are insufficient to distinguish between some mixtures of $k$ product distributions (see the discussion in Section~\ref{subsec:sq_overview}). Another important point is that in the case of mixtures of subcubes, exact identifiability by $O(\log k)$-order moments (Lemma~\ref{lem:informal_id}) is non-obvious but, once proven, can be bootstrapped in a black-box fashion to robust identifiability using the abovementioned condition number bound. On the other hand, for mixtures of product distributions, exact identifiability by $O(k)$-order moments is straightforward, but without a condition number bound, it is much more challenging to turn this into a result about robust identifiability.}

\subsection{Organization}

The rest of this paper is organized as follows:

\begin{itemize}
	\item Section~\ref{sec:prelims} \---- we set up basic definitions, notation, and facts about mixtures of product distributions and provide an overview of our techniques. 
	\item Section~\ref{sec:subcubes_pre} \---- we describe our algorithm for learning mixtures of subcubes and give the main ingredients in the proof of Theorem~\ref{thm:main012}.
	\item Section~\ref{sec:sq} \---- we prove the statistical query lower bound of Theorem~\ref{thm:mainsq_informal}.
	\item Section~\ref{sec:general_pre} \---- we describe our algorithm for learning general mixtures of product distributions, prove a robust low-degree identifiability lemma in Section~\ref{sec:hypotestinghard}, give the main ingredients in the proof of Theorem~\ref{thm:maingeneral}, and conclude in Section~\ref{subsec:vsfos} with a comparison of our techniques to those of \cite{fos}.
	\item Appendix~\ref{app:samplingtree} \---- we make precise the sampling tree-based framework that our algorithms follow.
	\item Appendix~\ref{app:subcube} \---- we complete the proof of Theorem~\ref{thm:main012}
	\item Appendix~\ref{app:general} \---- we complete the proof of Theorem~\ref{thm:maingeneral}
	\item Appendix~\ref{app:concept_classes} \---- we make precise the connection between mixtures of subcubes and various classical learning theory problems, including stochastic decision trees, juntas, and sparse parity with noise, and prove Theorem~\ref{thm:sdts}.
\end{itemize}

\section{Preliminaries}
\label{sec:prelims}

\subsection{Notation and Definitions}

Given a matrix $A$, we denote by $A^j_i$ the entry in $A$ in row $i$ and column $j$. For a set $S$, we denote $A\vert_{S}$ as the restriction of $A$ to rows in $S$. And similarly $A\vert^T$ is the restriction of $A$ to columns in $T$. We will let $\norm{A}_{\max}$ denote the maximum absolute value of any entry in $A$ and $\norm{A}_{\infty}$ denote the induced $L_{\infty}$ operator norm of $A$, that is, the maximum absolute row sum. We will also make frequent use of entrywise products of vectors and their relation to the multilinear moments of the mixture model.

\begin{defn}
The \emph{entrywise product} $\bigodot_{j\in S} v^j$ of a collection of vectors $\{v^j\}_{j\in S}$ is the vector whose $i^{th}$ coordinate is $\prod_{j\in S} v^j_i$. When $S = \emptyset$, $\bigodot_{j \in S} v_i$ is the all ones vector.
\end{defn}

 %When convenient, we will also use $\odot$ as an infix operator to denote the entrywise product of two vectors. %Denote the concatenation of two matrices/vectors $A_1,A_2$ by $(A_1\|A_2)$ and the concatenation of two strings $s,t$ by $s\oplus t$. 
Given a set $J$, we use $2^J$ to denote the powerset of $J$. Let $U_n$ be the uniform distribution over $\{0,1\}^n$. Also let $\PP(J) = 2^{[n]\backslash J}$ for convenience. Let $\mathcal{D}(x)$ denote the density of $\mathcal{D}$ at $x$. Let $1^n$ be the all ones string of length $n$. 

\begin{defn}
For $S\subseteq[n]$, the \emph{$S$-moment} of $\mathcal{D}$ is $\Pr_{\mathcal{D}}[x_S = 1^{|S|}]$. We will sometimes use the shorthand $\E_{\mathcal{D}}[x_S]$. 
\end{defn}

There can be many choices of mixing weights and centers that yield the same mixture of product distributions $\mathcal{D}$. We will refer to any valid choice of parameters as a realization of $\mathcal{D}$.

\begin{defn}
A mixture of $k$ product distributions $\mathcal{D}$ is a \emph{mixture of $k$ subcubes} if there is a realization of $\mathcal{D}$ with mixing weights $\pi^1, \pi^2, \cdots, \pi^k$ and centers $\mu^1, \mu^2, \cdots, \mu^k$ for which each center has only $\{0, 1/2, 1\}$ values. 
\end{defn}

In this paper, when referring to mixing weights, our superscript notation is only for indexing and never for powering.

There are three main matrices we will be concerned with. 

\begin{defn}
The \emph{marginals matrix} $\m$ is a $n \times k$ matrix obtained by concatenating the centers $\mu^1, \mu^2, \cdots, \mu^k$, for some realization.
	The \emph{moment matrix} $\M$ is a $2^n \times k$ matrix whose rows are indexed by sets $S \subseteq[n]$ and 
	$$\M_S =  \bigodot_{i\in S} \m_i$$
	Finally the \emph{cross-check matrix} $\EE$ is a $2^n \times 2^n$ matrix whose rows and columns are indexed by sets $S, T \subseteq[n]$ and whose entries are in $[0, 1] \cup \{?\}$ where 
	$$\EE^T_S = \begin{cases}\E_{\mathcal{D}}[x_{S\cup T}] & \text{if} \ S\cap T=\emptyset \\ \qquad ? & \text{otherwise}\end{cases}$$
We say that an entry of $\EE$ is \emph{accessible} if it is not equal to $?$.

\end{defn}

It is important to note that $\m$ and $\M$ depend on the choice of a particular realization of $\mathcal{D}$, but that $\EE$ does not because its entries are defined through the moments of $\mathcal{D}$. The starting point for our algorithms is the following observation about the relationship between $\M$ and $\EE$:

\begin{obs}
	For any realization of $\mathcal{D}$ with mixing weights $\pi$ and centers $\mu^1, \mu^2, \cdots, \mu^k$. Then
	\begin{enumerate}
	
	\item[(1)] For any set $S \subseteq [n]$ we have $\M_S\cdot \pi = \E_{\mathcal{D}}[x_S]$
	
	\item[(2)] For any pair of sets $S, T \subseteq [n]$ with $S\cap T = \emptyset$ we have 
	$$\EE^T_S = \Big(\M\cdot\diag(\pi)\cdot\M^{\top}\Big)^T_S$$
	\end{enumerate}
\label{obs:access}
\end{obs}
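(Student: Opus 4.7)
The plan is to prove both parts by unpacking the definitions and invoking the conditional independence of coordinates within each mixture component.

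For part (1), I would first observe that by definition $\M_S = \bigodot_{i \in S}\m_i$, so the $j^{th}$ entry of $\M_S$ is $\prod_{i \in S}\mu^j_i$. Hence $\M_S\cdot\pi = \sum_{j=1}^k \pi^j\prod_{i\in S}\mu^j_i$. Next, I would compute $\E_{\D}[x_S]$ directly by conditioning on which mixture component is chosen: with probability $\pi^j$ the sample comes from the $j^{th}$ product distribution, and conditioned on this the coordinates are independent with $\E[x_i] = \mu^j_i$, so $\Pr[x_S = 1^{|S|}\mid \text{component }j] = \prod_{i\in S}\mu^j_i$. Summing over $j$ gives the same expression, proving (1).

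For part (2), I would bootstrap from (1) by expanding the matrix product. Writing out the $(S,T)$ entry of $\M\cdot\diag(\pi)\cdot\M^{\top}$ gives
\[
\Bigl(\M\cdot\diag(\pi)\cdot\M^{\top}\Bigr)^T_S = \sum_{j=1}^k \M^j_S \cdot \pi^j \cdot \M^j_T = \sum_{j=1}^k \pi^j \prod_{i\in S}\mu^j_i \prod_{i\in T}\mu^j_i.
\]
The key step is then to observe that when $S\cap T = \emptyset$, the two products combine into $\prod_{i \in S\cup T}\mu^j_i$, so the above equals $\sum_{j}\pi^j \prod_{i\in S\cup T}\mu^j_i$, which by part (1) is exactly $\E_{\D}[x_{S\cup T}] = \EE^T_S$.

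There is essentially no obstacle here beyond bookkeeping, but the one point worth emphasizing in the write-up is why the disjointness hypothesis is necessary: if $S\cap T \ne \emptyset$, the product $\prod_{i\in S}\mu^j_i\prod_{i\in T}\mu^j_i$ introduces squared factors $(\mu^j_i)^2$, which are genuinely different from $\mu^j_i$ for $\mu^j_i \in (0,1)$ and do not correspond to any multilinear moment of $\D$. This is precisely the reason such entries of $\EE$ are marked inaccessible, and it foreshadows the central technical difficulty of the paper, namely that only the multilinear moments are observable from samples.
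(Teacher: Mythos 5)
Your proof is correct and is precisely the direct computation the paper leaves implicit (the statement is presented as an Observation without proof, since it follows immediately from the definitions and conditional independence within each component). Your remark on why the disjointness hypothesis matters --- that overlapping $S$ and $T$ would introduce squared factors $(\mu^j_i)^2$, which do not correspond to multilinear moments --- correctly identifies the reason the paper marks such entries of $\EE$ as inaccessible.
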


The idea behind our algorithms are to find a basis for the rows of $\M$ or failing that to find some coordinates to condition on which result in a mixture of fewer product distributions. The major complications come from the fact that we can only estimate the accessible entries of $\EE$ from samples from our distribution. If we had access to all of them, it would be straightforward to use the above relationship between $\M$ and $\EE$ to find a set of rows of $\M$ that span the row space. 

\subsection{Rank of the Moment Matrix and Conditioning}

First we will show that without loss of generality we can assume that the moment matrix $\M$ has full column rank. If it does not, we will be able to find a new realization of $\mathcal{D}$ as a mixture of strictly fewer product distributions. 

\begin{defn}
	A realization of $\mathcal{D}$ is a \emph{full rank realization} if $\M$ has full column rank and all the mixing weights are nonzero. Furthermore if $\rank(\M) = k$ we will say $\mathcal{D}$ has rank $k$. 
\end{defn}

\begin{lem}
Fix a realization of $\mathcal{D}$ with mixing weights $\pi$ and centers $\mu^1, \mu^2, \cdots, \mu^k$ and let $\M$ be the moment matrix. If $\rank(\M) = r < k$ then there are new mixing weights $\pi'$ such that:
\begin{enumerate}

\item[(1)] $\pi'$ has $r$ nonzeros

\item[(2)] $\pi'$ and $\mu^1, \mu^2, \cdots, \mu^k$ also realize $\mathcal{D}$.

\end{enumerate} 
Moreover the submatrix $\M'$ consisting of the columns of $\M$ with nonzero mixing weight in $\pi'$ has rank $r$.
\label{lem:collapse}
\end{lem}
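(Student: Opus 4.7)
The plan is to take $\pi'$ to be a vertex of the polytope
$$P = \{\pi'' \in \R^k_{\ge 0} : \M \pi'' = \M \pi\}.$$
This polytope is nonempty (it contains $\pi$) and bounded: the row of $\M$ indexed by $S = \emptyset$ is the all-ones vector, so every $\pi'' \in P$ satisfies $\sum_i (\pi'')^i = 1$ and thus lies in the probability simplex. Any such $\pi''$ yields a realization of the same distribution $\mathcal{D}$ with the same centers $\mu^1,\ldots,\mu^k$, because by Observation~\ref{obs:access}(1) the vector $\M \pi''$ records the full system of $S$-moments of the corresponding mixture, and any distribution on $\{0,1\}^n$ is uniquely determined by its multilinear moments (every function on the Boolean cube is a multilinear polynomial).

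Next I would invoke the standard polyhedral fact that at any vertex $\pi^*$ of $P$ the columns of $\M$ indexed by the support $S^* = \{i : (\pi^*)^i > 0\}$ are linearly independent. Indeed, if some nontrivial $v \in \R^k$ with $\mathrm{supp}(v) \subseteq S^*$ satisfied $\M v = 0$, then $\pi^* + \varepsilon v$ would remain feasible for all sufficiently small $|\varepsilon|$, and pushing $\varepsilon$ until another coordinate of $\pi^*$ becomes zero would produce a strictly sparser feasible point, contradicting the vertex property. Since any linearly independent subset of the columns of $\M$ has size at most $\rank(\M) = r$, this gives $|S^*| \le r$. Setting $\pi' = \pi^*$, the submatrix $\M'$ consisting of the columns with nonzero weight is exactly $\M\vert^{S^*}$, whose columns are linearly independent by construction, giving (2) together with the ``moreover'' clause that $\rank(\M') = |S^*|$.

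The main obstacle I expect is maintaining non-negativity of the new weights: one cannot simply pick a basis of $r$ columns of $\M$ and solve the linear system $\M\vert^{B}\pi'_B = \M\pi$, since the resulting coefficients can be negative. The LP vertex argument is precisely the tool that sidesteps this, because it walks in the nullspace of the restricted system only as far as the non-negativity constraints permit, automatically producing a probability vector whose support columns are linearly independent. A minor secondary point is that the vertex argument shows $|S^*|\le r$ rather than exactly $r$; if the reduction collapses the support strictly below $r$, then the new realization has correspondingly smaller rank and the ``moreover'' clause continues to hold for the reduced value of the rank, which is consistent with iterating the lemma until no further collapse is possible.
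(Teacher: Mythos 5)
Your proof is essentially the same argument as the paper's, repackaged more cleanly. The paper's proof walks along $\pi + tv$ for a kernel vector $v$ of $\M$ until a coordinate hits zero, and iterates; taking a vertex of the polytope $P = \{\pi'' \ge 0 : \M\pi'' = \M\pi\}$ is exactly what this iteration converges to, and the vertex characterization (active constraints force $\ker(\M\vert^{S^*}) = 0$) is the same linear-algebra fact that makes the iteration terminate with linearly independent columns. Your packaging has the advantage of producing the endpoint in one step and making the appeal to boundedness and nonnegativity explicit rather than implicit in the iteration.

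Your ``minor secondary point'' is actually worth dwelling on, because it is also present (unacknowledged) in the paper's own proof. The paper asserts that the resulting $\M'$ has rank exactly $r$ ``because each time we update $\pi$ we are adding a multiple of a vector in the kernel of $\M$ so the columns whose mixing weight is changing are linearly dependent.'' This reasoning is valid only when a single coordinate of $\pi$ hits zero at time $t_0$: then the deleted column is a linear combination of the surviving ones, so rank is preserved. If two or more coordinates hit zero simultaneously, the single relation $\M v = 0$ cannot certify that removing all of them preserves rank, and indeed it may not. For instance with $n=1$, $k=3$, $\m = (0,\; 1,\; 1/2)$ and $\pi = (1/4, 1/4, 1/2)$, the feasible set is $\{(s,s,1-2s) : s\in[0,1/2]\}$, and the endpoint $(0,0,1)$ is a vertex with support of size $1 < r = 2$. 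So strictly speaking both your proof and the paper's proof establish support size $\le r$ with full column rank on the support, not exactly $r$; but as you observe, that weaker conclusion is all that any downstream use of the lemma requires (the point is always to pass to a \emph{full-rank} realization, i.e.\ one with $\rank(\M') = $ number of components), and the exact value $r$ never matters.
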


\begin{proof}
We will proceed by induction on $r$. When $r = k -1$ there is a vector $v \in \ker(\M)$. The sum of the entries in $v$ must be zero because the first row of $\M$ is the all ones vector. Now if we take the line $\pi + t v$ as we increase $t$, there is a first time $t_0$ when a coordinate becomes zero. Let $\pi' = \pi + t_0 v$. By construction, $\pi'$ is nonnegative and its entries sum to one and it has at most $k -1$ nonzeros. We can continue in this fashion until the columns corresponding to the support of $\pi'$ in $\M$ are linearly independent. Note that as we change the mixing weights, the moment matrix $\M$ stays the same. Also the resulting matrix $\M'$ that we get must have rank $r$ because each time we update $\pi$ we are adding a multiple of a vector in the kernel of $\M$ so the columns whose mixing weight is changing are linearly dependent. 
\end{proof}

Thus when we fix an (unknown) realization of $\mathcal{D}$ in our analysis, we may as well assume that it is a full rank realization. This is true even if we restrict our attention to mixtures of subcubes where the above lemma shows that if $\M$ does not have full column rank, there is a mixture of $r < k$ subcubes that realizes $\mathcal{D}$. Next we show that mixtures of product distributions behave nicely under conditioning:

\begin{lem}
Fix a realization of $\mathcal{D}$ with mixing weights $\pi$ and centers $\mu^1, \mu^2, \cdots, \mu^k$. Let $S \subseteq [n]$ and $s\in\{0,1\}^{|S|}$. The the conditional distribution $\mathcal{D}\vert_{x_S = s}$ can be realized as a mixture of $k$ product distributions with mixing weights $\pi'$ and centers $$\mu^1\vert_{[n]\backslash S}, \mu^2\vert_{[n]\backslash S}, \cdots, \mu^k\vert_{[n]\backslash S}$$
\label{lem:condition}
\end{lem}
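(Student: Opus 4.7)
The plan is to give a direct computation via Bayes' rule, exploiting the fact that within each mixture component, coordinates are independent.

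First, I would write the density of $\mathcal{D}$ at a point $x\in\{0,1\}^n$ as
\[
\mathcal{D}(x) \;=\; \sum_{i=1}^k \pi^i \prod_{j=1}^n (\mu^i_j)^{x_j}(1-\mu^i_j)^{1-x_j},
\]
and then split the product over $j\in S$ versus $j\in[n]\backslash S$. Write $\alpha^i_s := \prod_{j\in S} (\mu^i_j)^{s_j}(1-\mu^i_j)^{1-s_j}$, which is exactly the probability that the $i^{th}$ product component outputs $s$ on coordinates $S$. The joint density factors as
\[
\mathcal{D}(x_S=s, x_{[n]\backslash S}=y) \;=\; \sum_{i=1}^k \pi^i\,\alpha^i_s \;\cdot\; \prod_{j\in[n]\backslash S} (\mu^i_j)^{y_j}(1-\mu^i_j)^{1-y_j}.
\]

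Now I would define the new mixing weights by Bayes' rule:
\[
\pi'^i \;:=\; \frac{\pi^i\,\alpha^i_s}{\sum_{i'=1}^k \pi^{i'}\,\alpha^{i'}_s}.
\]
The denominator is just $\Pr_{\mathcal{D}}[x_S=s]$, which is positive whenever we are conditioning on a string $s$ in the support, so $\pi'$ is well defined; clearly $\pi'^i\ge 0$ and $\sum_i \pi'^i = 1$. Dividing both sides of the factorization above by $\Pr_{\mathcal{D}}[x_S=s]$ yields
\[
\mathcal{D}\vert_{x_S=s}(y) \;=\; \sum_{i=1}^k \pi'^i \prod_{j\in[n]\backslash S} (\mu^i_j)^{y_j}(1-\mu^i_j)^{1-y_j},
\]
which is exactly the density of a mixture of $k$ product distributions with mixing weights $\pi'$ and centers $\mu^i\vert_{[n]\backslash S}$, as claimed.

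There is essentially no obstacle here — the statement is really just the observation that Bayes' rule plus coordinatewise independence within each component preserves the product-distribution structure on the unconditioned coordinates, with the posterior over components playing the role of the new mixing weights. The only minor bookkeeping point is that if $\Pr_{\mathcal{D}}[x_S=s]=0$ the conditional is undefined, but in that case the lemma is vacuous.
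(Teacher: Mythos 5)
Your argument is correct and is essentially the same as the paper's: both compute the posterior mixing weights by Bayes' rule, with your $\alpha^i_s = \prod_{j\in S}(\mu^i_j)^{s_j}(1-\mu^i_j)^{1-s_j}$ matching the paper's entrywise product $\bigodot_{j\in S}\gamma^i_j$ where $\gamma^i_j$ equals $\mu^i_j$ or $1-\mu^i_j$ according to $s_j$. You simply spell out the density factorization more explicitly than the paper does.
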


\begin{proof}
Using Bayes' rule we can write out the mixing weights $\pi'$ explicitly as
$$\pi' = \frac{\pi \bigodot \Big( \bigodot_{i \in S} \gamma^i \Big )}{\Pr_{\mathcal{D}}[x_S = s]}$$
where we have abused notation and used $\bigodot$ as an infix operator and where $\gamma^i = \mu^i + (1-s_i) \cdot (\one -2 \mu^i)$. This follows because the map $x\mapsto x+(1-s)\cdot(1-2x)$ is the identity when $s = 1$ and $x\mapsto 1-x$ when $s = 0$
\end{proof}

We can straightforwardly combine Lemma~\ref{lem:collapse} and Lemma~\ref{lem:condition} to conclude that if $\rank(\M\vert_{2^{[n] \backslash S}}) = r$ then for any $s\in\{0,1\}^{|S|}$ there is a realization of $\mathcal{D}\vert_{x_S = s}$ as a mixture of $r$ product distributions. Moreover if $\mathcal{D}$ was a mixture of subcubes then so too would the realization of $\mathcal{D}\vert_{x_S = s}$ be. 

\subsection{Linear Algebraic Relations between \texorpdfstring{$\M$}{M} and \texorpdfstring{$\EE$}{C}}
\label{subsec:linalgrels}

Even though not all of the entries of $\EE$ are accessible (i.e. can be estimated from samples from $\mathcal{D}$) we can still use it to deduce linear algebraic properties among the rows of $\M$. All of the results in this subsection are elementary consequences of Observation~\ref{obs:access}.

\begin{lem}
Let $T_1, T_2, \cdots, T_r \subseteq [n]$ and set $J = \cup_i T_i$. If the columns
$$\EE^{T_1}\vert_{\PP(J)},\EE^{T_2}\vert_{\PP(J)}, \cdots,\EE^{T_r}\vert_{\PP(J)}$$
are linearly independent then for any realization of $\mathcal{D}$ the rows $\M_{T_1},\M_{T_2}, \cdots, \M_{T_r}$ are also linearly independent. 
\label{lem:certified1}
\end{lem}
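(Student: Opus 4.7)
The plan is to prove the contrapositive: assume that for some realization the rows $\M_{T_1},\dots,\M_{T_r}$ are linearly dependent in $\R^k$, and deduce that the columns $\EE^{T_1}|_{\PP(J)},\dots,\EE^{T_r}|_{\PP(J)}$ are linearly dependent as vectors indexed by subsets of $[n]\setminus J$. The whole statement then follows since $\EE$ does not depend on the realization, while the matrix $\M$ does.

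The key observation to invoke is part $(2)$ of Observation~\ref{obs:access}: whenever $S\cap T=\emptyset$, we have the factorization
\[
\EE^{T}_S \;=\; \sum_{j=1}^{k} \pi^j\,\M^j_S\,\M^j_{T}.
\]
Here the restriction to $S\in \PP(J) = 2^{[n]\setminus J}$ is crucial: since $J=\bigcup_i T_i$, any such $S$ is automatically disjoint from each $T_i$, so every entry of $\EE^{T_i}|_{\PP(J)}$ is accessible and the above identity holds simultaneously for all $i$.

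Now, given a nontrivial linear relation $\sum_i \alpha_i \M_{T_i} = 0$ in $\R^k$, meaning $\sum_i \alpha_i \M^j_{T_i}=0$ for every component $j\in[k]$, I would take the same linear combination of the $\EE^{T_i}$'s and swap the order of summation:
\[
\sum_{i=1}^r \alpha_i \, \EE^{T_i}_S \;=\; \sum_{j=1}^k \pi^j\,\M^j_S\Bigl(\sum_{i=1}^r \alpha_i\,\M^j_{T_i}\Bigr) \;=\; 0 \qquad \text{for every } S\in \PP(J).
\]
This exhibits the same $\alpha_i$'s as a nontrivial dependence among the columns of $\EE$ restricted to $\PP(J)$, completing the contrapositive.

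There is essentially no hard step here; the only point worth stressing is the role of the disjointness condition, which is exactly what ensures that the restriction of $\EE$ to rows indexed by $\PP(J)$ factors cleanly through $\M$ (as opposed to involving the $?$-entries coming from nonmultilinear powers $x_i^2$). This same factorization will be the workhorse for the more delicate ``win--win'' analysis later, but here it suffices on its own.
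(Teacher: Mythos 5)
Your proof is correct and follows essentially the same route as the paper: both arguments invoke the factorization $\EE\vert^{T_1,\dots,T_r}_{\PP(J)} = \M\vert_{\PP(J)}\cdot\diag(\pi)\cdot(\M^\top)\vert^{T_1,\dots,T_r}$ from Observation~\ref{obs:access} (you write it out entrywise, the paper in matrix form) and observe that a nonzero linear dependence among the rows $\M_{T_i}$ immediately transfers to a linear dependence among the corresponding columns of $\EE\vert_{\PP(J)}$. The only cosmetic difference is that you argue the contrapositive while the paper argues by contradiction.
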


\begin{proof}
Fix any realization of $\mathcal{D}$. Using Observation~\ref{obs:access}, we can write:
$$\EE\vert^{T_1,...,T_r}_{\PP(J)} = \M\vert_{\PP(J)}\cdot\diag(\pi)\cdot\Big(\M^\top\Big)\vert^{T_1,...,T_r} $$
Now suppose for the sake of contradiction that the rows of $\M\vert_{T_1,...,T_r}$ are not linearly independent. Then there is a nonzero vector $u$ so that $(\M^\top)\vert^{T_1,...,T_r} u = 0$ which by the above equation immediately implies that the columns of $\EE\vert^{T_1,...,T_r}_{\PP(J)}$ are not linearly independent, which yields our contradiction. 
\end{proof}

Next we prove a partial converse to the above lemma:

\begin{lem}
Fix a realization of $\mathcal{D}$ and let $\mathcal{D}$ have rank $k$. Let $T_1, T_2, \cdots, T_r \subseteq [n]$ and set $J = \cup_i T_i$. If $\rank(\M\vert_{\PP(J)}) = k$ and there are coefficients $\alpha_1, \alpha_2, \cdots, \alpha_r$ so that
$$ \sum^r_{i=1}\alpha_i\EE^{T_i}\vert_{\PP(J)} = 0$$
then the corresponding rows of $\M$ are linearly dependent too \---- i.e. $\sum^r_{i=1}\alpha_i\M_{T_i} = 0$.
\label{lem:certified2}
\end{lem}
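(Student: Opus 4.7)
The plan is to prove Lemma~\ref{lem:certified2} as a kind of dual to Lemma~\ref{lem:certified1}, this time making essential use of the hypothesis $\rank(\M\vert_{\PP(J)}) = k$ and of the full-rank realization convention discussed just above (which ensures that every $\pi^j$ is nonzero). At a high level, the strategy is to rewrite the observable identity $\sum_i \alpha_i \EE^{T_i}\vert_{\PP(J)} = 0$ as a matrix equation in $\M$ and $\pi$, and then strip off the outer factor $\M\vert_{\PP(J)}$ via its trivial kernel and the inner factor $\diag(\pi)$ via invertibility to extract the desired dependence among the rows $\M_{T_i}$.

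Concretely, the first step is to invoke Observation~\ref{obs:access}(2). For each $T_i\subseteq J$ and each $S\in\PP(J)=2^{[n]\backslash J}$ we have $S\cap T_i=\emptyset$, so this entire block of $\EE$ is accessible and satisfies
$$\EE\vert^{T_1,\ldots,T_r}_{\PP(J)} \;=\; \M\vert_{\PP(J)} \cdot \diag(\pi) \cdot \Big(\M^\top\Big)\vert^{T_1,\ldots,T_r}.$$
Right-multiplying by the column vector $\alpha = (\alpha_1,\ldots,\alpha_r)^\top$ and setting $v := \sum_i \alpha_i \M_{T_i}$ (so that $\Big(\M^\top\Big)\vert^{T_1,\ldots,T_r}\alpha = v^\top$), the hypothesis becomes
$$\M\vert_{\PP(J)} \cdot \diag(\pi) \cdot v^\top \;=\; 0.$$
The assumption $\rank(\M\vert_{\PP(J)}) = k$ says that $\M\vert_{\PP(J)}$ has full column rank, hence trivial kernel, so $\diag(\pi) v^\top = 0$. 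Because the realization is full rank, every $\pi^j$ is nonzero, $\diag(\pi)$ is invertible, and so $v=0$, which is exactly $\sum_i \alpha_i \M_{T_i} = 0$.

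There is no deep obstacle here --- the argument is a two-step linear algebra cancellation --- but it is worth flagging that both hypotheses are essential. Without $\rank(\M\vert_{\PP(J)}) = k$ we could not conclude $\diag(\pi)v^\top=0$, and without nonzero mixing weights we could not conclude $v=0$; in either failure mode the observable cross-check identity would be consistent with a nontrivial $v$, so the converse to Lemma~\ref{lem:certified1} genuinely requires both assumptions.
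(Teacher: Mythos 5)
Your proof is correct and takes essentially the same route as the paper: both rewrite the hypothesis via Observation~\ref{obs:access}(2) as $\M\vert_{\PP(J)}\cdot\diag(\pi)\cdot(\M^\top)\vert^{T_1,\ldots,T_r}\alpha = 0$ and then cancel the two left factors. The only difference is cosmetic --- you peel off $\M\vert_{\PP(J)}$ (injective by full column rank) and $\diag(\pi)$ (invertible by nonzero mixing weights) in two explicit steps, whereas the paper treats the product $\M\vert_{\PP(J)}\cdot\diag(\pi)$ as a single left-invertible factor.
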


\begin{proof}
By the assumptions of the lemma, we have that $$\M\vert_{\PP(J)}\cdot\diag(\pi)\cdot\Big(\M^\top\Big)\vert^{T_1,...,T_r}\alpha = 0$$ Now $\rank(\M\vert_{\PP(J)}) = k$ and the fact that the mixing weights are nonzero implies that $\M\vert_{\PP(J)}\cdot\diag(\pi)$ is invertible. Hence we conclude that $\Big(\M^\top\Big)\vert^{T_1,...,T_r}\alpha = 0$ as desired.
\end{proof}

Of course, we don't actually have exact estimates of the moments of $\D$, so in Appendix~\ref{app:subcube} we prove the sampling noise-robust analogues of Lemma~\ref{lem:certified1} and Lemma~\ref{lem:certified2} (see Lemma~\ref{lem:certified_noisy}) needed to get an actual learning algorithm.

%!TEX root = ./fullpaper.tex 

\subsection{Technical Overview for Learning Mixtures of Subcubes}
\label{subsec:usingexp}

With these basic linear algebraic relations in hand, we can explain the intuition behind our algorithms. Our starting point is the observation that if we know a collection of sets $T_1,...,T_k\subset[n]$ indexing a row basis of $\M$, then we can guess one of the $3^{k\cdot|T_1\cup\cdots\cup T_k|}$ possibilities for the entries of $\m\vert_{T_1\cup\cdots\cup T_k}$. Using a correct guess, we can solve for the mixing weights using (1) from Observation~\ref{obs:access}. The point is that because $T_1,...,T_k$ index a row basis of $\M$, the system of equations \begin{equation}\M_{T_j}\cdot\vec{\pi} = \E_{\D}[x_{T_j}], \quad j = 1,...,k\label{eqn:solvemix}\end{equation} has a unique solution which thus must be the true mixing weights in the realization $(\vec{\pi},\m)$. We can then solve for the remaining rows of $\m$ using part 2 of Observation~\ref{obs:access}, i.e. for every $i\not\in T_1\cup\cdots\cup T_k$ we can solve \begin{equation}\M_{T_j} \cdot \diag(\pi) \cdot \m_i^\top = \E_{\mathcal{D}}[x_{T_j \cup \{i\}}] \quad \forall j = 1,...,k.\label{eqn:solveotherrows}\end{equation} Again, because the rows $\M_{T_i}$ are linearly independent and $\pi$ has no zero entries, we conclude that the true value of $\m_i$ is the unique solution.

There are three main challenges to implementing this strategy:

\begin{enumerate}[A]
	\item \textbf{Identifiability}. How do we know whether a given guess for $\m\vert_{T_1\cup\cdots\cup T_k}$ is correct? More generally, how do we efficiently test whether a given distribution is close to the underlying mixture of subcubes?
	\item \textbf{Building a Basis}. How do we produce a row basis for $\M$ without knowing $\M$, let alone one for which $T_1\cup\cdots\cup T_k$ is small enough that we can actually try all $3^{k\cdot|T_1\cup\cdots\cup T_k|}$ possibilities for $\m|_{T_1\cup\cdots\cup T_k}$?
	\item \textbf{Sampling Noise}. Technically we only have approximate access to the moments of $\D$, so even from a correct guess for $\m\vert_{T_1\cup\cdots\cup T_k}$ we only obtain approximations to $\vec{\pi}$ and the remaining rows of $\m$. How does sampling noise affect the quality of these approximations?
\end{enumerate}

% The main challenge, which we will address in the following subsections, is how to find such a basis $\mathcal{B}$ without resorting to brute-force search. If we did use brute-force search, our algorithm would instead run in time much worse than $n^{k \log k}$. 

\subsubsection{Identifiability}
\label{subsubsec:identify}

As our algorithms will be based on the method of moments, an essential first question to answer is that of identifiability: what is the minimum $d$ for which mixtures of $k$ subcubes are uniquely identified by their moments of degree at most $d$? As alluded to in Section~\ref{subsec:ourresults}, it is enough to answer \sitan{Question~\ref{question:pre_span}, which we can restate in our current notation as}:

\begin{question}
	Given a matrix $\m\in\{0,1/2,1\}^{n\times k}$ with associated $2^n\times k$ moment matrix $\M$, what is the minimum $d$ for which the rows $\{\M_S\}_{|S|\le d}$ span all rows of $\M$?\label{question:span}
\end{question}

Let $d(k)$ be the largest $d$ for Question~\ref{question:span} among all $\m\in\{0,1/2,1\}^{n\times k}$. Note that $d(k) = \Omega(\log k)$ just from considering a $O(\log k)$-sparse parity with noise instance as a mixture of $k$ subcubes. The reason getting upper bounds on $d(k)$ is directly related to identifiability is that $k$ subcubes are uniquely identified by their moments of degree at most $d(2k)$. Indeed, if $(\vec{\pi}_1,\m_1)$ and $(\vec{\pi}_2,\m_2)$ realize different distributions $\D_1$ and $\D_2$ , then there must exist $S\subseteq[n]$ for which $$(\M_1)_S\cdot\vec{\pi}_1 = \E_{\D_1}[x_S]\neq \E_{\D_2}[x_S] = (\M_2)_S\cdot\vec{\pi}_2.$$ In other words, the vector $(\vec{\pi}_1\vert -\vec{\pi}_2)\in\R^{2k}$ does not lie in the right kernel of the matrix $2^n\times 2k$ matrix $(\M_1\vert \M_2)$. But because $\N\triangleq (\M_1\vert\M_2)$ is the moment matrix of the matrix $(\m_1\vert\m_2)\in\{0,1/2,1\}^{n\times 2k}$, its rows are spanned by the rows $(\N_S)_{|S|\le d(2k)}$, so there in fact exists $S'$ of size at most $d(2k)$ for which $\E_{\D_1}[x_{S'}]\neq \E_{\D_2}[x_{S'}]$. Finally, note also that the reverse direction of this argument holds, that is, if mixtures of $k$ subcubes $\D_1$ and $\D_2$ agree on all moments of degree at most $d(2k)$, then they are identical as distributions.

In Section~\ref{subsec:pigeon}, we show that $d(k) = \Theta(\log k)$. The idea is that there is a natural correspondence between 1) linear relations among the rows of $\M_S$ for $|S|\le d$ and 2) multilinear polynomials of degree at most $d$ which vanish on the rows of $\m$. The bound on $d(k)$ then follows from cleverly constructing an appropriate low-degree multilinear polynomial.

Note that the above discussion only pertains to \emph{exact identifiability}. For the purposes of our learning algorithm, we want \emph{robust identifiability}, i.e. there is some $d'(k)$ such that $\D_1$ and $\D_2$ are far in statistical distance if and only if they differ noticeably on some moment of degree at most $d'(k)$. It turns out that it suffices to take $d'(k)$ to be the same $\Theta(\log k)$, and in Section~\ref{sec:samplingnoise} below, we sketch how we achieve this.

Once we have robust identifiability in hand, we have a way to resolve Challenge A above: to check whether a given guess for $\m\vert_{T_1\cup\cdots\cup T_k}$ is correct, compute the moments of degree at most $\Theta(\log k)$ of the corresponding candidate mixture of subcubes and compare them to empirical estimates of the moments of the underlying mixture. If they are close, then the mixture of subcubes we have learned is close to the true distribution.

As we will see below though, while the bound of $d(k) = \Theta(\log k)$ is a necessary first step to achieving a quasipolynomial running time for our learning algorithm, there will be many more steps and subtleties along the way to getting an actual algorithm.

\subsubsection{Building a Basis}
\label{subsubsec:buildbasis}

We now describe how we address Challenge B. The key issue is that we do not have access to the entries of $\M$ (and $\M$ itself depends on the choice of a particular realization). Given the preceding discussion about Question~\ref{question:span}, a naive way to circumvent this is simply to guess a basis from among all combinations of at most $k$ rows from $\{\M_S\}_{|S|\le d(k)}$, but this would take time $n^{\Theta(k\log k)}$.

As we hinted at in Section~\ref{subsec:ourresults}, we will overcome the issue of not having access to $\M$ by using the accessible entries of $\EE$, which we can easily estimate by drawing samples from $\D$, as a surrogate for $\M$ (see Lemmas~\ref{lem:certified1} and \ref{lem:certified2}). To this end, one might first try to use $\EE$ to find a row basis for $\M$ by looking at the submatrix of $\EE$ consisting of entries $\{\EE^T_S\}_{S,T: |S|,|T|\le d(k)}$ and simply picking out a column basis $\{T_1,...,T_r\}$ for this submatrix. Of course, the crucial issue is that we can only use the accessible entries of $\EE$.

Instead, we will incrementally build up a row basis. Suppose at some point we have found a list of subsets $T_1,...,T_m$ indexing linearly independent rows of $\M$ for some realization of $\D$ and are deciding whether to add some set $T$ to this list. By Lemmas~\ref{lem:certified1} and \ref{lem:certified2}, if $\rank(\M\vert_{\PP(J)}) = k$, where $J = T\cup(T_1\cup\cdots\cup T_m)$, then $\M_T$ is linearly independent from $\M_{T_1},...,\M_{T_m}$ if and only if the column vector $\EE^T\vert_{\PP(J)}$ is linearly independent from column vectors $\EE^{T_1}\vert_{\PP(J)},...,\EE^{T_m}\vert_{\PP(J)}$.\footnote{Note that while the dimension of these column vectors is exponential in $n$, the discussion in Section~\ref{subsubsec:identify} implies that it suffices to look only at the coordinates of these columns that are indexed by $S$ with $|S|\le d(k) = \Theta(\log k)$.}

If we make the strong assumption that we always have that $\rank(\M\vert_{\PP(J)}) = k$ in the course of running this procedure, the problem of finding a row basis for $\M$ reduces to the following basic question:

\begin{question}
Given $T_1,...,T_m$ indexing linearly independent rows of a moment matrix $\M$, as well as access to an oracle which on input $T$ decides whether $\M_T$ lies in the span of $\M_{T_1},...,\M_{T_m}$, how many oracle calls does it take to either find $T$ for which $\M_T$ lies outside the span of $\M_{T_1},...,\M_{T_m}$ or successfully conclude that $\M_{T_1},...,\M_{T_m}$ are a row basis for $\M$?
\end{question}

Section~\ref{subsubsec:identify} tells us it suffices to look at all remaining subsets of size at most $d(k)$ which have not yet been considered, which requires checking at most $n^{O(\log k)}$ subsets before we decide whether to add a new subset to our basis.

Later, in Section~\ref{subsec:greedy}, we will show the following alternative approach which we call \textsc{GrowByOne} suffices: simply consider all subsets of the form $T_j\cup\{i\}$ for $1\le i\le m$ and $i\not\in T_1\cup\cdots\cup T_m$. If $T_1,...,T_m$ have up to this point been constructed in this incremental fashion, we prove that if no such $T_j\cup\{i\}$ can be added to our list and moreover we have that $\rank(\M\vert_{\PP(J)}) = \rank(\M\vert_{\PP(T_1\cup\cdots\cup T_m\cup\{i\})}) = k$ for every $i$, then $T_1,...,T_m$ indexes a row basis for $\M$.

The advantages of \textsc{GrowByOne} are that it 1) only requires checking at most $nk$ subsets before we decide whether to add a new subset to our basis, 2) it works even when we assume $\M$ is the moment matrix of a mixture of \emph{arbitrary product distributions}, and 3) it will simplify our analysis regarding issues of sampling noise.

\subsubsection{Making Progress When Basis-Building Fails}
\label{subsubsec:whatiffail}

The main subtlety is that the correctness of \textsc{GrowByOne} as outlined in Section~\ref{subsubsec:buildbasis} hinges on the fact that $\rank(\M\vert_{\PP(J)}) = k$ at every point in the algorithm. But if this is not the case and yet $\EE^{T}_{\PP(J)}$ lies in the span of $\EE^{T_1}_{\PP(J)},...,\EE^{T_m}_{\PP(J)}$, we cannot conclude whether $\M_T$ lies in the span of $\M_{T_1},...,\M_{T_m}$. In particular, suppose we found that $\EE^{T}_{\PP(J)}$ lies in the span of $\EE^{T_1}_{\PP(J)},...,\EE^{T_m}_{\PP(J)}$ for every candidate subset $T = T_j\cup\{i\}$ and therefore decided to add nothing more to the list $T_1,...,T_m$. Then while Lemma~\ref{lem:certified1} guarantees that the rows of $\M$ corresponding to $T_1,...,T_m$ are linearly independent, we can no longer ascertain that they span all the rows of $\M$.

The key idea is that if this is the case, then there must have been some candidate $T = T_j\cup\{i\}$ such that $\rank(\M\vert_{\PP(T_1\cup\cdots\cup T_m\cup\{i\})}) < k$. We call the set of all such $i$ the set of \emph{impostors}. By Lemma~\ref{lem:collapse}, if $i$ is an impostor, the conditional distribution $(\D\vert x_{T_1\cup\cdots\cup T_m\cup\{i\}} = s)$ can be realized as a mixture of strictly fewer than $k$ subcubes for any bitstring $s$. The upshot is that even if the list $T_1,...,T_m$ output by \textsc{GrowByOne} does not correspond to a row basis of $\M$, we can make progress by conditioning on the coordinates $T_1\cup\cdots\cup T_m\cup\{i\}$ for an impostor $i$ and recursively learning mixtures of fewer subcubes.

On the other hand, the issue of actually identifying an impostor $i\not\in T_1\cup\cdots\cup T_m$ is quite delicate. Because there may be up to $k$ levels of recursion, we cannot afford to simply brute force over all $n - |T_1\cup\cdots\cup T_n|$ possible coordinates. Instead, the idea will be to pretend that $T_1,...,T_m$ actually corresponds to a row basis of $\M$ and use this to attempt to learn the parameters of the mixture. It turns out that either the resulting mixture will be close to $\D$ on all low-degree moments and robust identifiability will imply we have successfully learned $\D$, or it will disagree on some low-degree moment, and we show in Section~\ref{subsec:progress} that this low-degree moment must contain an impostor $i$.

\subsubsection{Sampling Noise}
\label{sec:samplingnoise}
Obviously we only have access to empirical estimates of the entries of $\EE$, so for instance, instead of checking whether a column of $\EE$ lies in the span of other columns of $\EE$, we look at the corresponding $L_{\infty}$ regression problem. In this setting, the above arguments still carry over provided that the submatrices of $\M$ and $\EE$ used are well-conditioned. We show in Section~\ref{subsec:noisypreview} that the former are well-conditioned by Cramer's, as they are matrices whose entries are low-degree powers of 1/2, and this on its own can already be used to show robust identifiability. By Observation~\ref{obs:access}, the submatrices of $\EE$ used in the above arguments are also well-conditioned provided that $\pi$ has no small entries. But if $\pi$ has small entries, intuitively we might as well ignore these entries and only attempt to learn the subcubes of the mixture which have non-negligible mixing weight.

In Section~\ref{subsec:noisypreview}, we explain in greater detail the subtleties that go into dealing with these issues of sampling noise.

\subsection{Technical Overview for SQ Lower Bound}
\label{subsec:sq_overview}

To understand the limitations of the method of moments for more general mixtures of product distributions, we can first ask Question~\ref{question:span} more generally for arbitrary matrices $\m\in\R^{n\times k}$, but in this case it is not hard to see that the minimum $d$ for which the rows $\{\M_S\}_{|S|\le d}$ span all rows of $\M$ can be as high as $k - 1$. Simply take $\m$ to have identical rows, each of which consists of $k$ distinct entries $z_1,...,z_k\in[0,1]$. Then $\M_S = (z^{|S|}_1,...,z^{|S|}_k)$, so by usual properties of Vandermonde matrices, the rows $\{\M_S\}_{|S|\le d}$ will not span the rows of $\M$ until $d\ge k -1$.\footnote{Note that by the connection between linear relations among rows of $\M_S$ and multilinear polynomials vanishing on the rows of $\m$, this example is also tight, i.e. $\{\M_S\}_{|S|\le k-1}$ will span the rows of $\M$ for any $m\in\R^{n\times k}$.}

From such an $\m$, we immediately get a pair of mixtures $(\vec{\mu}_1,\m_1)$ and $(\vec{\mu}_2,\m_2)$ that agree on all moments of degree at most $k - 2$ but differ on moments of degree $k - 1$: let $\vec{\mu}_1$ and $-\vec{\mu}_2$ up to scaling be the positive and negative parts of an element in the kernel of $\{M_S\}_{|S|<k-1}$, and let $\m_1$ and $\m_2$ be the corresponding disjoint submatrices of $\m$. But this is not yet sufficient to establish an SQ lower bound of $n^{\Omega(k)}$.

Instead, we will exhibit a large collection $\mathcal{C}$ of mixtures of $k$ product distributions that all agree with the uniform distribution over $\{0,1\}^n$ on moments up to some degree $d^*(k)-1$ but differ on some moment of degree $d^*(k)$. This will be enough to give an SQ lower bound of $n^{\Omega(d^*(k))}$.

The general approach is to construct a mixture $\mathcal{A}$ of product distributions over $\{0,1\}^{d^*(k)}$ whose top-degree moment differs noticeably from $2^{-d^*(k)}$ but whose other moments agree with that of the uniform distribution over $\{0,1\}^{d^*(k)}$. The collection $\mathcal{C}$ of mixtures will then consist of all product measures given by $\mathcal{A}$ in some $d^*(k)$ coordinates $S$ and the uniform distribution over $\{0,1\}^{n-d^*(k)}$ in the remaining coordinates $[n]\backslash S$. This general strategy of embedding a low-dimensional moment-matching distribution $\mathcal{A}$ in some hidden set of coordinates is the same principle behind SQ lower bounds for learning sparse parity \cite{kearns1998efficient}, robust estimation and density estimation of mixtures of Gaussians \cite{diakonikolas2016statistical}, etc.

The main challenge is to actually construct the mixture $\mathcal{A}$. We reduce this problem to Question~\ref{question:hard} and give an explicit construction in Section~\ref{sec:sq} with $d^*(k) = \Theta(\sqrt{k})$.

\subsection{Technical Overview for Learning Mixtures of Product Distributions}

% \sitan{Mention how condition number/sampling noise issues make adapting the subcubes approach very problematic and even make hypothesis testing result hard to show. Also point out that brute forcing now takes (n/eps)^{k^2} instead of just 2^{k^2}.}

The main difficulty with learning mixtures of general product distributions is that moment matrices can be arbitrarily ill-conditioned, which makes it far more difficult to handle sampling noise. Indeed, with exact access to the accessible entries of $\EE$, one can in fact show there exists a $n^{O(d^*(k))}$ algorithm for learning mixtures of general product distributions, where $d^*(k)$ is the answer to Question~\ref{question:hard}, though we omit the proof of this in this work. In the presence of sampling noise, it is not immediately clear how to adapt the approach from Section~\ref{subsec:usingexp}. The three main challenges are:

\begin{enumerate}[A]
	\item \textbf{Robust Identifiability}. For mixtures of subcubes, robust identifiability essentially followed from exact identifiability and a condition number bound on $\M$. Now that $\M$ can be arbitrarily ill-conditioned, how do we still show that two mixtures of product distributions that are far in statistical distance must differ noticeably on some low-degree moment?
	\item \textbf{Using $\EE$ as a Proxy for $\M$}. Without a condition number bound, can approximate access to $\EE$ still be useful for deducing (approximate) linear algebraic relations among the rows of $\M$?
	\item \textbf{Guessing Entries of $\m$}. Entries of $\m$ are arbitrary scalars now, rather than numbers from $\{0,1/2,1\}$. We can still try discretizing by guessing integer multiples $0,\eta,2\eta,...,1$ of some small scalar $\eta$, but how small must $\eta$ be for this to work?
\end{enumerate}

For Challenge A, we will show that if two mixtures of $k$ product distributions are far in statistical distance, they must differ noticeably on some moment of degree at most $2k$. Roughly, the proof is by induction on the total number of product distributions in the two mixtures, though the inductive step is rather involved and we defer the details to Section~\ref{sec:hypotestinghard}, which can be read independently of the other parts of the proof of Theorem~\ref{thm:maingeneral}.

Next, we make Challenges B and C more manageable by shifting our goal: instead of a row basis for $\M$, we would like a row basis for $\m$ that is well-conditioned in an appropriate sense. Specifically, we want a row basis $J\subset[n]$ for $\m$ such that if we express any other row of $\m$ as a linear combination of this basis, the corresponding coefficients are small. This is precisely the notion of \emph{barycentric spanner} introduced in \cite{awerbuch2008online}, where it was shown that any collection of vectors has a barycentric spanner. We can find a barycentric spanner for the rows of $\m$ by simply guessing all $\binom{n}{k}$ possibilities. We then show that if $J = \{i_1,...,i_r\}$ is a barycentric spanner and $\M\vert_{\PP(J\cup i_j)}$ is well-conditioned in an $L_{\infty}$ sense for all $1\le j\le r$, then in analogy with Lemma~\ref{lem:certified2}, one can learn good approximations to the true coefficients expressing the remaining rows of $\m$ in terms of $\m_{i_1},...,\m_{i_r}$. Furthermore, these approximations are good enough that it suffices to pick the discretization parameter in Challenge C to be $\eta = \poly(\epsilon/n)$, in which case the $k^2$ entries of $\m\vert_J$ can be guessed in time $(n/\epsilon)^{O(k^2)}$.

If instead $\M\vert_{\PP(J\cup\{i_j\})}$ is ill-conditioned for some ``impostor'' $1\le j\le r$, we can afford now to simply brute-force search for the impostor, but we cannot appeal to Lemma~\ref{lem:collapse} to argue as before that each of the conditional distributions $(\D\vert x_{J\cup\{i_j\}} = s)$ is a mixture of fewer than $k$ product distributions, because $\M\vert_{\PP(J\cup\{i_j\})}$ might still have rank $k$. Instead, we show in Section~\ref{subsec:collapseill} that robust identifiability implies that these conditional distributions are \emph{close} to mixtures of at most $k - 1$ product distributions, and this is enough for us to make progress and recursively learn.

 % For general mixtures of product distributions, $\M\vert_{\PP(I)}$ being ill-conditioned does not imply that $\mathcal{D}\vert_{x_S = s}$ can be realized as a mixture of at most $k - 1$ product distributions. Instead, we argue that its moments are close to those of a mixture of $k - 1$ product distributions. We will then show that two mixtures of product distributions that are close on all degree $O(k)$ moments are also close in total variation distance, so once again we can recurse, although we need to be careful about how errors compound.

%!TEX root = fullpaper.tex

\section{Learning Mixtures of Subcubes in Quasipolynomial Time}
\label{sec:subcubes_pre}

%Here we sketch a proof of Theorem~\ref{thm:main012}. Throughout, let $\D$ be a mixture of $k$ subcubes in $\{0,1\}^n$.

\subsection{Logarithmic Moments Suffice}
\label{subsec:pigeon}

Recall that a mixture of $k$ subcubes can represent the distribution on positive examples from an $s$-sparse parity with noise when $k = 2^{s-1} + 1$. It is well known that every $s-1$ moments of such a distribution are indistinguishable from the uniform distribution. Here we prove a converse and show that for mixtures of $k$ subcubes all of the relevant information is contained within the $O(\log k)$ moments. More precisely we show:

\begin{lem}
Let $\mathcal{D}$ be a mixture of $k$ subcubes and fix a realization where the centers are $\{0, 1/2, 1\}$-valued. Let $\M$ be the corresponding moment matrix. Then 
$$\Big \{ \M_T \Big | |T| < 2 \log k \Big \}$$
span the rows of $\M$. 
\label{lem:pigeon}
\end{lem}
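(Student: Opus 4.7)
The plan is to dualize the spanning statement. Letting $\mathbf{v}_1,\ldots,\mathbf{v}_k\in\{0,1/2,1\}^n$ be the columns of $\m$ and $\mathbf{v}_j^T := \prod_{i\in T}(\mathbf{v}_j)_i$, a linear relation $\sum_j \alpha_j \M_T = 0$ is the same as $\sum_j \alpha_j \mathbf{v}_j^T = 0$ for every $T$, i.e.\ the $T$-moment of the signed measure $\mu := \sum_j \alpha_j \delta_{\mathbf{v}_j}$ vanishes. So the lemma is equivalent to the assertion that whenever such a $\mu$ (supported on at most $k$ points of $\{0,1/2,1\}^n$) has all moments of degree less than $2\log k$ equal to zero, all of its moments vanish. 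Taking the smallest $S$ on which this could fail, restricting to the coordinates in $S$, and renaming $|S|$ as $m$, it further reduces to the following top-moment claim: for every $m\ge 2\log k$ and every signed measure $\mu$ on $\{0,1/2,1\}^m$ with support of size at most $k$ satisfying $\mu^T = 0$ for all $T\subsetneq[m]$, one must have $\mu^{[m]} = 0$.

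I would prove the top-moment claim by strong induction on $k$. The base case $k=1$ is immediate from the $T=\emptyset$ constraint, and if $\mu$ has fewer than $k$ distinct support points we can collapse duplicates and invoke the inductive hypothesis directly. So assume $\mu = \sum_{\ell=1}^k \beta_\ell \delta_{\mathbf{w}_\ell}$ with all $\mathbf{w}_\ell$ distinct. The clever ingredient is a degree-one polynomial in a single coordinate that annihilates one of the three possible values at that coordinate. Fix any $i^*\in[m]$, and by pigeonhole over the three values pick $v^*\in\{0,1/2,1\}$ with $|J_{v^*}|\ge k/3$, where $J_{v^*} := \{\ell : (\mathbf{w}_\ell)_{i^*} = v^*\}$. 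Let $g_{v^*}$ be the degree-one polynomial vanishing at $v^*$---namely $g_0(x) = x$, $g_{1/2}(x) = 1 - 2x$, and $g_1(x) = 1 - x$---and define the projected signed measure
$$\mu' := \sum_{\ell} \beta_\ell\, g_{v^*}\bigl((\mathbf{w}_\ell)_{i^*}\bigr)\, \delta_{\mathbf{w}_\ell|_{[m]\setminus\{i^*\}}}$$
on $\{0,1/2,1\}^{m-1}$. Because $g_{v^*}$ kills contributions from $J_{v^*}$, the support of $\mu'$ has size at most $k - |J_{v^*}| \le 2k/3 < k$.

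Writing $g_{v^*}(x) = a + bx$ with $b\ne 0$, linearity gives $(\mu')^{T'} = a\,\mu^{T'} + b\,\mu^{T'\cup\{i^*\}}$ for every $T'\subseteq[m]\setminus\{i^*\}$. For $T' \subsetneq [m]\setminus\{i^*\}$, both $T'$ and $T'\cup\{i^*\}$ are proper subsets of $[m]$, so both terms vanish by the hypothesis on $\mu$ and hence $(\mu')^{T'} = 0$. The crucial numerical check is that the inductive hypothesis applies to $\mu'$: we need $m - 1 \ge 2\log(2k/3)$, which rearranges to $2\log(3/2) \ge 1$, i.e.\ $3/2 \ge \sqrt{2}$---this holds, and it is exactly what pins down the constant in the bound $2\log k$. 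Induction then yields $(\mu')^{[m]\setminus\{i^*\}} = 0$, while the same identity also gives $(\mu')^{[m]\setminus\{i^*\}} = a\,\mu^{[m]\setminus\{i^*\}} + b\,\mu^{[m]} = b\,\mu^{[m]}$ (using $\mu^{[m]\setminus\{i^*\}} = 0$ from the hypothesis), so $\mu^{[m]} = 0$ as desired.

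The hard part will be pinning down the right inductive gadget: a priori there are three value classes per coordinate but only two degrees of freedom at each level of the multilinear moment map, so one must carefully pick a degree-one $g_{v^*}$ in order to excise exactly one class while remaining expressible via moments of $\mu$. Once this killer polynomial is in hand the rest is bookkeeping, but the factor $1/3$ from pigeonhole combined with $\log(3/2) \ge 1/2$ is what tightly forces the threshold $2\log k$ rather than any smaller constant times $\log k$.
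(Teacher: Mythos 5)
Your proof is correct and is essentially the paper's argument presented in dual form. The paper greedily builds the annihilating multilinear polynomial $p(x)=\prod_i(x_i-\lambda_i)$, pigeonholing over $\{0,1/2,1\}$ to kill at least a third of the surviving columns $R_t$ at each coordinate; you run the identical pigeonhole on the signed measure $\mu$, pushing it forward one coordinate at a time and inducting on the support size, and your numerical check $3/2>\sqrt 2$ is the per-step unrolling of the paper's implicit $\log_{3/2}k\le 2\log_2 k$.
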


\begin{proof}
Fix any set $S \subseteq [n]$ of size $m = 2\log k$. Without loss of generality suppose that $S = \{1, 2, \cdots, m\}$. We want to show that $\M_S$ lies in the span of $\M_T$ for all $T \subsetneq S$. Our goal is to show that there are coefficients $\alpha_T$ so that
$$\sum_{T \subseteq S} \alpha_T \M_T = 0$$
and that $\alpha_S$ is nonzero. If we can do this, then we will be done. First we construct a multilinear polynomial
$$p(x) = \prod_{i=1}^m \Big (x_i - \lambda_i \Big)$$
where each $\lambda_i \in \{0, 1/2, 1\}$ and with the property that for any $j$, $p(\m^j\vert_S) = 0$. If we had such a polynomial, we could expand
$$p(x) = \sum_{T \subseteq S} \alpha_T \prod_{i \in T} x_i$$
By construction $\alpha_S = 1$. And now for any $j$ we can see that the $j^{th}$ coordinate of $\sum_{T \subseteq S} \alpha_T \M_T $ is exactly $p(\m^j\vert_S)$, which yields the desired linear dependence. 

All that remains is to construct the polynomial $p$. We will do this by induction. Suppose we have constructed a polynomial $p_t(x) = \prod_{i =1}^t (x_i - \lambda_i)$ and let 
$$ R_t = \Big \{ j \Big | p_t(\m^j \vert_S) \neq 0 \Big \}$$
In particular $R_t \subseteq [k]$ is the set of surviving columns. By the pigeonhole principle we can choose $\lambda_{t+1} \in \{0, 1/2, 1\}$ so that $|R_{t+1}| \leq \lfloor (2/3) |R_{t}| \rfloor$. For some $\ell \leq m$ we have that $R_\ell = \emptyset$ at which point we can choose
$$p(x) = \Big (\prod_{i = 1}^\ell (x_i - \lambda_i) \Big ) \cdot \prod_{i = \ell +1 }^m x_i$$
which completes the proof. 
\end{proof}

Recall that $\PP(J) = 2^{[n] \backslash J}$. Now Lemma~\ref{lem:pigeon} implies that 
$$\rank(\M\vert_{\PP(J)}) = \rank(\M\vert_{\PP'(J)})$$
where $\PP'(J)$ is the set of all subsets $T \subseteq [n] \backslash J$ with $|T| < 2 \log k$. Thus we can certify whether a basis $\M_{T_1}, \M_{T_2}, \cdots, \M_{T_k}$ is a basis by, instead of computing the entire vector $\EE^{T_i}\vert_{\PP(J)}$, working with the much smaller vector $\EE^{T_i}\vert_{\PP'(J)}$, where as usual $J = \cup_i T_i$. 

We remark that if $\mathcal{D}$ were not a mixture of subcubes, but a general mixture of product distributions, then we would need to look at $\M_T$ for $|T| \leq k-1$ in order to span the rows of $\M$. First this is necessary because we could set $v$ to be a length $k$ vector with $k$ distinct entries in the range $[0, 1]$. Now set each row of $\m$ to be $v$. In this example, the entrywise product of $v$ with itself $k-1$ times is linearly independent of the vectors we get from taking the entrywise product between zero and $k-2$ times. On the other hand, this is tight:

\begin{lem}
Let $\mathcal{D}$ be a mixture of $k$ product distributions and fix a realization. Let $\M$ be the corresponding moment matrix. Then 
$$\Big \{ \M_T \Big | |T| < k \Big \}$$
span the rows of $\M$. 
\label{lem:pigeon_k}
\end{lem}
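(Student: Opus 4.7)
The plan is to mimic the structure of the proof of Lemma~\ref{lem:pigeon}, but with two key modifications. First, because the centers in $\m$ are now arbitrary values in $[0,1]$ rather than restricted to $\{0,1/2,1\}$, the pigeonhole argument is unavailable and the greedy construction of a vanishing polynomial can only be guaranteed to eliminate one surviving column per step rather than a third; this is precisely the source of the weaker bound $|T|<k$ versus $|T|<2\log k$. Second, the claim must hold for $|S|$ possibly strictly larger than $k$, which I would handle by induction on $|S|$.

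For the inductive step with $|S|\ge k$, assume without loss of generality that $S=\{1,\ldots,m\}$. I would construct a multilinear polynomial of the form
\[p(x)=\prod_{i=1}^{\ell}(x_i-\lambda_i)\cdot\prod_{i=\ell+1}^{m}x_i\]
that vanishes on every column of $\m\vert_{S}$. The $\lambda_i\in[0,1]$ are chosen greedily: starting with $R_0=[k]$, at step $t$ pick $\lambda_t=\m^{j_t}_t$ for an arbitrary surviving column $j_t\in R_{t-1}$. This forces column $j_t$ out of $R_t$, so $|R_t|\le|R_{t-1}|-1$, and after at most $\ell\le k\le m$ steps we reach $R_\ell=\emptyset$. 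Since $\ell\le m$, there are enough coordinates of $S$ to place every factor.

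Expanding $p(x)=\sum_{T\subseteq S}\alpha_T\prod_{i\in T}x_i$, the coefficient of the leading monomial $\prod_{i\in S}x_i$ is $\alpha_S=1$, and every other $T$ with nonzero coefficient is a proper subset of $S$. Applying $p(\m^j\vert_{S})=0$ for every $j\in[k]$ and reading off the $j$-th coordinate of $\sum_{T}\alpha_T\M_T$ yields the linear relation $\M_S=-\sum_{T\subsetneq S}\alpha_T\M_T$, exhibiting $\M_S$ as a combination of rows $\M_T$ with $|T|<|S|$.

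The proof is then completed by strong induction on $|S|$. The base case $|S|<k$ is immediate, and in the inductive step each $\M_T$ appearing on the right-hand side above has $|T|<|S|$ and therefore lies in the span of $\{\M_U:|U|<k\}$ by the inductive hypothesis. The main subtlety relative to Lemma~\ref{lem:pigeon} is that the bound $|T|<k$ cannot be read off directly from a single polynomial construction when $|S|>k$, since the greedy process no longer kills columns fast enough to keep $\ell$ logarithmic; it is the inductive collapse that ultimately delivers the bound.
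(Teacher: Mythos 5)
Your proof is correct and follows essentially the same route as the paper: the paper simply defers to the proof of Lemma~\ref{lem:pigeon}, replacing the pigeonhole over $\{0,1/2,1\}$ with a one-column-per-step greedy choice of $\lambda_t\in[0,1]$, exactly as you do. You additionally spell out the strong induction on $|S|$ needed to handle sets of size larger than $k$, which the paper's terse proof leaves implicit.
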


\begin{proof}
The proof is almost identical to the proof of Lemma~\ref{lem:pigeon}. The only difference is that we allow $\lambda_i \in [0, 1]$ and instead of reducing the size of $R_t$ geometrically each time, we could reduce it by one.
\end{proof}

\subsection{Local Maximality}
\label{subsec:localmax}

\sitan{In the following three subsections, we explain in greater detail how to produce a row basis for $\M$, as outlined in Sections~\ref{subsubsec:buildbasis} and \ref{subsubsec:whatiffail}.} Recall that Lemma~\ref{lem:certified1} and Lemma~\ref{lem:certified2} give us a way to certify that the sets we are adding to $\mathcal{B}$ correspond to rows of $\M$ that are linearly independent of the ones we have selected so far. Motivated by these lemmas, we introduce the following key definitions:

\begin{defn}
Given a collection $\B = \{T_1, T_2, \cdots, T_r\}$ of subsets we say that $\B$ is \emph{certified full rank} if $\EE\vert^{T_1, T_2, \cdots, T_r}_{\PP'(J)}$ has full column rank, where $J = \cup_i T_i$.
\end{defn}

Note here we have used $\PP'(J) =T \subseteq [n] \backslash J$ with Lemma~\ref{lem:pigeon} in mind. 

\begin{defn}
 Let $\B= \{T_1, T_2, \cdots, T_r\}$ be certified full column rank. Let $J = \cup_i T_i$. Suppose there is no
 \begin{enumerate}
 
 \item[(1)] $T' \subseteq J$ or
 
 \item[(2)] $T' = T_i \cup\{j\}$ for $j \notin J$
 
 \end{enumerate}
 for which $\EE\vert^{T_1, T_2, \cdots, T_r, T'}_{\PP'(J')}$ has full column rank, where $J' = J \cup T'$. Then we say that $\B$ is \emph{locally maximal}. 
\end{defn}

We are working towards showing that any certified full rank and locally maximal $\B$ spans a particular subset of the rows of $\M$. First we will show the following helper lemma:

\begin{lem}
\label{lem:matroid} 
Let $\B = \{T_1, T_2, \cdots, T_r\}$ and $J = \cup_i T_i$ as usual. Suppose that
\begin{enumerate}

\item[(1)] the rows of $\M\vert_{\mathcal{B}}$ are a basis for the rows of $\M\vert_{2^J}$ and

\item[(2)] for any $T_i$ and any $j \notin J$, the row $\M_{T_i \cup\{j\}}$ is in the row span of $\M\vert_{\mathcal{B}}$

\end{enumerate}
Then the rows of $\M\vert_{\mathcal{B}}$ are a basis for the rows of $\M$.
\end{lem}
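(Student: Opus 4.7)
The plan is to let $V$ denote the row span of $\M|_{\mathcal{B}}$ and show that $\M_S \in V$ for every $S \subseteq [n]$. The key observation is that hypothesis (2) can be reformulated as a multiplicative invariance: since $\M_{T_i \cup \{j\}} = \m_j \odot \M_{T_i}$, assumption (2) says precisely that $\m_j \odot \M_{T_i} \in V$ for every $i \in [r]$ and every $j \notin J$. Because entrywise product with the fixed vector $\m_j$ is a linear operation and $\{\M_{T_i}\}_{i=1}^r$ spans $V$, this extends to $\m_j \odot V \subseteq V$ for each $j \notin J$.

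Next, I would bootstrap this one-coordinate invariance into a many-coordinate invariance by induction on $|S'|$: for any $S' \subseteq [n] \setminus J$ and any $v \in V$, the vector $\M_{S'} \odot v$ lies in $V$. The base case $S' = \emptyset$ is trivial because $\M_\emptyset = \vec{1}$. For the inductive step, pick $j \in S'$ and write
\[
\M_{S'} \odot v \;=\; \m_j \odot \bigl(\M_{S' \setminus \{j\}} \odot v\bigr),
\]
which lies in $V$ by the inductive hypothesis applied to $S' \setminus \{j\}$ followed by the single-coordinate invariance $\m_j \odot V \subseteq V$.

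Finally, for an arbitrary $S \subseteq [n]$, decompose $S = S_J \sqcup S_{\bar J}$ with $S_J = S \cap J$ and $S_{\bar J} = S \setminus J$, so that $\M_S = \M_{S_J} \odot \M_{S_{\bar J}}$. Hypothesis (1) gives $\M_{S_J} \in V$ (since $\M_{S_J}$ is a row of $\M|_{2^J}$ and the latter is spanned by $\M|_{\mathcal{B}}$). Applying the invariance result from the previous paragraph with $v = \M_{S_J}$ and $S' = S_{\bar J} \subseteq [n] \setminus J$ yields $\M_S \in V$. Combined with the fact that $\M|_{\mathcal{B}}$ is linearly independent (it is even a basis for $\M|_{2^J}$ by (1)), this shows that $\M|_{\mathcal{B}}$ is a basis for all rows of $\M$.

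The argument is essentially a matroid-style ``closure'' observation, and I do not anticipate a real obstacle; the only conceptual step is recognizing that (2) is not merely a statement about one-element extensions of the $T_i$ but an algebraic closure property of the subspace $V$ under $\odot \m_j$, which then propagates to all $\M_{S_{\bar J}}$ by a one-line induction.
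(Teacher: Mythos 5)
Your proof is correct and rests on the same two ingredients as the paper's: the multiplicative identity $\M_{S\cup\{j\}}=\m_j\odot\M_S$ and the linearity of $\odot\,\m_j$, which together turn hypothesis (2) into a closure property of the row span. The paper packages the induction as growing $J'$ one coordinate at a time while you induct on $|S'|$, but this is cosmetically different, not substantively so.
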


\begin{proof}
We will proceed by induction. Suppose that the rows of $\M\vert_{\mathcal{B}}$ are a basis for the rows of $\M\vert_{2^{J'}}$ for some $J' \supseteq J$. Consider any $j \notin J'$. Then the rows $$\M_{T_1}, \M_{T_2}, \cdots, \M_{T_r} \mbox{ and } \M_{T_1 \cup \{j\}}, \M_{T_2\cup \{j\}}, \cdots, \M_{T_r\cup \{j\}}$$
are a basis for the rows of $\M\vert_{2^{J' \cup \{j\}}}$. But by assumption each row $\M_{T_i \cup \{j\}}$ is in the row span of $\M\vert_{\mathcal{B}}$. Thus the rows of $\M\vert_{\mathcal{B}}$ are also a basis for the rows of $\M\vert_{2^{J' \cup \{j\}}}$, as desired. 
\end{proof}

Now we are ready to prove the main lemma in this subsection:

\begin{lem}
\label{lem:lmcfrbasis}
Let $\mathcal{D}$ have rank $k$ and fix a full rank realization of $\mathcal{D}$. Let $\mathcal{B} = \{T_1, T_2, \cdots, T_r\}$ be certified full rank and locally maximal. Let $J = \cup_i T_i$ and
$$K = \Big \{ i \Big | i \notin J \mbox{ and } \rank(\M\vert_{\PP'(J\cup\{i\})}) = k \Big \}$$
If $K \neq \emptyset$ then the rows of $\M\vert_{\mathcal{B}}$ are a basis for the rows of $\M\vert_{2^{J \cup K}}$.
\end{lem}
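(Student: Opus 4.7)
The plan is to adapt the strategy of Lemma~\ref{lem:matroid}, restricted so that $J$ is extended only by elements of $K$. The key preliminary observation is that $K \neq \emptyset$ forces $\rank(\M\vert_{\PP(J)}) = k$: for any $i \in K$ we have $\PP'(J \cup \{i\}) \subseteq \PP(J)$, so $\rank(\M\vert_{\PP(J)}) \ge \rank(\M\vert_{\PP'(J \cup \{i\})}) = k$. This is exactly the hypothesis needed to apply Lemma~\ref{lem:certified2} --- together with Lemma~\ref{lem:pigeon}, which lets us freely swap $\PP$ and $\PP'$ --- throughout the rest of the argument.

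First I would show that $\M\vert_\mathcal{B}$ spans every row $\M_{T'}$ with $T' \subseteq J$, establishing condition (1) of Lemma~\ref{lem:matroid} for the basis we are trying to build. Indeed, local maximality of $\mathcal{B}$ applied to such a $T'$ (for which $J \cup T' = J$) says that $\EE\vert^{T_1,\ldots,T_r,T'}_{\PP'(J)}$ is not full column rank, while certified full rank of $\mathcal{B}$ makes the first $r$ columns linearly independent, so $\EE^{T'}\vert_{\PP'(J)}$ lies in their span. Lemma~\ref{lem:certified2} then transfers this relation to $\M$, placing $\M_{T'}$ in the row span of $\M\vert_\mathcal{B}$.

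Next I would show that $\M_{T_\ell \cup \{i\}}$ lies in the row span of $\M\vert_\mathcal{B}$ for every $\ell \in [r]$ and $i \in K$, by setting $T' = T_\ell \cup \{i\}$ and $J' = J \cup \{i\}$. Local maximality again yields that $\EE\vert^{T_1,\ldots,T_r,T'}_{\PP'(J')}$ is rank-deficient. The main obstacle here is that because $\PP'(J') \subsetneq \PP'(J)$, the linear independence of the first $r$ columns is not immediate from certified full rank of $\mathcal{B}$. However, using $\rank(\M\vert_{\PP'(J')}) = k$ (by definition of $K$) together with the $\PP'(J')$-variant of Lemma~\ref{lem:certified2} --- whose proof goes through verbatim once Lemma~\ref{lem:pigeon} is invoked --- any linear relation among $\EE^{T_1}\vert_{\PP'(J')},\ldots,\EE^{T_r}\vert_{\PP'(J')}$ must descend to one among $\M_{T_1},\ldots,\M_{T_r}$, which by certified full rank forces the coefficients to vanish. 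Hence $\EE^{T'}\vert_{\PP'(J')}$ lies in the span of the first $r$ columns, and one more application of Lemma~\ref{lem:certified2} places $\M_{T_\ell \cup \{i\}}$ in the row span of $\M\vert_\mathcal{B}$.

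Finally, I would induct on $|J' \setminus J|$ for $J \subseteq J' \subseteq J \cup K$, mirroring the proof of Lemma~\ref{lem:matroid}, to conclude that $\M\vert_\mathcal{B}$ spans $\M\vert_{2^{J'}}$; the base case $J' = J$ was handled in the second paragraph. For the inductive step, given any $T \subseteq J'$ with expansion $\M_T = \sum_\ell \alpha_\ell \M_{T_\ell}$ and any $i \in K \setminus J'$, since $i \notin T \cup T_\ell$ we get $\M_{T \cup \{i\}} = \M_T \odot \m_i = \sum_\ell \alpha_\ell \M_{T_\ell \cup \{i\}}$, and each summand lies in the row span of $\M\vert_\mathcal{B}$ by the previous paragraph. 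Linear independence of $\M\vert_\mathcal{B}$ comes for free from certified full rank, so $\M\vert_\mathcal{B}$ is a basis for $\M\vert_{2^{J \cup K}}$.
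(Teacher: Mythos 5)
Your proof is correct and follows essentially the same route as the paper's: you establish $\rank(\M\vert_{\PP(J)})=k$ from $K\neq\emptyset$, then use local maximality together with Lemma~\ref{lem:certified2} (and Lemma~\ref{lem:pigeon}) to verify the two hypotheses of the matroid-growth argument, and conclude. The only cosmetic differences are that you work directly with $\PP'$ via a $\PP'$-variant of Lemma~\ref{lem:certified2} rather than lifting to $\PP$, and you re-run the induction of Lemma~\ref{lem:matroid} inline for $J'\subseteq J\cup K$ instead of invoking it; both are harmless.
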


\begin{proof}
Our strategy is to apply Lemma~\ref{lem:matroid} to the set $J \cup K$ which will give the desired conclusion. To do this we just need to verify that the conditions in Lemma~\ref{lem:matroid} hold. We will need to pay special attention to the distinction between $\PP(J)$ and $\PP'(J)$. First take any $i \in K$. Then
$$ k = \rank(\M\vert_{\PP'(J\cup\{i\})}) = \rank(\M\vert_{\PP(J\cup\{i\})}) = \rank(\M\vert_{\PP(J)})$$
The first equality follows from how we constructed $K$. The second equality follows from Lemma~\ref{lem:pigeon} when applied to the set $[n] \backslash J \cup \{i\}$. The third equality follows because the rows of $\M\vert_{\PP(J\cup\{i\})}$ are a subset of the rows of $\M\vert_{\PP(J)}$ and $\M$ has rank $k$. 

Now the first condition of local maximality implies that there is no $T' \subseteq J$ where $\EE\vert^{T_1, T_2, \cdots, T_r, T'}_{\PP'(J)}$ has full column rank. Lemma~\ref{lem:pigeon} implies that $\EE\vert^{T_1, T_2, \cdots, T_r, T'}_{\PP(J)}$ also does not have full column rank because the additional rows of the latter can be obtained as linear combinations of the rows in the former. Now we can invoke Lemma~\ref{lem:certified2} which implies that $\M_{T'}$ is in the span of $\M\vert_{\mathcal{B}}$. Thus the rows of $\M\vert_{\mathcal{B}}$ are indeed a basis for the rows of $\M\vert_{2^J}$, which is the first condition we needed to check. 

For the second condition, the chain of reasoning is similar. Consider any $i \in K$ and any $T_{i'} \in \mathcal{B}$. Set $T' = T_{i'} \cup \{i\}$ and $J' = J \cup \{i\}$. Then $\rank(\M\vert_{\PP'(J')}) = k$. Now the second condition of local maximality implies that $\EE\vert^{T_1, T_2, \cdots, T_r, T'}_{\PP'(J')}$ does not have full column rank. Lemma~\ref{lem:pigeon} implies that $\EE\vert^{T_1, T_2, \cdots, T_r, T'}_{\PP(J')}$ does not have full column rank either. We can once again invoke Lemma~\ref{lem:certified2} which implies that $\M_{T'}$ is in the span of $\M\vert_{\mathcal{B}}$, which is the second condition we needed to verify. This completes the proof. 
\end{proof}

See Lemma~\ref{lem:lmcfrbasis_noisy} in Section~\ref{subsec:robustreg} for the sampling noise-robust analogue of this.

\subsection{Tracking Down an Impostor}
\label{subsec:progress}

First we give a name to a concept that is implicit in Lemma~\ref{lem:lmcfrbasis}:

\begin{defn}
Let $\mathcal{D}$ have rank $k$ and fix a full rank realization of $\mathcal{D}$. Let $\mathcal{B} = \{T_1, T_2, \cdots, T_r\}$ be certified full rank and locally maximal. Let $J = \cup_i T_i$ and 
$$I = \Big \{ i \Big | i \notin J \mbox{ and } \rank(\M\vert_{\PP'(J\cup\{i\})}) < k \Big \}$$
We call $I$ the set of \emph{impostors} and $K$ the set of \emph{non-impostors}.
\end{defn}

We emphasize that the notion of an impostor depends on a particular realization. If there are no impostors then Lemma~\ref{lem:lmcfrbasis} implies that the rows of $\M_{\mathcal{B}}$ are a basis for the rows of $\M$ and so we can directly use the algorithm outlined at the beginning of Section~\ref{subsec:usingexp} to learn the parameters. If instead there is an impostor $i$ we can condition on $x_S = s$ for $S = J \cup \{i\}$ and any $s\in\{0,1\}^{|S|}$ and get $\mathcal{D}\vert_{x_S = s}$ which by Lemma~\ref{lem:collapse} and Lemma~\ref{lem:condition} is a mixture of strictly fewer than $k$ subcubes. In particular, we can condition on $x_S = s$ for every $s\in\{0,1\}^{|S|}$, recursively learn these $2^{|S|}$ mixtures of strictly fewer than $k$ subcubes in $\{0,1\}^{n\backslash S}$, estimate $\Pr_{x\sim\D}[x_S = s]$ for each $s$, and combine these mixtures into a single mixture over $\{0,1\}^n$ in the natural way (see Appendix~\ref{app:samplingtree} for details on this combining procedure).

But how do we find an impostor? It turns out that regardless of whether there exist impostors, we can still use the algorithm outlined at the beginning of Section~\ref{subsec:usingexp} to learn a mixture of subcubes $\mathcal{D}'$ where either
\begin{enumerate}

\item[(a)] all the moments of $\mathcal{D'}$ up to size $c \log k$ are close to the true moments or

\item[(b)] there is a size at most $c \log k$ moment which is different, which in turn identifies a set $S$ that is guaranteed to contain an impostor

\end{enumerate}
\noindent And thus we will be able to make progress one way or the other. With this roadmap in hand, we can prove the main lemma in this subsection. 

\begin{lem}
Let $\mathcal{D}$ have rank $k$ and fix a full rank realization of $\mathcal{D}$. Let $\mathcal{B} = \{T_1, T_2, \cdots, T_r\}$ be certified full rank and locally maximal. Let $J = \cup_i T_i$. Let $I$ be the set of impostors and $K$ be the set of non-impostors.

There is a guess $\m'\vert_J \in \{0, 1/2, 1\}^{|J| \times r}$ so that if we solve \eqref{eqn:solvemix} and solve \eqref{eqn:solveotherrows} for each $i \in K$ we get parameters that generate a mixture of subcubes $\mathcal{D'}$ on $J \cup K$ that satisfy $\E_{\mathcal{D}'}[x_S] = \E_{\mathcal{D}}[x_S]$ for all $S \subseteq J \cup K$.
\label{lem:goodguess}
\end{lem}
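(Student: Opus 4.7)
The plan is to exhibit an explicit guess for $\m'\vert_J$ and verify that the resulting mixture matches $\mathcal{D}$ on all moments supported in $J \cup K$. By Lemma~\ref{lem:lmcfrbasis}, $\M\vert_{\mathcal{B}}$ is a row basis for $\M\vert_{2^{J \cup K}}$, so this restricted moment matrix has rank $r$. Applying Lemma~\ref{lem:collapse} to the ``restricted'' realization $(\vec{\pi}, \m\vert_{J \cup K})$ then yields alternative mixing weights $\vec{\pi}^*$ that, together with $\m\vert_{J \cup K}$, still realize $\mathcal{D}\vert_{J \cup K}$; the vector $\vec{\pi}^*$ is supported on some set $\{c_1,\ldots,c_r\} \subseteq [k]$ of size exactly $r$, and the columns of $\M\vert_{2^{J \cup K}}$ indexed by this support are linearly independent.

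The proposed guess is $\m'\vert_J := \m\vert_J^{c_1,\ldots,c_r} \in \{0,1/2,1\}^{|J|\times r}$. With this choice, $\M'_{T_j} = \M_{T_j}^{c_1,\ldots,c_r}$ for every $T_j \in \mathcal{B}$, so the coefficient matrix of \eqref{eqn:solvemix} is the $r \times r$ submatrix $\M\vert_{\mathcal{B}}^{c_1,\ldots,c_r}$. Because $\M\vert_{\mathcal{B}}$ spans the rows of $\M\vert_{2^{J \cup K}}$, any linear dependence among the columns $c_1,\ldots,c_r$ of $\M\vert_{\mathcal{B}}$ would propagate to $\M\vert_{2^{J \cup K}}^{c_1,\ldots,c_r}$, contradicting linear independence there. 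So $\M\vert_{\mathcal{B}}^{c_1,\ldots,c_r}$ is invertible and \eqref{eqn:solvemix} has a unique solution $\vec{\pi}'$.

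To identify $\vec{\pi}'$, I compute $\E_{\mathcal{D}}[x_{T_j}]$ using the realization $(\vec{\pi}^*, \m\vert_{J \cup K})$: the right-hand side of \eqref{eqn:solvemix} then equals $\M_{T_j}^{c_1,\ldots,c_r} \cdot (\vec{\pi}^*\vert_{\{c_1,\ldots,c_r\}})$ for each $j$, so uniqueness forces $\vec{\pi}' = \vec{\pi}^*\vert_{\{c_1,\ldots,c_r\}}$, which is strictly positive and sums to one. An analogous computation for each $i \in K$ rewrites the right-hand side of \eqref{eqn:solveotherrows} as $\M\vert_{\mathcal{B}}^{c_1,\ldots,c_r} \cdot \diag(\vec{\pi}') \cdot (\m_i^{c_1,\ldots,c_r})^{\top}$; invertibility of both factors yields $\m'_i = \m_i^{c_1,\ldots,c_r}$. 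The resulting $\mathcal{D}'$ is then precisely the realization $(\vec{\pi}^*\vert_{\{c_1,\ldots,c_r\}}, \m\vert_{J \cup K}^{c_1,\ldots,c_r})$ supplied by Lemma~\ref{lem:collapse}, so it agrees with $\mathcal{D}\vert_{J \cup K}$ on every moment.

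The main subtlety lies in the very first step. An arbitrary column basis of $\M\vert_{\mathcal{B}}$ will not do: if one tried to solve $\M\vert_{2^{J \cup K}}^{c_1,\ldots,c_r} \cdot \vec{q} = \M\vert_{2^{J \cup K}} \cdot \vec{\pi}$ for a generic basis, the unique $\vec{q}$ could have negative entries and thus fail to define a valid mixture. The kernel-walk construction inside the proof of Lemma~\ref{lem:collapse} is precisely what guarantees that \emph{some} column basis yields non-negative mixing weights, and once this basis is in hand everything else is routine linear algebra leveraging the row-basis structure of $\mathcal{B}$.
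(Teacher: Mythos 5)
Your proof is correct and follows essentially the same route as the paper's: invoke Lemma~\ref{lem:lmcfrbasis} to get that $\M\vert_{\mathcal{B}}$ is a row basis for $\M\vert_{2^{J\cup K}}$, apply Lemma~\ref{lem:collapse} to the restriction to $J\cup K$ to produce the $r$-column realization $(\vec{\pi}^*,\m\vert_{J\cup K}^{c_1,\ldots,c_r})$, take $\m'\vert_J$ from this realization, and then show the systems \eqref{eqn:solvemix} and \eqref{eqn:solveotherrows} have unique solutions that recover that realization. Your extra remark that the kernel-walk in Lemma~\ref{lem:collapse} (rather than an arbitrary column basis) is what guarantees nonnegativity of the mixing weights is a genuine and correct observation that the paper leaves implicit.
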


\begin{proof}
For any $i \in K$ we have $\rank(\M\vert_{\PP'(J\cup\{i\})}) = k$. By Lemma~\ref{lem:lmcfrbasis} we know that $\M_{\mathcal{B}}$ is a row basis for $\M_{2^{J \cup K}}$. In particular $\rank(\M_{2^{J \cup K}}) = r$. Thus using Lemma~\ref{lem:collapse} there is a mixture of $r$ subcubes with mixing weights $\pi'$ and marginals matrix $\m' \in \{0, 1/2, 1\}^{|J \cup K| \times r}$ that realizes the same distribution as projecting $\mathcal{D}$ onto coordinates in $J \cup K$ (i.e. without conditioning on any coordinates outside of this set). 

Let $\M'$ be the corresponding moment matrix. Then by construction $\M'$ consists of a subset of the columns of $\M_{2^{J \cup K}}$. Thus the rows of $\M'_{\mathcal{B}}$ still span the rows of $\M'$. Also by construction $\M'$ has rank $r$ and hence the rows of $\M'_{\mathcal{B}}$ are linearly independent. Now if we take our guess to be $\m'\vert_{J}$ where $\m'$ is as above, \eqref{eqn:solvemix} has a unique solution, namely $\pi'$. Also for each $i \in K$,  \eqref{eqn:solveotherrows} has a unique solution namely $\m'_i$. Now if we take our learned parameters we get a mixture of subcubes $\mathcal{D'}$ on $J \cup K$ that satisfies $\E_{\mathcal{D}'}[x_S] = \E_{\mathcal{D}}[x_S]$ for all $S \subseteq J \cup K$ because $\mathcal{D'}$ and projecting $\mathcal{D}$ onto coordinates in $J \cup K$ realize the same distribution. This completes the proof. 
\end{proof}

See Lemma~\ref{lem:goodguess_noisy} in Section~\ref{subsec:robusttrack} for the sampling noise-robust analogue of this.

To connect this lemma to the discussion above, we will guess $\m'\vert_J \in \{0, 1/2, 1\}^{|J| \times r}$ and solve \eqref{eqn:solvemix} and solve \eqref{eqn:solveotherrows} for each $i \in [n] \backslash J$ (because we do not know the set of impostors). We can then check whether the parameters we get generate a mixture of subcubes $\mathcal{D}'$ that satisfies 
$$\E_{\mathcal{D}'}[x_S] = \E_{\mathcal{D}}[x_S]$$
for all $S$ with $|S| \leq c \log k$. If it does, then $\mathcal{D}' = \mathcal{D}$ and we are done. But if there is an $S$ where the equation above is violated (and our guess was correct) then $S$ cannot be a subset of $J \cup K$ which means that it contains an impostor. Thus the fact that we can check the equation above only up to logarithmic sized moments gives us a way to trace an impostor down to a logarithmic sized set, so that we can condition on $S \cup J$ and make progress without needing to fix too many coordinates.

\begin{figure}[h]
\centering
\myalg{alg:outline}{N-List}{
	Input: Mixture of subcubes $\mathcal{D}$, counter $k$

	Output: A mixture of subcubes close to $\mathcal{D}$, or \textsc{Fail}

	\begin{enumerate}
		\item If $k\le 0$: output \textsc{Fail}
		\item Run \textsc{GrowByOne}, which outputs either a certified full rank and locally maximal $$\B = \{T_1, T_2, \cdots, T_r\},$$ or \textsc{Fail} and a set $J\subseteq[n]$.\label{step:findbasis}
		\begin{enumerate}[(a)]
			\item If \textsc{GrowByOne} outputs \textsc{Fail} and $J$, condition on $J$ by running \textsc{N-List}($\mathcal{D}\vert_{x_J = s}, k-1)$ for all choices of $s\in\{0,1\}^{|J|}$. Return the resulting distribution and terminate.
			\item If \textsc{GrowByOne} outputs $\B$, define $J = \cup_i T_i$.
		\end{enumerate}
		\item Initialize an empty list $L$ of candidate mixtures.
		\item For every guess $\m'\vert_J\subseteq\{0,1/2,1\}^{|J|\times r}$:\label{step:try}
		\begin{enumerate}[(a)]
			\item Solve \eqref{eqn:solvemix} for $\pi' \in\Delta^r$.
			\item For each $i\not\in J$, solve \eqref{eqn:solveotherrows} for $\m'_i\in\{0,1/2,1\}^r$. If no such solution exists, skip to the next guess $\m'\vert_J$.
			\item If $\M'_S\cdot\pi' \neq \E_{\mathcal{D}}[x_S]$ for some $|S|\le 2\log(2k)$, then condition on $J\cup S$. Specifically, run \textsc{N-List}$(\mathcal{D}\vert_{x_{J\cup S} = s},k-1)$ for all choices of $s\in\{0,1\}^{|J\cup S|}$, estimate $\Pr_{x\sim\D}[x_S = s]$ for all $s$, and combine the resulting mixtures into a single mixture over $\{0,1\}^n$. Add this mixture to $L$.
		\end{enumerate}
		\item Run hypothesis selection on $L$ to find a distribution close to $\mathcal{D}$. If one exists, output this and terminate.
		\item If no distribution close to $\mathcal{D}$ is found in $L$, this means every $i\not\in J$ is an impostor. Select an arbitrary $i\not\in J$ and condition on $J\cup\{i\}$ by running \textsc{N-List}$(\mathcal{D}\vert_{x_{J\cup\{i\}} = s},k-1)$ for all choices of $s\in\{0,1\}^{|J\cup\{i\}|}$.\label{step:degenpad}
	\end{enumerate}
}
\end{figure}

Again, we stress that while the algorithm as stated assumes access to the exact moments of $\D$, we show in the appendices how to lift this assumption entirely. Our final algorithm for learning mixtures of subcubes is actually Algorithm~\ref{alg:final} (see Appendix~\ref{app:samplingtree}) which invokes Algorithm~\ref{alg:outline_noisy} (see Appendix~\ref{app:subcube}) as a subroutine.

As a final observation, if all of our guesses are correct, we would need to condition and recurse at most $k$ times (because each time the number of components strictly decreases). So if ever we have too many recursive calls, we can simply terminate because we know that at least some guess along the way was incorrect. Algorithm~\ref{alg:outline} collects together all of these ideas into pseudocode and frames it as a non-deterministic algorithm for listing not too many candidate hypotheses, at least one of which will be close to a projection of $\mathcal{D}$. What remains is to implement \textsc{GrowByOne} to construct a certified full rank and locally maximal basis. Then we will move on to giving variants of our algorithm that work when we only have estimates of the moments (from random samples) and analyzing how the errors compound to give our full algorithm for learning mixtures of subcubes. 

\subsection{Finding a Certified Full Rank and Locally Maximal Set}
\label{subsec:greedy}

It remains to implement Step~\ref{step:findbasis} of \textsc{N-List}. $\EE$ has $2^n$ columns, so it is not immediately clear how to efficiently find a set $\B$ of columns that is locally maximal certified full rank. We prove that it is always possible to greedily pick out an $\B$ such that either $\mathcal{B}$ is locally maximal certified full rank or $\rank(\M\vert_{\PP(J)}) < k$ for some rank-$k$ realization of $\mathcal{D}$. If the latter happens and Step~\ref{step:try} of \textsc{N-List} fails, then Step~\ref{step:degenpad} will succeed. Our greedy procedure \textsc{GrowByOne} is given in Algorithm~\ref{alg:matroid}.

\begin{figure}[h]
\centering
\myalg{alg:matroid}{GrowByOne}{
	Input: Mixture of subcubes $\mathcal{D}$

	Output: Either $\B = \{T_1,\cdots,T_r\}$ such that $\B$ is certified full rank and locally maximal, or \textsc{Fail} and some set $J$, in which case there is a rank-$k$ realization of $\mathcal{D}$ for which $\rank(\M\vert_{\PP(J)}) < k$.

	\begin{enumerate}
		\item Initialize $\B = \{\emptyset\}$ and $J = \emptyset$.
		\item Repeat:\label{step:growbyoneloop}
		\begin{enumerate}[(a)]
			\item For $i\not\in J$:
			\begin{enumerate}[(i)]
				\item Set $\mathcal{B}' = \mathcal{B}$.\label{step:growbyoneloopstart}
				\item For $T\in\mathcal{B}$: run \textsc{InSpan}($\mathcal{D},\mathcal{B}',T\cup\{i\}$) to check whether $\EE\vert^{T\cup\{i\}}_{\PP'(J\cup\{i\})}$ lies in the span of $\EE\vert^{\mathcal{B}'}_{\PP'(J\cup\{i\})}$. If so, add $T\cup\{i\}$ to $\mathcal{B}'$.\label{step:growbyonefindbasis}
				\item Set $\mathcal{B} = \mathcal{B}'$ and update $J$ accordingly.
			\end{enumerate}
			\item If after trying all $i\not\in J$, $\mathcal{B}$ remains unchanged, exit the loop.\label{step:terminate}
		\end{enumerate}
		\item For all $S\subseteq J$ for which $S\not\in\mathcal{B}$, run \textsc{InSpan}($\D,\mathcal{B},S)$ to check whether $\EE\vert^{S}_{\PP'(J)}$ lies in the span of $\EE\vert^{\B}_{\PP'(J)}$. If there exists an $S$ for which this is not the case, return \textsc{Fail}.\label{step:checkfullrank}
		\item Otherwise, output $\B$.\label{step:outputB}
	\end{enumerate}
}
\end{figure}

When we assume exact access to the accessible entries of $\EE$, the subroutine \textsc{InSpan} in \textsc{GrowByOne} is basic linear algebra. In the appendix, we show how to implement \textsc{InSpan} even if we only have estimates of the accessible entries of $\EE$ up to some additive sampling error (see Algorithm~\ref{alg:matroid_noisy} in Appendix~\ref{app:subcube}).

\begin{lem}\label{lem:growbyone}
	If \textsc{GrowByOne} outputs \textsc{Fail} and some set $J^*$, then $\rank(\M\vert_{\PP'(J^*)}) < k$ for some rank-$k$ realization of $\mathcal{D}$. Otherwise, \textsc{GrowByOne} outputs $\B^* = \{T_1,\cdots,T_r\}$, and $\B^*$ is certified full rank and locally maximal.\label{lem:growbyoneoutput}
\end{lem}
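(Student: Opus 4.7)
The proof hinges on a single loop invariant $(\star)$: throughout the execution of \textsc{GrowByOne}, for every realization of $\D$, the rows $\{\M_T : T\in\B\}$ are linearly independent. The base case is immediate since $\B$ initializes to $\{\emptyset\}$ and $\M_\emptyset = \one$ is nonzero. For the inductive step, whenever $T\cup\{i\}$ is appended to $\B'$ in step~\ref{step:growbyonefindbasis}, the subroutine \textsc{InSpan} has verified that $\EE\vert^{\B'\cup\{T\cup\{i\}\}}_{\PP'(J\cup\{i\})}$ has linearly independent columns, and Lemma~\ref{lem:certified1} promotes this to linear independence of the corresponding rows of $\M$. Since row independence is a property of $\M$ alone, $(\star)$ persists across iterations even as $J$ grows.

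In the success case, where \textsc{GrowByOne} returns $\B^* = \{T_1,\ldots,T_r\}$, local maximality is essentially by design. Step~\ref{step:terminate} exits only after a complete pass in which no $T\cup\{i\}$ with $T\in\B^*$ and $i\notin J^*$ could be added, establishing condition (2); step~\ref{step:checkfullrank} succeeding establishes condition (1), namely that for every $S\subseteq J^*$ with $S\notin\B^*$, $\EE^S_{\PP'(J^*)}$ lies in the span of $\EE\vert^{\B^*}_{\PP'(J^*)}$. For certified full rank, I argue by contradiction: a hypothetical nontrivial dependence $\sum_i \alpha_i \EE^{T_i}_{\PP'(J^*)}=0$ translates, via Observation~\ref{obs:access} applied to any full-rank realization, into $\M\vert_{\PP'(J^*)}\cdot\diag(\pi)\cdot w = 0$, where $w = (\sum_i \alpha_i \M_{T_i})^\top\in\R^k$ is nonzero by $(\star)$; since $\pi > 0$, this forces $\M\vert_{\PP'(J^*)}$ to have a nontrivial kernel. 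Combining this kernel vector with Lemma~\ref{lem:pigeon} (which equates ranks over $\PP(J^*)$ and $\PP'(J^*)$) and the contrapositive of Lemma~\ref{lem:certified2} then produces some $S\subseteq J^*$ whose column $\EE^S_{\PP'(J^*)}$ falls outside the span of $\EE\vert^{\B^*}_{\PP'(J^*)}$, contradicting the success of step~\ref{step:checkfullrank}.

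In the failure case, step~\ref{step:checkfullrank} has identified some $S\subseteq J^*$ with $\EE^S_{\PP'(J^*)}$ outside the span of $\EE\vert^{\B^*}_{\PP'(J^*)}$, and the returned $J^*$ is the current union $\cup_i T_i$. To derive the promised rank drop, I argue by contrapositive: for any rank-$k$ realization with $\rank(\M\vert_{\PP'(J^*)}) = k$, Lemma~\ref{lem:certified2} applied to $\B^*\cup\{S\}$ would force $\EE^S_{\PP'(J^*)}$ to lie in the span of $\EE\vert^{\B^*}_{\PP'(J^*)}$, since the only alternative---$\M_S$ being genuinely independent of $\M\vert_{\B^*}$---is incompatible with the outer loop having terminated (together with $(\star)$ and the greedy attempts at extending $\B$). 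The main obstacle throughout is coordinating three distinct notions of rank---that of $\M\vert_{\PP'(J^*)}$, of the column family $\EE\vert^{\B^*}_{\PP'(J^*)}$, and of the row family $\M\vert_{\B^*}$---and, in particular, showing that any rank deficiency of $\M\vert_{\PP'(J^*)}$ compatible with the algorithm's success path must surface as a concrete witnessing $S\subseteq J^*$ detectable by step~\ref{step:checkfullrank}. The tools for this bookkeeping are Observation~\ref{obs:access}, Lemma~\ref{lem:pigeon}, and Lemmas~\ref{lem:certified1} and \ref{lem:certified2}.
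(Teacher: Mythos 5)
Your loop invariant $(\star)$ — that the rows $\{\M_T : T\in\B\}$ stay linearly independent for every realization — is correct and follows from Lemma~\ref{lem:certified1}, but it is strictly weaker than what the paper's proof actually maintains, and the gap shows up in both halves of your argument. For certified full rank, your contradiction correctly derives that a nontrivial dependence among the columns of $\EE\vert^{\B^*}_{\PP'(J^*)}$ forces $\rank(\M\vert_{\PP'(J^*)}) < k$ for any full-rank realization, but the jump from "rank drops" to "step~\ref{step:checkfullrank} must have found a witnessing $S\subseteq J^*$" is not justified. The contrapositive of Lemma~\ref{lem:certified2} only lets you infer a rank deficiency from a mismatch between $\EE$-column and $\M$-row dependencies; it does not run in reverse to hand you a specific $S$ whose column escapes the span. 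In fact step~\ref{step:checkfullrank} could succeed even when $\rank(\M\vert_{\PP'(J^*)}) < k$, so no contradiction materializes. The same issue surfaces in the failure case: your claim that "$\M_S$ being genuinely independent of $\M\vert_{\B^*}$ is incompatible with the outer loop having terminated" needs a \emph{spanning} argument of the kind in Lemma~\ref{lem:matroid} — but $(\star)$ only gives independence of $\M\vert_{\B^*}$, not that it spans $\M\vert_{2^J}$, so there is nothing stopping $J^*$ from containing an $S$ that the greedy extensions $T\cup\{i\}$ never examined.

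The paper sidesteps both problems by maintaining, conditioned on the rank never dropping, the stronger inductive invariant that $\B$ indexes a \emph{column basis} for $\EE\vert^{2^J}_{\PP'(J\cup\{i\})}$ — i.e.\ both independence and spanning — which persists as $J$ grows precisely because of the rank condition (spanning survives the projection to a smaller row-index set automatically, and independence is recovered from row independence via Lemma~\ref{lem:certified2}). From that invariant, "$\B^*$ is a column basis for $\EE\vert^{2^{J^*}}_{\PP'(J^*)}$" falls out directly when the rank never drops, giving certified full rank without any contradiction argument, and the dichotomy "column basis or rank already dropped at some iteration" is exactly what is needed to handle the \textsc{Fail} case. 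To repair your proof you would need to upgrade $(\star)$ to this column-basis invariant (or, equivalently, the corresponding row-basis statement about $\M\vert_\B$ spanning $\M\vert_{2^J}$), and carry the rank-non-degeneracy hypothesis through the induction explicitly.
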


\begin{proof}
Set $J^*$ either to be the output of \textsc{GrowByOne} if it outputs \textsc{Fail}, or if it outputs $\B^*$ then set  $J^* = \cup_i T_i$. Now fix any rank-$k$ realization of $\mathcal{D}$ and let $\M$ be the corresponding moment matrix. Whenever the algorithm reaches Step~\ref{step:growbyoneloopstart} for some $i\in J^*$, $\mathcal{B}=\{T_1,\cdots,T_r\}$, there are two possibilities. If $\rank(\M\vert_{\PP'(J\cup\{i\})}) < k$, then $\rank(\M\vert_{\PP'(J^*)}) < k$ because $J^*$ obviously contains $J\cup\{i\}$. Otherwise, inductively we know that $\EE\vert^{\mathcal{B}}_{\PP'(J\cup\{i\})}$ is a column basis for $\EE^{2^J}_{\PP'(J\cup\{i\})}$, so by Lemma~\ref{lem:certified1} and Lemma~\ref{lem:certified2}, $\M\vert_{\mathcal{B}}$ is a row basis for $\M\vert_{2^J}$. So rows $$T_1, \cdots ,T_r,T_1\cup\{i\},\cdots,T_r\cup\{i\}$$ of $\M$ span the rows of $\M\vert_{2^{J\cup\{i\}}}$. By Lemma~\ref{lem:certified1}, columns $$T_1,\cdots,T_r,T_1\cup\{i\},\cdots,T_r\cup\{i\}$$ of $\EE\vert_{\PP'(J\cup\{i\})}$ thus span the columns of $\EE\vert^{2^{J\cup\{i\}}}_{\PP'(J\cup\{i\})}$. Step~\ref{step:growbyonefindbasis} of \textsc{GrowByOne} simply finds a basis for these columns.

	Thus when we exit the loop, either $(a)$ $\B^*$ indexes a column basis for $\EE\vert^{2^{J^*}}_{\PP'(J^*)}$ or $(b)$ at some iteration of Step~\ref{step:growbyoneloop} $J$ satisfies $\rank(\M\vert_{\PP'(J)}) < k$ and thus $\rank(\M\vert_{\PP'(J^*)}) < k$.

	If $(a)$ holds \textsc{GrowByOne} will reach Step~\ref{step:outputB} and output $\B^*$. The fact that $\B^*$ is a column basis implies that $\B^*$ is certified full rank and, together with the exit condition in Step~\ref{step:terminate}, that it is also locally maximal. On the other hand, if \textsc{GrowByOne} terminates at Step~\ref{step:checkfullrank}, we know that $(b)$ holds, so it successfully outputs \textsc{Fail} together with $J^*$ satisfying $\rank(\M\vert_{\PP(J^*)}) < k$.
\end{proof}

See Lemma~\ref{lem:growbyoneoutput_noisy} in Section~\ref{subsec:robustreg} for the sampling noise-robust analogue of this.

\subsection{Sampling Noise and Small Mixture Weights}
\label{subsec:noisypreview}

It remains to show that \textsc{N-List} works even when it only has access to the entries of $\EE$ up to sampling noise $\esamp$. We defer most of the details to the appendix but present here the crucial ingredients that ensure sampling noise-robust analogues of the above lemmas still hold.

We first need to show that $\M$ and $\EE\vert^{\B}_{\PP'(J)}$ are well-conditioned. Because the entries of these matrices are $[0,1]$-valued and thus have bounded Frobenius norm, it's enough to bound their minimal singular values. For our purposes, it will be more convenient to bound $\sig(A) := \min_x\norm{Ax}_{\infty}/\norm{x}_{\infty}$ for $A = \M,\EE\vert^{\B}_{\PP'(J)}$.

\begin{lem}
	Take any realization of $\mathcal{D}$ with moment matrix $\M$ such that $\M$ is full-rank and $\rank(\M) = k$. For $d\ge 2\log k$, let $M$ be any subset of the rows of $\M$ with full column rank and which are all entrywise products of fewer than $d$ rows of $\m$. Then $\sig(M)\ge 2^{-O(dk)}\cdot k^{-O(k)}$.

	In particular, for $d = 2\log k$, there exists an absolute constant $\Cl[c]{precond}>0$ for which $\sig(M)\ge k^{-\Cr{precond}k}$. For $d = k$, there exists an absolute constant $\Cl[c]{precond2}>0$ for which $\sig(M)\ge 2^{-\Cr{precond2}k^2}$.\label{lem:cond_number}
\end{lem}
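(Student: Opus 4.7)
\medskip

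\noindent\textbf{Proof Plan.} The plan is to reduce to a square full-rank submatrix and then control its inverse via Cramer's rule, exploiting the fact that every entry of $M$ is either $0$ or a small power of $1/2$.

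First I would reduce to the square case. Since $M$ has full column rank $k$, pick any $k$ linearly independent rows to form a $k\times k$ invertible submatrix $M'$. For any vector $x$, $\|Mx\|_{\infty}\ge \|M'x\|_{\infty}$, so $\sig(M)\ge\sig(M')$, and since $M'$ is square and invertible, $\sig(M')^{-1}=\|(M')^{-1}\|_{\infty}$ (the induced max absolute row sum). Thus it suffices to upper bound $\|(M')^{-1}\|_{\infty}$, which by Cramer's rule equals $\max_i\sum_j |\det(M'_{ji})|/|\det(M')|$, where $M'_{ji}$ denotes a $(k-1)\times(k-1)$ minor.

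Next, I would upper bound the numerator and lower bound the denominator. Every row of $M'$ comes from an entrywise product of at most $d$ rows of $\m\in\{0,1/2,1\}^{n\times k}$, so every entry is either $0$ or of the form $2^{-t}$ for some $0\le t\le d$; in particular each entry lies in $[0,1]$. Hadamard's inequality then gives $|\det(M'_{ji})|\le (k-1)^{(k-1)/2}=k^{O(k)}$, and summing $k$ such terms yields a numerator of size $k^{O(k)}$. For the denominator, the key structural observation is that $2^d\cdot M'$ has integer entries (each entry, being $0$ or $2^{-t}$ for $t\le d$, is multiplied into an integer in $\{0,1,\ldots,2^d\}$). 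Therefore $\det(2^d M')=2^{dk}\det(M')$ is a nonzero integer, which forces $|\det(M')|\ge 2^{-dk}$.

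Combining these bounds, $\sig(M)\ge\sig(M')\ge |\det(M')|/(k\cdot\max_{j,i}|\det(M'_{ji})|)\ge 2^{-dk}\cdot k^{-O(k)}=2^{-O(dk)}\cdot k^{-O(k)}$, which is the general bound in the lemma. Specializing: for $d=2\log k$ we get $2^{-O(dk)}=k^{-O(k)}$, so $\sig(M)\ge k^{-\Cr{precond}k}$ for some absolute constant $\Cr{precond}$; for $d=k$ we get $2^{-O(dk)}=2^{-O(k^2)}$ and $k^{-O(k)}=2^{-O(k\log k)}\le 2^{-O(k^2)}$, so $\sig(M)\ge 2^{-\Cr{precond2}k^2}$.

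The argument is essentially a clean Cramer's-rule calculation; the only non-routine observation is the denominator bound, which crucially uses that entries of $\m$ lie in $\{0,1/2,1\}$ so that rescaling by $2^d$ makes $M'$ integral. I do not expect any serious obstacle beyond making sure the reduction from the (possibly non-square) row-subset $M$ to the square submatrix $M'$ really preserves a lower bound on $\sig$, which follows from the definition of $\sig$ as a min over $\|Mx\|_{\infty}/\|x\|_{\infty}$.
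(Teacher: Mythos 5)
Your proof is correct and follows essentially the same strategy as the paper: reduce to a $k\times k$ full-rank submatrix, apply Cramer's rule, bound the minors by $k^{O(k)}$, and lower-bound the determinant by observing that entries lie in $\{0\}\cup\{2^{-t}\}$ so a dyadic rescaling makes the matrix integral. The only cosmetic differences are that the paper bounds the minors by $(k-1)!$ via the permutation expansion rather than Hadamard, and states the conclusion in terms of the largest entry of $M^{-1}$ rather than the induced $L_\infty$ norm; both give the same $2^{-O(dk)}k^{-O(k)}$ bound. (One tiny nit: "entrywise products of fewer than $d$ rows" means the dyadic exponent $t$ satisfies $t\le d-1$, not $t\le d$ as you wrote, but this only costs a constant factor in the exponent and does not affect the result.)
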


\begin{proof}
	Because adding rows will simply increase $\sig$, assume without loss of generality that $M$ is $k\times k$. We show that the largest entry of $M^{-1}$ is at most $2^{O(dk)}\cdot k^{O(k)}$.

	Note that the entries of $M$ take values among $\{0,1,1/2,1/4,...,1/2^{d-1}\}$. The determinant of any $(k-1)\times(k-1)$ minor is at most $(k-1)!\sim k^{O(k)}$, while $\det(M)$ is some nonzero integral multiple of $1/2^{(d-1)k}$, so by Cramer's we obtain the desired bound on the largest entry of $M^{-1}$.
\end{proof}

Lemma~\ref{lem:cond_number} allows us to prove the following robust low-degree identifiability lemma, which says that mixtures of subcubes which agree on all $O(\log k)$-degree moments are close in total variation distance.

% KEY LEMMA: HYPOTHESIS TESTING
\begin{lem}
	Let $\mathcal{D}_1,\mathcal{D}_2$ be mixtures of $k$ subcubes in $\{0,1\}^n$ with mixing weights $\pi^1$ and $\pi^2$ and moment matrices $\M_1$ and $\M_2$ respectively. If $\tvd(\mathcal{D}_1,\mathcal{D}_2)>\epsilon$, there is some $S$ for which $|S|<2\log(k_1+k_2)$ and $|\E_{\mathcal{D}_1}[x_S]-\E_{\mathcal{D}_2}[x_S]| > \epsilon\cdot k^{-\Cl[c]{hypo}k}$ for an absolute constant $\Cr{hypo}>0$.\label{lem:hypo}
\end{lem}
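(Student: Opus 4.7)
My plan is to prove the contrapositive: if $|\E_{\D_1}[x_S]-\E_{\D_2}[x_S]|\le \delta$ for every $|S|<2\log(k_1+k_2)$, then $d_{TV}(\D_1,\D_2)\le k^{O(k)}\delta$, where $k := k_1+k_2$. The key conceptual move is to treat the signed measure $f := \D_1-\D_2$ as a signed combination of at most $k$ subcube densities, concatenating the two realizations into a single object to which Lemmas~\ref{lem:pigeon} and \ref{lem:cond_number} can be applied.

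Concretely, let $A\in \R^{2^n\times k}$ be the matrix whose columns are the subcube densities associated to the columns of the concatenated marginals matrix $(\m_1\vert\m_2)\in\{0,1/2,1\}^{n\times k}$, let $\pi=(\pi^1;-\pi^2)\in\R^k$, and let $\M$ be the corresponding $2^n\times k$ moment matrix. Then $f=A\pi$ and $\E_f[x_S]=\M_S\cdot \pi$ for every $S\subseteq [n]$. Since a function on $\{0,1\}^n$ is determined by its moments (Fourier/inclusion-exclusion), any $v\in\ker\M$ satisfies $Av=0$, so I can iteratively subtract suitable multiples of null vectors from $\pi$, each one zeroing out a new coordinate, until $\pi$ is supported on $r := \rank(\M)$ coordinates. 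This preserves $f$ and is the signed-weight analogue of Lemma~\ref{lem:collapse}; because we no longer need to preserve positivity, the argument there simplifies. Restrict $A$, $\M$, and $\pi$ to this support, so henceforth $\M$ is $2^n\times r$ of full column rank.

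Next I invoke Lemma~\ref{lem:pigeon} applied to the concatenated marginals matrix: the rows $\{\M_T : |T|<2\log k\}$ span the row space of $\M$, so I may pick indices $S_1,\ldots,S_r$ with each $|S_j|<2\log k$ for which the $r\times r$ matrix $M$ with rows $\M_{S_1},\ldots,\M_{S_r}$ is nonsingular. Since $M$ consists of rows that are entrywise products of fewer than $2\log k$ rows of $(\m_1\vert\m_2)$, Lemma~\ref{lem:cond_number} gives $\sig(M)\ge k^{-\Cr{precond}k}$, and therefore $\|M^{-1}\|_\infty \le k^{\Cr{precond}k}$. The identity $M\pi=(\E_f[x_{S_j}])_{j=1}^r$ then yields $\|\pi\|_\infty \le k^{\Cr{precond}k}\delta$ and hence $\|\pi\|_1\le r\cdot k^{\Cr{precond}k}\delta \le k^{O(k)}\delta$.

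Finally, every column of $A$ is a probability distribution, so $\|A\|_{1\to 1}=1$ and $\|f\|_1 = \|A\pi\|_1\le \|\pi\|_1\le k^{O(k)}\delta$, giving $d_{TV}(\D_1,\D_2)=\|f\|_1/2\le k^{O(k)}\delta$. Taking the contrapositive produces an absolute constant $\Cr{hypo}$ for which the statement holds. The main subtlety I anticipate is the bookkeeping around the null-vector reduction (ensuring that after reduction the remaining columns of $\M$ are a row basis for which Lemma~\ref{lem:cond_number} applies with the parameter $k$ bounded by $k_1+k_2$); everything else is a direct application of the two preceding lemmas combined with the trivial $L_1\to L_1$ bound for stochastic matrices.
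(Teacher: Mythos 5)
Your proposal is correct and takes essentially the same route as the paper: concatenate the two realizations into one signed mixture, reduce the signed weight vector $\pi=(\pi^1;-\pi^2)$ to a support indexing a column basis of the concatenated moment matrix, use Lemma~\ref{lem:pigeon} to select a low-degree row basis, bound its condition number via Lemma~\ref{lem:cond_number}, and close with $\|A\pi\|_1\le\|\pi\|_1$ since the columns of $A$ are probability densities. The paper packages the same content slightly differently (isolating Lemma~\ref{lem:kernel} and arguing forward from $\tvd>\epsilon$ rather than taking the contrapositive), but the decomposition, the reduction to a well-conditioned square submatrix, and the appeal to the two key lemmas are identical.
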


For convenience, define $k = k_1 + k_2$ and $d = 2\log k$. First observe that the largest moment discrepancies $\max_{S: |S|<d}\left|\E_{\mathcal{D}_1}[x_S] - \E_{\mathcal{D}_2}[x_S]\right|$ can be interpreted as follows. Denote the moment matrices of $\mathcal{D}_1$ and $\mathcal{D}_2$ by $\M_1$ and $\M_2$. Define $\N$ to be the $2^{\binom{n}{d}}\times(k)$ matrix $\begin{pmatrix}(\M_1)_{<d} \| (\M_2)_{<d}\end{pmatrix}$ where $(\M_i)_{<d}$ denotes rows of $\M_i$ each given by entrywise products of fewer than $d$ rows of $\m$. Define $\pi\in\R^{k}$ to be $(\pi^1\| -\pi^2)$. Note that because $\tvd(\mathcal{D}_1,\mathcal{D}_2)>0$, Lemma~\ref{lem:pigeon} implies that their degree $d$-moments cannot all be identical, i.e. $\pi\not\in\ker(\N)$.

Denote the $2^n\times k$ concatenation of the distribution matrices of $\mathcal{D}_1$ and $\mathcal{D}_2$ by $\DD$ and observe that we have chosen $d$ so that the rows of $\N$ span those of $\DD$ by the proof of Lemma~\ref{lem:pigeon}. Then it is easy to check that $$\max_{S: |S|<d}\left|\E_{\mathcal{D}_1}[x_S] - \E_{\mathcal{D}_2}[x_S]\right| = \norm{\N\pi}_{\infty}.$$

\begin{lem}
	For any $v\in\ker(\N)$, $\norm{\pi + v}_{\infty}>\epsilon/k$.\label{lem:kernel}
\end{lem}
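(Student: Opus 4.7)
The plan is to exploit the fact that $\DD \pi$ literally encodes the signed difference of the two probability distributions, and then use the spanning property of $\N$ to translate the kernel condition on $\N$ into a kernel condition on $\DD$.

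First I would observe that since the columns of $\DD$ are stacked together as $\DD = (\DD_1\mid \DD_2)$ where the columns of $\DD_i$ are the probability vectors of the components of $\D_i$, we have $\DD\pi = \DD_1\pi^1 - \DD_2\pi^2$, whose $x$-th entry is $\Pr_{\D_1}[x]-\Pr_{\D_2}[x]$. Hence $\norm{\DD\pi}_1 = 2\cdot\tvd(\D_1,\D_2) > 2\epsilon$.

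The key step is showing $\ker(\N)\subseteq\ker(\DD)$. The natural way is to view $\N$ as the restriction to rows of size $<d$ of the moment matrix $\M_{\text{comb}}$ associated to the combined $n\times k$ marginals matrix $(\m_1\mid\m_2)\in\{0,1/2,1\}^{n\times k}$, and similarly view $\DD$ as the distribution matrix of this combined mixture. By Lemma~\ref{lem:pigeon} applied to this $k$-component mixture, the rows of $\N$ span the rows of $\M_{\text{comb}}$. Moreover, for every component $i$ and every $x\in\{0,1\}^n$ with support $T_x$, one has the identity
\[
\Pr_i[x] \;=\; \prod_{j\in T_x}\mu^i_j\prod_{j\notin T_x}(1-\mu^i_j) \;=\; \sum_{S\supseteq T_x}(-1)^{|S\setminus T_x|}\prod_{j\in S}\mu^i_j,
\]
whose coefficients do not depend on $i$, so each row of $\DD$ is a fixed linear combination of rows of $\M_{\text{comb}}$. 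Chaining the two spanning statements gives that the rows of $\N$ span those of $\DD$, and therefore any $v\in\ker(\N)$ lies in $\ker(\DD)$ as well.

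From here the rest is a short calculation. For any $v\in\ker(\N)$, $\DD(\pi+v)=\DD\pi$, so $\norm{\DD(\pi+v)}_1 > 2\epsilon$. On the other hand, each column of $\DD$ is a probability vector in $\ell_1$, hence
\[
\norm{\DD(\pi+v)}_1 \;\le\; \sum_{i=1}^k |(\pi+v)_i|\cdot\norm{\DD_{\cdot,i}}_1 \;=\; \norm{\pi+v}_1 \;\le\; k\cdot\norm{\pi+v}_\infty.
\]
Combining yields $\norm{\pi+v}_\infty > 2\epsilon/k > \epsilon/k$, as claimed.

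I do not anticipate any serious obstacle; the main conceptual point to get right is reinterpreting $\N$ as the low-degree moment matrix of the combined mixture so that Lemma~\ref{lem:pigeon} applies uniformly and delivers the inclusion $\ker(\N)\subseteq\ker(\DD)$. Everything else is a short $\ell_1$ estimate.
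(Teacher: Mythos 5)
Your proposal is correct and follows essentially the same route as the paper: both proofs hinge on the observation that the rows of $\N$ span the rows of $\DD$ (so $\ker(\N)\subseteq\ker(\DD)$), combined with the fact that $\DD\pi$ encodes the pointwise difference of the two distributions and an $\ell_1$ estimate using that each column of $\DD$ has unit $\ell_1$ norm. You spell out the spanning inclusion in more detail and avoid the paper's proof-by-contradiction phrasing, but the content is identical (you also track the factor of $2$ in $\norm{\DD\pi}_1 = 2\tvd$ a bit more carefully, which only strengthens the bound).
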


\begin{proof}
	Suppose to the contrary there existed a $v\in\ker(\N)$ for which $\norm{\pi + v}_{\infty}\le\epsilon/k$. Denote $\pi + v$ by $\pi'$ and the $2^n\times k$ concatenation of the distribution matrices of $\mathcal{D}_1$ and $\mathcal{D}_2$ by $\DD$ again. We have that $$\tvd(\mathcal{D}_1,\mathcal{D}_2) = \norm{\DD\pi}_1 = \norm{\DD(v - \pi')}_1\le \norm{\DD v}_1 + \norm{\DD \pi'}_1.$$ But note that because the row spans of $\N$ and $\DD$ agree, $v\in\ker(\DD)$, so $\norm{\DD v}_1 = 0$. Moreover, $$\norm{\DD\pi'}_1\le \sum^{k}_{j=1}\norm{\pi'_j\DD^j}_1 = \norm{\pi'}_1\le\epsilon,$$ where the equality follows from the fact that each column of $\DD$ sums to 1 because $\DD$ is a distribution matrix. Contradiction!
\end{proof}

\begin{proof}[Proof of Lemma~\ref{lem:hypo}]
	Suppose $\N$ is of rank $r$, and columns $i_1,...,i_r$ form a basis for its column space. Pick $v\in\ker(\N)$ for which $\pi + v$ is supported only on coordinates $i_1,..,i_r$ so that $\N\pi = \N^{\{i_1,...,i_r\}}(\pi + v)$. Then \begin{equation}\norm{\N\pi}_{\infty}\ge\sigma^{\infty}_{min}(\N^{\{i_1,...,i_r\}})\cdot\norm{\pi + v}_{\infty} > \frac{\epsilon}{k}\cdot\sigma^{\infty}_{min}(\N^{\{i_1,...,i_r\}}).\label{eq:firstsing}\end{equation}

	Observe that $\sigma^{\infty}_{min}(\N^{\{i_1,...,i_r\}}) = \sigma^{\infty}_{min}(M)$ where $M$ is the submatrix of $(\M_1\|\M_2)$ given by columns $i_1,...,i_r$. But $M$ is a full-rank moment matrix of a mixture of at most $k$ $\{0,1/2,1\}$-product distributions, so by \eqref{eq:firstsing} and Lemma~\ref{lem:cond_number}, we have $$\norm{\N\pi}_{\infty}\ge\frac{\epsilon}{k}\cdot k^{-\Cr{precond}k}\ge\epsilon\cdot k^{-\Cr{hypo}k}$$ as desired.
\end{proof}

For example, Lemma~\ref{lem:hypo} tells us that in step 3a) of \textsc{N-List}, if $\mathcal{B}$ indexes a basis for the rows of $\M$ but we only have $\E_{\mathcal{D}}[x_S]$ up to $\esamp$ sampling noise for every $S\in\B$, it's enough to run an $L_{\infty}$ regression on the system \eqref{eqn:solvemix} to get good approximations to the mixture weights $\pi$, as long as $\esamp\le 2^{-\Cr{precond}k^2}\cdot\epsilon$.

The condition number bound on $\EE\vert^{\B}_{\PP'(J)}$ is a bit more subtle. By Observation~\ref{obs:access}, $$\EE\vert^{\B}_{\PP'(I)}= \M\vert_{\PP'(I)}\cdot\diag(\pi)\cdot(\M\vert_{\B})^{\top}$$ for any mixing weights $\pi$ and moment matrix $\M$ realizing $\mathcal{D}$, so if $\pi$ contains small entries, the condition number bound we want doesn't hold a priori. This is unsurprising: if a mixture $\mathcal{D}$ has a subcube with negligible mixture weight, our algorithm shouldn't be able to distinguish between $\mathcal{D}$ and the mixture obtained by removing that subcube and renormalizing the remaining mixture weights.

The upshot, it would seem, is that if $\EE$ is badly conditioned because of small mixture weights, we might as well pretend we never see samples from the corresponding subcubes. Unfortunately, to get the desired level of precision in our learning algorithm, we will end up taking enough samples that we will see samples from those rarely occurring product distributions.

The key insight is that if there exist mixture weights small enough that omitting the corresponding subcubes and renormalizing the remaining mixture weights yields a distribution $\mathcal{D}'$ for which $\tvd(\mathcal{D},\mathcal{D}')\le O(\epsilon)$, then $\EE$ morally behaves as if it had rank equal not to $k$, but to $\rank(\M')$ where $\M'$ is the moment matrix for some realization of $\mathcal{D}'$. We then just need that all other mixing weights are not too small in order for $\tilde{\EE}_{\mathcal{D}'}$ to be well-conditioned.

\begin{defn}
	Mixing weights $\pi$ and marginals matrix $\m$ constitute a \emph{\emph{$[\tau_{small},\tau_{big}]$-avoiding realization} of $\mathcal{D}$} if $\pi^i\not\in[\alpha,\beta]$ for all $i$.
\end{defn}

By a standard windowing argument, it will be enough to consider $\mathcal{D}$ which have $[\tau_{small},\tau_{big}]$-avoiding realizations for some thresholds $0<\tau_{small}<\tau_{big}<1$. Let $\tau_{small} = \rho\cdot \tau_{big}$ where $\rho:=k^{-\Cl[c]{taubigc}k^2}$ for some large absolute constant $\Cr{taubigc}>0$ to be specified later.

% By judiciously picking these thresholds, we can ensure that our learning algorithm will implicitly ignore all mixture weights below $\tau_{small}$ and return something close to $\mathcal{D}'$. We also want the interval $(\tau_{small},\tau_{big})$ to be narrow enough that we don't omit too many mixtures from consideration.

Below, given a moment matrix $\M$ with corresponding mixture weights $\pi$, we will denote by $\M'$ the subset of columns $i$ of $\M$ for which $\pi^i>\tau_{big}$.

% \begin{defn}
% Given $I\subseteq[n]$, define the \emph{effective ep-rank} of $\mathcal{D}$ to be $$\fepr(I) = \max_{(\pi,\m) \ \text{realizing} \ \mathcal{D}}\rank\left(\M'_{\pi,\m}\vert_{\PP(I)}\right).$$
% \end{defn}

% Condition number bound on \EE' (after truncating small mixing weights) from Lemma 3.8 of old draft
\begin{lem}
Let $\pi$ and $\M$ be the mixing weights and moment matrix of a $[\tau_{small},\tau_{big}]$-avoiding rank-$k$ realization of $\mathcal{D}$, and denote the number of columns of $\M'$ by $k'$. Let $\B$ be any collection of $r\le k'$ columns of $\EE$ for which the corresponding $r$ rows of $\M'\vert_{\B}$ are linearly independent, $J = \cup_{T\in\B}T$ satisfies $|J|\le k'$, and $\rank(\M'\vert_{\PP'(J)}) = k'$. Then $\sig(\EE\vert^{\B}_{\PP'(J)})\ge k^{-\Cl[c]{sigma}k^2}\tau_{big}$ for some sufficiently large constant $\Cr{sigma}$.

In particular, for any $\tilde{E}$ for which $\norm{\tilde{E} - \EE\vert^{\B}_{\PP'(J)}}_{\max} \le \frac{1}{2}\cdot k^{-\Cr{sigma}k^2-1}\tau_{big}$, we have that $\sig(\tilde{E})\ge \frac{1}{2}\cdot k^{-\Cr{sigma}k^2}\tau_{big}$.\label{lem:cond_number_e}
\end{lem}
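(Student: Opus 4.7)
The plan is to use Observation~\ref{obs:access} to decompose $\EE\vert^\B_{\PP'(J)}$ based on the large versus small mixing weights, and then bound each piece. Specifically, we will write $\EE\vert^\B_{\PP'(J)} = \M\vert_{\PP'(J)}\diag(\pi)(\M\vert_\B)^\top = A' D'_{big} C' + E$, where $A' = \M'\vert_{\PP'(J)}$ and $C' = (\M'\vert_\B)^\top$ collect only the columns/rows corresponding to mixing weights $\pi^i > \tau_{big}$ (i.e., the $k'$ ``big'' centers), $D'_{big} = \diag(\pi^i : \pi^i > \tau_{big})$, and $E$ is the contribution from the at most $k$ remaining mixing weights. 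Because entries of $\m$ lie in $[0,1]$ and each small mixing weight is at most $\tau_{small}$, one immediately gets $\|E\|_{\max} \le k\tau_{small}$, hence $\|E\|_\infty \le r\|E\|_{\max} \le k^2\tau_{small} = k^2\rho\tau_{big}$.

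Next, we bound $\sig(A' D'_{big} C')$ using the elementary submultiplicativity $\sig(PQR) \ge \sig(P)\sig(Q)\sig(R)$, which follows from $\|PQRx\|_\infty \ge \sig(P)\|QRx\|_\infty \ge \sig(P)\sig(Q)\|Rx\|_\infty \ge \sig(P)\sig(Q)\sig(R)\|x\|_\infty$ whenever the relevant matrices are injective. The factor $\sig(D'_{big}) \ge \tau_{big}$ is immediate, and $\sig(A') \ge k^{-\Cr{precond}k}$ follows directly from Lemma~\ref{lem:cond_number} with $d = 2\log k$, since by hypothesis $\rank(\M'\vert_{\PP'(J)}) = k'$ means $A'$ has full column rank. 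For $\sig(C')$, we observe that although $C'$'s rows are not themselves entrywise products of rows of $\m$, the hypothesis that the $r$ rows of $\M'\vert_\B$ are linearly independent guarantees a choice of $r$ columns of $\M'\vert_\B$ forming an invertible $r \times r$ submatrix $M''$; restricting to those rows of $C'$ gives $\sig(C') \ge \sig((M'')^\top)$. Since entries of $M''$ are integer multiples of $1/2^{k'}$ (as sets in $\B$ have size at most $|J|\le k'$), applying the same Cramer's rule argument as in the proof of Lemma~\ref{lem:cond_number} to $M''$ yields $\sig((M'')^\top) \ge 2^{-O(k^2)}$. Combining the three bounds and noting $k^{-\Cr{precond}k} \ge 2^{-O(k^2)}$, we obtain $\sig(A' D'_{big} C') \ge 2^{-O(k^2)}\tau_{big}$.

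Finally, we pick $\Cr{taubigc}$ large enough that $\rho = k^{-\Cr{taubigc}k^2}$ satisfies $k^2\rho \le \tfrac{1}{2} \cdot 2^{-O(k^2)}$, so the perturbation bound $\|E\|_\infty \le k^2\rho\tau_{big}$ is at most half of $\sig(A' D'_{big} C')$. Using the standard fact $\sig(M + E) \ge \sig(M) - \|E\|_\infty$, we conclude $\sig(\EE\vert^\B_{\PP'(J)}) \ge \tfrac{1}{2} \cdot 2^{-O(k^2)}\tau_{big} \ge k^{-\Cr{sigma}k^2}\tau_{big}$ for $\Cr{sigma}$ sufficiently large, since $k^{-\Cr{sigma}k^2} = 2^{-\Cr{sigma}k^2\log k}$. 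The ``in particular'' claim follows by one more application of the same perturbation argument: $\|\tilde{E} - \EE\vert^\B_{\PP'(J)}\|_\infty \le k\cdot\|\tilde{E} - \EE\vert^\B_{\PP'(J)}\|_{\max} \le \tfrac{1}{2}k^{-\Cr{sigma}k^2}\tau_{big}$, so $\sig(\tilde{E}) \ge \sig(\EE\vert^\B_{\PP'(J)}) - \|\tilde{E} - \EE\vert^\B_{\PP'(J)}\|_\infty \ge \tfrac{1}{2}k^{-\Cr{sigma}k^2}\tau_{big}$.

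The main obstacle is bounding $\sig(C')$, since $C' = (\M'\vert_\B)^\top$ has rows that are not themselves entrywise products of rows of $\m$, so Lemma~\ref{lem:cond_number} does not apply verbatim. The workaround is to reduce to an invertible $r\times r$ square submatrix $M''$ of $\M'\vert_\B$ and then observe that the Cramer's rule computation used in the proof of Lemma~\ref{lem:cond_number} really only relies on the matrix being square, invertible, and having dyadic-rational entries with bounded denominator, a property enjoyed by $M''$ (and hence by its transpose).
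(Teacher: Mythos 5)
Your proof is correct and follows essentially the same route as the paper: decompose $\EE\vert^{\B}_{\PP'(J)}$ via Observation~\ref{obs:access} into the contribution from large mixing weights plus a small-weight perturbation term, lower bound $\sig$ of the dominant term by super-multiplicativity together with the Cramer's-rule condition number bound of Lemma~\ref{lem:cond_number} applied to both factors, then absorb the perturbation using the gap $\rho$. The only cosmetic difference is that you skip the paper's intermediate step of restricting to a $k'\times r$ row-subset $\PP^*$ (which is harmless since $\sig$ only grows under adding rows), and you spell out explicitly why the Cramer's argument transfers to the transposed factor $(\M'\vert_\B)^\top$, a point the paper glosses over when it cites Lemma~\ref{lem:cond_number} for that factor.
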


\begin{proof}
Because $\M$ is full-rank, $\M'$ is full-rank. Pick out a collection $\PP^*\subseteq\PP'(J)$ of $k'$ row indices for which $\M'\vert_{\PP^*}$ is still of rank $k'$. Obviously $\sig(\EE\vert^{\B}_{\PP'(J)})\ge \sig(\EE\vert^{\B}_{\PP^*})$.

Note that we have the decomposition \begin{align*}\EE\vert^{\B}_{\PP^*} &= \M\vert_{\PP^*}\cdot\diag(\pi)\cdot (\M_{\B})^{\top} \\ &= \M'\vert_{\PP^*}\cdot\diag(\pi^1,...,\pi^{k'})\cdot (\M'\vert_{\B})^{\top} + \M\vert^{\{k'+1,...,k\}}_{\PP^*}\cdot\diag(\pi^{k'+1},...,\pi^{k})\cdot (\M\vert^{\{k'+1,...,k\}}_{\B})^{\top}.\end{align*} We know $\diag(\pi^{k'+1},...,\pi^k)\le\tau_{small}$ by assumption.

We already know $\M'\vert_{\PP^*}$ is full-rank, and $(\M'\vert_{\B})^{\top}$ has linearly independent columns by assumption. So by Lemma~\ref{lem:cond_number}, $\sigma_{\min}(\M'\vert_{\PP^*})\ge k^{-\Cr{precond}k},\sigma_{\min}((\M'\vert_{\B})^{\top})\ge 2^{-\Cr{precond2}k^2}$, and because $\sig$ is super-multiplicative, $$\sig\left(\M'\vert_{\PP^*}\cdot\diag(\pi^1,...,\pi^{k'})\cdot (\M'\vert_{\B})^{\top}\right)\ge 2^{-\Cl[c]{temp}k^2}\pi^{k'}$$ for some constant $\Cr{temp}>0$. On the other hand, $$\norm{\M\vert^{\{k'+1,...,k\}}_{\PP^*}\cdot\diag(\pi^{k'+1},...,\pi^{k})\cdot (\M\vert^{\{k'+1,...,k\}}_{\B})^{\top}}_{\infty}\le (k-k')^2\cdot\pi^{k'+1}$$ by super-mutiplicativity of the $L^{\infty}$ norm. So we conclude that \begin{equation*}\sig(N)\ge 2^{-2\Cr{temp}k^2} \pi^{k'} - (k-k')^2\cdot\pi^{k'+1}\ge k^{-\Cr{sigma}k^2}\tau_{big}\label{eq:weyl}\end{equation*} for some $\Cr{sigma}>\Cr{temp}$, where the second inequality follows from the fact that $\pi^{k'+1}\le\tau_{small}<k^{-\Cr{taubigc}k^2}\cdot\tau_{big}\le k^{-\Cr{taubigc}k^2}\pi^{k'}$ for sufficiently large $\Cr{taubigc}>0$.

The last part of the lemma just follows by the triangle inequality.
\end{proof}

In Appendix~\ref{app:subcube}, we use Lemmas~\ref{lem:cond_number} and \ref{lem:cond_number_e} to prove analogues of the key lemmas in the preceding sections when we drop the assumption of zero sampling noise.

%!TEX root = fullpaper.tex

\section{An \texorpdfstring{$n^{\Omega(\sqrt{k})}$}{nsqrtk} Statistical Query Lower Bound}
\label{sec:sq}

In this section we prove the following unconditional lower bound for statistical query learning mixtures of product distributions.

\begin{thm}
	Let $\epsilon < (2k)^{-\sqrt{k}}/4$. Any SQ algorithm with SQ access to a mixture of $k$ product distributions $\mathcal{D}$ in $\{0,1\}^n$ and which outputs a distribution $\overbar{\mathcal{D}}$ with $\tvd(\mathcal{D},\overbar{\mathcal{D}})\le\epsilon$ requires at least $\Omega(n/k)^{\sqrt{k}}$ calls to $\STAT(\Omega(n^{-\sqrt{k}/3}))$ or $\VSTAT(O(n^{\sqrt{k}/3}))$. \label{thm:mainsq}
\end{thm}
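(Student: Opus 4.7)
The plan follows the blueprint laid out in Section~\ref{subsec:sq_overview}: exhibit a large family $\mathcal{C}$ of mixtures of $k$ product distributions that are pairwise statistically distant but indistinguishable from the uniform distribution $U_n$ by any moment of degree less than $\Theta(\sqrt{k})$, then feed this family into the standard statistical dimension lower bound.

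First I would resolve Question~\ref{question:hard} by constructing explicit vectors $v_1,\ldots,v_m\in\R^k$ with $m=\Theta(\sqrt{k})$ such that $\sum_i \prod_{j\in T} (v_j)_i = 0$ for every proper $T\subsetneq[m]$ while $\sum_i \prod_{j\in[m]}(v_j)_i = \eta\ne 0$. Translating this into distributional language, I would then assemble a mixture $\mathcal{A}$ of $k$ product distributions on $\{0,1\}^m$ (using the $v_j$'s, suitably normalized, as the ``centered'' coordinates of the $k$ component centers and the $\pi$ entries as mixing weights) whose $T$-moment matches that of the uniform distribution on $\{0,1\}^m$ for every $T\subsetneq[m]$, and whose top moment differs from $2^{-m}$ by $\Theta(\eta)$. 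The target magnitude is $|\eta|=\Theta((2k)^{-\sqrt{k}})$, which determines the $\epsilon$ regime in the theorem.

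Second I would build the hard family by planting $\mathcal{A}$ in a hidden subset of coordinates. For each $S\subseteq[n]$ with $|S|=m$, let $\mathcal{D}_S$ be the product of $\mathcal{A}$ (on the $S$-coordinates) with $U_{n-m}$ (on the remaining coordinates); $\mathcal{D}_S$ is itself a mixture of $k$ product distributions. By construction $\E_{\mathcal{D}_S}[x_T]=2^{-|T|}$ whenever $T\not\supseteq S$, so every moment of degree $<m$ of $\mathcal{D}_S$ agrees exactly with that of $U_n$. Moreover distinct $\mathcal{D}_S,\mathcal{D}_{S'}$ have total variation distance $\Theta(|\eta|)$ from $U_n$ (and hence are pairwise $2\epsilon$-separated in the regime $\epsilon<(2k)^{-\sqrt{k}}/4$), so any $\epsilon$-accurate learner must identify $S$.

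Third I would plug this family into the standard SQ decision-problem framework, exactly as for sparse parity \cite{kearns1998efficient} and density estimation of Gaussian mixtures \cite{diakonikolas2016statistical}. The key correlation calculation is to bound the pairwise inner product $\langle \mathcal{D}_S/U_n-1,\,\mathcal{D}_{S'}/U_n-1\rangle_{U_n}$ for $S\ne S'$. Because $\mathcal{D}_S - U_n$ is supported (as a Fourier-like object) only on characters $\chi_T$ with $T\supseteq S$, this inner product vanishes unless $S=S'$, and even then equals a controlled multiple of $\eta^2$. A routine reduction to the pairwise-orthogonal case (via Feldman's statistical dimension or a direct hybrid argument) then shows that any algorithm distinguishing $U_n$ from a uniformly random $\mathcal{D}_S$ requires either $\Omega(\binom{n}{m})=\Omega(n/k)^{\sqrt{k}}$ queries to $\STAT(\tau)$ with $\tau=\Omega(n^{-\sqrt{k}/3})$, or the analogous number of $\VSTAT(O(n^{\sqrt{k}/3}))$ queries, yielding Theorem~\ref{thm:mainsq}.

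The main obstacle is the first step: giving an explicit construction for Question~\ref{question:hard} that attains the full $m=\Theta(\sqrt{k})$ rather than the trivial $m=O(\log k)$ one gets from iterating the proof of Lemma~\ref{lem:pigeon_k}. The $\sqrt{k}$ scaling strongly suggests a combinatorial design (hiding a $\sqrt{k}\times\sqrt{k}$ grid of inclusion--exclusion patterns inside the $k$ mixture components) and is genuinely surprising since the obvious upper bound is $m\le k$. Once such a construction is produced and the resulting $\eta$ is quantitatively tracked, the moment-matching, correlation, and statistical dimension calculations in the remaining steps are standard and should go through without additional surprises.
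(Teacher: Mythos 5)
Your plan is the same blueprint the paper itself follows: (1) solve Question~\ref{question:hard} to get an $m$-dimensional mixture $A$ that agrees with $U_m$ on all degree-$<m$ moments, (2) embed $A$ on a hidden coordinate set $S$ to form a family $\{\mathcal{D}_S\}$, (3) compute pairwise and self-correlations relative to $U_n$, and (4) invoke the Feldman et al.\ statistical dimension bound. So in outline there is agreement, and the correlation calculation you sketch is correct (in fact a bit stronger than you claim: since $A$ matches $U_m$ on \emph{all} moments except the top one, $\mathcal{D}_S - U_n$ has Fourier support on the single character $\chi_S$, not merely characters $\chi_T$ with $T\supseteq S$, and this is what makes the cross-terms vanish exactly).

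The genuine gap is exactly the one you flag but do not close: you never construct the family of vectors certifying $m=\Theta(\sqrt{k})$ in Question~\ref{question:hard}, and without it your argument only recovers the known $n^{\Omega(\log k)}$ bound. This is not a routine step — it is the new content of the theorem. The paper's construction is the $(\ell+1)^2$-column matrix $\mathcal{E}(x_1,\dots,x_{\ell+1}) = (\mathbf{a}\,\|\,\mathbf{b_1}\,\|\,\cdots\,\|\,\mathbf{b_{\ell+1}})$, where $\mathbf{a}$ has identical rows $(x_1,\dots,x_{\ell+1})$ and each $\mathbf{b_i}$ is a lower-triangular block chosen so that every entrywise product of any $d<\ell+1$ rows has entries summing to zero, while the top product does not. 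Superorthogonality follows because the $\mathbf{b_i}$-block row-products sum to $-x_i^{|S|}$, cancelling the $\mathbf{a}$-block sum $\sum_i x_i^{|S|}$; non-top-degree-vanishing follows from a Vandermonde argument on $\mathbf{a}$. The paper then must also show that the resulting $\delta(A)$ is not astronomically small (it proves $|\delta(A)|\ge (2m)^{-2m}$ by picking $x_i=i/2m^2$, $\lambda_1=-\lambda_2=2^m$, and invoking Fermat's little theorem to show $\sum_i(-1)^i\binom{m}{i}i^m\not\equiv 0 \pmod{m+1}$); without such a quantitative bound you cannot justify the $\epsilon<(2k)^{-\sqrt{k}}/4$ threshold or the $\Omega(n^{-\sqrt{k}/3})$ tolerance in the theorem statement. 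You correctly anticipate that a $\sqrt{k}\times\sqrt{k}$ combinatorial pattern is at play, but anticipating it is not the same as producing it; as written, the proposal restates the problem at its hardest point rather than solving it.
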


\subsection{Statistical Query Learning of Distributions}

In this subsection we review basic notions about statistical query (SQ) learning. Introduced in \cite{kearns1998efficient}, SQ learning is a restriction of PAC learning \cite{valiant1984theory} to the setting where the learner has access to an oracle that answers statistical queries about the data, instead of access to the data itself. In \cite{feldman2013statistical}, this model was extended to learning of distributions, where for our purposes of learning distributions over $\{0,1\}^n$ the relevant SQ oracles are defined as follows:

\begin{defn}
	Fix a distribution $\mathcal{D}$ over $\{0,1\}^n$. For tolerance parameter $\tau>0$, the $\STAT(\tau)$ oracle answers any query $h:\{0,1\}^n\to[-1,1]$ with a value $v$ such that $$|\E_{x\sim \mathcal{D}}[h(x)] - v|\le\tau.$$

	For sample size parameter $t>0$, the $\VSTAT(t)$ oracle answers any query $h:\{0,1\}^n\to[0,1]$ with a value $v$ for which $$|\E_{x\sim\mathcal{D}}[h(x)] - v|\le\max\left\{\frac{1}{t},\sqrt{\frac{\Var_{x\sim\mathcal{D}}[h(x)]}{t}}\right\}$$
\end{defn}

The prototypical approach to proving unconditional SQ lower bounds is by bounding the SQ dimension of the concept class, defined in \cite{blumsq} for learning Boolean functions and extended in \cite{feldman2013statistical} to learning distributions.

\begin{defn}
	Let $\mathcal{D}$ be a class of distributions over $\{0,1\}^n$ and $\mathcal{F}$ be a set of \emph{solution distributions} over $\{0,1\}^n$. For any map $\mathcal{Z}: \mathcal{D}\to 2^{\mathcal{F}}$, the \emph{distributional search problem} $\mathcal{Z}$ over $\mathcal{D}$ and $\mathcal{F}$ is to find some $f\in\mathcal{Z}(\mathcal{D})$ given some form of access to $\mathcal{D}\in\mathcal{D}$.
\end{defn}

\begin{defn}
	Let $U$ be a distribution over $\{0,1\}^n$ whose support $S$ contains the support of distributions $\mathcal{D}_1,\mathcal{D}_2$. Then $$\chi_{U}(\mathcal{D}_1,\mathcal{D}_2) := -1 + \sum_{x\in S}\frac{\mathcal{D}_1(x)\mathcal{D}_2(x)}{U(x)}$$ is the \emph{pairwise correlation} of $\mathcal{D}_1,\mathcal{D}_2$ with respect to $U$. When $\mathcal{D}_1 = \mathcal{D}_2$, the pairwise correlation is merely the $\chi^2$-divergence between $\mathcal{D}_1$ and $U$, denoted $\chi^2(\mathcal{D}_1,U) = -1 + \sum_{x\in S}\mathcal{D}_1(x)^2/U(x)$.
\end{defn}

\begin{defn}
	A set of distributions $\mathcal{D}_1,...,\mathcal{D}_m$ over $\{0,1\}^n$ is $(\gamma,\beta)$-correlated relative to distribution $U$ over $\{0,1\}^n$ if $$| \chi_{U}(\mathcal{D}_i,\mathcal{D}_j)|\le \begin{cases}
		\gamma, & i\neq j \\
		\beta, & i = j.
	\end{cases}$$
\end{defn}

\begin{defn}
	For $\beta,\gamma>0$ and a distributional search problem $\mathcal{Z}$ over $\mathcal{D}$ and $\mathcal{F}$, the \emph{SQ dimension} $\SD(\mathcal{Z},\gamma,\beta)$ is the maximum $d$ for which there exists a reference distribution $U$ over $\{0,1\}^n$ and distributions $\mathcal{D}_1,...,\mathcal{D}_m\in\mathcal{D}$ such that for any $\mathcal{D}\in\mathcal{F}$, the set $\mathcal{D}_f$ of $\mathcal{D}_i$ outside of $\mathcal{Z}^{-1}(\mathcal{D})$ is of size at least $d$ and is $(\gamma,\beta)$-correlated relative to $U$.\label{defn:sqdimension}
\end{defn}

\begin{lem}[Corollary 3.12 in \cite{feldman2013statistical}]
	For $\gamma'>0$ and $\mathcal{Z}$ a distributional search problem $\mathcal{Z}$ over $\mathcal{D}$ and $\mathcal{F}$, any SQ algorithm for $\mathcal{Z}$ requires at least $\SD(\mathcal{Z},\gamma,\beta)\cdot\gamma'/(\beta - \gamma)$ queries to $\STAT(\sqrt{\gamma + \gamma'})$ or $\VSTAT(1/3(\gamma + \gamma'))$.\label{lem:fgr}
\end{lem}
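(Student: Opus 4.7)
The plan is to exhibit a large family of mixtures of $k$ product distributions over $\{0,1\}^n$ that are pairwise uncorrelated with respect to the uniform distribution $U_n$, then feed this family into Lemma~\ref{lem:fgr}. The key ingredient is a low-dimensional distribution whose top-order moment is the only one that deviates from uniform, obtained from the explicit construction (promised in Section~\ref{sec:sq}) of vectors $v_1,\ldots,v_m \in [-1,1]^k$ with $m = \Theta(\sqrt{k})$ that solve Question~\ref{question:hard}: the entries of $\bigodot_{i\in T}v_i$ sum to zero for any proper $T\subsetneq [m]$ but sum to some nonzero $\alpha$ when $T=[m]$.

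From such vectors I would build a mixture $\mathcal{A}$ of $k$ product distributions on $\{0,1\}^m$ with uniform mixing weights $\pi^j=1/k$ and centers $\mu^j_i=(1+v_i^j)/2 \in [0,1]$. Expanding $\prod_{i\in T}\mu^j_i = \sum_{S\subseteq T} 2^{-|T|}\prod_{i\in S}v_i^j$ and averaging over $j$, the $T$-moment of $\mathcal{A}$ equals $2^{-|T|}\cdot \frac{1}{k}\sum_j (\bigodot_{i\in S}v_i)_j$ summed over $S\subseteq T$, which by the defining properties of the $v_i$ collapses to $2^{-|T|}$ for every $|T|<m$ and differs from $2^{-m}$ by exactly $\alpha/(k\cdot 2^m)$ when $|T|=m$. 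For each $m$-subset $S\subseteq [n]$ I then define $\mathcal{D}_S(x) = \mathcal{A}(x_S)\cdot 2^{-(n-m)}$, i.e.\ the product of $\mathcal{A}$ on $S$ with $U_{n-m}$ on $[n]\setminus S$; this is a mixture of $k$ product distributions.

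The core calculation is the pairwise correlation. Writing $T=S\cap S'$ and using $\mathcal{D}_S(x)\mathcal{D}_{S'}(x)/U_n(x) = 2^{2m-n}\mathcal{A}(x_S)\mathcal{A}(x_{S'})$, summing over the coordinates outside $S\cup S'$ contributes a factor $2^{n-|S\cup S'|}$, summing over the symmetric differences $S\setminus S'$ and $S'\setminus S$ contributes marginals, and one obtains
\[
\chi_{U_n}(\mathcal{D}_S,\mathcal{D}_{S'}) = -1 + 2^{|T|}\sum_{x_T}\mathcal{A}_T(x_T)^2,
\]
where $\mathcal{A}_T$ is the $T$-marginal of $\mathcal{A}$. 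For $S\ne S'$ we have $|T|<m$, so by the moment-matching property the marginal $\mathcal{A}_T$ is exactly uniform and the correlation is $0$; for $S=S'$ this quantity equals the chi-squared divergence $\chi^2(\mathcal{A},U_m)\le 2^m-1\le 2^{\sqrt{k}}$. Hence the family $\{\mathcal{D}_S\}$ is $(0,\beta)$-correlated with $\beta\le 2^{\sqrt{k}}$.

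To control the solution map, I would check that $\tvd(\mathcal{D}_S,\mathcal{D}_{S'}) > 2\epsilon$ for distinct $S,S'$: the $S$-moments of the two distributions differ by $\alpha/(k\cdot 2^m)$, which is at least $(2k)^{-\sqrt{k}}/2 > 2\epsilon$ by our assumption on $\epsilon$, so no candidate $\overline{\mathcal{D}}$ can be $\epsilon$-close to more than one $\mathcal{D}_S$. Thus, in Definition~\ref{defn:sqdimension}, $\mathcal{D}_f$ has size at least $\binom{n}{m}-1 \ge (n/k)^{\Omega(\sqrt{k})}$, giving an SQ dimension bound of the same order. Applying Lemma~\ref{lem:fgr} with $\gamma=0$, $\beta\le 2^{\sqrt{k}}$, and $\gamma'=\Theta(n^{-2\sqrt{k}/3})$ then yields the claimed $\Omega(n/k)^{\sqrt{k}}$-query lower bound against $\STAT(\Omega(n^{-\sqrt{k}/3}))$ and $\VSTAT(O(n^{\sqrt{k}/3}))$. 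The main obstacle is the explicit construction of $v_1,\ldots,v_m$ with $m=\Theta(\sqrt{k})$; everything downstream is a relatively mechanical moment calculation plus a direct invocation of Lemma~\ref{lem:fgr}.
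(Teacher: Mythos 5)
The statement you were asked to prove is Lemma~\ref{lem:fgr}, a generic relationship between SQ dimension and statistical-query complexity, which the paper imports verbatim as Corollary~3.12 of Feldman et al.\ \cite{feldman2013statistical} and does not prove. Your proposal does not prove this lemma. What you wrote is a construction of a hard family $\{\mathcal{D}_I\}$ of mixtures, a pairwise-correlation calculation, an SQ-dimension bound, and then an \emph{invocation} of Lemma~\ref{lem:fgr} as a black box --- in other words, the proof of Proposition~\ref{prop:sqdim} and Theorem~\ref{thm:mainsq}, not of Lemma~\ref{lem:fgr}.

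A proof of Lemma~\ref{lem:fgr} itself would have to reason about an adversarial SQ oracle rather than about any specific distribution family: one argues that for any query $h$, because the $\mathcal{D}_i$ are pairwise $\gamma$-correlated and each has self-correlation at most $\beta$, a variance/averaging argument shows the oracle can return a single value (roughly $\E_{U}[h]$) consistent with all but a $(\beta-\gamma)/(\gamma+\gamma')$ fraction of the $\mathcal{D}_i$ under the $\STAT(\sqrt{\gamma+\gamma'})$ or $\VSTAT(1/3(\gamma+\gamma'))$ tolerance, so each query can eliminate only that fraction of the indistinguishable set. None of that appears in your write-up, so as a proof of the stated lemma it is not merely incomplete --- it addresses a different claim entirely.
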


In our setting, $\mathcal{D}$ is the set of mixtures of product distributions over $\{0,1\}^n$, $\mathcal{F}$ is the set of all distributions over $\{0,1\}^n$, and $\mathcal{Z}$ sends any mixture $\mathcal{D}$ to the set of all distributions over $\{0,1\}^n$ which are $\epsilon$-close to $\mathcal{D}$ in total variation distance, and distributional search problem is to recover any such distribution given sample access to $\mathcal{D}$. Our approach will thus be to bound the SQ dimension of $\mathcal{Z}$ for appropriately chosen $\beta,\gamma$.

\subsection{Embedding Interesting Coordinates}

The SQ lower bound instance for mixtures of subcubes given in \cite{fos} is the class of all $k$-leaf decision trees over $\{0,1\}^n$. The SQ lower bound for learning $k$-leaf decision trees stems from the SQ lower bound for learning $\log k$-sparse parities, for which the idea is that $U_n$ and the uniform distribution over positive examples of $\log k$-sparse parity agree on all moments of degree less than $\log k$ and differ on exactly one moment of degree $\log k + 1$, corresponding to the coordinates of the parity. The observation that leads to our SQ lower bound is that for general mixtures of $k$ product distributions, we can come up with much harder instances which agree with $U_n$ even on moments of degree at most $O(\sqrt{k})$.

We begin with a mixture $A$ of $k$ product distributions in $\{0,1\}^m$, for appropriately chosen $m < n$, whose moments of degree at most $m-1$ are exactly equal to those of $U_m$, but whose $m$-th moment differs (we construct such an $A$ in the next section). We then pick a subset of ``interesting coordinates'' $I\subseteq[n]$ of size $m$ and embed $A$ into $U_n$ on those coordinates in the same way we would embed a sparse parity into $U_n$. Formally, we have the following construction, which is reminiscent of the blueprint for proving SQ lower bounds for learning sparse parities \cite{kearns1998efficient} and mixtures of Gaussians \cite{diakonikolas2016statistical}:

\begin{defn}[High-dimensional hidden interesting coordinates distribution]
	Let $A$ be a mixture of $k$ product distributions with mixing weights $\pi\in\R^k$ and marginals matrix $\m\in[0,1]^{m\times k}$. For $I\subseteq[n]$, define $\mathcal{D}_{I}$ to be the mixture of $k$ product distributions in $\{0,1\}^m$ with mixing weights $\pi$ and marginals matrix $\m^*\in[0,1]^{n\times k}$ defined by $\m^*\vert_I = \m\vert_I$ and $(\m^*)^j_i = 1/2$ for all $i\not\in I$ and $j\in[k]$. In other words, $\mathcal{D}_I$ is the product distribution $A\times U_{[n]\backslash I}$ where $U_{[n]\backslash I}$ is the uniform distribution over coordinates $[n]\backslash I$.
\end{defn}

\begin{remark}
	In fact, we have much more flexibility in our lower bound construction. We can construct a mixture $A$ matching moments with any single product distribution and embed it in any single product distribution over $\{0,1\}^n$ whose marginals in coordinates $I$ agree with those of $A$, but for transparency we will focus on $U_n$.
\end{remark}

Let $\delta(A) = A(1^m) - 1/2^m$. $A$ and $U_m$ only disagree on their top-degree moment, and $\delta(A)$ is simply the extent to which they differ on this moment. The following simple fact will be useful in proving correlation bounds.

\begin{obs}
If $A$ and $U_m$ agree on all moments of degree less than $m$, then $A(x) = 1/2^m + (-1)^{z(x)}\delta(A)$, where $z(x)$ is the number of zero bits in $x$.\label{obs:zerobits}
\end{obs}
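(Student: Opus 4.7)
The plan is to express the density $A(x)$ in terms of moments of $A$ via inclusion--exclusion, then use the hypothesis that all but the top-degree moment agree with those of $U_m$ to collapse the resulting sum.

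First I would write
\[
A(x) = \E_{y\sim A}\Bigl[\prod_{i: x_i = 1} y_i \cdot \prod_{i: x_i = 0}(1 - y_i)\Bigr],
\]
since the product inside the expectation is exactly $\mathbb{1}[y = x]$ (both factors are $\{0,1\}$-valued). Expanding the factors $(1 - y_i)$ for $i$ in the zero-set $Z := \{i : x_i = 0\}$ and using linearity of expectation gives
\[
A(x) = \sum_{T \subseteq Z} (-1)^{|T|}\, \E_A[y_{S_T}],
\]
where $S_T := \{i : x_i = 1\} \cup T$.

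Next I would split off the single top-degree term $T = Z$, for which $S_T = [m]$ and hence $\E_A[y_{S_T}] = A(1^m) = 1/2^m + \delta(A)$. For every other $T \subsetneq Z$ one has $|S_T| < m$, so by hypothesis $\E_A[y_{S_T}] = 2^{-|S_T|} = 2^{-(m - z(x) + |T|)}$, matching the corresponding moment of $U_m$. Adding and subtracting the $T = Z$ term at the uniform value $2^{-m}$, the contribution of $\delta(A)$ factors out as $(-1)^{z(x)}\delta(A)$, and the remaining sum is
\[
2^{-(m - z(x))} \sum_{T \subseteq Z} (-1/2)^{|T|} = 2^{-(m - z(x))} \cdot (1/2)^{z(x)} = 2^{-m}
\]
by the binomial theorem. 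Combining these pieces gives $A(x) = 2^{-m} + (-1)^{z(x)}\delta(A)$, as claimed.

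There is no real obstacle here; the only thing to be careful about is bookkeeping the signs and sizes $|S_T| = (m - z(x)) + |T|$ when substituting the assumed moments, and recognizing the binomial identity $\sum_{T\subseteq Z}(-1/2)^{|T|} = 2^{-z(x)}$ to finish the simplification.
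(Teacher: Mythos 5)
Your proof is correct. The paper states this as an observation without giving a proof, so there is no argument to compare against; your inclusion--exclusion expansion of $A(x) = \E_{y\sim A}\bigl[\prod_{x_i = 1} y_i \prod_{x_i = 0}(1-y_i)\bigr]$ over the zero-set, isolating the unique top-degree term $T = Z$ where $A$ departs from $U_m$ by $\delta(A)$, and collapsing the rest via $\sum_{T\subseteq Z}(-1/2)^{|T|} = 2^{-z(x)}$, is exactly the natural way to fill in this gap and the bookkeeping checks out.
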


The main result of this section is

\begin{prop}
	Fix $n$. Suppose there exists an $m\in\Z_+$ and distribution $A$ on $\{0,1\}^m$ such that $A$and $U_m$ agree on all moments of degree less than $m$, and consider the set of distributions $\{\mathcal{D}_I\}_{I\subseteq[n], |I| = m}$. Let $\epsilon<\delta(A)\cdot 2^{m-2}$. Any SQ algorithm which, given an SQ oracle for some $\mathcal{D}_I$, outputs a distribution $\mathcal{D}$ for which $\tvd(\mathcal{D},\mathcal{D}_I)\le\epsilon$ requires at least $\Omega(n)^{m/3}/\delta(A)^2$ queries to $\STAT(\Omega(n^{-m/3})$ or $\VSTAT(O(n^{m/3}))$.
	\label{prop:sqdim}
\end{prop}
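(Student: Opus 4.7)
The plan is to apply Lemma~\ref{lem:fgr} using $U_n$ as the reference distribution and the family $\{\mathcal{D}_I\}_{I\subseteq[n],\,|I|=m}$ as the hard collection, which reduces to bounding their pairwise correlations under $U_n$, their pairwise total variation distances, and then the SQ dimension.

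By Observation~\ref{obs:zerobits}, $\mathcal{D}_I(x) = 1/2^n + (\delta(A)/2^{n-m})\cdot(-1)^{z(x|_I)}$. Since $z(x|_I) = m - \sum_{i\in I}x_i$, the perturbation $(-1)^{z(x|_I)}$ equals $(-1)^m\chi_I(x)$ where $\chi_I(x)=\prod_{i\in I}(1-2x_i)$ is the usual Fourier character on $\{0,1\}^n$. Expanding
\[
\chi_{U_n}(\mathcal{D}_I,\mathcal{D}_J) \;=\; -1 + 2^n\sum_x \mathcal{D}_I(x)\mathcal{D}_J(x),
\]
the $(1/2^n)\times(1/2^n)$ term cancels the $-1$, the $(1/2^n)\times\chi_I$ and $(1/2^n)\times\chi_J$ cross terms vanish because $\sum_x\chi_I(x)=0$ for any nonempty $I$, and the $\chi_I\chi_J$ term evaluates, via the orthogonality $\sum_x\chi_I(x)\chi_J(x)=2^n\cdot\mathbf{1}[I=J]$, to $4^m\delta(A)^2$ when $I=J$ and $0$ otherwise. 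So $\{\mathcal{D}_I\}$ is $(0,\,4^m\delta(A)^2)$-correlated relative to $U_n$.

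For $I\neq J$ the difference $\mathcal{D}_I(x)-\mathcal{D}_J(x)$ has magnitude $2\delta(A)/2^{n-m}$ on exactly those $x$ for which the parity $\sum_{i\in I\triangle J}x_i$ is odd and vanishes otherwise; since $|I\triangle J|\ge 1$, exactly $2^{n-1}$ of the $2^n$ strings $x$ satisfy this, yielding $\tvd(\mathcal{D}_I,\mathcal{D}_J) = \tfrac12\cdot 2^{n-1}\cdot 2\delta(A)/2^{n-m} = \delta(A)\cdot 2^{m-1}$. The hypothesis $\epsilon<\delta(A)\cdot 2^{m-2}$ then forces $\tvd(\mathcal{D}_I,\mathcal{D}_J)>2\epsilon$, so by the triangle inequality any candidate output distribution is within $\epsilon$ of at most one $\mathcal{D}_I$. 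Hence $|\mathcal{D}_f|\ge\binom{n}{m}-1$ for every $\mathcal{D}\in\mathcal{F}$, giving $\SD(\mathcal{Z},0,4^m\delta(A)^2)\ge\binom{n}{m}-1 = \Omega(n/m)^m$.

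Finally, I invoke Lemma~\ref{lem:fgr} with $\gamma=0$ and $\beta=4^m\delta(A)^2$. For the STAT bound I take $\gamma' = \Theta(n^{-2m/3})$, giving STAT tolerance $\sqrt{\gamma'}=\Theta(n^{-m/3})$ and query lower bound
\[
\frac{\SD\cdot\gamma'}{\beta} \;\ge\; \Omega(n/m)^m\cdot\frac{n^{-2m/3}}{4^m\delta(A)^2} \;=\; \frac{\Omega(n)^{m/3}}{\delta(A)^2},
\]
after absorbing the $m$-dependent factors into the $\Omega(\cdot)$. Taking $\gamma'=\Theta(n^{-m/3})$ instead yields VSTAT parameter $\Theta(n^{m/3})$ and an even larger query bound. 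The conceptual heart of the argument is simply recognizing that the signed perturbations $(-1)^{z(x|_I)}$ defining $\mathcal{D}_I$ are $\pm$ Fourier characters and hence mutually $L^2$-orthogonal, which forces zero pairwise $\chi_{U_n}$-correlation between distinct $\mathcal{D}_I$'s; everything else is routine bookkeeping, and the only delicate step is verifying that the threshold $\delta(A)\cdot 2^{m-2}$ is exactly what is needed to keep all but one $\mathcal{D}_I$ outside $\mathcal{Z}^{-1}(\mathcal{D})$ for each $\mathcal{D}$.
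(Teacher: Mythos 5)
Your proof is correct and reaches the same three key quantities as the paper (zero pairwise correlation, $\chi^2$-divergence $4^m\delta(A)^2$, pairwise TVD $\delta(A)\cdot 2^{m-1}$) before invoking Lemma~\ref{lem:fgr}, so the overall architecture is identical. The genuine difference is in how those quantities are derived. The paper proves the three facts as separate lemmas (Lemmas~\ref{lem:corr}, \ref{lem:autocorr}, \ref{lem:tvdbound}) by direct sum manipulations, in particular decomposing $\sum_x A(x_I)A(x_J)$ over $I\cap J$ and the two set differences and evaluating each piece by hand, and computing the TVD by a parity counting argument. You instead observe that $\mathcal{D}_I - U_n$ is a scalar multiple of the Fourier character $\chi_I$, so both the zero cross-correlation and the autocorrelation value drop out immediately from $L^2$-orthogonality of characters, and the TVD follows from the same orthogonality (the difference $\mathcal{D}_I-\mathcal{D}_J$ is supported on the parity condition for $I\triangle J$). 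This is a cleaner and more unified derivation that makes the structural reason for the correlation bounds transparent, and it would scale more easily if one wanted to embed $A$ into a product measure other than $U_n$. You also correctly flag that a single application of Lemma~\ref{lem:fgr} with one value of $\gamma'$ cannot simultaneously give $\STAT(\Omega(n^{-m/3}))$ and $\VSTAT(O(n^{m/3}))$; the paper elides this, and your observation that choosing $\gamma'=\Theta(n^{-m/3})$ recovers the stated VSTAT bound (at the cost of a weaker STAT tolerance and an even larger query count) cleanly patches the exposition.
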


To invoke Lemma~\ref{lem:fgr} to prove Proposition~\ref{prop:sqdim}, we need to prove correlation bounds on the set of distributions $\{\mathcal{D}_{I}\}_{I\subseteq[n], |I| = m}$.

% MAIN CORRELATION LEMMA
\begin{lem}
	Suppose $A$ and $U_m$ agree on all moments of degree less than $m$. For distinct $I,J\subseteq[n]$ of size $m$, $\chi_{U_n}(\mathcal{D}_I,\mathcal{D}_J) = 0$.\label{lem:corr}
\end{lem}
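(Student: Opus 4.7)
The plan is to compute $\chi_{U_n}(\mathcal{D}_I,\mathcal{D}_J)$ directly from its definition, exploiting the product structure of each $\mathcal{D}_I$ together with the moment-matching property of $A$ encoded in Observation~\ref{obs:zerobits}. First, I would unfold the definitions: since $U_n(x) = 2^{-n}$ and $\mathcal{D}_I$ factors as $A$ on coordinates $I$ times the uniform distribution on $[n]\setminus I$, we have $\mathcal{D}_I(x) = 2^{-(n-m)}A(x_I)$, and likewise for $\mathcal{D}_J$. Substituting yields
\[ -1 + \chi_{U_n}(\mathcal{D}_I,\mathcal{D}_J) = 2^{2m-n}\sum_{x\in\{0,1\}^n} A(x_I)\,A(x_J), \]
so the lemma reduces to showing $\sum_x A(x_I)A(x_J) = 2^{n-2m}$.

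Next, I would partition $[n]$ into the four blocks $K := I\cap J$, $I\setminus J$, $J\setminus I$, and $[n]\setminus(I\cup J)$, writing $c := |K|$ so that $|I\setminus J| = |J\setminus I| = m - c$ and $|[n]\setminus(I\cup J)| = n - 2m + c$. The coordinates in $[n]\setminus(I\cup J)$ do not appear in $A(x_I)A(x_J)$, so they contribute a flat factor of $2^{n-2m+c}$, leaving
\[ \sum_x A(x_I)A(x_J) = 2^{n-2m+c}\sum_{x_K,\,x_{I\setminus J},\,x_{J\setminus I}} A(x_K,x_{I\setminus J})\,A(x_K,x_{J\setminus I}). \]

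Now I would apply Observation~\ref{obs:zerobits} to expand each factor as $\tfrac{1}{2^m} + (-1)^{z(\cdot)}\delta(A)$ and multiply out into four terms. The constant term sums to $2^c \cdot 2^{m-c}\cdot 2^{m-c}/2^{2m} = 2^{-c}$. The two linear cross terms factor as sums of the form $\bigl(\sum_{x_K}(-1)^{z(x_K)}\bigr)\bigl(\sum_{x_{I\setminus J}}(-1)^{z(x_{I\setminus J})}\bigr)\bigl(\sum_{x_{J\setminus I}} 1\bigr)$, and the quadratic term factors as $\bigl(\sum_{x_K} 1\bigr)\bigl(\sum_{x_{I\setminus J}}(-1)^{z(\cdot)}\bigr)\bigl(\sum_{x_{J\setminus I}}(-1)^{z(\cdot)}\bigr)\cdot\delta(A)^2$. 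The key observation is that $\sum_{y\in\{0,1\}^\ell}(-1)^{z(y)} = 0$ for any $\ell\ge 1$, and since $I\ne J$ we have $m - c \ge 1$, so every cross term vanishes.

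Putting this together, the inner triple sum equals $2^{-c}$, whence $\sum_x A(x_I)A(x_J) = 2^{n-2m+c}\cdot 2^{-c} = 2^{n-2m}$ and therefore $\chi_{U_n}(\mathcal{D}_I,\mathcal{D}_J) = -1 + 1 = 0$. The only real ``obstacle'' is the bookkeeping in the four-way partition; once Observation~\ref{obs:zerobits} is applied, orthogonality of $\pm 1$ character sums on the symmetric differences does all the work, and crucially uses nothing more than $I\ne J$.
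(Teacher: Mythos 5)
Your proof is correct and is essentially the paper's argument: unfold the definition of $\chi_{U_n}$, factor out the $2^{n-2m+|I\cap J|}$ contribution from coordinates outside $I\cup J$, and use Observation~\ref{obs:zerobits} to reduce everything to the cancellation $\sum_{y\in\{0,1\}^\ell}(-1)^{z(y)}=0$ for $\ell\ge 1$, applied to the nonempty sets $I\setminus J$ and $J\setminus I$ (which is exactly where $I\ne J$ is used). The paper packages this by first collapsing $\sum_{x_{I'}}A(x_{I'\cup S})$ to $2^{-|S|}$ and squaring, while you expand the product $A(x_I)A(x_J)$ into four terms and kill three by orthogonality; these are the same cancellation viewed two ways. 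One slip: your first display should read $1 + \chi_{U_n}(\mathcal{D}_I,\mathcal{D}_J)$ on the left rather than $-1 + \chi_{U_n}(\mathcal{D}_I,\mathcal{D}_J)$ (the definition is $\chi_U = -1 + \sum_x \mathcal{D}_1(x)\mathcal{D}_2(x)/U(x)$); your concluding line $\chi = -1+1 = 0$ is stated correctly, so this is just a transcription error.
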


\begin{proof}
	Let $S = I\cap J$, $T = [n]\backslash(I\cup J)$, $I' = I\backslash S$, and $J' = J\backslash S$. Decompose any $x\in\{0,1\}^n$ as $x_T\circ x_S\circ x_{I'}\circ x_{J'}$ in the natural way. We can write \begin{align}1 + \chi_{U_n}(\mathcal{D}_I,\mathcal{D}_J) &= 2^n\cdot \sum_{x\in\{0,1\}^n}\frac{A(x_{I})}{2^{n-m}}\cdot\frac{A(x_{J})}{2^{n-m}} \nonumber\\
	&= 2^{|S|}\cdot\sum_{x_S,x_{I'},x_{J'}}A(x_{I'\cup S})\cdot A(x_{J'\cup S})\label{eq:intermed}\end{align} For fixed $x_S$ it is easy to see that $$\sum_{x_{I'},x_{J'}}A(x_{I'\cup S})\cdot A(x_{J'\cup S}) = 2^{2m-2|S|-2}\left(\frac{1}{2^m} + \delta(A) + \frac{1}{2^m} - \delta(A)\right)^2 = 2^{-2|S|},$$ so \eqref{eq:intermed} reduces to 1 and the claim follows.
\end{proof}

\begin{lem}
	Suppose $A$ and $U_m$ agree on all moments of degree less than $m$. Then $\chi^2(\mathcal{D}_I,U_n) = \delta(A)^2\cdot 4^m$.\label{lem:autocorr}
\end{lem}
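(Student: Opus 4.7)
The plan is to compute the $\chi^2$-divergence directly from the definition, using the product structure of $\mathcal{D}_I$ together with the explicit formula for $A$ afforded by Observation~\ref{obs:zerobits}. By definition, $\chi^2(\mathcal{D}_I, U_n) = -1 + \sum_{x\in\{0,1\}^n} \mathcal{D}_I(x)^2/U_n(x)$. Since $U_n(x) = 1/2^n$ and $\mathcal{D}_I(x) = A(x_I)/2^{n-m}$ (because $\mathcal{D}_I$ puts uniform measure on the $n - m$ non-interesting coordinates), the sum factors over coordinates in $I$ versus $[n]\backslash I$, and the latter contributes a factor that precisely cancels so that the whole expression reduces to $2^m \sum_{x_I \in \{0,1\}^m} A(x_I)^2 - 1$.

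Next, I would invoke Observation~\ref{obs:zerobits} to write $A(x_I) = 1/2^m + (-1)^{z(x_I)}\delta(A)$. Squaring and summing over $x_I \in \{0,1\}^m$, the cross term $\frac{2(-1)^{z(x_I)}\delta(A)}{2^m}$ vanishes upon summation, because for $m \ge 1$ exactly half of the strings in $\{0,1\}^m$ have an even number of zeros and half have an odd number. What remains is $\sum_{x_I} A(x_I)^2 = 2^m \cdot (1/2^m)^2 + 2^m \cdot \delta(A)^2 = 1/2^m + 2^m \delta(A)^2$.

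Finally, multiplying by $2^m$ and subtracting $1$ yields $\chi^2(\mathcal{D}_I, U_n) = 4^m \delta(A)^2$, as claimed. There is no real obstacle here: the argument is a short direct calculation, and the only thing to be careful about is the cancellation of the linear term in $\delta(A)$, which is what makes the assumption that $A$ agrees with $U_m$ on all lower-order moments enter the picture (equivalently, this is what Observation~\ref{obs:zerobits} is built to exploit).
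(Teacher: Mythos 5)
Your proposal is correct and follows essentially the same route as the paper: reduce to $2^m\sum_{x_I}A(x_I)^2 - 1$, then invoke Observation~\ref{obs:zerobits}. The only cosmetic difference is that you expand the square and note the linear term cancels, whereas the paper groups terms by the parity of $z(x_I)$ before squaring; these are the same calculation.
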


\begin{proof}
	Decompose any $x\in\{0,1\}^n$ as $x_{I^c}\circ x_I$. Then \begin{align*}
		1 + \chi^2(\mathcal{D}_I,U_n) &= 2^n\sum_{x\in\{0,1\}^n}\frac{A(x_I)^2}{2^{2n-2m}} \\
		&= 2^m\sum_{x_I\in\{0,1\}^m}A(x_I)^2 \\
		&= 2^m\left(\sum_{x_I: z(x_I) \ \text{even}}\left(\frac{1}{2^m} + \delta(A)\right)^2 + \sum_{x_I: z(x_I) \ \text{odd}}\left(\frac{1}{2^m} - \delta(A)\right)^2\right) \\
		&= 2^m\cdot 2^{m-1}\cdot\left(\frac{1}{2^{2m-1}} + 2\delta(A)^2\right) = 1 + \delta(A)^2\cdot 4^m,
	\end{align*} and the claim follows.
\end{proof}

\begin{lem}
	Suppose $A$ and $U_m$ agree on all moments of degree less than $m$. For distinct $I,J\subseteq[n]$ of size $m$, $\tvd(\mathcal{D}_I,\mathcal{D}_J) = \delta(A)\cdot 2^{m-1}$.\label{lem:tvdbound}
\end{lem}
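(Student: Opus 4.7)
The plan is to compute $\tvd(\mathcal{D}_I,\mathcal{D}_J)$ pointwise by expressing both densities via Observation~\ref{obs:zerobits}. Since $\mathcal{D}_I = A \times U_{[n]\setminus I}$ (and analogously for $J$), for any $x\in\{0,1\}^n$ we have
\[
\mathcal{D}_I(x) = \frac{A(x_I)}{2^{n-m}} = \frac{1}{2^n} + \frac{(-1)^{z(x_I)}\delta(A)}{2^{n-m}},
\]
and likewise for $J$. Subtracting these gives
\[
\mathcal{D}_I(x) - \mathcal{D}_J(x) = \frac{\delta(A)}{2^{n-m}}\bigl((-1)^{z(x_I)} - (-1)^{z(x_J)}\bigr),
\]
which is either $0$ or has magnitude $2\delta(A)/2^{n-m}$ according to whether $z(x_I)$ and $z(x_J)$ share the same parity.

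Next I would reduce the counting of ``bad'' $x$ to a parity problem on the symmetric difference. Coordinates in $I\cap J$ contribute the same amount to both $z(x_I)$ and $z(x_J)$, so they cancel modulo $2$. Hence $z(x_I)$ and $z(x_J)$ differ in parity if and only if $z(x_{I\triangle J})$ is odd. Let $s = |I\triangle J|$; since $I\neq J$ and $|I|=|J|=m$, we have $s\geq 2$. The number of strings in $\{0,1\}^s$ with an odd number of zeros is exactly $2^{s-1}$, so the number of $x\in\{0,1\}^n$ for which $z(x_I)\not\equiv z(x_J)\pmod{2}$ equals $2^{n-s}\cdot 2^{s-1} = 2^{n-1}$.

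Putting this together,
\[
\tvd(\mathcal{D}_I,\mathcal{D}_J) = \tfrac{1}{2}\sum_{x\in\{0,1\}^n}|\mathcal{D}_I(x)-\mathcal{D}_J(x)| = \tfrac{1}{2}\cdot 2^{n-1}\cdot \frac{2\delta(A)}{2^{n-m}} = \delta(A)\cdot 2^{m-1},
\]
as claimed. The only real step with any content is the parity calculation on $I\triangle J$, which is nearly immediate once the densities are written in the form given by Observation~\ref{obs:zerobits}; the rest is direct arithmetic.
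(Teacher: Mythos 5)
Your proof is correct and follows essentially the same route as the paper: express $\mathcal{D}_I(x)$ and $\mathcal{D}_J(x)$ pointwise via Observation~\ref{obs:zerobits}, observe that the difference is either $0$ or of magnitude $2\delta(A)/2^{n-m} = \delta(A)/2^{n-m-1}$ depending on whether $z(x_I)$ and $z(x_J)$ agree in parity, and then count. The only distinction is that the paper simply asserts this parity event occurs with probability $1/2$ when $I\neq J$, whereas you justify it explicitly via the symmetric-difference reduction $z(x_I)\not\equiv z(x_J)\pmod 2 \iff z(x_{I\triangle J})\text{ odd}$; that is a nice addition, but the underlying argument is the same.
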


\begin{proof}
	For any $x\in\{0,1\}^n$, $\mathcal{D}_I(x) = \frac{1}{2^{n-m}}\cdot A(x_I) = \frac{1}{2^n} + \frac{1}{2^{n-m}}(-1)^{z(x_I)}\delta$ by Observation~\ref{obs:zerobits}. So $|\mathcal{D}_I(x) - \mathcal{D}_J(x)|$ is zero if $z(x_I)$ and $z(x_J)$ are the same parity, and $\delta(A)/2^{n-m-1}$ otherwise. When $I$ and $J$ are distinct, the probability that $z(x_I)$ and $z(J)$ are of different parities for $x\sim U_n$ is 1/2, so $$\tvd(\mathcal{D}_I,\mathcal{D}_J) = \frac{1}{2}\sum_{x\in\{0,1\}^n}|\mathcal{D}_I(x) - \mathcal{D}_J(x)| = \frac{1}{2}\cdot\frac{\delta(A)}{2^{n-m-1}}\cdot 2^{n-1} = \delta(A)\cdot 2^{m-1}$$ as desired.
\end{proof}

\begin{proof}[Proof of Proposition~\ref{prop:sqdim}]
Given unknown $\mathcal{D}_I$, the distributional search problem $\mathcal{Z}: \mathcal{D}\to 2^{\mathcal{F}}$ is to find any distribution $\mathcal{D}\in\mathcal{F}$ for which $\tvd(\mathcal{D},\mathcal{D}_I)\le\epsilon$, where $\mathcal{D} = \{\mathcal{D}_I\}_{I\subseteq[n], |I| = m}$ and $\mathcal{F}$ is the set of all distributions over $\{0,1\}^n$. Because we assume in Proposition~\ref{prop:sqdim} that $\epsilon < \delta(A)\cdot 2^{m-2}$ and we know by Lemma~\ref{lem:tvdbound} that $\tvd(\mathcal{D}_I,\mathcal{D}_J) = \delta(A)\cdot 2^{m-1} > 2\epsilon$ for any $J\neq I$, we see that for any $\mathcal{D}\in\mathcal{F}$, $\mathcal{Z}^{-1}(\mathcal{D})$ is just $\mathcal{D}_I$. So in the language of Definition~\ref{defn:sqdimension}, $\mathcal{D}_f$ consists of all $\mathcal{D}_J$ for $J\neq I$. Moreover, by Lemmas~\ref{lem:corr} and \ref{lem:autocorr}, $\mathcal{D}_f$ is $(0,\delta(A)^2\cdot 4^m)$-correlated. So $\SD(\mathcal{Z},0,\delta(A)^2\cdot 4^m) = \binom{n}{m} - 1$. Applying Lemma~\ref{lem:fgr}, we conclude that for any $\gamma' > 0$, the number of queries to $\STAT(\sqrt{\gamma'})$ or $\VSTAT(1/3\gamma')$ to solve $\mathcal{Z}$ is at least $$\frac{(\binom{n}{m} - 1)\cdot\gamma'}{\delta(A)^2\cdot 4^m} = \frac{\Omega(n)^m\cdot\gamma'}{\delta(A)^2}.$$ We're done when we take $\gamma' = 1/O(n)^{-2m/3}$.
\end{proof}

\subsection{A Moment Matching Example}

It remains to construct for some $m\in\Z_+$ a distribution $A$ over $\{0,1\}^m$ for which $A$ and $U_m$ agree only on moments of degree less than $m$ and obtain bounds on $\delta(A)$.

\begin{defn}
	Given $\pi\in\Delta^k$, a collection of vectors $v_1,...,v_m\in\R^k$ is \emph{$d$-wise superorthogonal with respect to $\pi$} if for any $S\subseteq[m]$ of size at most $d$, $\langle\bigodot_{i\in S}v_i,\pi\rangle = 0$. Note that if $\pi = \frac{1}{k}\cdot\one$ and $d = 2$, this is just the usual notion of orthogonality.
\end{defn}

\begin{lem}
	Let $d\le m$ and suppose $A$ is a mixture of product distributions with mixing weights $\pi$ and marginals matrix $\m$. Then $A$ and $U_m$ agree on moments of degree at most $d$ if and only if the rows of $\m - \frac{1}{2}\cdot\J_{m\times k}$ are $d$-wise superorthogonal with respect to $\pi$, where $\J_{m\times k}$ is the $m\times k$ all-ones matrix.\label{lem:reducetosuperortho}
\end{lem}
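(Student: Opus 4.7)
The plan is to reduce the equivalence to a standard binomial (Möbius) inversion between two families of set-functions. Concretely, I would introduce
$$f(S) := \E_A[x_S] = \sum_{j} \pi^j \prod_{i \in S}\m^j_i, \qquad g(S) := \Bigl\langle \bigodot_{i \in S}(\m_i - \tfrac{1}{2}\one),\, \pi\Bigr\rangle,$$
so that $f$ encodes the moments of $A$ and $g$ encodes the superorthogonality inner products appearing in the lemma. Expanding each factor $(\m^j_i - 1/2)$ coordinate-wise and exchanging the order of summation would give the triangular identity $g(S) = \sum_{T \subseteq S}(-1/2)^{|S|-|T|}\, f(T)$, whose binomial inverse is $f(S) = \sum_{T \subseteq S}(1/2)^{|S|-|T|}\, g(T)$. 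Because both transformations respect $\subseteq$ and have unit diagonal, knowing $f$ on all subsets of size at most $d$ is equivalent to knowing $g$ on all subsets of size at most $d$.

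Next I would apply the same expansion to $U_m$: since $\E_{U_m}[x_S] = 1/2^{|S|}$, a short binomial calculation yields the companion values $g_U(\emptyset) = 1$ and $g_U(S) = (1/2)^{|S|}(1-1)^{|S|} = 0$ for $|S|\ge 1$. With this in hand, the two directions of the lemma fall out immediately. For the forward direction, if $A$ and $U_m$ agree on all moments of degree at most $d$, then $f = f_U$ on subsets of size at most $d$, so the expansion formula forces $g(S) = g_U(S) = 0$ for every nonempty $S$ with $|S| \le d$, which is exactly $d$-wise superorthogonality. For the converse, if the rows of $\m - \tfrac{1}{2}\J$ are $d$-wise superorthogonal and I note that $g(\emptyset) = \sum_j \pi^j = 1 = g_U(\emptyset)$ holds automatically, then $g = g_U$ on subsets of size at most $d$, and the inversion formula yields $f = f_U$ on the same subsets.

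I expect essentially no obstacle: the entire argument is a routine binomial-inversion computation. The only bookkeeping subtlety is the empty-set case in the definition of $d$-wise superorthogonality — since $\langle\bigodot_{i\in\emptyset}v_i,\pi\rangle = \langle\one,\pi\rangle = 1$, the condition cannot literally hold at $S = \emptyset$ and should be interpreted as quantifying over nonempty $S$ with $|S|\le d$. This reading is consistent with the definition's remark that $d=2$ reduces to usual orthogonality and corresponds exactly to the trivial agreement of $A$ and $U_m$ on the empty moment.
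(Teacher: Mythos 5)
Your proof is correct and takes essentially the same route as the paper: both expand $\bigodot_{i\in S}(\m_i - \tfrac{1}{2}\one)$ via the binomial theorem to get the triangular relation $g(S) = \sum_{T\subseteq S}(-1/2)^{|S|-|T|}f(T)$, with the forward direction being a direct computation and the converse following from the invertibility of this triangular system (which the paper phrases as ``induction on degree'' and you make explicit as the binomial inverse). Your remark about the empty set is a fair catch — as literally written the definition of $d$-wise superorthogonality could never hold since $\langle\one,\pi\rangle = 1$, and the intended reading is indeed over nonempty $S$.
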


\begin{proof}
	For any $S\subseteq[m]$ of size at most $d$, \begin{equation}\left\langle\bigodot_{i\in S}\left(\m_i - \frac{1}{2}\cdot\one\right),\pi\right\rangle = \sum_{T\subseteq S}(-1/2)^{|S| - |T|}\langle\M_T,\pi\rangle.\label{eq:binomthm}\end{equation} So if $A$ and $U_m$ agree on moments of degree at most $d$ so that $\langle\M_T,\pi\rangle = 1/2^{|T|}$ for all $|T|\le d$, this is equal to $(1/2)^{|S|}\cdot\sum_{T\subseteq S}(-1)^{|S| - |T|} = 0.$ Conversely, if the rows of $\m - \frac{1}{2}\cdot\J_{m\times k}$ are indeed $d$-wise superorthogonal with respect to $\pi$, then by induction on degree, the fact that \eqref{eq:binomthm} vanishes forces $\langle\M_S,\pi\rangle$ to be $2^{|S|}$.
\end{proof}

Because we insist that $A$ and $U_m$ agree on their moments of degree less than $m$ and differ on their $m$-th moment, Lemma~\ref{lem:reducetosuperortho} reduces the task of constructing $A$ to that of constructing a collection of vectors that is $(m-1)$-wise but not $m$-wise superorthogonal with respect to $\pi$.

\begin{defn}
	A collection of vectors $v_1,...,v_{\ell}\in\R^k$ is \emph{non-top-degree-vanishing} if $v_1\odot\cdots\odot v_{\ell}$ does not lie in the span of $\{\odot_{i\in S}v_i\}_{S\subsetneq[\ell]}$.
\end{defn}

\begin{obs}
	Suppose $v_1,...,v_{m-1}\in\R^k$ are $(m-1)$-wise superorthogonal with respect to $\pi$ and non-top-degree-vanishing. Denote the span of $\{\odot_{i\in S}v_i\}_{S\subsetneq[m-1]}$ by $V$. If $v_m\in\R^k$ satisfies \begin{equation}v_m\cdot\diag(\pi)\cdot v^{\top} = 0\ \forall \ v\in V\label{eq:desired1}\end{equation} \begin{equation}v_m\cdot\diag(\pi)\cdot(v_1\odot\cdots\odot v_{m-1})^{\top}\neq 0,\label{eq:desired2}\end{equation} then $v_1,...,v_m$ are $(m-1)$- but not $m$-wise superorthogonal with respect to $\pi$.\label{obs:addrow}
\end{obs}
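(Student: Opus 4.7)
The plan is to verify directly the two conditions defining $(m-1)$-wise but not $m$-wise superorthogonality: (i) for every $S \subseteq [m]$ with $|S|\le m-1$ we have $\langle \bigodot_{i\in S}v_i,\pi\rangle = 0$, and (ii) $\langle \bigodot_{i\in [m]}v_i,\pi\rangle \neq 0$. The key algebraic identity to keep in mind throughout is that for any $u, w\in\R^k$,
\[
\langle u\odot w,\pi\rangle \;=\; u\cdot \diag(\pi)\cdot w^{\top},
\]
which is what lets us translate between the superorthogonality condition and the bilinear forms appearing in \eqref{eq:desired1} and \eqref{eq:desired2}.

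For (i), I would split into cases depending on whether $m\in S$. If $m\notin S$, then $S\subseteq[m-1]$ with $|S|\le m-1$, so $\langle\bigodot_{i\in S}v_i,\pi\rangle=0$ is immediate from the assumed $(m-1)$-wise superorthogonality of $v_1,\dots,v_{m-1}$. If $m\in S$, write $S=\{m\}\cup S'$ with $S'\subseteq[m-1]$ and $|S'|\le m-2$. Set $w=\bigodot_{i\in S'}v_i$. Since $|S'|<m-1$ we have $S'\subsetneq[m-1]$ and hence $w\in V$. By the identity above and \eqref{eq:desired1}, $\langle\bigodot_{i\in S}v_i,\pi\rangle=v_m\cdot\diag(\pi)\cdot w^{\top}=0$.

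For (ii), the same identity gives
\[
\left\langle\bigodot_{i\in[m]}v_i,\pi\right\rangle \;=\; v_m\cdot \diag(\pi)\cdot (v_1\odot\cdots\odot v_{m-1})^{\top},
\]
which is nonzero by \eqref{eq:desired2}. This completes both conditions.

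There is essentially no obstacle to this argument; the only conceptual point to highlight is the role of the non-top-degree-vanishing hypothesis on $v_1,\dots,v_{m-1}$. It is not needed in the verification above, but it is what guarantees that a $v_m$ satisfying both \eqref{eq:desired1} and \eqref{eq:desired2} can actually exist: if instead $v_1\odot\cdots\odot v_{m-1}\in V$, then \eqref{eq:desired1} would force the left-hand side of \eqref{eq:desired2} to vanish, making the two conditions incompatible. Thus non-top-degree-vanishing ensures the hypothesis of the observation is not vacuous, while the proof itself is the above case analysis.
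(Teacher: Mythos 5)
Your proof is correct and is the natural verification; the paper states this as an observation without proof, and your case analysis (splitting on whether $m\in S$, using $\langle u\odot w,\pi\rangle = u\cdot\diag(\pi)\cdot w^\top$) is exactly how one would fill in the details. Your side remark about the role of the non-top-degree-vanishing hypothesis — that it is needed for consistency of \eqref{eq:desired1} and \eqref{eq:desired2} rather than for the verification itself — is also accurate.
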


Note that any collection of vectors that are $(m-1)$-wise but not $m$-wise superorthogonal with respect to $\pi$ must arise in this way. By Observation~\ref{obs:addrow}, we can focus on finding the largest $\ell$ for which there exist vectors $v_1,...,v_{\ell}$ which are $\ell$-wise superorthogonal and non-top-degree-vanishing.

\begin{construction}
	Let $k = (\ell+1)^2$ and $\pi = \frac{1}{k}\one$, and fix any distinct scalars $x_1,...,x_{\ell+1}\in\R$. Define matrices $$\mathbf{a} = \begin{pmatrix}
		x_1 & x_2 & \cdots & x_{\ell+1} \\
		x_1 & x_2 & \cdots & x_{\ell+1} \\
		\vdots & \vdots & \ddots & \vdots \\
		x_1 & x_2 & \cdots & x_{\ell+1}
	\end{pmatrix} \ \ \  \mathbf{b_i} = \begin{pmatrix}
		-x_i & 0 & 0 & \cdots & 0 \\
		x_i & -2x_i & 0 & \cdots & 0 \\
		x_i & x_i & -3x_i & \cdots & 0 \\
		\vdots & \vdots & \vdots & \ddots & \vdots \\
		x_i & x_i & x_i & \cdots & -\ell x_i
	\end{pmatrix}$$ with $\ell$ rows each. Define the $\ell\times k$ matrix $$\mathcal{E}(x_1,...,x_{\ell+1}) := (\mathbf{a} \| \mathbf{b_1} \| \cdots \| \mathbf{b_{\ell+1}}).$$
\end{construction}

\begin{remark}
	In fact there are more efficient constructions that save constant factors on $k$ as a function of $\ell$, but we choose not to discuss these to maximize transparency of the proof.
\end{remark}

\begin{lem}
	The rows of $\mathcal{E}(x_1,...,x_{\ell + 1})$ are $\ell$-wise superorthogonal and non-top-degree-vanishing.
\end{lem}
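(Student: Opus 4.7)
Index the $k=(\ell+1)^2$ columns of $\mathcal{E}(x_1,\dots,x_{\ell+1})$ by $a_i$ for $i\in[\ell+1]$ (the $\mathbf{a}$-block) and $b_{i,c}$ for $i\in[\ell+1]$, $c\in[\ell]$ (the $\mathbf{b_i}$-block), and let $v_1,\dots,v_\ell$ denote its rows. Directly from the construction, $(v_j)_{a_i}=x_i$ and $(v_j)_{b_{i,c}}$ equals $x_i$ if $c<j$, $-jx_i$ if $c=j$, and $0$ if $c>j$. Because $\pi=\frac{1}{k}\mathbf{1}$, the inner product $\langle\bigodot_{j\in S}v_j,\pi\rangle$ is proportional to the column sum of $\bigodot_{j\in S}v_j$, so both halves of the lemma reduce to block-by-block column-sum computations.

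To prove the $\ell$-wise superorthogonality I would fix a nonempty $S\subseteq[\ell]$, set $s=|S|$ and $m^{\ast}=\min S$, and exploit the following structural observation: row $m^{\ast}$ of $\mathbf{b_i}$ vanishes in every column $c>m^{\ast}$, so $\bigodot_{j\in S}v_j$ is zero on $b_{i,c}$ whenever $c>m^{\ast}$. A direct computation of the surviving coordinates yields $(\bigodot_{j\in S}v_j)_{b_{i,c}}=x_i^s$ for $c<m^{\ast}$ and $=-m^{\ast}\cdot x_i^s$ for $c=m^{\ast}$, so the $\mathbf{b_i}$-block contributes $((m^{\ast}-1)-m^{\ast})\,x_i^s=-x_i^s$ to the column sum. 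Summing over $i\in[\ell+1]$ cancels exactly against the $\mathbf{a}$-block contribution $\sum_i x_i^s$, proving $\langle\bigodot_{j\in S}v_j,\pi\rangle=0$.

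For non-top-degree-vanishing, I would first observe that the above computation shows $\bigodot_{j\in S}v_j$ depends on $S$ only through the pair $(s,m^{\ast})$, so denote this common value by $w_{s,m^{\ast}}$. I will then suppose for contradiction an identity $v_1\odot\cdots\odot v_\ell=c_0\mathbf{1}+\sum_{1\le s\le\ell-1,\,1\le m^{\ast}\le\ell-s+1}c_{s,m^{\ast}}\,w_{s,m^{\ast}}$ and evaluate it at the $b_{i,c}$ coordinates by backward induction on $c\in\{\ell,\ell-1,\dots,1\}$. The left-hand side vanishes at $b_{i,c}$ for every $c\ge 2$, while $w_{s,m^{\ast}}$ is nonzero at $b_{i,c}$ only when $m^{\ast}\ge c$. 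The equation at $c=\ell$ collapses to $c_0-\ell\,c_{1,\ell}\,x_i=0$ for every $i\in[\ell+1]$, forcing $c_0=c_{1,\ell}=0$ by distinctness of the $x_i$. At each subsequent $c\in\{\ell-1,\dots,2\}$ the equation reduces under the inductive hypothesis to a polynomial identity $\sum_{s=1}^{\ell-c+1}c_{s,c}\,x_i^s=0$ of degree at most $\ell-c+1\le\ell-1$ holding at $\ell+1$ distinct points, which forces every coefficient to vanish. Finally, the equation at $b_{i,1}$ reduces to $\sum_{s=1}^{\ell-1}c_{s,1}\,x_i^s=x_i^\ell$ for every $i$, a polynomial identity of degree exactly $\ell$ with $\ell+1$ roots --- the desired contradiction.

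The main subtlety will be locating the cancellation in the first part: once one notices that the minimum-indexed row of $S$ kills all $\mathbf{b_i}$ columns past $m^{\ast}$ and that the surviving columns telescope to $(m^{\ast}-1)-m^{\ast}=-1$ independent of $S$, the polynomial-degree argument for non-top-degree-vanishing is essentially forced, because we will always have exactly one more evaluation point $x_i$ than the maximum relevant monomial degree.
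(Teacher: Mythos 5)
Your proposal is correct. For the superorthogonality half it is essentially the paper's argument, just written with more explicit bookkeeping: you observe, exactly as the paper does, that for $S$ of size $s$ with $m^* = \min S$ the $\mathbf{b_i}$ block of $\bigodot_{j\in S}v_j$ has $(m^*-1)$ entries equal to $x_i^s$, one equal to $-m^* x_i^s$, and the rest zero, so its sum is $-x_i^s$; summing over $i$ cancels the $\mathbf{a}$-block sum $\sum_i x_i^s$.

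For non-top-degree-vanishing you take a genuinely different and substantially longer route. The paper simply restricts to the $\mathbf{a}$-block: in those $\ell+1$ columns every row of $\mathcal{E}$ equals $(x_1,\dots,x_{\ell+1})$, so the restriction of $\bigodot_{j\in S}v_j$ is $(x_1^{|S|},\dots,x_{\ell+1}^{|S|})$; the restrictions of all entrywise products are therefore rows of the $(\ell+1)\times(\ell+1)$ Vandermonde matrix in the distinct points $x_1,\dots,x_{\ell+1}$, and $(x_1^\ell,\dots,x_{\ell+1}^\ell)$ — the only restriction of top degree — lies outside the span of the lower-degree rows by Vandermonde nonsingularity. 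You instead ignore the $\mathbf{a}$-block entirely, exploit the triangular structure of the $\mathbf{b_i}$ blocks, and run a backward induction on the column index $c$ to peel off the coefficients $c_{s,c}$ one layer at a time, arriving at a monic degree-$\ell$ polynomial with $\ell+1$ roots at the final step $c=1$. Both arguments come down to the same degree-counting fact about Vandermonde systems, but the paper reaches it in two lines whereas your route uses several inductive steps; the one compensating benefit is that your proof explicitly surfaces the useful structural observation that $\bigodot_{j\in S}v_j$ depends only on $(|S|,\min S)$, which the paper records separately in the Observation that follows. A minor cosmetic point: your base case "$c=\ell$" tacitly assumes $\ell\ge 2$ (it uses that the LHS vanishes for $c\ge 2$); for $\ell=1$ the same $c=1$ equation already gives the contradiction directly, so this is trivial to patch.
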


\begin{proof}
	Denote the matrices whose rows consist of entrywise products of rows of $\mathbf{a}$ and rows of $\mathbf{b_i}$ respectively by $\A$ and $\Bi$. Superorthogonality just follows from the fact that the entries of any row $(\Bi)_S$ sum to $-x^{|S|}_i$, while the entries of any row $(\A)_S$ sum to $\sum^{{\ell}+1}_{i=1}x^{|S|}_i$.

	To show that the rows of $\mathcal{E}(x_1,...,x_{\ell+1})$ are non-top-degree-vanishing, it's enough to show that the rows of $\mathbf{a}$ are non-top-degree-vanishing. The latter is true because the rows of $A$ are copies of rows of an $({\ell}+1)\times({\ell}+1)$ Vandermonde matrix, and $\A_{[\ell]}$ is the unique row of $\A$ equal to $(x^{\ell}_1 \ \cdots \ x^{\ell}_{{\ell}+1})$.
\end{proof}

Henceforth let $m = \ell + 1$. To pass from $\mathcal{E}(x_1,...,x_m)$ to the desired mixture of product distributions $A$ with mixing weights $\pi = \frac{1}{k}\cdot\one$: solve \eqref{eq:desired1} and \eqref{eq:desired2} in $v_m$, append this as a row to $\mathcal{E}(x_1,...,x_m)$, scale all rows so that the entries all lie in $[-1/2,1/2]$, and add the resulting matrix to $\frac{1}{2}\cdot\J_{m\times k}$ to get the marginals matrix for $A$.

It remains to choose $x_1,...,x_m$ so that $\delta(A)$ is reasonably large (we make no effort to optimize this choice). It turns out that simply choosing $x_1,...,x_m$ to be an appropriately scaled arithmetic progression works, and the remainder of the section is just for verifying this.

We first collect some standard facts about Vandermonde matrices. Define $$V_m = \begin{pmatrix}
	x_1 & \cdots & x_{m} \\
	x^2_1 & \cdots & x^2_{m} \\
	\vdots & \ddots & \vdots \\
	x^{m-1}_1 & \cdots & x^{m - 1}_{m}
\end{pmatrix}.$$

\begin{lem}
	For distinct $x_1,...,x_m$, the right kernel of $V_m$ is the line through $u\in\R^m$ given by \begin{equation}u_i = (-1)^{i+1}\left(\prod_{j\neq i}x_j\right)\cdot\prod_{j<k: j,k\neq i}(x_j - x_k)\label{eq:ui}\end{equation} for each $i\in[m]$.
\end{lem}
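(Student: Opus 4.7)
The plan is to establish the result via the standard Laplace-expansion / Cramer's rule identity for an $(m-1)\times m$ matrix. First, I would observe that $\dim\ker V_m = 1$: since $V_m$ has $m-1$ rows, its kernel has dimension at least $1$; and since the $x_i$ are distinct, any $m-1$ of the columns of $V_m$, after factoring out an $x_j$ from the $j$-th column, form a standard $(m-1)\times(m-1)$ Vandermonde matrix which is nonsingular, so $V_m$ has rank exactly $m-1$.

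Next, I would produce an explicit nonzero kernel element using the classical identity: for any $(m-1)\times m$ matrix $A$, the vector $w\in\R^m$ defined by
\[
w_i \;=\; (-1)^{i+1}\det\bigl(A^{\hat{i}}\bigr),
\]
where $A^{\hat{i}}$ denotes $A$ with its $i$-th column removed, lies in the right kernel of $A$. The justification is Laplace expansion along the top row: for each row index $r$, $(Aw)_r = \sum_i (-1)^{i+1}A_{ri}\det(A^{\hat{i}})$ equals the determinant of the $m\times m$ matrix obtained by prepending row $A_{r,\cdot}$ to $A$; this matrix has two equal rows and hence vanishes.

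Applying this to $A=V_m$, I would then explicitly compute $\det(V_m^{\hat{i}})$. Each column $j\neq i$ of $V_m^{\hat{i}}$ is $(x_j,x_j^2,\ldots,x_j^{m-1})^\top$, so I can pull out a factor of $x_j$ from column $j$, contributing the product $\prod_{j\neq i}x_j$ and leaving behind the standard $(m-1)\times(m-1)$ Vandermonde matrix on the variables $\{x_j:j\neq i\}$, whose determinant is $\prod_{j<k,\,j,k\neq i}(x_k - x_j)$. This gives
\[
w_i \;=\; (-1)^{i+1}\Bigl(\prod_{j\neq i}x_j\Bigr)\cdot\prod_{j<k,\,j,k\neq i}(x_k - x_j).
\]
The vector $u$ from the statement differs from $w$ only by the overall sign $(-1)^{\binom{m-1}{2}}$ (from replacing $(x_k-x_j)$ by $(x_j-x_k)$ in each of the $\binom{m-1}{2}$ factors), which is a global scalar and therefore does not affect kernel membership or spanning. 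Finally, $u$ is nonzero because the $x_i$'s are distinct (hence each Vandermonde factor is nonzero) and, as needed in the construction where these $x_i$'s are fixed distinct scalars, the product $\prod_{j\neq i}x_j$ is nonzero provided the $x_i$ are also nonzero — in the application, the $x_i$ will be chosen from an arithmetic progression avoiding $0$, so this is not an issue.

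There is no real obstacle here; this is a clean application of cofactor expansion. The only thing to be careful about is the sign bookkeeping between the two conventions $(x_k-x_j)$ and $(x_j-x_k)$, which just produces a uniform sign and can be absorbed.
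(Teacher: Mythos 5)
Your proof is correct and takes essentially the same approach as the paper: the paper also argues by observing that $\langle (V_m)_d, u\rangle$ equals (up to a global sign) the determinant of $V_m$ augmented with a duplicate of row $d$, which vanishes — precisely the cofactor/Laplace identity you invoke, just with the extra row appended at the bottom rather than prepended at the top. You are slightly more explicit than the paper in verifying $\operatorname{rank}(V_m)=m-1$ (so that $u$ actually spans the kernel), which is a minor strengthening but not a different route.
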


\begin{proof}
	For any row index $1\le d\le m-1$, observe that $$\langle (V_m)_d, u\rangle = \left|\begin{matrix}
	x_1 & \cdots & x_{m} \\
	x^2_1 & \cdots & x^2_{m} \\
	\vdots & \ddots & \vdots \\
	x^{m-1}_1 & \cdots & x^{m - 1}_{m} \\
	x^d_1 & \cdots & x^d_m
\end{matrix}\right| = 0.$$
\end{proof}

\begin{cor}
	If $(x_1,x_2,...,x_m) = (\lambda,2\lambda,...,m\lambda)$, then the right kernel of $V_m$ is the line through $v\in\R^m$ given by $v_i = (-1)^i\binom{m}{i}$.
\end{cor}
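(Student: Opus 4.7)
The plan is to verify directly that the proposed vector $v$ with $v_i = (-1)^i\binom{m}{i}$ lies in $\ker(V_m)$. Since the previous lemma already establishes that for distinct $x_1,\ldots,x_m$ the kernel of $V_m$ is one-dimensional, exhibiting any nonzero kernel element determines the kernel up to scalar, so this suffices.

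Concretely, for each row index $1 \le d \le m-1$, I would compute
\[
\langle (V_m)_d, v\rangle \;=\; \sum_{i=1}^m (i\lambda)^d \cdot (-1)^i\binom{m}{i} \;=\; \lambda^d \sum_{i=1}^m (-1)^i i^d \binom{m}{i},
\]
and the goal reduces to showing $\sum_{i=1}^m (-1)^i i^d \binom{m}{i} = 0$ for all $1 \le d \le m-1$. Since $0^d = 0$ when $d \ge 1$, extending the sum to start at $i=0$ does not change its value, so it equals $\sum_{i=0}^m (-1)^i i^d \binom{m}{i}$. Up to sign, this is the classical $m$-th forward difference $\Delta^m[p](0)$ applied to the polynomial $p(x) = x^d$; since $p$ has degree $d < m$, the $m$-th finite difference of $p$ vanishes identically, so the sum is zero.

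The only real content is the finite-difference identity, which is standard. An equally valid (slightly more computational) alternative is to substitute $x_i = i\lambda$ into the explicit kernel formula \eqref{eq:ui} from the previous lemma: one gets $\prod_{j\neq i} x_j = \lambda^{m-1} \cdot m!/i$ and $\prod_{j<k:\,j,k\neq i}(x_j - x_k) = \lambda^{\binom{m-1}{2}} \prod_{j<k:\,j,k\neq i}(j-k)$, and after separating the contribution of pairs involving $i$ from the product $\prod_{j<k}(j-k)$, one recognizes the resulting expression as proportional to $(-1)^i \binom{m}{i}$. Either route is painless; no step presents a genuine obstacle, and the main thing to be careful about is tracking signs in whichever approach is used.
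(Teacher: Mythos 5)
Your primary argument is correct but takes a genuinely different route from the paper. The paper substitutes $x_i = i\lambda$ into the explicit kernel formula \eqref{eq:ui} from the preceding lemma and simplifies the product to recover $(-1)^i\binom{m}{i}$ after dividing by the normalization $Z = (-1)^m m!\prod_j x_j\cdot\prod_{j<k}(x_j - x_k)$. You instead verify directly that the candidate $v$ annihilates each row of $V_m$, reducing to $\sum_{i=0}^m (-1)^i\binom{m}{i} i^d = 0$ for $1\le d\le m-1$, which is the vanishing of the $m$-th finite difference of a polynomial of degree $d<m$. Your route only uses the one-dimensionality of the kernel (not the explicit formula), so it is somewhat more self-contained and avoids the sign-tracking in the product; the paper's route is a mechanical substitution that dovetails more directly with the lemma it just proved. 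Both are clean, and you correctly identify the paper's approach as the viable alternative. One small thing worth making explicit in the finite-difference route: since $V_m$ has $m-1$ rows (indices $d=1,\dots,m-1$) and $m$ columns, its kernel is at least one-dimensional automatically, and it is exactly one-dimensional because the $x_i$ are distinct; exhibiting one nonzero kernel vector then pins the line down, as you say.
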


\begin{proof}
	Let $u\in\R^m$ be a point on the line corresponding to the right kernel of $V_m$. Define $v = u/Z$ where $Z = (-1)^mm!\prod^m_{j=1}x_j\cdot\prod_{1\le j<k\le m}(x_j - x_k)$, giving \begin{align*}v_i &= \frac{(-1)^{m+i+1}m!}{x_i\cdot(x_1 - x_i)\cdots (x_{i-1}-x_i)\cdot(x_{i+1} - x_i)\cdots (x_i - x_m)} \\ &= \frac{(-1)^{m+i+1}m!}{(-1)^{m-1}\cdot \lambda^m i (i-1)!(m-i)!}\\ &= (-1)^i\binom{m}{i}\end{align*} as desired.
\end{proof}

\begin{obs}
	Let $V$ be the span of all entrywise products of rows of $\mathcal{E}(x_1,...,x_m)$. For $1\le i,d\le m-1$ let $v(d,i)\in\R^k$ be the vector defined by $$v(d,i)_j = \begin{cases}
		x^d_j, & j \le m \\
		x^d_s, & j = 1 + i + (m-1)s \ \text{for} \ s\in[m] \\
		0, & \text{otherwise}
	\end{cases}.$$ The set $\{v(d,i)\}_{1\le d\le m-1, i + d\le m}$ together with $\one$ form a basis for $V$.
\end{obs}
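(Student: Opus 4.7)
The plan is to prove: (i) the proposed set has size $\binom{m}{2}+1$ equal to $\dim V$; (ii) it is linearly independent; and (iii) every element lies in $V$. For (i), compute the entrywise product $\mathcal{E}_S$ for each $S \subseteq [m-1]$ block-by-block: on the $\mathbf{a}$-block it equals $(x_1^{|S|},\ldots,x_m^{|S|})$, and on each $\mathbf{b_s}$-block it depends only on $|S|$ and $i^* := \min(S)$, with value $x_s^{|S|}$ in columns $1,\ldots,i^*-1$, value $-i^*\, x_s^{|S|}$ in column $i^*$, and zeros afterwards. Hence the $2^{m-1}$ products collapse to at most $\binom{m}{2}+1$ distinct vectors parametrized by pairs $(d,i^*)$ satisfying $1 \le d \le m-1$, $1 \le i^*$, $d+i^* \le m$, together with $\one$; thus $\dim V \le \binom{m}{2}+1$, matching the proposed basis size.

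For (ii), fix a putative dependence $c \cdot \one + \sum_{d,i} c_{d,i}\, v(d,i) = 0$. For each $i \in [m-1]$, restrict to the $m$ coordinates obtained by taking column $i$ of each of $\mathbf{b_1},\ldots,\mathbf{b_m}$. By definition of $v(d,i')$, only terms with $i' = i$ contribute on these coordinates, yielding $c + \sum_{d \,:\, d+i \le m} c_{d,i}\, x_s^d = 0$ for every $s \in [m]$. This is a univariate polynomial of degree at most $m-i \le m-1$ vanishing at the $m$ distinct points $x_1,\ldots,x_m$, hence identically zero; so $c = 0$ and $c_{d,i} = 0$ for all $d$. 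Ranging over $i$ yields full independence.

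For (iii), the core identity, verified by a direct block-by-block check, is $w(d,i^*) - w(d,i^*+1) = (i^*+1)\bigl(v(d,i^*+1) - v(d,i^*)\bigr)$ whenever both $w(d,i^*)$ and $w(d,i^*+1)$ are defined (i.e.\ $i^* + d < m$). The $\mathbf{a}$-blocks cancel on both sides (both $w$'s share the same $\mathbf{a}$-block, as do the two $v$'s), and on each $\mathbf{b_s}$-block only columns $i^*$ and $i^*+1$ have nonzero contributions on either side, with matching coefficients after the factor $i^*+1$. This forces $v(d,i+1) - v(d,i) \in V$ for every valid $i$ and each fixed $d$, so the family $\{v(d,i)\}_i$ lies in a single coset of $V$ for each $d$. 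We then anchor the argument by exhibiting, for each $d$, one specific $v(d,i_0)$—say $v(d, m-d)$—as an explicit linear combination of $w(d,1),\ldots,w(d,m-d)$ together with $\one$.

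The main obstacle is precisely this anchoring step: the difference identity only determines the $v(d,i)$'s modulo one global offset per degree $d$, so we must separately exhibit a single honest expression of some $v(d,i_0)$ in terms of $w$'s and $\one$. For each $d$, this reduces to solving a small linear system whose row constraints match the $\mathbf{b_s}$-block pattern of the chosen $v(d,i_0)$ against those of $w(d,1),\ldots,w(d,m-d)$, while $\one$ absorbs any leftover $\mathbf{a}$-block residual; compatibility of these constraints follows from the specific structure of the $w(d,\cdot)$'s (in particular the $-i^*$ coefficients on the diagonal of each $\mathbf{b_s}$-block). Once the anchor is in place, combining (i), (ii), and (iii) exhibits the proposed set as a linearly independent subset of $V$ of cardinality $\dim V$, hence a basis.
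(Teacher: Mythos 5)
Your parts (i) and (ii) are correct, and the telescoping identity $w(d,i^*) - w(d,i^*+1) = (i^*+1)(v(d,i^*+1) - v(d,i^*))$ checks out for $v(d,i)$ as written. The anchoring step, however, is not a formality you can defer to ``a small linear system'': with the stated $+x_s^d$ entries in the $\mathbf{b_s}$-blocks of $v(d,i)$, that system has \emph{no} solution. The cleanest failure is $d=m-1$, where the only admissible index is $i=1$, so the difference identity is vacuous and you must express $v(m-1,1)$ in terms of $w(m-1,1)$ (the entrywise product of rows $1,\dots,m-1$) and $\one$. But $w(m-1,1)$ has $x_s^{m-1}$ in column $s$ of the $\mathbf{a}$-block and $-x_s^{m-1}$ in the first column of the $\mathbf{b_s}$-block (opposite signs), $\one$ has equal entries in those two places, whereas $v(m-1,1)$ as written has $x_s^{m-1}$ in both, equal and nonconstant in $s$; no combination of the first two reproduces that. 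Already at $m=2$ one has $V=\text{span}\{\one,\ (x_1,x_2,-x_1,-x_2)\}$ but $v(1,1)=(x_1,x_2,x_1,x_2)\notin V$.

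The repair is a sign: the $\mathbf{b_s}$-block entry of $v(d,i)$ should be $-x_s^d$. This is also the sign the subsequent Corollary needs: with the minus, $\langle u, v(d,i)\rangle = -\lambda_i\sum_s v_s x_s^d = 0$ for every $u$ in the parametrized family (since $(v_1,\dots,v_m)$ lies in the Vandermonde kernel), whereas with the plus one gets the generally nonzero $2\sum_s a_s x_s^d$. Once the sign is fixed, the anchor you were searching for is an equality rather than a linear system: $v(d,1) = w(d,1)$ exactly, because the entrywise product of rows $1,\dots,d$ of $\mathcal{E}$ has $(-1\cdot x_s^d,\ 0,\dots,0)$ in each $\mathbf{b_s}$-block. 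Your difference identity (with its sign adjusted accordingly) then yields every $v(d,i)\in V$, and together with (i) and (ii) this completes the proof. Actually carrying out the anchoring computation, rather than asserting its compatibility, would have surfaced this; that unexecuted step is where your argument currently has a genuine gap.
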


\begin{proof}
	This just follows by elementary row operations applied to entrywise products of $d$ rows of $\mathcal{E}(x_1,...,x_m)$ for each $d\in[m-1]$.
\end{proof}

\begin{cor}
	Let $v_i = (-1)^i\binom{m}{i}$. The space of solutions to \eqref{eq:desired1} contains the space of vectors parametrized by \begin{equation}(a_1,...a_m,a_1 + \lambda_1v_1,...,a_1 + \lambda_{m-1}v_1,a_2 + \lambda_1v_2,...,a_2 + \lambda_{m-1}v_2,...,a_m+\lambda_1v_m,...,a_m + \lambda_{m-1}v_m)\label{eq:param}\end{equation} for $a_1,...,a_m,\lambda_1,...,\lambda_{m-1}\in\R$ satisfying \begin{equation}m(a_1 + \cdots + a_m) = \lambda_1 + \cdots + \lambda_{m-1}.\label{eq:ortho}\end{equation}
\end{cor}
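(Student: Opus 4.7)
The equation \eqref{eq:desired1} demands $v_m\cdot\diag(\pi)\cdot v^\top=0$ for every $v\in V$; since $\pi=\frac{1}{k}\one$, this collapses to the ordinary orthogonality condition $\langle v_m,v\rangle=0$. As $V$ is spanned by the entrywise products $\bigodot_{s\in S}\mathcal{E}_s$ over $S\subseteq[m-1]$ (with $S=\emptyset$ giving $\one$), it suffices to check orthogonality against each such generator. The bulk of the plan is then a structural description of these generators and a cancellation computation.

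First I will describe each generator. Fix $S$ with $|S|=d\ge 1$ and $r=\min S$. In the $\mathbf{a}$-block the product is $(x_1^d,\dots,x_m^d)$ because $\mathbf{a}$ has identical rows. In each $\mathbf{b_t}$-block, row $s$ of $\mathbf{b_t}$ equals $x_t$ in columns $p<s$, $-s x_t$ in column $s$, and $0$ in columns $p>s$; once any row $s\in S$ contributes, the product is forced to vanish in all columns $p>s$, so the entrywise product of rows in $S$ in the $\mathbf{b_t}$-block equals $x_t^d$ in columns $p<r$, $-r x_t^d$ in column $r$, and $0$ in columns $p>r$. Next, using the parametrization \eqref{eq:param} for $v_m$ (entries $a_j$ in the $\mathbf{a}$-block and $a_t+\lambda_p v_t$ in the $p$-th column of block $\mathbf{b_t}$), the $\mathbf{a}$-contribution to $\langle v_m,\bigodot_{s\in S}\mathcal{E}_s\rangle$ is $\sum_{j=1}^m a_j x_j^d$, while the $\mathbf{b_t}$-contribution telescopes to
\begin{equation*}
\sum_{p=1}^{r-1}(a_t+\lambda_p v_t)x_t^d - r(a_t+\lambda_r v_t)x_t^d \;=\; x_t^d\Bigl(-a_t+v_t\bigl(\textstyle\sum_{p=1}^{r-1}\lambda_p - r\lambda_r\bigr)\Bigr).
\end{equation*}

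Summing over $t\in[m]$, the $-a_t x_t^d$ terms cancel the $\mathbf{a}$-contribution exactly, leaving $\bigl(\sum_{p=1}^{r-1}\lambda_p-r\lambda_r\bigr)\cdot\sum_{t=1}^m v_t x_t^d = \bigl(\sum_{p=1}^{r-1}\lambda_p-r\lambda_r\bigr)\cdot\langle (V_m)_d,v\rangle$, which vanishes for every $d\in[m-1]$ because the preceding corollary places $v$ in the right kernel of $V_m$. For the remaining case $S=\emptyset$, a direct sum gives $\langle v_m,\one\rangle = m\sum_j a_j + (\sum_p\lambda_p)(\sum_{t=1}^m v_t)$, and the identity $\sum_{t=1}^m v_t=-1$ (which follows from $\sum_{t=0}^m(-1)^t\binom{m}{t}=0$) reduces this to $m\sum_j a_j-\sum_p\lambda_p$, vanishing under the hypothesis \eqref{eq:ortho}. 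The one real subtlety is recognizing the cancellation between the $\mathbf{a}$-block and the $\mathbf{b_t}$-blocks that recasts the surviving sum as an inner product with a row of $V_m$; once this is spotted, both the $d\ge 1$ case (via the kernel property of $v$) and the $d=0$ case (via the constraint \eqref{eq:ortho}) follow immediately.
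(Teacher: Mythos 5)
Your proof is correct and takes a genuinely different route from the paper's. The paper's one-line argument leans on the preceding Observation (that the vectors $v(d,i)$ together with $\one$ form a basis for $V$) and then asserts, without computation, that the vectors of the form \eqref{eq:param} are \emph{precisely} the orthogonal complement of $\mathrm{span}\{v(d,i)\}$, reading off \eqref{eq:ortho} as the remaining orthogonality to $\one$. You instead bypass the intermediate $v(d,i)$ basis altogether and verify orthogonality of a vector of the form \eqref{eq:param} directly against each generator $\bigodot_{s\in S}\mathcal{E}_s$ of $V$. The crucial structural observation is what $\bigodot_{s\in S}\mathcal{E}_s$ looks like in each $\mathbf{b_t}$-block: it equals $x_t^d$ for $p<r$, $-r x_t^d$ at $p=r=\min S$, and $0$ for $p>r$. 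This drives the cancellation of the $-a_t x_t^d$ terms against the $\mathbf{a}$-block and leaves a multiple of $\langle (V_m)_d, v\rangle$, which vanishes for $d\le m-1$ by the preceding corollary; the $S=\emptyset$ case reduces to \eqref{eq:ortho} via $\sum_t v_t = -1$. Your approach is more computational but also more explicit and self-contained: it does not rely on the Observation's basis characterization or the paper's unverified ``precisely'' claim, which is itself not entirely obvious upon inspection. Incidentally, your calculation establishes the slightly stronger fact that orthogonality holds for every $S\subseteq[m-1]$ including $S=[m-1]$, whereas \eqref{eq:desired1} only asks about $S\subsetneq[m-1]$; the corollary's stated containment follows either way.
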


\begin{proof}
	The space of vectors parametrized by \eqref{eq:param} is precisely those which are orthogonal to the span of $\{v(d,i)\}_{1\le i,d\le m - 1}$. This span together with $\one$ has orthogonal complement which is a strict subspace of the space of solutions to \eqref{eq:desired1}. The sum of the entries in \eqref{eq:param} is $$m(a_1 + \cdots + a_m) + (\lambda_1 + \cdots + \lambda_{m-1})(v_1 + \cdots + v_m) = m(a_1 + \cdots + a_m) - (\lambda_1 + \cdots + \lambda_{m-1})$$ because $\sum^m_{i=1}v_i = -1$, so \eqref{eq:ortho} is just the constraint that any solution to \eqref{eq:desired1} is orthogonal to $\one$. 
\end{proof}

\begin{lem}
	Let $\pi = \frac{1}{k}\cdot\one$ and let $A'$ be the $m\times k$ matrix obtained by concatenating $\mathcal{E}(x_1,...,x_m)$ with a row vector of the form \eqref{eq:param}, where $x_i = i/2m^2$ for all $i\in[m]$, $\lambda_2 = -\lambda_1 = -2^m$, $\lambda_3 = \cdots = \lambda_{m-1} = 0$, and $a_1 = \cdots = a_m = 0$. Define $A = A' + \frac{1}{2}\cdot\J_{m\times k}$. Then if $m+1$ is a prime, $|\delta(A)|\ge(2m)^{-2m}$.
\end{lem}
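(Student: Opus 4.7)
The plan is to compute $\delta(A) = A(1^m) - 1/2^m$ explicitly. By construction, the rows of $A'$ are $(m-1)$-wise superorthogonal with respect to $\pi$ (Lemma~\ref{lem:reducetosuperortho}), so expanding $\prod_{i=1}^m(A'^j_i+\frac{1}{2})$ and averaging over $j$ kills every intermediate term and leaves
$$\delta(A) = \langle A'_1\odot\cdots\odot A'_m,\,\pi\rangle = \frac{1}{k}\,\left\langle v_m,\; v_1\odot\cdots\odot v_{m-1}\right\rangle,$$
where $v_1,\dots,v_{m-1}$ are the rows of $\mathcal{E}(x_1,\dots,x_m)$ and $v_m$ is the appended row.

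First I would evaluate $v_1\odot\cdots\odot v_{m-1}$ coordinate-by-coordinate. Because each $\mathbf{b_i}$ is lower-triangular with zeros strictly above the diagonal, the entrywise product vanishes at every coordinate except the $m$ columns of $\mathbf{a}$ (where it equals $x_j^{m-1}$) and the first column of each $\mathbf{b_s}$-block (where it equals $-x_s^{m-1}$): every column $s\ge 2$ of $\mathbf{b_s}$ has a zero in row $1$. Pairing this with $v_m$ using the prescribed $a_i=0$ and $\lambda_i$'s, and invoking the orthogonality constraint \eqref{eq:ortho} to cancel the $\mathbf{a}$-column contributions against the $a_s$-contributions, the inner product collapses to an explicit scalar multiple of a combinatorial sum in the quantities $x_s^{m-1}$ and $v_s=(-1)^s\binom{m}{s}$, carrying an overall factor $(2m^2)^{-(m-1)}$ from the substitution $x_s=s/(2m^2)$.

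The primality of $p=m+1$ is exactly what certifies that this combinatorial sum is nonzero. Although the most naive simplification would vanish identically by the finite-difference identity $\sum_{s=0}^m(-1)^s\binom{m}{s}s^{m-1}=0$, the structure imposed by the choice $\lambda_1=2^m$, $\lambda_2=-2^m$, $\lambda_3=\cdots=\lambda_{m-1}=0$ produces a sum that reduces modulo $p$ to a nonzero residue. Concretely I would use Fermat's little theorem ($s^m\equiv 1\pmod p$ for $s\in[m]$), Lucas' theorem ($\binom{m}{s}\equiv(-1)^s\pmod p$), and Wilson's theorem ($m!\equiv -1\pmod p$) to show that the underlying integer sum is $\pm 1\pmod p$; since the denominator of the rational expression divides a fixed power of $2m^2$, nonvanishing modulo $p$ yields the absolute lower bound $(2m^2)^{-(m-1)}$ on the sum itself.

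Finally I would collect the scaling factors: the prefactor $|\lambda_1|=2^m$, the rescaling constant $c_m=\Theta(4^{-m})$ needed to bring the entries of the last row of $A'$ into $[-1/2,1/2]$, and the averaging $1/k=1/m^2$. Multiplying these with the $(2m^2)^{-(m-1)}$ bound from the previous step simplifies, after routine arithmetic, to $|\delta(A)|\ge (2m)^{-2m}$. The main obstacle is the third step: the finite-difference identity makes a direct calculation collapse to zero, so the whole argument hinges on exploiting primality of $m+1$ in tandem with the specific choice of $\lambda_i$'s, and on quantifying the residue carefully enough to absorb the large multiplicative factors introduced by $\lambda_1$ and by the rescaling of the last row.
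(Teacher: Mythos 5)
Your computation of the support of $v_1 \odot \cdots \odot v_{m-1}$ is correct: because row $1$ of $\mathcal{E}$ has a zero in every column of $\mathbf{b_s}$ except the first, the entrywise product of all $m-1$ rows of $\mathcal{E}$ survives only on the $\mathbf{a}$-columns (where it equals $x_j^{m-1}$) and on the first column of each $\mathbf{b_s}$-block (where it equals $-x_s^{m-1}$). With $a_j = 0$ the $\mathbf{a}$-contribution dies, so the inner product with $v_m$ collapses to $-\lambda_1 \sum_s v_s x_s^{m-1}$, which as you observe is proportional to $\sum_{s=1}^m (-1)^s \binom{m}{s} s^{m-1}=0$ by the finite-difference identity. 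The gap is your claim that the choice $\lambda_2 = -\lambda_1$ rescues this via a mod-$p$ argument. It cannot: the $\lambda_2$-entries of $v_m$ live in column $2$ of each $\mathbf{b_s}$-block, exactly where you yourself just showed $v_1 \odot \cdots \odot v_{m-1}$ vanishes, so they are multiplied by zero and contribute nothing. The same goes for every $\lambda_i$ with $i \ge 2$. No choice of $\lambda_2,\ldots,\lambda_{m-1}$ and no invocation of Lucas or Wilson changes the value of an expression that has already collapsed to $-\lambda_1\sum_s v_s x_s^{m-1}$ before any number theory enters; the nonvanishing has to come from somewhere else.

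For comparison, the paper's proof sidesteps this entirely: it asserts, as an unjustified "one can check'' step (eq.~\eqref{eq:delta}), that $\delta(A) = -\lambda_1(v_1 x_1^m + \cdots + v_m x_m^m)$ with exponent $m$, not $m-1$, and then uses only Fermat's little theorem to conclude $\sum_s(-1)^s\binom{m}{s}s^m \equiv 1 \pmod{m+1}$. The exponent is the whole story: with $m-1$ the sum is identically zero, with $m$ it is $\pm m!\neq 0$. Your sketch follows the construction literally and lands on exponent $m-1$, so it does not reproduce the paper's intermediate identity and therefore cannot close the argument. Before trying to patch the vanishing with modular arithmetic, you need to reconcile the exponent — either explain where an extra factor of $x_s$ comes from when you pair an $(m-1)$-fold product of degree-one rows with the constant row $v_m$, or flag that the paper's intermediate claim itself needs repair. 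As written, the step "the $\lambda$-structure produces a nonzero residue'' is a dead end.
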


\begin{proof}
One can check that \begin{equation}\delta(A) = -\lambda_1(v_1x^m_1 + \cdots + v_mx^m_m).\label{eq:delta}\end{equation} By selecting $\lambda_2 = -\lambda_1$, $\lambda_3 = \cdots = \lambda_{m-1} = 0$, and $a_1 = \cdots = a_m = 0$, we satisfy \eqref{eq:ortho}. Furthermore, the only nonzero entries of \eqref{eq:param} are $\pm\lambda_1v_i$ for all $i\in[m]$. In particular by taking, e.g., $\lambda_1= 2^m$, we ensure that all entries of \eqref{eq:param} are in $[-1/2,1/2]$. If we then take $x_i = i/2m^2$ for all $i\in[m]$, we get from \eqref{eq:delta} that $$\delta(A) = -\frac{1}{2^m}\sum^m_{i=1}(-1)^i\binom{m}{i}\left(\frac{i}{2m^2}\right)^m = -\frac{1}{(2m)^{2m}}\sum^m_{i=1}(-1)^i\binom{m}{i}i^m.$$ $\sum^m_{i=1}(-1)^i\binom{m}{i}$ is an integer, so it's enough to show that it's nonzero to get that $|\delta(A)|\ge\frac{1}{(2m)^{2m}}$. Without loss of generality suppose $m + 1$ is a prime, in which case $$\sum^m_{i=1}(-1)^i\binom{m}{i}i^m \equiv\sum^m_{i=1}(-1)^i\binom{m}{i}\equiv 1\pmod{m+1}$$ by Fermat's little theorem.\end{proof}

Theorem~\ref{thm:mainsq} then follows from Proposition~\ref{prop:sqdim} by taking $m = \sqrt{k}$.

%!TEX root = fullpaper.tex

\section{Learning Mixtures of Product Distributions in \texorpdfstring{$n^{O(k^2)}$}{nk2} Time}
\label{sec:general_pre}

We now use ideas similar to those of Section~\ref{sec:subcubes_pre} to prove Theorem~\ref{thm:maingeneral}. Specifically, we give an algorithm that outputs a list of at most $(n/\epsilon)^{O(k^2)}$ candidate distributions, at least one of which is $O(\epsilon)$-close in total variation distance to $\mathcal{D}$. By standard results about hypothesis selection, e.g. Scheffe's tournament method \cite{devroye2001combinatorial}, we can then pick out a distribution from this list which is $O(\epsilon)$-close to $\mathcal{D}$ in time and samples polynomial in the size of the list.

Unlike in the case of learning mixtures of subcubes where we insisted on running in time $n^{O(\log k)}$, here we can afford to simply brute-force search for a basis for $\M$ for any realization of $\mathcal{D}$. In fact our strategy will be: 1) brute-force search for a row basis $J = \{i_1,...,i_r\}$ for $\m$, 2) use $\EE$ together with Lemma~\ref{lem:certified2} to find coefficients expressing the remaining rows of $\m$ as linear combinations of the basis elements, and 3) brute-force search for the mixing weights and entries of $\m$ in rows $i_1,...,i_r$. Each attempt in our brute-force procedure will correspond to a candidate distribution in the list on which our algorithm ultimately runs hypothesis selection. We outline this general approach in the first three subsections.

While the task of obtaining a basis for the rows of $\M$ is simpler here than in learning mixtures of subcubes, the issue of ill-conditioned matrices is much more subtle. Whereas for mixtures of subcubes, Lemma~\ref{lem:cond_number} guarantees that the matrices we deal with are all either well-conditioned or not full rank, for general mixtures of product distributions the matrices we deal with can be arbitrarily badly conditioned. This already makes it much trickier to prove robust low-degree identifiability, which we do in Section~\ref{sec:hypotestinghard}.

Once we have robust low-degree identifiability, we can adapt the ideas of Section~\ref{sec:subcubes_pre} for handling $\M\vert_{\PP'(J)}$ not being full rank to handle $\M\vert_{\PP'(J)}$ being ill-conditioned.\footnote{Here, recall that $\PP(J) = 2^{[n] \backslash J}$. Lemma~\ref{lem:pigeon_k} implies that 
$$\rank(\M\vert_{\PP(J)}) = \rank(\M\vert_{\PP'(J)})$$
where in our discussion of general mixtures of product distributions $\PP'(J)$ is the set of all subsets $T \subseteq [n] \backslash J$ with $|T| = k$. In fact, for technical reasons that we defer to Appendix~\ref{app:general}, we will actually need to use subsets of size up to $O(k^2)$. This will not affect our runtime, but for the discussion in this section it is fine to ignore this detail.} Analogous to arguing that $\D\vert x_J = s$ can be realized as a mixture of fewer product distributions when $\M\vert_{\PP'(J)}$ is not full rank, we argue that $\D\vert x_J = s$ is \emph{close} to a mixture of fewer product distributions when $\M\vert_{\PP'(J)}$ is ill-conditioned.

After describing our algorithm in greater detail, we summarize in Section~\ref{subsec:vsfos} how our algorithm manages to improve upon that of \cite{fos}.

\subsection{Parameter Closeness Implies Distributional Closeness}
\label{subsec:paramimpliesclose}

We first clarify what we mean by brute-force searching for the underlying parameters of $\D$. In a general mixture $\D$ of product distributions realized by mixing weights $\pi$ and marginals matrix $\m$, the entries of $\pi$ and $\m$ can take on any values in $[0,1]$. The following lemmas show that it's enough to recover $\pi$ and $\m$ to within some small entrywise error $\epsilon'$. So for instance, instead of searching over all choices $\{0,1/2,1\}^{n\times k}$ for $\m$ as in the subcubes setting, we can search over all choices $\{0,\epsilon',2\epsilon',...,\lfloor 1/\epsilon'\rfloor\epsilon\}^{n\times k}$.

\begin{lem}
	If $\mathcal{D}$ and $\overbar{\mathcal{D}}$ are mixtures of at most $k$ product distributions over $\{0,1\}^n$ with the same mixing weights $\pi$ and marginals matrices $\m$ and $\overbar{\m}$ respectively such that $|\m^i_j - \overbar{\m}^i_j|\le\epsilon/2kn$ for all $i,j\in[k]\times[n]$, then $\tvd(\mathcal{D},\overbar{\mathcal{D}})\le\epsilon$.\label{lem:param1}
\end{lem}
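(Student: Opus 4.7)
The plan is to reduce the bound on total variation distance between the two mixtures to a bound on the total variation distance between individual product-distribution components, and then to reduce the latter to a sum of discrepancies on single coordinates. Since both $\mathcal{D}$ and $\overbar{\mathcal{D}}$ share the same mixing weights $\pi$, the joint coupling perspective (or just convexity of total variation distance) gives
\[
\tvd(\mathcal{D},\overbar{\mathcal{D}}) \;\le\; \sum_{i=1}^{k} \pi^i \cdot \tvd\!\left(P_i,\overbar{P}_i\right),
\]
where $P_i$ and $\overbar{P}_i$ denote the $i$-th product-distribution components of $\mathcal{D}$ and $\overbar{\mathcal{D}}$ respectively. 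This is the first step I would carry out, and it crucially uses that the two mixtures agree on their weights.

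Next I would invoke the standard tensorization/subadditivity property of total variation distance for product measures. Since $P_i = \bigotimes_{j=1}^n \mathrm{Bern}(\m^i_j)$ and $\overbar{P}_i = \bigotimes_{j=1}^n \mathrm{Bern}(\overbar{\m}^i_j)$, the coordinatewise optimal coupling yields
\[
\tvd\!\left(P_i,\overbar{P}_i\right) \;\le\; \sum_{j=1}^n \tvd\!\left(\mathrm{Bern}(\m^i_j),\mathrm{Bern}(\overbar{\m}^i_j)\right) \;=\; \sum_{j=1}^n \bigl|\m^i_j-\overbar{\m}^i_j\bigr|.
\]
The final step is to plug in the hypothesis $|\m^i_j-\overbar{\m}^i_j|\le \epsilon/(2kn)$ for every $(i,j)\in[k]\times[n]$ and sum: each component contributes at most $n\cdot\epsilon/(2kn)=\epsilon/(2k)$, so
\[
\tvd(\mathcal{D},\overbar{\mathcal{D}}) \;\le\; \sum_{i=1}^k \pi^i \cdot \frac{\epsilon}{2k} \;=\; \frac{\epsilon}{2k}\;\le\;\epsilon,
\]
using $\sum_i \pi^i = 1$. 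There is really no main obstacle here; the argument is entirely routine given the two standard facts (convexity of $\tvd$ under mixtures with common weights and tensorization of $\tvd$ for product measures), and the slack of a factor of $2k$ in the conclusion is consistent with the $\epsilon/(2kn)$ precision required on each marginal.
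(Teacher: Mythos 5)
Your proof is correct, but it takes a genuinely different route from the paper. The paper uses a \emph{hybrid argument}: it changes the marginals matrix one entry at a time, observes that a single-entry change of magnitude $\delta$ moves the mixture by at most $O(\delta)$ in total variation, and then sums over all $kn$ entries to get a total drift of $\epsilon$. You instead first apply convexity of total variation distance over the shared mixing weights, reducing to a per-component bound $\tvd(P_i,\overbar{P}_i)$, and then apply the tensorization (coordinatewise coupling) inequality within each product component. Both arguments are elementary and both rely on the same underlying fact --- that for a single-coordinate change the discrepancy is exactly $|p-q|$ --- but yours groups the $kn$ perturbations by component first, which lets you use $\sum_i\pi^i=1$ and obtain the slightly sharper bound $\epsilon/2k$, whereas the paper's telescoping bounds each component's weight by $1$ and so only yields $\epsilon$. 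The tradeoff is that your argument needs the two standard facts about $\tvd$ as named lemmas, while the paper's hybrid argument is more self-contained (it only needs the one-entry case, which is immediate).
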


\begin{proof}
	Consider $\mathcal{D}$ and $\overbar{\mathcal{D}}$ whose marginals matrices are equal except in the $(i,j)$-th entry where they differ by $\le\epsilon/2kn$. Then it is clear that $\tvd(\mathcal{D},\overbar{\mathcal{D}})\le\epsilon/kn$. So by a union bound, if $\mathcal{D}$ and $\overbar{\mathcal{D}}$ have marginals matrices differing entrywise by $\le\epsilon/2kn$, then $\tvd(\mathcal{D},\overbar{\mathcal{D}})\le\epsilon$
\end{proof}

\begin{lem}
	If $\mathcal{D}$ and $\overbar{\mathcal{D}}$ are mixtures of at most $k$ product distributions over $\{0,1\}^n$ with the same marginal matrices $\m$ and mixing weights $\pi$ and $\overbar{\pi}$ respectively such that $|\pi^i - \overbar{\pi}^i|\le 2\epsilon/k$ for all $i\in[k]$, then $\tvd(\mathcal{D},\overbar{\mathcal{D}})\le\epsilon$.\label{lem:param2}
\end{lem}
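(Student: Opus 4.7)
The plan is to bound the total variation distance directly by the $L_1$ distance between the mixing weight vectors $\pi$ and $\overbar{\pi}$. Since both mixtures share the same marginals matrix $\m$, they share the same component product distributions $P_1, \ldots, P_k$, where $P_i(x) = \prod_{j=1}^n (\m^j_i)^{x_j}(1-\m^j_i)^{1-x_j}$. Then for every $x\in\{0,1\}^n$ we have the pointwise identity
\[
\mathcal{D}(x) - \overbar{\mathcal{D}}(x) = \sum_{i=1}^k (\pi^i - \overbar{\pi}^i)\, P_i(x).
\]

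Applying the triangle inequality and swapping the order of summation yields
\[
\sum_{x\in\{0,1\}^n} |\mathcal{D}(x) - \overbar{\mathcal{D}}(x)| \;\le\; \sum_{i=1}^k |\pi^i - \overbar{\pi}^i|\sum_{x\in\{0,1\}^n} P_i(x) \;=\; \sum_{i=1}^k |\pi^i - \overbar{\pi}^i|,
\]
where the final equality uses that each $P_i$ is a probability distribution on $\{0,1\}^n$. By the hypothesis $|\pi^i - \overbar{\pi}^i|\le 2\epsilon/k$ for every $i\in[k]$, the right hand side is at most $k\cdot(2\epsilon/k) = 2\epsilon$. Dividing by two gives $\tvd(\mathcal{D},\overbar{\mathcal{D}}) \le \epsilon$, as desired.

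This argument has no real obstacles: the key point is simply that TV distance is linear in the mixing weights when the components are held fixed, which reduces the claim to the trivial bound on the $L_1$ norm of $\pi - \overbar{\pi}$ via a union bound over the $k$ components. No hybrid argument or more careful per-coordinate estimate (as in the proof of Lemma~\ref{lem:param1}) is needed here, since the component distributions themselves are unchanged.
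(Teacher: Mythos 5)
Your proof is correct and takes essentially the same approach as the paper: decompose $\mathcal{D}-\overbar{\mathcal{D}}$ linearly in the mixing-weight differences and bound via the entrywise hypothesis. Your version is in fact slightly more careful than the paper's, which establishes only a pointwise bound $|\Pr_{\mathcal{D}}[s]-\Pr_{\overbar{\mathcal{D}}}[s]|\le 2\epsilon$ and then asserts the TV bound; your swap of the order of summation together with $\sum_x P_i(x)=1$ is the right way to finish.
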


\begin{proof}
	Denote the probability that the $i$-th center of either $\mathcal{D}$ or $\overbar{\mathcal{D}}$ takes on the value $s$ by $p_i$. For any $s\in\{0,1\}^n$, $$|\Pr_{\mathcal{D}}[s] - \Pr_{\overbar{\mathcal{D}}}[s]| = |\langle (p_1 \ \cdots \ p_k), \pi - \overbar{\pi}\rangle|\le k\cdot (2\epsilon/k) = 2\epsilon,$$ We conclude that $\tvd(\mathcal{D},\overbar{\mathcal{D}})\le\epsilon$ as desired.
\end{proof}

The next lemma says that we can get away with not recovering product distributions in the mixture that have sufficiently small mixing weights.

\begin{lem}
	Let $\mathcal{D}$ be a mixture of $k$ product distributions over $\{0,1\}^n$ with mixing weights $\pi$ and marginals matrix $\m$. Denote by $S\subseteq[k]$ the coordinates $i$ of $\pi$ for which $\pi^i\ge\epsilon/k$, and let $Z = \sum_{i\in S}\pi^i$. Then the mixture $\overbar{\mathcal{D}}$ of $|S|$ product distributions over $\{0,1\}^n$ realized by $(\frac{1}{Z}\pi\vert^S,\m\vert^S)$ satisfies $\tvd(\mathcal{D},\overbar{\mathcal{D}})\le\epsilon$.\label{lem:param3}
\end{lem}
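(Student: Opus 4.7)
The plan is a direct computation using the triangle inequality, taking advantage of the fact that each product distribution $P_i$ in the mixture sums to $1$. Write $\mathcal{D} = \sum_{i=1}^k \pi^i P_i$ where $P_i$ denotes the $i$-th product distribution determined by $\m^i$, so that $\overbar{\mathcal{D}} = \frac{1}{Z}\sum_{i \in S} \pi^i P_i$. Noting that $1 - Z = \sum_{i \notin S}\pi^i$, I would rewrite
\[
\mathcal{D}(x) - \overbar{\mathcal{D}}(x) = \sum_{i\in S}\pi^i\!\left(1-\tfrac{1}{Z}\right)P_i(x) + \sum_{i\notin S}\pi^i P_i(x) = -\tfrac{1-Z}{Z}\sum_{i\in S}\pi^i P_i(x) + \sum_{i\notin S}\pi^i P_i(x).
\]

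Then I would apply the triangle inequality inside the total variation sum:
\[
\tvd(\mathcal{D},\overbar{\mathcal{D}}) = \tfrac{1}{2}\sum_x \bigl|\mathcal{D}(x)-\overbar{\mathcal{D}}(x)\bigr| \le \tfrac{1}{2}\cdot\tfrac{1-Z}{Z}\sum_{i\in S}\pi^i\sum_x P_i(x) + \tfrac{1}{2}\sum_{i\notin S}\pi^i\sum_x P_i(x).
\]
Since each $P_i$ is a probability distribution, $\sum_x P_i(x) = 1$, and $\sum_{i\in S}\pi^i = Z$. The first term simplifies to $\tfrac{1}{2}(1-Z)$ and the second to $\tfrac{1}{2}(1-Z)$, giving $\tvd(\mathcal{D},\overbar{\mathcal{D}}) \le 1 - Z$.

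Finally, I would bound $1 - Z = \sum_{i\notin S}\pi^i$. By the definition of $S$, every $i \notin S$ satisfies $\pi^i < \epsilon/k$, and there are at most $k$ such indices, so $1 - Z \le k\cdot(\epsilon/k) = \epsilon$, completing the proof. There is no main obstacle here; the argument is essentially a one-line rearrangement plus a triangle inequality, and it is the natural ``discard-the-light-components'' estimate one would expect for a mixture.
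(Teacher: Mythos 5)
Your proof is correct and arrives at the same intermediate bound $\tvd(\mathcal{D},\overbar{\mathcal{D}})\le 1-Z$ as the paper; the paper phrases this step as a coupling argument (both distributions coincide on the probability-$Z$ event of sampling a heavy component), while you obtain it by an explicit triangle-inequality computation, but the two are essentially the same observation. The final step $1-Z=\sum_{i\notin S}\pi^i<k\cdot(\epsilon/k)=\epsilon$ is identical.
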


\begin{proof}
	We can regard $\overbar{\mathcal{D}}$ as a distribution which with probability $Z$ samples from one of the centers of $\mathcal{D}$ indexed by $i\in S$ with probability proportional to $\pi$, and with probability $1 - Z$ samples from some other distribution. We can regard $\mathcal{D}$ in the same way. Then their total variation distance is bounded above by $1 - Z\le \epsilon/k\cdot k = \epsilon$.
\end{proof}

\subsection{Barycentric Spanners}

To control the effect that sampling noise in our estimates for moments of $\D$ has on the approximation guarantees of our learning algorithm, it is not enough simply to find a row basis for $\m$ for any realization of $\mathcal{D}$, but rather one for which the coefficients expressing the remaining rows of $\m$ in terms of this basis are small. The following, introduced in \cite{awerbuch2008online}, precisely captures this notion.

\begin{defn}
	Given a collection of vectors $V = \{v_1,...,v_n\}$ in $\R^k$, $S\subseteq V$ is a \emph{barycentric spanner} if every element of $V$ can be expressed as a linear combination of elements of $S$ using coefficients in $[-1,1]$.
\end{defn}

\begin{lem}[Proposition 2.2 in \cite{awerbuch2008online}]
	Every finite collection of vectors $V = \{v_1,...,v_n\}\subseteq\R^k$ has a barycentric spanner.\label{lem:bary}
\end{lem}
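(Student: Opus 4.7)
The plan is to follow the classical Awerbuch--Kleinberg proof via a determinant-maximization argument. First I would reduce to the case where $V$ spans all of $\R^k$: otherwise, let $W = \mathrm{span}(V)$, fix any isomorphism $W\cong \R^r$ with $r = \dim(W)$, and carry out the argument below for the image of $V$ in $\R^r$. Note that a barycentric spanner in this sense pulls back to a barycentric spanner for $V$, since $[-1,1]$-combinations are preserved under linear isomorphism.

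Next I would select $S=\{v_{i_1},\dots,v_{i_k}\} \subseteq V$ so as to maximize $|\det(v_{i_1},\dots,v_{i_k})|$ over all $k$-subsets of $V$ (here the determinant is computed with respect to any fixed basis of $\R^k$; since $V$ spans $\R^k$ there is some choice with nonzero determinant, so the maximum is strictly positive). I claim $S$ is a barycentric spanner. Indeed, since $S$ is a basis, for any $v \in V$ we may write uniquely
\[
v \;=\; \sum_{\ell=1}^{k} \alpha_\ell\, v_{i_\ell}.
\]
By Cramer's rule applied to this linear system,
\[
\alpha_\ell \;=\; \frac{\det\!\bigl(v_{i_1},\dots,v_{i_{\ell-1}},\,v,\,v_{i_{\ell+1}},\dots,v_{i_k}\bigr)}{\det\!\bigl(v_{i_1},\dots,v_{i_k}\bigr)}.
\]
The numerator's absolute value is at most that of the denominator by the maximality of $|\det(v_{i_1},\dots,v_{i_k})|$ over $k$-subsets of $V$ (we are replacing one element of $S$ by an element $v \in V$, which yields another $k$-subset whose determinant cannot exceed the maximum). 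Hence $|\alpha_\ell|\le 1$ for every $\ell$, which is exactly the barycentric spanner condition.

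The only subtle point is that the maximum defining $S$ is a maximum over a finite set (all $k$-subsets of the finite collection $V$), so it is attained; everything else is a direct application of Cramer's rule, and the reduction to the spanning case requires only that we work inside the subspace $W=\mathrm{span}(V)$, where the same determinant-maximization argument applies to any basis of $W$ drawn from $V$. I do not foresee any real obstacle beyond carefully handling this reduction to ensure the argument goes through when $V$ does not span $\R^k$.
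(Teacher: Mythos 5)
Your proof is correct and takes essentially the same determinant-maximization approach as the paper (which is the original argument of Awerbuch--Kleinberg); the only cosmetic difference is that you invoke Cramer's rule explicitly where the paper appeals directly to multilinearity of the determinant, and you spell out the reduction to the spanning case that the paper handles with a "without loss of generality."
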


\begin{proof}
	Without loss of generality suppose that $V$ spans all of $\R^k$. Pick $v_{i_1},...,v_{i_k}$ for which $|\det(v_{i_1},...,v_{i_k})|$ is maximized. Take any $v\in V$ and write it as $\sum_j\alpha_j v_{i_j}$. Then for any $j\in[n]$, $|\det(v_{i_1},...,v_{i_{j-1}},v,v_{i_{j+1}},...,v_{i_k})| = |\alpha_j|\cdot|\det(v_{i_1},...,v_{i_k})|$. By maximality, $|\alpha_j|\le 1$, so $v_{i_1},...,v_{i_k}$ is a barycentric spanner.
\end{proof}

\subsection{Gridding the Basis and Learning Coefficients}
\label{sec:gridding}

In time $n^{O(k)}$ we can brute-force find a barycentric spanner $J = \{i_1,...,i_r\}$ for the rows of $\m$. We can then $\frac{\epsilon}{4k^2n}$-grid the entries of $\m\vert_{J}$ in time $(n/\epsilon)^{k^2}$ to get an entrywise $\frac{\epsilon}{4k^2n}$-approximation $\overbar{\m}\vert_J$ of $\m$. Now suppose for the moment that we had exact access to the entries of $\EE$. We can try solving \begin{equation}\EE\vert^{\{i_1\},...,\{i_r\}}_{\PP'(J\cup\{i\})}\alpha_i = \EE\vert^{\{i\}}_{\PP'(J\cup\{i\})}\label{eq:findas}\end{equation} in $\alpha_i\in\R^r$ for every $i\not\in J$.

If $\rank(\M\vert_{\PP'(J\cup\{i\})}) = k$ for all $i\not\in J$ and realizations of $\mathcal{D}$, then by Lemma~\ref{lem:certified2}, the coefficient vectors $\alpha_i$ also satisfy $\alpha_i\cdot \m\vert_J = \m_i$. Because $J$ is a barycentric spanner so that $\alpha_i\in[-1,1]^r$, if we define $\overbar{\m}_i$ by \begin{equation}\overbar{\m}_i = \alpha_i\cdot \overbar{\m}\vert_J,\label{eq:roweq}\end{equation} then $\overbar{\m}$ is an entrywise $\frac{\epsilon}{4k^2 n}\cdot k = \frac{\epsilon}{4kn}$-approximation of $\m$. We can then $\frac{\epsilon}{2k}$-grid mixture weights $\overbar{\pi}$, and by Lemmas~\ref{lem:param1}, \ref{lem:param2}, and \ref{lem:param3} we have learned a mixture of product distributions $\overbar{\mathcal{D}}$ for which $\tvd(\mathcal{D},\overbar{\mathcal{D}})\le\epsilon$.

As usual, the complication is that it may be that $\rank(\M\vert_{\PP'(J\cup\{i\})}) < k$ for some realization of $\mathcal{D}$, but as in our algorithm for learning mixtures of subcubes, we can handle this by conditioning on $J\cup\{i\}$ and recursing.

As we alluded to at the beginning of the section, a more problematic issue that comes up here but not in the subcube setting is that $\M\vert_{\PP'(J\cup\{i\})}$ might be full rank but very badly conditioned. Indeed, in reality we only have $\esamp$-close estimates $\tilde{\EE}$ to the accessible entries of $\EE$, so instead of solving \eqref{eq:findas}, we solve the analogous $L_{\infty}$ regression \begin{equation}
	\tilde{\alpha}_i := \argmin_{\alpha\in[-1,1]^r}\norm{\tilde{\EE}\vert^{\{i_1\},...,\{i_r\}}_{\PP'(J\cup\{i\})}\alpha - \tilde{\EE}\vert^{\{i\}}_{\PP'(J\cup\{i\})}}_{\infty}.\label{eq:regressfindas}
\end{equation} If $\sigma^{\infty}_{\min}(\M\vert_{\PP'(J\cup\{i\})}$ is badly conditioned, then we cannot ensure that the resulting $\tilde{\alpha}_i$ lead to $\overbar{\m}_i = \tilde{\alpha}_i\cdot\overbar{\m}\vert_J$ in \eqref{eq:roweq} which are close to the true $\m_i$.

We show in the next subsections that this issue is not so different from when $\rank(\M\vert_{\PP'(J\cup\{i\})})<k$ for some realization of $\mathcal{D}$, and we can effectively treat ill-conditioned moment matrices as degenerate-rank moment matrices. As we will see, the technical crux underlying this is the fact that mixtures of product distributions are robustly identified by their $O(k)$-degree moments.

\subsection{Robust Low-degree Identifiability}
\label{sec:hypotestinghard}

Lemma~\ref{lem:pigeon_k} is effectively an exact identifiability result that implies that if a mixture of $k_1$ product distributions \emph{exactly} agrees with a mixture of $k_2$ product distributions on all moments of degree at most $k_1 + k_2$, then they are identical as distributions. The following is a robust identifiability lemma saying that if instead the two mixtures are only close on moments of degree at most $k_1 + k_2$, then they are close in total variation distance. Recall that we showed a similar lemma for mixtures of subcubes, but there it was much easier to extend exact identifiability to robust identifiability because full rank moment matrices are always well-conditioned, something that does not always hold for mixtures of product distributions.

\begin{lem}
	Let $\mathcal{D}_1,\mathcal{D}_2$ respectively be mixtures of $k_1$ and $k_2$ product distributions in $\{0,1\}^n$ for $k_1,k_2$. If $\tvd(\mathcal{D}_1,\mathcal{D}_2)>\epsilon$, there is some $S$ for which $|S|<k_1 + k_2$ and $|\E_{\mathcal{D}_1}[x_S]-\E_{\mathcal{D}_2}[x_S]| > \eta$ for some $\eta = \exp(-O(k_1 + k_2)^2)\cdot\poly(k_1 + k_2,n,\epsilon)^{-k_1-k_2}$.\label{lem:hypo2}
\end{lem}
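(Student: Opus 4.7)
The plan is to prove the lemma by induction on $K := k_1 + k_2$, paralleling the subcubes analogue Lemma~\ref{lem:hypo} but without the luxury of applying Lemma~\ref{lem:cond_number} as a black box, since moment matrices of general product distributions can be arbitrarily ill-conditioned. Fix any realizations of $\mathcal{D}_1, \mathcal{D}_2$ and concatenate their marginals matrices to $\m = (\m^{(1)} \| \m^{(2)}) \in [0,1]^{n \times K}$ and their signed mixing weights to $\pi = (\pi_1 \| -\pi_2) \in \R^K$, so that $\E_{\mathcal{D}_1}[x_S] - \E_{\mathcal{D}_2}[x_S] = \langle \M_S, \pi \rangle$ for every $S \subseteq [n]$ and, by the triangle inequality applied to $\mathcal{D}_1 - \mathcal{D}_2$ as a signed combination of $K$ product measures, $\|\pi\|_1 \ge 2\tvd(\mathcal{D}_1,\mathcal{D}_2) > 2\epsilon$, hence $\|\pi\|_\infty > 2\epsilon/K$. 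The base case $K = 2$ is handled by a standard hybrid coupling: both mixtures are single product distributions, $\tvd(\mathcal{D}_1, \mathcal{D}_2) \le \sum_i |\m^{(1)}_i - \m^{(2)}_i|$, so some coordinate exhibits a first-moment gap greater than $\epsilon/n$.

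For the inductive step with a separation threshold $\delta$ to be tuned, split into two cases. In the \emph{merge} case, where two columns of $\m$ lie within $L_\infty$-distance $\delta$, replace one column by the other and combine the corresponding signed weights to obtain new mixtures $\mathcal{D}_1', \mathcal{D}_2'$ with total component count at most $K - 1$; a hybrid coupling bounds $\tvd(\mathcal{D}_j, \mathcal{D}_j') = O(n\delta)$ so $\tvd(\mathcal{D}_1', \mathcal{D}_2') \ge \epsilon/2$, the inductive hypothesis yields some $|S| < K - 1$ with a moment gap $> \eta_{K-1}$, and this gap transfers back to the original with error $O(n\delta)$. Otherwise we are in the \emph{separated} case where all pairs of columns of $\m$ are pairwise $L_\infty$-$\delta$-separated. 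The key subclaim here is that we can exhibit $S_1, \ldots, S_K \subseteq [n]$ with each $|S_j| < K$ such that the $K \times K$ submatrix $\M^\star$ of $\M$ on rows $\M_{S_1}, \ldots, \M_{S_K}$ satisfies $\sig(\M^\star) \ge \exp(-O(K^2)) \cdot \delta^{O(K)} \cdot K^{-O(K)}$; combined with $\|\pi\|_\infty > 2\epsilon/K$, this yields $\max_j |\langle \M_{S_j}, \pi\rangle| \ge \sig(\M^\star) \cdot 2\epsilon/K$, giving the desired low-degree moment gap.

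The main obstacle is the subclaim in the separated case. The plan is to mimic the constructive polynomial argument behind Lemma~\ref{lem:pigeon_k}: for each column index $j \in [K]$, iteratively build a degree-$<K$ multilinear polynomial $p_j(x) = \prod_{i \in S_j}(x_i - \lambda_i)$ that vanishes on every column of $\m$ other than the $j$-th, choosing each $\lambda_i$ to equal a surviving column's marginal at a coordinate where it is $\delta$-separated from column $j$'s marginal. Expanded in the multilinear monomial basis, the $K$ polynomials $p_1, \ldots, p_K$ define linear functionals on the columns of $\M$ that form a change of basis between $\M^\star$ and a matrix that is upper-triangular with diagonal entries of magnitude at least $\delta^{O(K)}$; a determinantal Cramer/Hadamard expansion then extracts the claimed $\sig$-bound, where the $\delta^{O(K)}$ factor comes from the $\delta$-separated choices and the $\exp(-O(K^2)) \cdot K^{-O(K)}$ factor mirrors the bound of Lemma~\ref{lem:cond_number} for $[0,1]$-valued moment matrices. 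Tuning $\delta$ so that the merge-case perturbation is absorbed by the inductive gap, and unrolling the recursion from the base case $\eta_2 = \epsilon/n$, yields the stated $\eta = \exp(-O(K^2)) \cdot \poly(K, n, \epsilon)^{-K}$, where the additive $O(K^2)$ contribution in $\log(1/\eta)$ arises from the condition-number factor in each separated step and the $K \log(\poly(K, n, 1/\epsilon))$ contribution accumulates across the at most $K$ levels of recursion.
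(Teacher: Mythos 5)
Your proposal takes a genuinely different route from the paper: you try a merge/separate dichotomy on the columns of the concatenated marginals matrix and reduce the separated case to a condition-number bound on a $K\times K$ submatrix of $\M$, whereas the paper's induction works by ``conditioning out'' an extremal marginal (Observation~\ref{obs:psimin}), which reduces $k_1$ or $k_2$ by one without ever needing a condition-number lower bound, and then proves the contrapositive by transferring moment-closeness to the reduced pair $\D^\ell_j,\D^u_j$ and bounding $\tvd(\D_1,\D_2)$ via $\tvd(\D^\ell_1,\D^\ell_2)$ and $\tvd(\D^u_1,\D^u_2)$. The conditioning approach is designed precisely to sidestep the fact that $\M$ can be arbitrarily ill-conditioned.

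That fact is exactly where your separated case breaks. The subclaim --- that when all pairs of columns of $\m$ are $L_\infty$-$\delta$-separated, some $K\times K$ submatrix $\M^\star$ of $\M$ (on rows of size $< K$) has $\sig(\M^\star)\ge \exp(-O(K^2))\,\delta^{O(K)}\,K^{-O(K)}$ --- is false. Pairwise $L_\infty$-separation only guarantees that each pair of columns disagrees by $\ge\delta$ in \emph{some} coordinate, not that they are separated in a way the moment matrix can see. Concretely, take $K\ge 3$, $\delta\le 1/K$, and $\m^j = (j\delta,\,1/2,\,\ldots,\,1/2)$ for $j=1,\ldots,K$. All pairs of columns are $\delta$-separated (in coordinate $1$). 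But $\M_S^j=(1/2)^{|S|}$ if $1\notin S$ and $\M_S^j = j\delta\cdot(1/2)^{|S|-1}$ if $1\in S$, so every row of $\M$ lies in $\mathrm{span}\{\mathbf{1},(1,2,\ldots,K)\}$ and $\rank(\M)=2<K$. No $K\times K$ submatrix is full rank, so $\sig(\M^\star)=0$. The interpolating-polynomial construction you sketch fails for the same reason: any multilinear $p_j=\prod_{i\in S_j}(x_i-\lambda_i)$ vanishing on every column of $\m$ except the $j$-th would, evaluated across columns, have to equal the $j$-th standard basis vector up to scaling, which is not in the $2$-dimensional row span. More fundamentally, you cannot even form $p_j$ without reusing coordinate $1$, which multilinearity forbids --- $\delta$-separation does not give you distinct coordinates for distinct $j'$. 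This is precisely the gap between the subcube setting (where $\{0,1/2,1\}$-valued entries let Lemma~\ref{lem:cond_number} invoke Cramer's rule) and the general setting, and is the reason the paper's proof avoids condition numbers entirely.

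Two smaller issues: (i) the bound ``$\|\pi\|_1\ge 2\tvd(\D_1,\D_2)>2\epsilon$, hence $\|\pi\|_\infty>2\epsilon/K$'' is vacuous, since $\|\pi\|_1=\|\pi_1\|_1+\|\pi_2\|_1=2$ always and $\|\pi\|_\infty\ge 2/K$ regardless of $\epsilon$; what one actually needs (and what fails for ill-conditioned $\M$) is a lower bound on the image $\|\M\pi\|_\infty$ as in the paper's Lemma~\ref{lem:kernel} for subcubes. (ii) In the merge case, when one merged column is from $\D_1$ and the other from $\D_2$, the signed combination after merging has positive and negative parts each of mass $1-\pi_2^{j_2}<1$, so $\D_1',\D_2'$ are subprobability measures and a renormalization by $Z=1-\pi_2^{j_2}$ is required before applying the inductive hypothesis; one must then check that $Z$ cannot be too small (it cannot, since $Z\ge\epsilon - O(n\delta)$ follows from $\tvd\le 1$), and account for the renormalization when transferring the moment gap back. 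This is fixable, but as written the ``$\tvd(\D_j,\D_j')=O(n\delta)$'' step does not hold in the cross-mixture case.
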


We will prove the contrapositive by induction on $k_1 + k_2$. Suppose $|\E_{\mathcal{D}_1}[x_S] - \E_{\mathcal{D}_2}[x_S]|\le\eta$ for all $S\subseteq[n]$ with $|S| < k_1+k_2$. Define $\delta = \epsilon/2kn$. Henceforth suppose $\mathcal{D}_1$ and $\mathcal{D}_2$ are realized by $(\pi_1,\m_1)$ and $(\pi_2,\m_2)$ respectively for $\pi^1_1\ge\cdots\ge  \pi^{k_1}_1$ and $\pi^1_2\ge\cdots\ge\pi^{k_2}_2$. For $i\in[n]$ let $u_i$, $\ell_i$ denote the largest and smallest value in row $i$ of either $\m_1$ or $\m_2$.

The following simple observation, similar in spirit to Lemma~\ref{lem:condition}, drives our induction:

\begin{obs}
	For any $i\in[n]$ and each $j = 1,2$, there exists a mixture of product distributions $\mathcal{D}^{\ell}_j$ over $\{0,1\}^{n-1}$ such that for any $S\subseteq[n]\backslash\{i\}$, \begin{equation}\E_{\mathcal{D}_1}[(x_i-\ell_i)\cdot x_S] = \E_{\mathcal{D}_1}[x_i - \ell_i]\cdot\E_{\mathcal{D}^{\ell}_1}[x_S].\label{eq:psimineq}\end{equation} If $\ell_i$ is an entry of $(\m_1)_i$, then $\mathcal{D}^{\ell}_1$ and $\mathcal{D}^{\ell}_2$ are mixtures of at most $k_1-1$ and $k_2$ product distributions respectively. If we replace $(x_i - \ell_i)$ with $(u_i - x_i)$, the analogous statement holds for mixtures $\mathcal{D}^{u}_1, \mathcal{D}^{u}_2$.\label{obs:psimin}
\end{obs}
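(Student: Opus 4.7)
The plan is to construct each $\mathcal{D}^{\ell}_j$ via a reweighting of the mixture components of $\mathcal{D}_j$ that absorbs the factor $(x_i - \ell_i)$ into the new mixing weights, and then verify the factorization by a direct computation. This is a purely algebraic statement; the conceptual point is simply that the extremal definition of $\ell_i$ as the smallest value in row $i$ of either $\m_1$ or $\m_2$ guarantees the new weights are nonnegative.

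Concretely, I would write $\mathcal{D}_j$ as being realized by mixing weights $\pi_j^1,\ldots,\pi_j^{k_j}$ and marginals matrix $\m_j$ and expand
$$\E_{\mathcal{D}_j}[(x_i-\ell_i)\cdot x_S] = \sum_{t=1}^{k_j}\pi_j^t\cdot\bigl((\m_j)^t_i - \ell_i\bigr)\cdot\prod_{s\in S}(\m_j)^t_s.$$
By the definition of $\ell_i$, each factor $(\m_j)^t_i - \ell_i$ is nonnegative, so setting $Z_j := \E_{\mathcal{D}_j}[x_i - \ell_i] = \sum_t \pi_j^t((\m_j)^t_i - \ell_i)$, when $Z_j > 0$ the numbers $\tilde{\pi}_j^t := \pi_j^t\cdot((\m_j)^t_i - \ell_i)/Z_j$ form a valid probability vector. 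I would then define $\mathcal{D}^{\ell}_j$ to be the mixture of product distributions with mixing weights $\tilde{\pi}_j$ and marginals matrix obtained by deleting row $i$ from $\m_j$. Dividing the displayed identity through by $Z_j$ yields precisely the desired factorization.

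For the component count, note that if $\ell_i$ is attained at some column $t^*$ of $(\m_1)_i$, then $\tilde{\pi}_1^{t^*} = 0$ and so $\mathcal{D}^{\ell}_1$ is realized by at most $k_1-1$ product distributions, while $\mathcal{D}^{\ell}_2$ trivially has at most $k_2$ components. The degenerate case $Z_j = 0$ forces every $t$ with $\pi_j^t>0$ to satisfy $(\m_j)^t_i = \ell_i$, so in particular the hypothesis of the observation implies we are free to choose $\mathcal{D}^{\ell}_j$ to be any mixture of the required size since both sides of the identity vanish. The $(u_i - x_i)$ variant is completely symmetric, using that $u_i - (\m_j)^t_i \ge 0$ by the extremal definition of $u_i$. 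No substantive obstacle is anticipated; the only subtle point is ensuring nonnegativity of the reweighting, which is exactly what the definitions of $\ell_i$ and $u_i$ buy us.
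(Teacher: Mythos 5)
Your proposal is correct and matches the paper's proof: both realize $\mathcal{D}^{\ell}_j$ by the reweighted mixture $\frac{1}{Z_j}\pi_j\odot((\m_j)_i - \ell_i\cdot\one)$ with row $i$ of $\m_j$ deleted, and both observe that when $\ell_i$ is an entry of $(\m_1)_i$ the corresponding new weight vanishes, dropping the component count to $k_1-1$. Your explicit treatment of the $Z_j = 0$ degenerate case is a small addition the paper omits, but the argument is otherwise the same.
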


\begin{proof}
	For each $j$, $\mathcal{D}^{\ell}_j$ is obviously realized by $$\left(\frac{1}{Z_j}\pi_1\odot((\m_j)_i - \ell_i\cdot\one),(\m_j)\vert_{[n]\backslash\{i\}}\right),$$ where $Z_j = \E_{\mathcal{D}_j}[x_i - \ell_i]$. But for $j = 1$, $\pi_1\odot((\m_1)_i - \ell_i\cdot\one)$ has a zero in the entry corresponding to where $\ell_i$ is in $(\m_1)_i$. So $\mathcal{D}^{\ell}_1$ is in fact realized by the mixture weight vector consisting of all nonzero entries of $\pi_1\odot((\m_1)_i - \ell_i\cdot\one)$ together with the corresponding at most $k_1 - 1$ columns of $(\m_1)_{[n]\backslash\{i\}}$.
\end{proof}

One subtlety is that we need to pick a row $i$ such that $\E_{\D_j}[(x_i - \ell_i)]$ and $\E_{\D_j}[(u_i - x_i)]$ is sufficiently large that when we induct on the pairs of mixtures $\D^{\ell}_1,\D^{\ell}_2$ and $\D^{u}_1,\D^{u}_2$, the assumption that the pair of mixtures $\D_1,\D_2$ is close on low-degree moments carries over to these pairs. In the following lemma we argue that if no such row $i$ exists, then $\D_1$ and $\D_2$ are both close to a single product distribution and therefore close in total variation distance to each other.

\begin{lem}
	If there exists no $i\in[n]$ for which $\E_{\mathcal{D}_1}[x_i - \ell_i]\ge\delta\epsilon/4k$ and $\E_{\mathcal{D}_1}[u_i - x_i]\ge\delta\epsilon/4k$, then $\tvd(\mathcal{D}_1,\Pi)\le\epsilon$, where $\Pi$ is the single product distribution with $i$-th marginal $\ell_i$ if $\E_{\mathcal{D}_1}[x_i - \ell_i]\le\delta\epsilon/4k$ and $u_i$ if $\E_{\mathcal{D}_1}[u_i - x_i]\ge\delta\epsilon/4k$. In particular, if there exists no $i\in[n]$ for which $\E_{\mathcal{D}_1}[x_i - \ell_i]\ge\delta\epsilon/9k$ and $\E_{\mathcal{D}_1}[u_i - x_i]\ge\delta\epsilon/9k$, then $\tvd(\mathcal{D}_1,\mathcal{D}_2)\le\epsilon$.\label{lem:singleprod}
\end{lem}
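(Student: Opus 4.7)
The plan is to prove both claims via convexity of total variation distance plus a coordinate-wise TV bound for product distributions. For the first claim, decompose $\mathcal{D}_1 = \sum_j \pi_1^j \mathcal{D}_1^j$, where $\mathcal{D}_1^j$ is the product distribution whose $i$-th marginal is $(\m_1)^j_i$. By convexity of total variation together with the standard tensorization bound $\tvd(P,Q) \le \sum_i |p_i - q_i|$ for two product distributions on $\{0,1\}^n$ with marginals $p_i,q_i$, one obtains
\[
\tvd(\mathcal{D}_1,\Pi) \;\le\; \sum_j \pi_1^j\, \tvd(\mathcal{D}_1^j, \Pi) \;\le\; \sum_i \sum_j \pi_1^j \bigl|(\m_1)^j_i - \Pi_i\bigr|.
\]

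The key observation is that by definition $\ell_i \le (\m_1)^j_i \le u_i$ for every $j$, so whenever $\Pi_i = \ell_i$ the inner absolute value equals $(\m_1)^j_i - \ell_i$ and the inner sum collapses to exactly $\E_{\mathcal{D}_1}[x_i - \ell_i] \le \delta\epsilon/4k$; the $\Pi_i = u_i$ case is symmetric, and the assumption of the lemma guarantees that at least one of the two options is available at each coordinate. Summing over the $n$ coordinates and plugging in $\delta = \epsilon/(2kn)$ gives $\tvd(\mathcal{D}_1,\Pi) \le n \cdot \delta\epsilon/(4k) = \epsilon^2/(8k^2) \le \epsilon$, with considerable room to spare.

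For the ``in particular'' claim, I would reduce to the first claim via the triangle inequality. The ambient inductive setup in the proof of Lemma~\ref{lem:hypo2} furnishes $|\E_{\mathcal{D}_1}[x_i] - \E_{\mathcal{D}_2}[x_i]| \le \eta$ for every $i$ (since $|\{i\}| = 1 < k_1 + k_2$). Consequently, whenever $\E_{\mathcal{D}_1}[x_i - \ell_i] \le \delta\epsilon/9k$ we also have $\E_{\mathcal{D}_2}[x_i - \ell_i] \le \delta\epsilon/9k + \eta \le \delta\epsilon/4k$, provided $\eta \le 5\delta\epsilon/(36k)$, which is comfortably met by the quantitative choice of $\eta$ in Lemma~\ref{lem:hypo2}; the $u_i$ case is analogous. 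Hence $\mathcal{D}_2$ satisfies the first claim's hypothesis with the \emph{same} product distribution $\Pi$, and applying the first claim to both mixtures and combining via the triangle inequality yields $\tvd(\mathcal{D}_1,\mathcal{D}_2) \le \epsilon$.

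The argument is largely mechanical; the only somewhat delicate point is the bookkeeping around the two threshold constants $4$ and $9$, which are chosen precisely so that the $\eta$-slack between $\mathcal{D}_1$ and $\mathcal{D}_2$ on their first moments is absorbed when we transfer a coordinate's hypothesis from $\mathcal{D}_1$ to $\mathcal{D}_2$.
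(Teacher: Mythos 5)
Your proof is correct, and it takes a genuinely different route from the paper's. The paper first truncates to the high-weight components via Lemma~\ref{lem:param3} (discarding any $j$ with $\pi^j_1 < \epsilon/2k$), then argues by a Markov-style bound that every surviving column has its $i$-th entry within $\delta/2$ of $\ell_i$ or $u_i$, and invokes Lemma~\ref{lem:param1} (entrywise parameter closeness implies TV closeness). You instead skip the truncation entirely: convexity of TV together with the one-coordinate-at-a-time tensorization bound $\tvd(P,Q)\le\sum_i|p_i-q_i|$ reduces $\tvd(\mathcal{D}_1,\Pi)$ to $\sum_i\sum_j\pi^j_1|(\m_1)^j_i-\Pi_i|$, and since each $(\m_1)^j_i$ lies between $\ell_i$ and $u_i$, the inner sum collapses exactly to the first-moment quantity $\E_{\mathcal{D}_1}[x_i-\ell_i]$ or $\E_{\mathcal{D}_1}[u_i-x_i]$, which is then bounded wholesale. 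Your argument is more self-contained (no reliance on Lemmas~\ref{lem:param1} and~\ref{lem:param3}) and makes the role of the $\delta\epsilon/4k$ threshold transparent: it is exactly the per-coordinate TV budget. One small bookkeeping note on the ``in particular'' claim: stated as ``$\tvd(\mathcal{D}_1,\Pi)\le\epsilon$ and $\tvd(\mathcal{D}_2,\Pi)\le\epsilon$, hence $\tvd(\mathcal{D}_1,\mathcal{D}_2)\le\epsilon$,'' the triangle inequality literally gives $2\epsilon$; you should make explicit that you are using your sharper bound of $\epsilon^2/8k^2$ for each term, so that the sum is still comfortably below $\epsilon$. (The paper's own proof has the same elision; its numbers also work only because the actual bounds obtained have slack.)
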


\begin{proof}
Let $k'_1\le k_1$ be the largest index for which $\pi^{k'_1}_1\ge\epsilon/2k$. If there exists no $i\in[n]$ for which $\E_{\mathcal{D}_1}[x_i - \ell_i]\ge\delta\epsilon/4k$ and $\E_{\mathcal{D}_1}[u_i - x_i]\ge\delta\epsilon/4k$, then for every $i\in[n]$ and $1\le j\le k'_1$, $\m^j_i\in[\ell_i,\ell_i + \delta/2]\cup[u_i - \delta/2,u_i]$, so by Lemmas~\ref{lem:param1} and Lemma~\ref{lem:param3}, $\tvd(\mathcal{D}_1,\Pi)\le\epsilon$.

For the second statement in the lemma, note that the argument above obviously also holds if $\mathcal{D}_1$ is replaced with $\mathcal{D}_2$. If $\E_{\mathcal{D}_1}[x_i - \ell_i]\ge\delta\epsilon/9k$, then by the assumption that $\mathcal{D}_1$ and $\mathcal{D}_2$ are $\eta$-close on all low-order moments, $\E_{\mathcal{D}_2}[x_i - \ell_i]\le\delta\epsilon/9k + \eta\le\delta\epsilon/8k$ and we conclude by invoking the first part of the lemma on both $\mathcal{D}_1$ and $\mathcal{D}_2$ to conclude that $\tvd(\mathcal{D}_1,\mathcal{D}_2)\le\tvd(\mathcal{D}_1,\Pi) + \tvd(\mathcal{D}_2,\Pi)\le\epsilon$.
\end{proof}

Finally, before we proceed with the details of the inductive step, we check the base case when at least one of $k_1, k_2$ is 1.

\begin{lem}
	Let $\mathcal{D}_1$ be a single product distribution over $\{0,1\}^n$ and $\mathcal{D}_2$ a mixture of $k$ product distributions over $\{0,1\}^n$. If $\tvd(\mathcal{D}_1,\mathcal{D}_2)>\epsilon$, there is some $S$ for which $|S|\le k+1$ and $|\E_{\mathcal{D}_1}[x_S] - \E_{\mathcal{D}_2}[x_S]|>\eta$ for $\eta = \frac{\epsilon^3}{648\cdot 2^kn^2}$.\label{lem:hypobasecase}
\end{lem}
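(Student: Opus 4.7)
The plan is to prove Lemma~\ref{lem:hypobasecase} by induction on $k$, mirroring the inductive structure used for Lemma~\ref{lem:hypo2} but exploiting the extra rigidity that comes from $\D_1$ being a single product distribution. The base case $k=1$ is essentially free: both $\D_1$ and $\D_2$ are single product distributions, so $\tvd(\D_1,\D_2)\le\sum_i|\mu_i-\nu_i|$ immediately produces a coordinate $i$ with $|\mu_i-\nu_i|>\epsilon/n$, which is well above the claimed $\eta$ for $\epsilon\le 1$.

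For the inductive step, I would argue the contrapositive: assuming $|\E_{\D_1}[x_S]-\E_{\D_2}[x_S]|\le\eta$ for all $|S|\le k+1$, show $\tvd(\D_1,\D_2)\le\epsilon$. Split on whether any coordinate $i$ has both $\E_{\D_1}[x_i-\ell_i]$ and $\E_{\D_1}[u_i-x_i]$ larger than the Lemma~\ref{lem:singleprod} threshold $\delta\epsilon/9k$. If none does, invoke Lemma~\ref{lem:singleprod} directly. Otherwise, since $\D_1$ has a single column, $\mu_i>\ell_i$ forces $\ell_i$ to be attained by some component of $\D_2$ (and similarly for $u_i$), so we may apply Observation~\ref{obs:psimin} to peel off one $\D_2$-component and obtain $\D_1^\ell$ (still one product distribution on $n-1$ coordinates) and $\D_2^\ell$ (a mixture of at most $k-1$ product distributions), and analogously $\D_1^u,\D_2^u$.

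Two key calculations drive the induction. First, a \emph{moment-transfer} step: writing $\E_{\D_j}[(x_i-\ell_i)x_S]=\E_{\D_j}[x_i-\ell_i]\cdot\E_{\D_j^\ell}[x_S]$ and using the assumed $\eta$-closeness on $\E_{\D_j}[x_S]$ and $\E_{\D_j}[x_{S\cup\{i\}}]$, one shows $|\E_{\D_1^\ell}[x_S]-\E_{\D_2^\ell}[x_S]|\le 3\eta/(\mu_i-\ell_i)$ for all $|S|\le k$. Second, a \emph{TV-reconstruction} step: for $j\in\{1,2\}$, setting $Z_j=\E_{\D_j}[x_i]-\ell_i$ and $Z_j'=u_i-\E_{\D_j}[x_i]$, one derives the identities
\begin{equation*}
\D_j(y,1)=\frac{u_iZ_j\D_j^\ell(y)+\ell_iZ_j'\D_j^u(y)}{u_i-\ell_i},\quad \D_j(y,0)=\frac{(1-u_i)Z_j\D_j^\ell(y)+(1-\ell_i)Z_j'\D_j^u(y)}{u_i-\ell_i}.
\end{equation*}
Two applications of the triangle inequality, combined with $|Z_1-Z_2|,|Z_1'-Z_2'|\le\eta$ coming from first-moment agreement, yield
\begin{equation*}
\tvd(\D_1,\D_2)\le\max\bigl(\tvd(\D_1^\ell,\D_2^\ell),\tvd(\D_1^u,\D_2^u)\bigr)+\frac{\eta}{u_i-\ell_i}.
\end{equation*}
Applying the inductive hypothesis at level $k-1$ (to both pairs with a slightly smaller tolerance) and substituting then gives $\tvd(\D_1,\D_2)\le\epsilon$.

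The main obstacle is parameter bookkeeping. The prefactor $3/(\mu_i-\ell_i)$ in the moment-transfer step can be as large as $O(k^2n/\epsilon^2)$ using the Lemma~\ref{lem:singleprod} threshold, whereas the stated $\eta=\epsilon^3/(648\cdot 2^kn^2)$ only permits a factor-of-two shrinkage per level of induction. Closing this gap requires either sharpening the choice of $i$ (e.g.\ picking one that \emph{maximizes} $\mu_i-\ell_i$ rather than merely clearing a threshold, exploiting that if every component of $\D_2$ had $x_i$-marginal within $\epsilon/n$ of $\mu_i$ then $\D_2$ would already be $\epsilon$-close to $\D_1$ by a union bound), or absorbing slack by invoking the inductive hypothesis with tolerance $\epsilon'$ close to $\epsilon$ rather than $\epsilon/2$, together with a careful accounting of the Case~A threshold. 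Either way, the combinatorial structure of the proof is the same as above; the delicate part is arranging constants so that the $2^k$ geometric factor in $\eta$ suffices for the recursion to close.
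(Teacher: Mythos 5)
Your proposal takes a genuinely different route from the paper, and it has a gap that you correctly flag but do not close — and that I believe cannot be closed with the approach as stated.

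The paper proves Lemma~\ref{lem:hypobasecase} directly, not by induction. Letting $p_1,\dots,p_n$ be the marginals of $\D_1$ and $(\pi,\m)$ realize $\D_2$, it works with the difference vectors $v_i = \m_i - p_i\cdot\one\in\R^k$. Degree-$2$ moment closeness gives $|\langle\pi, v_i\odot v_j\rangle|\le 3\eta$ for $i\neq j$; picking a barycentric spanner $J\subseteq[n]$ for $\{v_1,\dots,v_n\}$ (so $|J|\le k$, with coefficients in $[-1,1]$) then forces $\langle\pi, v_i\odot v_i\rangle\le 3\eta k$ for every $i\notin J$. Since $v_i\odot v_i$ is entrywise nonnegative, every component $\ell$ with $\pi^\ell\ge\tau$ must have $|v_i^\ell|\le\sqrt{3\eta k/\tau}$, i.e.\ its $i$-th marginal is pinned to $p_i$ for all $i\notin J$. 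This collapses $\D_2$ (up to $O(\epsilon)$ error) to a distribution that matches $\D_1$ exactly outside $J$, after which the remaining discrepancy lives on the $\le k$ coordinates in $J$ and is bounded by $2^{O(k)}\eta$ because any event on $\{0,1\}^{|J|}$ is a $\pm1$ combination of at most $2^{2k}$ moments of degree $\le k$. The only exponential-in-$k$ loss is this final $2^{2k}$, and the only polynomial-in-$n$ loss is the $n^2$ coming from the $\epsilon/n$-scale needed on individual marginals; everything else costs at most $\poly(k)$.

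Your inductive approach, by contrast, necessarily pays a fresh factor of $3/(\mu_i-\ell_i)\ge\Omega(n/\epsilon)$ at each of the $k-1$ peeling steps in the moment-transfer calculation. Even with your sharpened choice of $i$ (maximizing $\mu_i-\ell_i$, and using that if every coordinate had both gaps below $\epsilon/n$ then $\D_1,\D_2$ would already be $\epsilon$-close), the best per-level prefactor you can guarantee is still $\Theta(n/\epsilon)$, not $O(1)$. Compounding over $k-1$ levels gives a total loss of $\poly(n/\epsilon)^{\Theta(k)}$, whereas the lemma's $\eta$ only provides a $2^k n^2$ margin. So the recursion cannot close with this $\eta$ regardless of how the constants are arranged; no amount of ``invoking the inductive hypothesis with tolerance $\epsilon'$ close to $\epsilon$'' repairs a per-level factor that is polynomial in $n/\epsilon$ rather than constant. (Your inductive scheme would prove a version of the lemma with $\eta = \poly(n,1/\epsilon)^{-\Theta(k)}$ — which is exactly what the paper's general Lemma~\ref{lem:hypo2} achieves and is why the paper proves the base case by an entirely separate, non-inductive argument.) The essential idea you are missing is to take a barycentric spanner of the $k$-dimensional vectors $\m_i - p_i\cdot\one$ themselves, exploiting that for a single product distribution vs.\ a mixture the degree-$2$ moment discrepancies $\langle\pi,(\m_i-p_i\one)\odot(\m_j-p_j\one)\rangle$ already control everything, so there is no need to peel off components one at a time.
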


\begin{proof}
Let $p_1,...,p_n$ be the marginals of $\mathcal{D}_1$ and let $\pi$ and $\m$ be mixing weights and marginals matrix realizing $\mathcal{D}_2$. For each $i\in[n]$ define $v_i = \m_i - p_i\cdot\one$. For $i\neq j$, observe that \begin{align*}\left\langle\pi,v_i\odot v_j\rangle\right| &= \left|\langle\pi,\m_i\odot\m_j + p_i\cdot p_j\cdot\one - p_i\cdot\m_j - p_j\cdot\m_i\rangle\right| \\
&= \left|\E_{\mathcal{D}_2}[x_{\{i,j\}}] + \E_{\mathcal{D}_1}[x_{\{i,j\}}] - (\E_{\mathcal{D}_1}[x_{\{i,j\}}]\pm\eta) - (\E_{\mathcal{D}_1}[x_{\{i,j\}}]\pm\eta)\right| \le 3\eta.\end{align*} Pick out a barycentric spanner $J\subseteq[n]$ for $\{v_1,...,v_n\}$ so that for all $i\not\in J$, there exist coefficients $\lambda^i_j\in[-1,1]$ for which $v_i = \sum_{j\in J}\lambda_jv_j$. From this we get $$\langle\pi,v_i\odot v_i\rangle = \left|\langle\pi,v_i\odot v_i\rangle\right|\le \sum_{j\in J}\left|\lambda_j\right|\cdot\left|\langle\pi,v_i\odot v_j\rangle\right|\le 3\eta k.$$ All entries of $v_i\odot v_i$ are obviously nonnegative, so for $\tau = \epsilon/6k$ to be chosen later, we find that $|v^{\ell}_i|\le\sqrt{3\eta k/\tau}\le\epsilon/6nk$ for all $\ell\in[k]$ for which $\pi^{\ell} > \tau$. Denote the set of such $\ell$ by $S\subseteq[k]$.

By restricting to entries of $\pi$ in $S$, normalizing, and restricting the columns of $\m$ to $S$, we get a new mixture of product distributions $\mathcal{D}'$ with marginals matrix $(\pi',\m\vert^S)$ which is $\tau k = \epsilon/6$-close to $\mathcal{D}_2$. For all $i\not\in J$ and $\ell\in S$, because $|v^{\ell}_i|\le\epsilon/6nk$, if we replace every such $(i,\ell)$-th entry of $\m\vert^S$ by $p_i$ to get $\m'$, then the mixture of product distributions $\mathcal{D}''$ realized by $(\pi',\m')$ is $\epsilon/6$-close to $\mathcal{D}'$.

For a distribution $D$ let $D\vert_J$ denote its restriction to coordinates $J$. Total variation distance is nonincreasing under this restriction operation, so $\tvd(\mathcal{D}_2\vert_J,\mathcal{D}''\vert_J)\le\tvd(\mathcal{D}_2,\mathcal{D}'')$. Furthermore, note that $\tvd(\mathcal{D}_1\vert_J,\mathcal{D}_2\vert_J)\le 2^{2k}\eta < \epsilon/3$, because $|J|\le k$ and any event on $\{0,1\}^k$ can obviously be expressed in terms of at most $2^{2k}$ moments of $\mathcal{D}_1\vert_J$ and $\mathcal{D}_2\vert_J$. By the triangle inequality, $\tvd(\mathcal{D}_1\vert_J,\mathcal{D}''\vert_J)\le 2\epsilon/3$.

Finally, define $\Pi$ to be the product distribution over $\{0,1\}^{n - |J|}$ with marginals $\{p_i\}_{i\not\in J}$. By design, $\mathcal{D}_1 = \mathcal{D}_1\vert_J\times\Pi$ and $\mathcal{D}'' = \mathcal{D}''\vert_J\times\Pi$. Because $\Pi$ is a single product distribution, $\tvd(\mathcal{D}_1,\mathcal{D}'') = \tvd(\mathcal{D}_1\vert_J,\mathcal{D}''\vert_J)\le 2\epsilon/3$. By the triangle inequality, we get that $\tvd(\mathcal{D}_1,\mathcal{D}_2)\le \epsilon$.
\end{proof}

We are now ready to complete the inductive step in the proof of Lemma~\ref{lem:hypo2}.

\begin{proof}[Proof of Lemma~\ref{lem:hypo2}]
Pick $\eta = 5^{-2(k_1 + k_2)^2}\cdot\left(\frac{(\delta\epsilon/k)^2}{162}\right)^{k_1 + k_2}$. For $k_1 = 1$ or $k_2 = 1$, we certainly have $\eta < \frac{\epsilon^3}{648\cdot 2^kn^2}$, so the base case follows by Lemma~\ref{lem:hypobasecase}.

Now consider the case where $k_1,k_2 > 1$. Suppose \begin{equation}|\E_{\mathcal{D}_1}[x_S] - \E_{\mathcal{D}_2}[x_S]|\le\eta\label{eq:momentclose}\end{equation} for all $|S| < k_1 + k_2$. By Lemma~\ref{lem:singleprod} we may assume that there exists an $i$ for which $\E_{\mathcal{D}_1}[x_i - \ell_i]\ge\delta\epsilon/9k$ and $\E_{\mathcal{D}_1}[u_i - x_i]\ge\delta\epsilon/9k$. Because Lemma~\ref{lem:singleprod} also holds for $\mathcal{D}_2$, we may assume without loss of generality that $\ell_i$ is an entry of $(\m_1)_i$. Take any $T\subseteq[n]\backslash\{i\}$ for $|T| < k_1 + k_2 - 1$. By \eqref{eq:momentclose} we have that $$\left|\E_{\mathcal{D}_1}[(x_i - \ell_i)\cdot x_{T\cup\{i\}}] - \E_{\mathcal{D}_1}[(x_i - \ell_i)\cdot x_{T\cup\{i\}}]\right| \le 2\eta.$$ By \eqref{eq:psimineq} we have that \begin{align*}\left|\E_{\mathcal{D}^{\ell}_1}[x_T] - \E_{\mathcal{D}^{\ell}_2}[x_T]\right| &= \left|\frac{\E_{\mathcal{D}_1}[(x_i - \ell_i)\cdot x_{T\cup\{i\}}]}{\E_{\mathcal{D}_1}[x_i - \ell_i]} - \frac{\E_{\mathcal{D}_2}[(x_i - \ell_i)\cdot x_{T\cup\{i\}}]}{\E_{\mathcal{D}_2}[x_i - \ell_i]}\right|\\
	&= \left|\frac{\pm\eta\cdot\E_{\mathcal{D}_1}[(x_i - \ell_i)\cdot x_{T\cup\{i\}}]\pm 2\eta\cdot\E_{\mathcal{D}_1}[x_i - \ell_i]}{\E_{\mathcal{D}_1}[x_i - \ell_i]\cdot\E_{\mathcal{D}_2}[x_i - \ell_i]}\right| \\ 
	&\le \frac{2\eta}{\delta\epsilon/9k} + \frac{\eta}{(\delta\epsilon/9k)^2} \le \frac{2\eta}{(\delta\epsilon/9k)^2}\le 5^{-2(k_1 + k_2 - 1)^2}\left(\frac{(\delta\epsilon/5k)^2}{162}\right)^{k_1 + k_2 - 1}\end{align*}

Because $\mathcal{D}^{\ell}_1$ is a mixture of fewer than $k_1$ product distributions and $\mathcal{D}^{\ell}_2$ is a mixture of at most $k_2$ product distributions, we inductively have that $\tvd(\mathcal{D}^{\ell}_1,\mathcal{D}^{\ell}_2)\le\epsilon/5$. In the exact same way we can show that we inductively have that $\tvd(\mathcal{D}^{u}_1,\mathcal{D}^{u}_2)\le\epsilon/5$.

Now consider any event $\mathcal{S}\subseteq\{0,1\}^n$. We wish to bound \begin{equation}\left|\sum_{s\in\mathcal{S}}(\Pr_{\mathcal{D}_1}[s] - \Pr_{\mathcal{D}_2}[s])\right|\le \left|\sum_{s\in\mathcal{S}: s_i = 0}(\Pr_{\mathcal{D}_1}[s] - \Pr_{\mathcal{D}_2}[s])\right| + \left|\sum_{s\in\mathcal{S}: s_i = 1}(\Pr_{\mathcal{D}_1}[s] - \Pr_{\mathcal{D}_2}[s])\right|\label{eq:split}\end{equation} Because $x = \alpha_{1,i}(x - \ell_i) + \beta_{1,i}(u_i - x)$ for $\alpha_{1,i} = \frac{u_i}{u_i - \ell_i}$ and $\beta_{1,i} = \frac{\ell_i}{u_i - \ell_i}$, and $1 - x = \alpha_{0,i}(x - \ell_i) + \beta_{0,i}(u_i - x)$ for $\alpha_{0,i} = \frac{1 - u_i}{u_i - \ell_i}$ and $\beta_{0,i} = \frac{1 - \ell_i}{u_i - \ell_i}$. For $b = 0,1$, we can thus use \eqref{eq:psimineq} to express $\Pr_{\mathcal{D}_j}[s]$ for $s_i = b$ as $$\Pr_{\mathcal{D}_j}[s] = \alpha_{b,i}\cdot\E_{\mathcal{D}_j}[x_i]\cdot\Pr_{\mathcal{D}^{\ell}_j}[s'] + \beta_{b,i}\cdot\E_{\mathcal{D}_j}[x_i]\cdot\Pr_{\mathcal{D}^u_j}[s']$$ where $s'$ denotes the substring of $s$ outside of coordinate $i$. From this we see that \begin{align*}\Pr_{\mathcal{D}_1}[s] - \Pr_{\mathcal{D}_2}[s] &= \alpha_{b,i}\left(\E_{\mathcal{D}_1}[x_i]\Pr_{\mathcal{D}^{\ell}_1}[s'] - \E_{\mathcal{D}_2}[x_i]\Pr_{\mathcal{D}^{\ell}_2}[s']\right) + \beta_{b,i}\left(\E_{\mathcal{D}_1}[x_i]\Pr_{\mathcal{D}^{u}_1}[s'] - \E_{\mathcal{D}_2}[x_i]\Pr_{\mathcal{D}^{u}_2}[s']\right) \\
&= \alpha_{b,i}\cdot\E_{\mathcal{D}_1}[x_i]\left(\Pr_{\mathcal{D}^{\ell}_1}[s'] - \Pr_{\mathcal{D}^{\ell}_2}[s']\right) + \beta_{b,i}\cdot\E_{\mathcal{D}_1}[x_i]\left(\Pr_{\mathcal{D}^{u}_1}[s'] - \Pr_{\mathcal{D}^{u}_2}[s']\right) \pm \alpha_{b,i}\eta\Pr_{\mathcal{D}^{\ell}_2}[s'] \pm \beta_{b,i}\eta\Pr_{\mathcal{D}^{u}_2}[s'].\end{align*} Note that $$\alpha_{b,i}\cdot\E_{\mathcal{D}_1}[x_i], \beta_{b,i}\cdot\E_{\mathcal{D}_1}[x_i]\le 1$$ because $u_i - \ell_i$ is an obvious upper bound on $\E_{\mathcal{D}_1}[x_i]$. We can thus bound \eqref{eq:split} by $$2\tvd(\mathcal{D}^{\ell}_1,\mathcal{D}^{\ell}_2) + 2\tvd(\mathcal{D}^{u}_1,\mathcal{D}^u_2) + \eta(\alpha_{0,i} + \alpha_{1,i} + \beta_{0,i} + \beta_{1,i})\le 4\epsilon/5 + \frac{4\eta}{\delta\epsilon/9k}\le\epsilon,$$ thus completing the induction.
\end{proof}

Henceforth fix $\eta(n,k_1+k_2,\epsilon)$ to be the $\eta$ in Lemma~\ref{lem:hypo2}.

\subsection{Collapsing Ill-conditioned Moment Matrices}
\label{subsec:collapseill}

Lastly, we illustrate how to use Lemma~\ref{lem:hypo2} to implement the same recursive conditioning strategy that we used in \textsc{N-List} to learn mixtures of subcubes, deferring the details to Appendix~\ref{app:general}. Just as we showed in Lemma~\ref{lem:collapse} that we can collapse mixtures of $k$ product distributions to mixtures of fewer product distributions provided their moment matrices are of rank less than $k$, here we show that we can do the same if their moment matrices are ill-conditioned.

\begin{lem}
	The following holds for any $\eta > 0$. Let $\mathcal{D}$ be a mixture of $k$ product distributions realized by mixing weights $\pi$ and marginals matrix $\m$ such that $$\sigma^{\infty}_{\min}(\M)\le\frac{\eta\cdot\sqrt{2}}{3k^2}.$$ Then there exists $\mathcal{D}'$ a mixture of at most $k-1$ product distributions realized by mixing weights $\pi'$ and marginals matrix $\m'$ such that $\left|\E_{\mathcal{D}}[x_S] - \E_{\mathcal{D}'}[x_S]\right|\le\eta$ for all $|S|\le k$. In particular, if we take $\eta = \eta(n,2k,\epsilon)$, then by Lemma~\ref{lem:hypo2}, $\tvd(\mathcal{D},\mathcal{D}')\le\epsilon$.\label{lem:cond_collapse}
\end{lem}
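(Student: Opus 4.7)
The strategy is to adapt Lemma~\ref{lem:collapse} from the exact- to an approximate-kernel setting: find a near-kernel vector of $\M$ and slide $\pi$ along it, tracking the moment perturbation this introduces.

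First, by definition of $\sig$, there exists $v\in\R^k$ with $\|v\|_\infty = 1$ and $\|\M v\|_\infty\le\sig(\M)\le\eta\sqrt{2}/(3k^2)$. Since the $\emptyset$-indexed row of $\M$ is $\one^\top$, we also obtain $|\sum_i v_i|\le\sig(\M)$, so $v$ is nearly orthogonal to $\one$. To force the displacement to lie exactly in the tangent space of the simplex, I would replace $v$ by $\tilde v := v - (\sum_i v_i/k)\cdot\one$. This yields $\sum_i\tilde v_i = 0$, $\|\tilde v\|_\infty\ge 1 - \sig(\M)/k$, and, using $\|\M\one\|_\infty\le k$ (since every entry of $\M$ lies in $[0,1]$), $\|\M\tilde v\|_\infty\le\|\M v\|_\infty + \sig(\M)\le 2\sig(\M)$. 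Replacing $v$ by $-v$ if necessary, I may assume the coordinate $j^\star$ attaining $\|\tilde v\|_\infty$ satisfies $\tilde v_{j^\star}<0$.

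Next, mirroring Lemma~\ref{lem:collapse}, I would slide along $\pi + t\tilde v$ for $t>0$ until the first coordinate vanishes, at some parameter $t_0$. Because $\sum_i\tilde v_i = 0$, the vector $\pi^* := \pi + t_0\tilde v$ lies in $\Delta^{k-1}$ and has at least one zero entry, so together with the corresponding submatrix of $\m$ it realizes a mixture $\mathcal{D}'$ of at most $k - 1$ product distributions. Moreover, since the $j^\star$ coordinate itself vanishes at $t = \pi^{j^\star}/|\tilde v_{j^\star}|$, we get $|t_0|\le\pi^{j^\star}/|\tilde v_{j^\star}|\le 1/\|\tilde v\|_\infty\le 2$.

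For any $S\subseteq[n]$,
\[\E_{\mathcal{D}'}[x_S] - \E_{\mathcal{D}}[x_S] = \M_S(\pi^* - \pi) = t_0\cdot(\M\tilde v)_S,\]
so
\[|\E_{\mathcal{D}'}[x_S] - \E_{\mathcal{D}}[x_S]|\le |t_0|\cdot\|\M\tilde v\|_\infty\le 4\sig(\M)\le \frac{4\sqrt{2}}{3k^2}\cdot\eta\le\eta\]
for every $k\ge 2$. This bound is uniform in $S$ and in particular gives the claim for $|S|\le k$; the $\tvd$ bound then follows immediately by invoking Lemma~\ref{lem:hypo2} with $\eta = \eta(n,2k,\epsilon)$. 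The main obstacle is the renormalization step: because $v$ is only approximately $\one$-orthogonal, one cannot slide $\pi$ along $v$ itself and remain on the simplex, which is why the projection $v\mapsto\tilde v$ is essential. Fortunately, the corrections this introduces to both $\|\tilde v\|_\infty$ and $\|\M\tilde v\|_\infty$ scale with $|\sum_i v_i|\le\sig(\M)$, which is negligible relative to the target error $\eta$.
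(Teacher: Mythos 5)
Your proof is correct and takes a genuinely different, cleaner route than the paper's. The paper decomposes the minimal singular direction $v$ into its positive and negative parts, forms the two auxiliary mixtures $\mathcal{D}_+$ and $\mathcal{D}_-$ from the corresponding sub-vectors of $\pi$, bounds $\left|\E_{\mathcal{D}_+}[x_S]-\E_{\mathcal{D}_-}[x_S]\right|\le 3\sig(\M)k$ by splitting the discrepancy, and then invokes Lemma~\ref{lem:inductive_collapse} together with Cauchy--Schwarz to bound the sliding parameter by $|t|\le k/\sqrt{2}$, landing on a final moment discrepancy of $3\sig(\M)k^2/\sqrt{2}$. You instead project $v$ orthogonally onto the hyperplane $\{x:\sum_ix_i=0\}$, observing that since the $\emptyset$-row of $\M$ equals $\one^\top$ the correction term $(\sum_iv_i/k)\one$ has magnitude controlled by $\sig(\M)$, and that this correction degrades neither $\|\tilde v\|_\infty$ (still $\ge 1-\sig(\M)/k$) nor $\|\M\tilde v\|_\infty$ (still $\le 2\sig(\M)$) by much. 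Sliding $\pi$ directly along $\tilde v$ lets you bound the stopping time $|t_0|\le 2$ using only the coordinate achieving $\|\tilde v\|_\infty$, avoiding both the auxiliary-mixture machinery and Lemma~\ref{lem:inductive_collapse}. The upshot is a tighter bound $4\sig(\M)$ on the moment discrepancy (constant in $k$ rather than quadratic), so the hypothesis $\sig(\M)\le\eta\sqrt{2}/3k^2$ gives you plenty of slack. One small point worth making explicit: the step $1/\|\tilde v\|_\infty\le 2$ requires $\sig(\M)\le k/2$, which follows from the hypothesis only after noting that the lemma is trivial for $\eta>1$ (moments lie in $[0,1]$), so one may assume $\eta\le 1$ and hence $\sig(\M)\le\sqrt{2}/(3k^2)\ll k/2$.
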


To prove this, we require the following basic fact similar in spirit to the proof of Lemma~\ref{lem:collapse}.

\begin{lem}
	For any $v\in\R^k$, there exists $t\in\R$ with $|t|\le\sqrt{k}/\norm{v}_2$ for which $\pi - t\cdot v$ has a zero entry and lies in $[0,1]^{k}$.\label{lem:inductive_collapse}
\end{lem}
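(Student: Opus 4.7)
The plan is to separate the two conclusions of the lemma. First, I would observe that the norm bound $|t|\le\sqrt{k}/\norm{v}_2$ is automatic for any $t$ with $\pi-tv\in[0,1]^k$: since both $\pi_i$ and $(\pi-tv)_i$ lie in $[0,1]$, each coordinate satisfies $|tv_i| = |\pi_i - (\pi-tv)_i|\le 1$, and summing the squares over $i$ yields $t^2\norm{v}_2^2 \le k$. So the substantive task is to exhibit some $t$ for which $\pi-tv\in[0,1]^k$ has a zero entry.

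For existence, I would first dispose of the trivial cases: if $v=0$ or $\pi$ already has a zero coordinate, take $t=0$. Otherwise I may assume $\pi>0$ entrywise and $v\neq 0$. I would then consider the feasibility interval $I = \{t : \pi - tv\in[0,1]^k\}$, a bounded closed interval whose interior contains $0$, and write its endpoints as $t_-<0<t_+$. At each endpoint $\pi - t_{\pm}v$ lies on the boundary of $[0,1]^k$, so some coordinate equals $0$ or $1$; if either endpoint has a zero coordinate I am done. Otherwise, I would suppose that all tight coordinates at $t_+$ satisfy $(\pi - t_+v)_i = 1$, and likewise $(\pi - t_-v)_j = 1$ at $t_-$, for some $i,j$. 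Since $\pi_i\le 1$ and $t_+>0$, the identity $\pi_i - t_+v_i = 1$ forces $v_i\le 0$, and symmetrically $v_j\ge 0$.

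In the degenerate case $i = j$ one gets $v_i = 0$ and $\pi_i = 1$, whence $\sum_\ell\pi_\ell = 1$ forces some other entry of $\pi$ to be zero, contradicting the reduction. In the main case $i\neq j$, with $v_i<0$ and $v_j>0$, I would compute $t_+ = (1-\pi_i)/|v_i|$ and insert into the strict inequality $(\pi - t_+v)_j>0$ (no zero tight at $t_+$) to obtain $(1-\pi_i)v_j < \pi_j|v_i|$; symmetrically at $t_-$ one obtains $(1-\pi_j)|v_i| < \pi_iv_j$. Multiplying and canceling the positive factor $|v_i|v_j$ gives $(1-\pi_i)(1-\pi_j) < \pi_i\pi_j$, i.e.\ $\pi_i + \pi_j > 1$, contradicting $\sum_\ell\pi_\ell = 1$ with $\pi\ge 0$. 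The main obstacle, as I see it, is conceptual rather than technical: once one realizes that the $\sqrt{k}/\norm{v}_2$ bound is free from feasibility alone, the lemma reduces to showing that the line $\{\pi - tv\}$ must exit $[0,1]^k$ through a $0$-face on at least one side, and it is precisely the simplex constraint $\sum_i\pi_i = 1$ that rules out the configuration in which both endpoints of $I$ lie on $1$-faces.
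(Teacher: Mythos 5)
Your proof is correct and takes the same geometric approach as the paper (move along the line $\{\pi-tv\}$ until it exits the box, then read off $t$ from the exit point), but you have filled in a step that the paper glosses over. The paper simply asserts that the two exit points of the line from $[0,1]^k$ ``correspond to values $t$ for which $\pi - t\cdot v$ has a zero entry,'' which is not true for an arbitrary interior point of the box: taking $\pi = (0.9,\,0.9)$ and $v = (-1,\,1)$, the feasibility interval is $[-0.1,\,0.1]$ and the two exit points are $(0.8,\,1)$ and $(1,\,0.8)$, neither with a zero entry. You correctly recognize that the constraint $\sum_i \pi_i = 1$ (which holds since $\pi$ is a mixing-weight vector, though the lemma statement leaves it implicit) is exactly what rules out the both-exits-on-a-$1$-face configuration, and you supply a clean case analysis with the multiplicative trick $(1-\pi_i)(1-\pi_j) < \pi_i\pi_j \Rightarrow \pi_i + \pi_j > 1$ to derive the contradiction. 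Your observation that the $\sqrt{k}/\norm{v}_2$ bound is automatic from feasibility is also the same content as the paper's one-line ``diameter'' remark, just spelled out. One small nit: if $v = 0$ and $\pi$ has no zero entry, taking $t = 0$ does not produce a zero entry; but the quantity $\sqrt{k}/\norm{v}_2$ is undefined there, so $v \neq 0$ is implicit in the statement, and this case does not arise. Overall your argument is a genuine improvement in rigor over the paper's own proof of this step.
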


\begin{proof}
If $\pi$ already has a zero entry, then we are done. Otherwise $\pi$ lies in the interior of the box $[0,1]^k$. Consider the line through $\pi$ given by $\{\pi-t\cdot v\}_{t\in\R}$. This will intersect the boundary of the box in two points, which correspond to values $t$ for which $\pi - t\cdot v$ has a zero entry. The bound on $|t|$ follows from the fact that the diameter of $[0,1]^k$ is $\sqrt{k}$.
\end{proof}

We will move $\pi$ in the direction of the minimal singular vector corresponding to $\sigma^{\infty}_{min}(\M)$ and argue by Lemma~\ref{lem:hypo2} that the resulting mixture of at most $k - 1$ product distributions is close to $\mathcal{D}$.

\begin{proof}[Proof of Lemma~\ref{lem:cond_collapse}]
	Let $\sigma^{\infty}_{\min}(\M) = \tau$. Let $v\in\R^k$ be the vector for which $\norm{\M\cdot v}_{\infty}=\tau$ and $\norm{v}_{\infty} = 1$. Denote by $S_+, S_-\subseteq[k]$ the coordinates on which $v$ is positive or negative respectively, and let $i\in[k]$ be the coordinate for which $v_i = 1$, without loss of generality. Let $Z_+ = \sum_{j\in S_+}v_j$ and $Z_- = -\sum_{j\in S_-}v_j$ and note that $|Z_+ - Z_-|\le\tau$ because $\one$ is a row of $\M$ and $1\le |Z_+|\le k$ because $v_i = 1$.

	Define $\pi_+ = v_{S_+}/Z_+, \pi_- = -v_{S_-}/Z_-, \m_+ = \m\vert^{S_+}, \m_-=\m\vert^{S_-}$ and let $\mathcal{D}_+$ and $\mathcal{D}_-$ be the mixtures of $|S_+|$ and $|S_-|$ product distributions realized by $(\pi_+,\m_+)$ and $(\pi_-,\m_-)$.

	We claim that it suffices to show that \begin{equation}\left|\E_{\mathcal{D}_+}[x_S] - \E_{\mathcal{D}_-}[x_S]\right|\le\eta\cdot\sqrt{2}/k\label{eq:psi1psi2}\end{equation} for all $|S|\le k$. Indeed, define $v^*$ to be the rescaling of $v$ by $Z_+$ in coordinates $S_+$ and by $Z_-$ in coordinates $S_-$ (i.e. the appropriate concatenation of $\pi_1$ and $-\pi_2$). By Cauchy-Schwarz, $\norm{v^*}\le\sqrt{2k}$, so by Lemma~\ref{lem:inductive_collapse} there exists a $t\in\R$ with $|t|\le\sqrt{k}/\norm{v^*}_2\le k/\sqrt{2}$ for which $\pi - t\cdot v^*$ has at most $k - 1$ nonzero coordinates. Moreover, because the sum of the entries in $v^*$ is zero by design, $\pi - t\cdot v^*\in\Delta^k$. Let $\pi'\in\R^{k-1}$ be the nonzero part of $\pi$ and $\m'$ be the corresponding columns of $\m$, and let $\mathcal{D}'$ be the mixture of at most $k-1$ product distributions realized by $(\pi',\m')$. It is clear that \begin{equation}\left|\E_{\mathcal{D}}[x_S] - \E_{\mathcal{D}'}[x_S]\right| = t\cdot\left|\E_{\mathcal{D}_+}[x_S] - \E_{\mathcal{D}_-}[x_S]\right|,\label{eq:ratio}\end{equation} so if \eqref{eq:psi1psi2} held, then by \eqref{eq:ratio} and Lemma~\ref{lem:hypo2}, $\tvd(\mathcal{D},\mathcal{D}')\le\epsilon$ as desired.

	It remains to show \eqref{eq:psi1psi2}. We know that $\norm{\M\cdot v}_{\infty}\le\tau$, and \begin{align*}
		\left|\E_{\mathcal{D}_+}[x_S] - \E_{\mathcal{D}_-}[x_S]\right| &= \left|\frac{1}{Z_+}(\M_+)_Sv_{S_+} + \frac{1}{Z_-}(\M_-)_Sv_{S_-}\right| \\
		&= \left|\frac{1}{Z_+}\M_S v + \left(\frac{1}{Z_-} - \frac{1}{Z_+}\right)(\M_-)_Sv_{S_+}\right| \\
		&\le \tau + 2\tau k\le 3\tau k,
	\end{align*} so \eqref{eq:psi1psi2} holds as long as $\tau\le\frac{\eta\cdot\sqrt{2}}{3k^2}$.
\end{proof}

In Appendix~\ref{app:general} we show how to put all of these ingredients together to learn a mixture of product distributions given \emph{arbitrary} $\esamp$-close estimates of its low-degree moments (not just estimates obtained by sampling), so in particular if the gridding procedure described in Section~\ref{sec:gridding} fails because $\sigma^{\infty}_{\min}(\M\vert_{\PP'(J\cup\{i\})})$ is small for some $i\not\in J$, where $J$ indexes a barycentric spanner for the rows of $\m$, then Lemma~\ref{lem:cond_collapse} tells us that we can learn $\mathcal{D}\vert x_{J\cup\{i\}} = s$ for each $s\in\{0,1\}^{|J\cup\{i\}|}$ by instead recursively learning distributions $\mathcal{D}_s$ which are mixtures of at most $k-1$ product distributions that are $\esamp$-close in low-degree moments to $\mathcal{D}\vert x_{J\cup\{i\}} = s$ and $\epsilon$-close in total variation distance.

\subsection{Comparison to Feldman-O'Donnell-Servedio's Algorithm}
\label{subsec:vsfos}

The algorithm of Feldman, O'Donnell and Servedio \cite{fos} also uses brute-force search to find a basis for the rows of $\m$. However instead of constructing a barycentric spanner they construct a basis that is approximately as well-conditioned as $\m$. Their algorithm proceeds by gridding the entries $\m\vert_J$. The key difference between their approach and ours is that their gridding requires granularity $O((\epsilon/n)^k)$ while ours requires only $O(\epsilon/n)$. The reason is that they try to solve for the other rows of $\m$ in the same way that we do in \eqref{eqn:solveotherrows} when learning mixtures of subcubes, that is, by solving a system of equations for each $i\not\in J$ with coefficients given by row $\m_i$. They require granularity $O((\epsilon/n)^k)$ to account for $\m$ being ill-conditioned. Just as we showed we could assume in our algorithm for mixtures of subcubes that the mixture weights had a gap of $\rho = 2^{-O(k^2)}$, \cite{fos} showed they can assume that $\m$ has a \emph{spectral} gap of $O(\epsilon/n)$ by brute-forcing singular vectors of $\m$ and appending them to $\m$ to make it better conditioned. Such a spectral gap corresponds in the worst case to an $\m$ that is $O((\epsilon/n)^k)$-well-conditioned, which in turn ends up as the granularity in their gridding procedure. As a result, the bottleneck in the algorithm of \cite{fos} is the $(n/\epsilon)^{O(k^3)}$ time spent just to grid the entries of $\m\vert_J$.

In comparison, we save a factor of $k$ in the exponent of the running time by only $O(\epsilon/n)$-gridding the entries of $\m\vert_J$. The reason is that we solve for the remaining rows of $\m$ not by solving systems of equations with coefficients in the rows $\m_i$ for $i\not\in J$, but by expressing these rows $\m_i$ as linear combinations of the rows of $\m\vert_J$, where the linear combinations have bounded coefficients. This leverages higher order multilinear moments to make the linear system better conditioned. We estimate these coefficients by solving the regression problem~\eqref{eq:regressfindas2}, and the coefficients are accurate so long as the sampling error is $O(\epsilon/n)$ times the condition number of $\M\vert_{\PP'(J\cup\{i\})}$ for $J$ the barycentric spanner of the rows of $\m$ and any $i\not\in J$. So in our algorithm, the bottlenecks leading to a $k^2$ dependence in the exponent are $(1)$ $O(\epsilon/n)$-gridding all $O(k^2)$ entries of $\m\vert_J$, $(2)$ brute-forcing $O(k)$ coordinates to condition in every one of the $\le k$ recursive steps, $(3)$ using degree-$O(k^2)$ subsets in $\PP'(J\cup\{i\})$ to ensure that when we condition on each of at most $k$ subsequent subsets $J\cup\{i\}$, the resulting mixtures are all close in low-order moments to mixtures of fewer components.

\bibliographystyle{alpha}
\bibliography{biblio}

\appendix

%!TEX root = fullpaper.tex

\section{Learning via Sampling Trees}
\label{app:samplingtree}

Recall that our algorithms for learning mixtures of subcubes and general mixtures of product distributions over $\{0,1\}^n$ both work by first running an initial subroutine that will successfully learn the distribution if certain non-degeneracy conditions are met (e.g. $\rank(\M\vert_{\PP'(J\cup\{i\})}) = k$ or $\sigma^{\infty}_{\min}(\M\vert_{\PP(J\cup\{i\})})$ is sufficiently large for all $i\not\in J$ and all realizations of $\D$). If this initial subroutine fails, some non-degeneracy condition is not met, so we can condition on all assignments to a small set of coordinates and recursively learn the resulting conditional distributions which are guaranteed to be simpler. Before analyzing these algorithms in detail, we make this recursive procedure precise.

\begin{defn}
	A \emph{sampling tree $\mathcal{T}$} is a tree whose vertices $v_{S,s}$ correspond to tuples $(S,s)$ for $S\subseteq[n]$ and $s\in\{0,1\}^{|S|}$, with the root being $v_{\emptyset,\emptyset}$. For every node $v_{S,s}$, either $v_{S,s}$ is a leaf corresponding to a distribution $\D_{S,s}$ over $\{0,1\}^{n-|S|}$, or there is a $W\subseteq[n]\backslash S$ for which $v_{S,s}$ is connected to children $v_{S\cup W, s\oplus t}$ for all $t\in\{0,1\}^{|W|}$ via edges of weight $w_{S,W,s,t}$. For any non-leaf vertex $v_{S,s}$, $\sum_{W,t}w_{S,W,s,t} = 1$.

	$\mathcal{T}$ gives an obvious procedure for sampling from $\{0,1\}^n$: randomly walk down the tree according to the edge weights, and sample from the distribution corresponding to the leaf you end up at. We call the resulting distribution the \emph{distribution associated to $\mathcal{T}$}. We can analogously define the distributions associated to (subtrees rooted at) vertices of $\mathcal{T}$.
\end{defn}

Given a mixture of product distributions $\D$, our learning algorithm will output a sampling tree $\mathcal{T}$ where for each $S\subseteq[n]$ and $s\in\{0,1\}^{|S|}$, the subtree rooted at $v_{S,s}$ corresponds to the distribution the algorithm recursively learns to approximate the posterior distribution $(\D|x_S = s)$. If $v_{S,s}$ is any vertex of $\mathcal{T}$, we can learn the subtree rooted at $v_{S,s}$ as follows. First use rejection sampling on $\D$ to get enough samples of $\D\vert x_S = s$ that all moment estimates are $\esamp$-close to their true values. We can then run our initial subroutine for learning non-degenerate mixtures. 

It either outputs both a list $\mathcal{M}$ of candidate mixtures for $(\D\vert x_S = s)$ and a list $\mathcal{U}$ of subsets of coordinates $W\subseteq[n]\backslash S$ to condition on, or it outputs \textsc{Fail} if we've already recursed $r$ times and yet $(\D\vert x_S = s)$ is not close to or exactly realizable by a mixture of at most $k - r$ product distributions. 

If the output is not \textsc{Fail}, the guarantee is that either some mixture from $\mathcal{M}$ is $O(\epsilon)$-close to $(\D\vert x_S = s)$, or some $W\in\mathcal{U}$ satisfies that $(\D\vert x_{S\cup W} = s\circ t)$ is ``simpler'' for every $t\in\{0,1\}^{|W|}$ (i.e. close to or exactly realizable as a mixture of fewer product distributions). In the latter case, the algorithm guesses $W$ and tries to recursively learn each $(\D\vert x_{S\cup W} = s\circ t)$. For every guess $W$, the algorithm gets candidate sampling trees $\D_{v_{S\cup W},s\circ t}$ to connect to $v_{S,s}$. Moreover, by guarantees we prove about the initial subroutine for learning non-degenerate mixtures, we do not need to recurse more than $k$ more times from the root $v_{\emptyset,\emptyset}$.

If the output is \textsc{Fail}, this means we incorrectly guessed $W$ at some earlier recursive step.

So in total we get a pool of $|\mathcal{M}| + |\mathcal{U}|$ candidate distributions, one of which is guaranteed to be $O(\epsilon)$-close to $(\D\vert x_S = s)$. It then remains to pick out a candidate which is $O(\epsilon)$-close, which can be done via the following well-known fact.

\begin{lem}[Scheff\'{e} tournament, see e.g. \cite{devroye2001combinatorial}]
	Given sample access to a distribution $\D$, and given a list $\mathcal{L}$ of distributions $\D'$ at least one of which satisfies $\tvd(\D,\D')\le\epsilon$, there is an algorithm \textsc{Select}($\mathcal{L},\D$) which outputs a distribution $\D''\in\mathcal{L}$ satisfying $\tvd(\D,\D'')\le 9.1\epsilon$ using $O(\epsilon^{-2}\log|\mathcal{L}|)$ samples from $\D$ and in time $O(\epsilon^{-2}|\mathcal{L}|^2\log|\mathcal{F}|T)$, where $T$ is the time to evaluate the pdf of any distribution in $\mathcal{L}$ on a given point.\label{lem:scheffe}
\end{lem}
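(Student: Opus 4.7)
The plan is to run the classical Scheff\'e tournament, essentially following \cite{devroye2001combinatorial}. For each ordered pair $(\D_i,\D_j)\in\mathcal{L}^2$, I would first compute the Scheff\'e set $A_{ij}:=\{x: \D_i(x)>\D_j(x)\}$, together with the ``claimed'' probabilities $p_i^{ij}:=\D_i(A_{ij})$ and $p_j^{ij}:=\D_j(A_{ij})$. These are computable in $O(T)$ work per pair using the assumed pdf oracles, and they satisfy the crucial identity $\tvd(\D_i,\D_j)=p_i^{ij}-p_j^{ij}$, so that $A_{ij}$ is the optimal discriminator between $\D_i$ and $\D_j$.

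Next, I would draw $N=O(\epsilon^{-2}\log|\mathcal{L}|)$ samples from $\D$ and, for each of the $\binom{|\mathcal{L}|}{2}$ pairs, record the empirical frequency $\hat{p}_{ij}$ of $A_{ij}$. By Hoeffding plus a union bound over all pairs, with probability at least $9/10$ we have $|\hat{p}_{ij}-\D(A_{ij})|\le\epsilon$ simultaneously for every $(i,j)$; condition on this good event. The tournament itself then declares $\D_i$ to ``beat'' $\D_j$ if $|\hat{p}_{ij}-p_i^{ij}|\le|\hat{p}_{ij}-p_j^{ij}|$, and the algorithm outputs the $\D''$ whose worst loss $\max_j|\hat{p}_{ij}-p_j^{ij}|$ among matches it loses is minimized (or, equivalently in the Devroye-Lugosi presentation, the hypothesis maximizing the number of wins). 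Each pairwise comparison is $O(1)$ after the estimates $\hat{p}_{ij}$ have been computed, yielding the claimed $O(\epsilon^{-2}|\mathcal{L}|^2\log|\mathcal{L}|\cdot T)$ time.

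The heart of the argument is a short chain of triangle inequalities. Let $\D^*\in\mathcal{L}$ be a hypothesis with $\tvd(\D,\D^*)\le\epsilon$. For any $\D_j$, setting $A=A_{*j}$, the good event gives $|\hat{p}_{*j}-\D(A)|\le\epsilon$, and $|\D^*(A)-\D(A)|\le\tvd(\D^*,\D)\le\epsilon$, so $|\hat{p}_{*j}-\D^*(A)|\le2\epsilon$. If $\D^*$ loses to $\D_j$ in the tournament, then $|\hat{p}_{*j}-\D_j(A)|\le2\epsilon$ too, hence $\tvd(\D^*,\D_j)=\D^*(A)-\D_j(A)\le4\epsilon$ and consequently $\tvd(\D,\D_j)\le5\epsilon$. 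Thus $\D^*$ beats every $\D_j$ with $\tvd(\D,\D_j)>5\epsilon$.

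Finally, one translates this per-pair guarantee into a guarantee for the output $\D''$. In the ``minimize worst loss'' version of the tournament, any $\D''$ with a smaller worst loss than $\D^*$ can only lose matches to hypotheses against which $\D^*$ also performs badly; tracing the triangle inequality one more time (through $\D^*$) gives $\tvd(\D,\D'')\le 9.1\epsilon$, where the constant is precisely the one tabulated in \cite{devroye2001combinatorial}. I expect the main obstacle to be the careful bookkeeping of this last constant $9.1$ rather than the $3$ or $5$ one obtains from a naive argument; this tightening is done in the Devroye-Lugosi analysis by applying the Scheff\'e identity directly rather than chaining total variation inequalities, and I would simply cite their analysis for this step.
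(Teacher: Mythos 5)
The paper does not prove this lemma at all — it is stated as a known fact and cited directly to \cite{devroye2001combinatorial}, so there is no ``paper proof'' to compare against. Your reconstruction of the standard Scheff\'e tournament is essentially correct and in line with the Devroye--Lugosi argument that the citation refers to: define Scheff\'e sets $A_{ij}$, use the identity $\tvd(\D_i,\D_j) = \D_i(A_{ij}) - \D_j(A_{ij})$, estimate $\D(A_{ij})$ for all pairs simultaneously via Hoeffding plus a union bound, and output a hypothesis that does well in the resulting tournament.

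One remark on the bookkeeping you flag at the end. The cleanest route to the constant is not the ``$\D^*$ beats every $\D_j$ with $\tvd(\D,\D_j)>5\epsilon$'' per-pair statement, but the minimum-distance-estimator form: define $\Delta_j := \max_i |\D_j(A_{ij}) - \hat{p}_{ij}|$, note $\Delta_{j^*}\le \tvd(\D,\D_{j^*}) + \Delta_{\mathrm{samp}}\le\epsilon + \Delta_{\mathrm{samp}}$, output $k$ minimizing $\Delta_k$, and then chain $\tvd(\D_k,\D_{j^*})\le\Delta_k+\Delta_{j^*}$ to get $\tvd(\D,\D_k)\le 3\epsilon + 2\Delta_{\mathrm{samp}}$. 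The constant $9.1$ arises from allowing the sampling error $\Delta_{\mathrm{samp}}$ to be a constant multiple of $\epsilon$ (roughly $3\epsilon$), which keeps the sample complexity at $O(\epsilon^{-2}\log|\mathcal{L}|)$ while giving a clean number. Your version, which insists $\Delta_{\mathrm{samp}}\le\epsilon$, actually yields a better factor of $5$, at the cost of a larger hidden constant in the sample bound; either is acceptable. The one step you leave genuinely imprecise is ``any $\D''$ with a smaller worst loss than $\D^*$ can only lose matches to hypotheses against which $\D^*$ also performs badly'' — this intuition is roughly right but doesn't by itself give the bound; the $\Delta_k+\Delta_{j^*}$ chain above is the correct way to finish, and since you ultimately defer to the reference for the constant, this is a cosmetic rather than a substantive gap. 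Also note the $|\mathcal{F}|$ in the paper's time bound appears to be a typo for $|\mathcal{L}|$, which you correctly interpreted.
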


\begin{remark}
	For mixtures of \emph{subcubes}, our initial subroutine for learning non-degenerate mixtures has stronger guarantees: it outputs a \emph{single} mixture which is guaranteed to be close to $(\D\vert x_S = s)$, a collection $\mathcal{U}$ of subsets $W$, or \textsc{Fail}.
\end{remark}

One minor subtlety is that for certain $S,s$, $\Pr_{\D}[x_S = s]$ may be so small that rejection sampling will not give us enough samples from $(\D\vert x_S = s)$, and the subtree rooted at $v_{S,s}$ will end up looking very different from $(\D\vert x_S =s)$. But this is fine because in sampling from $\mathcal{T}$, we will reach $v_{S,s}$ so rarely that if $\D^*$ is the distribution associated to $\mathcal{T}$, $\tvd(\D^*,\D)$ is still very small.

The above discussion is summarized in Algorithm~\ref{alg:final} below, where \textsc{NonDegenerateLearn} is the abovementioned initial subroutine for learning non-degenerate mixtures. Formally, it outputs a list $\mathcal{M}$ of candidate mixtures as well as a list $\mathcal{U}$ of subsets $W\subseteq[n]\backslash S$ to be conditioned on. The list might contain a distribution close to $\D$, but if not, $\mathcal{U}$ will contain some $W$ such that conditioning on $x_W = s$ for any $s\in\{0,1\}^{|W|}$ will yield a ``simpler'' distribution.

\begin{figure}[ht]
\centering
\myalg{alg:final}{N-List}{
	Parameters: $\esamp$, $\tau_{trunc}$, $\delta_{edge}$, $\epsilon_{select}$, $N$

	Input: Mixture of subcubes/product distributions $\D$, $S\subseteq[n], s\in\{0,1\}^{|S|}$, counter $k$

	Output: List of sampling trees rooted at node $v_{S,s}$, one of which is guaranteed to be close to $(\D\vert x_S = s)$

	\begin{enumerate}
		\item Initialize output list $\mathcal{S}$ to be empty.
		\item Draw $2N/\tau_{trunc}$ samples $y$ from $\D$ and keep those for which $y_S = s$ as samples from $(\D\vert x_S = s)$
		\item Run \textsc{NonDegenerateLearn}($\D\vert x_S = s$, $k$).
		\item If the output is \textsc{Fail}, return \textsc{Fail}. Otherwise, the output is a list $\mathcal{M}$ of candidate mixtures and/or a list $\mathcal{U}$ of candidate subsets $W\subseteq[n]\backslash S$ to condition on.
		\item For each mixture in $\mathcal{M}$, add to $\mathcal{S}$ the sampling tree given by the single node $v_{S,s}$ with distribution equal to this mixture.
		\item If $k > 1$, then for each $W\in\mathcal{U}$:
		\begin{enumerate}[(i)]
			\item For every $t\in\{0,1\}^{|W|}$, run \textsc{N-List}($\D, S\cup W, s\circ t, k-1$) to get some list of sampling trees $\mathcal{T}_t$ or \textsc{Fail}. If we get \textsc{Fail} for any $t$, skip to the next $W$.
			\item Empirically estimate $\E_{y\in\D}[y_W = t | y_S = s]$ to within $\delta_{edge}$ using the samples from $(\D\vert x_S = s)$. \label{step:weights}
			\item For each $\mathcal{T}_t$: connect $v_{S,s}$ to the root $v_{S\cup W,s\circ t}$ of $\mathcal{T}_t$ with edge weight $w_{S,W,s,t}$ for every $t\in\{0,1\}^{|W|}$ and add this sampling tree to $\mathcal{S}$.
		\end{enumerate}
		\item Return $\textsc{Select}(\mathcal{S},\D,\epsilon_{select})$.
	\end{enumerate}
}
\end{figure}

Our implementations of \textsc{NonDegenerateLearn} will interact solely with estimates of moments of the input distribution, so in our analysis it will be convenient to assume that these estimates are accurate.

\begin{defn}
	Let $\esamp(\cdot):\Z_+\to[0,1]$ be a decreasing function. We say a run of \textsc{NonDegenerateLearn} on some counter $k$ and some $(\D\vert x_S = s)$ is \emph{$\esamp(k)$-sample-rich} if enough samples are drawn from $\D$ that all moment estimates used are $\esamp(k)$-close to their true values.

	For $\delta_{edge},\tau_{trunc}>0$, we say a run of \textsc{N-List} on distribution $\D$ is \emph{$(\esamp(\cdot),\delta_{edge},\tau_{trunc})$-sample-rich} if enough samples are drawn from $\D$ that every invocation of \textsc{NonDegenerateLearn} on counter $k$ and $(\D\vert x_S = s)$ for which $\Pr_{y\sim\D}[y_S = s]\ge\tau_{trunc}$ is $\esamp(k)$-sample rich, and such that every transition probability computed in an iteration of Step~\ref{step:weights} is estimated to within $\delta_{edge}$ error.
\end{defn}

Because the runtimes of our algorithms for learning mixtures of subcubes and mixtures of product distributions are rather different, the kinds of guarantees we need for \textsc{NonDegenerateLearn} are somewhat different. We therefore defer proofs of correctness of \textsc{N-List} for mixtures of subcubes and general mixtures to Appendix~\ref{app:subcube} and Appendix~\ref{app:general} respectively. We can however give a generic runtime analysis for \textsc{N-List} now. We will use the following basic facts.

\begin{fact}
	Suppose $\E_{\D}[x_S = s]\ge\tau_{trunc}$. Then if $2N/\tau_{trunc}$ samples are drawn from $\D$, with probability $1 - e^{-N/4}$ at least $N$ samples $x$ will satisfy $x_S = s$.\label{fact:1}
\end{fact}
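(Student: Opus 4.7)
The plan is to view the number of successes as a binomial random variable and apply the standard multiplicative Chernoff bound. Let $M = 2N/\tau_{trunc}$ be the number of samples drawn, and for $i \in [M]$ let $Z_i$ be the indicator that the $i$-th sample $x^{(i)}$ satisfies $x^{(i)}_S = s$. By hypothesis, $p := \Pr_{x \sim \D}[x_S = s] \ge \tau_{trunc}$, so the $Z_i$ are i.i.d.\ Bernoulli$(p)$ and $Y := \sum_{i=1}^M Z_i$ is Binomial$(M, p)$ with $\mu := \E[Y] = Mp \ge 2N$.

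The goal is to show $\Pr[Y < N] \le e^{-N/4}$. First I would reduce to the case $\mu = 2N$: since $Y$ is monotone in $p$ (under the natural coupling), it suffices to prove the bound when $p = \tau_{trunc}$, and any improvement when $\mu > 2N$ only strengthens the conclusion. Then I would apply the standard multiplicative Chernoff tail bound
\[
\Pr\bigl[Y \le (1-\delta)\mu\bigr] \le \exp\!\left(-\delta^2 \mu/2\right) \qquad \text{for } \delta \in (0,1),
\]
with $\delta = 1/2$ and $\mu = 2N$. This yields $\Pr[Y < N] \le \exp(-(1/4)(2N)/2) = e^{-N/4}$, as required.

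The only mild subtlety is the monotonicity reduction (to eliminate the ``$\ge$'' in the hypothesis on $p$), but this is a one-line coupling argument and certainly not the main obstacle. Indeed there is no real obstacle here; the fact is a routine concentration statement whose purpose is just to justify that the rejection-sampling step in \textsc{N-List} recovers enough conditional samples whenever the conditioning event has probability at least $\tau_{trunc}$.
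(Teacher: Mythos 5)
The paper states this as a ``Fact'' without proof, treating it as a routine concentration bound, so there is no official argument to compare against. Your proof is correct and is exactly the argument one would expect the authors had in mind: model the count $Y$ as Binomial$(M,p)$ with $M = 2N/\tau_{trunc}$ and $p \ge \tau_{trunc}$, observe $\mu = Mp \ge 2N$, reduce to $\mu = 2N$ by stochastic domination, and apply the lower-tail multiplicative Chernoff bound $\Pr[Y \le (1-\delta)\mu] \le \exp(-\delta^2\mu/2)$ with $\delta = 1/2$ to get $\Pr[Y < N] \le \Pr[Y \le N] \le e^{-N/4}$. The arithmetic checks out, the monotonicity reduction is sound, and the only cosmetic caveat (that $2N/\tau_{trunc}$ should really be $\lceil 2N/\tau_{trunc}\rceil$) is immaterial. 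No gaps.
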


\begin{fact}
	Fix $S\subseteq[m]$. If $(3/\epsilon^2)\ln(2/\rho)$ samples are taken from a distribution $\D$ over $\{0,1\}^{m}$, then $\left|\tilde{\E}_{\D}[x_S] - \E_{\D}[x_S]\right|>\epsilon$ with probability at most $\rho$.\label{fact:2}
\end{fact}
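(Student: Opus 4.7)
The plan is to recognize that this is a direct application of Hoeffding's inequality to an indicator random variable. First, observe that for any sample $x \in \{0,1\}^m$ drawn from $\D$, the quantity $x_S = \prod_{i\in S} x_i$ is itself a $\{0,1\}$-valued random variable, equal to $1$ precisely when $x_i = 1$ for every $i \in S$. Hence the empirical estimator $\tilde{\E}_{\D}[x_S]$ obtained from $N = (3/\epsilon^2)\ln(2/\rho)$ i.i.d.\ samples $x^{(1)},\ldots,x^{(N)}$ is simply the sample mean $\frac{1}{N}\sum_{j=1}^N x^{(j)}_S$ of $N$ i.i.d.\ Bernoulli random variables with mean $p := \E_{\D}[x_S]$.

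Next, I would invoke Hoeffding's inequality, which for the sample mean of $N$ i.i.d.\ $[0,1]$-valued random variables with true mean $p$ gives
\[
\Pr\!\left[\,\left|\tilde{\E}_{\D}[x_S] - p\right| > \epsilon\,\right] \;\le\; 2\exp(-2N\epsilon^2).
\]
Substituting $N = (3/\epsilon^2)\ln(2/\rho)$ yields $2\exp(-6\ln(2/\rho)) = 2(\rho/2)^6 = \rho^6/32 \le \rho$ for all $\rho \in (0,1]$, which is the claimed bound.

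There is essentially no obstacle here: the only thing to verify is that $x_S$ really is a Bernoulli random variable (so that it lies in $[0,1]$ and Hoeffding applies), and that the constant $3$ in the sample-complexity bound is large enough to absorb the factor of $2$ in Hoeffding's tail bound, both of which are immediate.
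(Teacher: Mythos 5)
The paper states Fact~\ref{fact:2} without proof, treating it as a standard concentration inequality. Your proof via Hoeffding's inequality is correct and is exactly the argument one would expect: $x_S$ is a Bernoulli indicator, Hoeffding gives $\Pr[|\tilde{\E}_{\D}[x_S]-\E_{\D}[x_S]|>\epsilon]\le 2e^{-2N\epsilon^2}$, and substituting $N=(3/\epsilon^2)\ln(2/\rho)$ yields $2(\rho/2)^6\le\rho$ for $\rho\in(0,1]$. Nothing to add.
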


% Factors out runtime analysis for both mixtures of subcubes and mixtures of product distributions 
\begin{lem}
	Suppose \textsc{NonDegenerateLearn} on any input distribution and counter $k$ always uses at most $Z$ different moments, returns $\mathcal{M}$ of size at most $M$ and $\mathcal{U}$ of size at most $U$ and consisting of subsets of size at most $S$, and takes time at most $T(r)$. If $\delta_{edge}\le\esamp(k)$, then achieving an $(\esamp(\cdot),\delta_{edge},\tau_{trunc})$-sample-rich run of \textsc{N-List} on a given distribution and counter $k$ with probability $1 - \delta$ requires \begin{equation}O(\epsilon_{samp}(k)^{-2}\ln(1/\delta)\ln(Z) + \epsilon^{-2}_{select}\cdot\poly(n,k)\log(M+U))\cdot(2^{Sk}U^k)^{1+o(1)}/\tau_{trunc} + T(k)\cdot 2^{Sk}U^k\label{eq:time}\end{equation} time and \begin{equation}O(\epsilon_{samp}(k)^{-2}\ln(1/\delta)\ln(Z) + \epsilon^{-2}_{select}\log(M+U))\cdot(2^{Sk}U^k)^{1+o(1)}/\tau_{trunc}\label{eq:sample}\end{equation} samples.\label{lem:runtime}
\end{lem}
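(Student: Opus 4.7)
The plan is to bound the size of the recursion tree generated by \textsc{N-List} and then analyze the per-node cost, closing with a union bound over nodes. Each invocation at counter $k$ either terminates or calls \textsc{N-List} with counter $k-1$ on $(S\cup W, s\circ t)$ for every $W\in\mathcal{U}$ and $t\in\{0,1\}^{|W|}$; since $|W|\le S$ and $|\mathcal{U}|\le U$ and the counter strictly decreases with each call, the recursion tree has at most $(2^S U)^{k+1}$ nodes, which fits into the $(2^{Sk}U^k)^{1+o(1)}$ factor appearing in \eqref{eq:time} and \eqref{eq:sample}.

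Next I would analyze the per-node cost. At a node $v_{S,s}$ with $\Pr_{y\sim\D}[y_S = s]\ge\tau_{trunc}$, Fact~\ref{fact:1} tells us that drawing $2N/\tau_{trunc}$ samples from $\D$ yields at least $N$ samples satisfying $y_S=s$, except with probability $e^{-N/4}$. By Fact~\ref{fact:2} and a union bound over the $Z$ moments queried by \textsc{NonDegenerateLearn}, taking $N = O(\esamp(k)^{-2}\ln(Z/\delta_{\text{node}}))$ ensures all moment estimates are $\esamp(k)$-accurate with failure probability at most $\delta_{\text{node}}$; the at most $U\cdot 2^S$ edge-weight estimates in Step~\ref{step:weights} come along for free since $\delta_{edge}\le\esamp(k)$. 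An additional $O(\epsilon_{select}^{-2}\ln((M+U)/\delta_{\text{node}}))$ samples from $(\D\vert x_S = s)$ suffice for the \textsc{Select} call by Lemma~\ref{lem:scheffe}, which additionally costs time $O(\epsilon_{select}^{-2}(M+U)^2\log(M+U)\cdot\poly(n,k))$ for pdf evaluations, and \textsc{NonDegenerateLearn} itself contributes time $T(k)$.

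To obtain overall failure probability at most $\delta$, I would set $\delta_{\text{node}} = \delta\cdot(2^S U)^{-k}$ and union-bound over all nodes of the recursion tree. The resulting $k\ln(2^S U)$ term in $\ln(1/\delta_{\text{node}})$ gets absorbed into the $(2^{Sk}U^k)^{1+o(1)}$ factor, and using the loose bound $\ln(Z/\delta_{\text{node}})\le\ln(Z)\cdot\ln(1/\delta)$ (valid for $Z, 1/\delta\ge e$, up to absorbed constants) produces the $\esamp(k)^{-2}\ln(1/\delta)\ln Z$ and $\epsilon_{select}^{-2}\ln(M+U)$ terms that appear in the lemma statement. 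Multiplying the per-node sample cost by the at most $(2^S U)^k$ nodes and dividing by $\tau_{trunc}$ (to account for the rejection sampling overhead) yields \eqref{eq:sample}; adding the aggregate cost $T(k)\cdot 2^{Sk}U^k$ of the \textsc{NonDegenerateLearn} invocations and the Scheff\'{e} evaluation cost yields \eqref{eq:time}.

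The only point requiring care is that nodes with $\Pr_{y\sim\D}[y_S = s]<\tau_{trunc}$ are excluded from the definition of sample-richness, so we need not worry that rejection sampling fails to supply enough samples at such nodes; their cumulative contribution to $\tvd(\D,\D^*)$ is controlled separately in the truncation analysis of the subsequent appendices. Beyond this, the argument is routine bookkeeping: the main obstacle is simply carefully tracking how the branching factor $2^S U$, recursion depth $k$, and per-node failure probability $\delta_{\text{node}}$ all combine, and verifying that the loose inequalities used to match the exact form of the bound are absorbed into the $(2^{Sk}U^k)^{1+o(1)}$ overhead.
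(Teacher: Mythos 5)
Your proposal is correct and mirrors the paper's proof essentially step for step: the paper derives recurrences giving $N_1(k), N_2(k), N_3(k)\le 2^{Sk}U^k$ for the number of \textsc{NonDegenerateLearn} calls, edge-weight estimations, and \textsc{Select} invocations, and the rest is the same combination of Fact~\ref{fact:1}, Fact~\ref{fact:2}, and Lemma~\ref{lem:scheffe} with a per-statistic union bound over the at most $(2^SU)^k$ nodes of the recursion tree, followed by the same loose absorption of the $\ln(2^{Sk}U^k)$ term into the $(2^{Sk}U^k)^{1+o(1)}$ factor. One small caveat, which your argument inherits from the lemma statement rather than introduces: the phrase ``come along for free since $\delta_{edge}\le\esamp(k)$'' reads backwards --- if $\delta_{edge}\le\esamp(k)$, the transition probabilities demand a \emph{stricter} tolerance than the moments, not a looser one; the paper's own proof only guarantees $\esamp(k)$-accuracy on those statistics and its applications (e.g.\ the proof of Theorem~\ref{thm:main012}) take $\delta_{edge}\gg\esamp$, so the inequality direction in the lemma statement appears to be a typo for $\delta_{edge}\ge\esamp(k)$.
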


\begin{proof}
	The only places where we need to take samples are to estimate $N$ moments in each invocation of \textsc{NonDegenerateLearn}, to estimate transition probabilities $\Pr_{y\sim\D}[y_j = t\vert x_S = s]$ in each iteration of Step~\ref{step:weights}, and to run \textsc{Select}. Denote by $N_1(k), N_2(k)$, $N_3(k)$ the maximum possible number of invocations of \textsc{NonDegenerateLearn}, estimations of transition probabilities, and the number of invocations of \textsc{Select} in a run of \textsc{N-List} on a distribution and a counter $k$. Then $N_1(k)\le 1 + N_1(k-1)\cdot U\cdot 2^S$, $N_2(k)\le U\cdot 2^S + N_2(k-1)\cdot U\cdot 2^S$, and $N_3(k)\le 1 + N_3(k-1)\cdot U\cdot 2^S$. But $N_1(1), N_3(1) = 1$ and $N_2(1) = 0$, so unwinding the recurrences and noting that $2^S\cdot U\ge 2$, we get that $N_1(k), N_2(k), N_3(k)\le 2^{Sk}\cdot U^k$.

	For \textsc{NonDegenerateLearn} and the transition probabilities, we need to estimate at most $Z$ moments of some $(\D\vert x_S = s)$ in each invocation of \textsc{NonDegenerateLearn} and $N_2(k)$ statistics of the form $\Pr_{y\sim(\D\vert x_S = s)}[y_T = t]$, and we require that for $S,s$ such that $\Pr_{\D}[x_S = s]\ge\tau_{trunc}$, our estimates are $\esamp(k)$-close. For such $S,s$, by Fact~\ref{fact:1} we can simulate $N$ draws from $(\D\vert x_S = s)$ using $2N/\tau_{trunc}$ draws from $\D$ with probability at least $1 - e^{-N/4}$. By Fact~\ref{fact:2}, if we set $N = (3/\esamp(k)^2)\ln(2/\rho)$ for some $\rho>0$, then we can estimate some $\Pr_{y\sim(\D\vert x_S = s)}[y_T = t]$ to within error $\esamp(k)$ with probability at least $1 - \rho$. In this case, $N > 4\ln(1/\rho)$, so the probability that we fail to estimate this statistic to within error $\esamp(k)$ is at most $2\rho$. By a union bound over all $Z\cdot N_1(k) + N_2(k)$ statistics, the probability we fail to get an $(\esamp(\cdot),\delta_{edge},\tau_{trunc})$-sample-rich run of \textsc{N-List} is at most $2\rho(Z\cdot N_1(k) + N_2(k))\le 2\rho\cdot(Z + 1)2^{Sk}U^k$, so by taking $\rho = \delta/(4(Z+1)2^{Sk}U^k)$, we ensure the run is $(\esamp(\cdot),\delta_{edge},\tau_{trunc})$-rich with probability at least $1 - \delta/2$. In total, this requires $$(2N/\tau_{trunc})\cdot(N_1(k) + N_2(k)) = O((1/\esamp(k)^2)\cdot(2^{Sk}U^k)^{1+o(1)}\cdot\ln(Z)\ln(1/\delta)/\tau_{trunc})$$ samples. In addition to drawing samples for \textsc{NonDegenerateLearn} and the transition probabilities, we also need time at most $T(k)$ for each invocation of \textsc{NonDegenerateLearn}, for a total of $T(k)\cdot 2^{Sk}U^k$ time.

	For \textsc{Select}, we need to use Lemma~\ref{lem:scheffe} $N_3(k)$ times. Note that the list of candidates is always at most $M + U$, so for each invocation of \textsc{Select} on $(\D\vert x_S = s)$ for which $\Pr_{\D}[x_S = s]\ge\tau_{trunc}$, we require $O(\epsilon_{select}^{-2}\log(M+U))$ samples from $(\D\vert x_S = s)$, which can be done using $O(\epsilon_{select}^{-2}\log(M+U)/\tau_{trunc})$ samples from $\D$. In total, this requires $N_3(k)\cdot O(\epsilon_{select}^{-2}\log(M+U)/\tau_{trunc}) = O(\epsilon_{select}^{-2}2^{Sk}U^k\log(M+U)/\tau_{trunc})$ samples. The time to evaluate the pdf of a sampling tree is obviously $\poly(n,k)$, so $N_3(k)$ invocations of \textsc{Select} requires time at most $O(\epsilon_{select}^{-2})\cdot(2^{Sk}U^{k+2}\log(M+U))\cdot\poly(n,k)$.

	Putting this all together gives the desired time and sample complexity.
\end{proof}

%!TEX root = fullpaper.tex

\section{Learning Mixtures of Subcubes}
\label{app:subcube}

\textsc{N-List} and \textsc{GrowByOne} in Section~\ref{sec:subcubes_pre} were described under the assumption that we had exact access to the accessible entries of $\EE$, when in reality we only have access to them up to some sampling noise $\esamp > 0$ (we fix this parameter $\esamp$ later). In this section, we show how to remove the assumption of zero sampling noise and thereby give a complete description of the algorithm for learning mixtures of subcubes.

Throughout this section, we fix a $[\tau_{small},\tau_{big}]$-avoiding rank-$k$ realization of $\D$ by mixing weights $\pi$ and marginals matrix $\m$ such that $\M'$ has $k'$ columns. Here, recall that $\M'$ denotes the submatrix of $\M$ of columns corresponding to mixing weights that are at least $\tau_{big}$. We will use $\tilde{\E}[x_S]$ to denote any $\esamp$-close estimate of $\E[x_S]$ and $\tilde{\EE}$ to denote a matrix consisting of $\esamp$-close estimates of the accessible entries of $\EE$. Note that we only ever use particular submatrices of $\tilde{\EE}$ of reasonable size in our algorithm, so at no point will we need to instantiate all entries of $\tilde{\EE}$.

\subsection{Robustly Building a Basis}
\label{subsec:robustreg}
% describe grow-by-one procedure and prove its guarantees

Here we describe and prove guarantees for a sampling noise-robust implementation of \textsc{GrowByOne}. Recall that every time we reach step~\ref{step:growbyonefindbasis} of \textsc{GrowByOne}, we are appending to the basis $\B = \{T_1,...,T_r\}$ a subset of $\{T_1\cup\{i\},...,T_r\cup\{i\}\}$ so that the corresponding columns in $\EE\vert_{\PP'(J\cup\{i\})}$ form a basis for columns $T_1,...,T_r,T_1\cup\{i\},...,T_r\cup\{i\}$, where as usual $J = T_1\cup\cdots\cup T_r$.

One way to pick out the appropriate columns to add is to solve at most $r$ linear systems of the following form. Suppose we have already added some indices to $\B$ so that the corresponding columns of $\EE\vert_{\PP'(J\cup\{i\})}$ span columns $T_1,...,T_r,T_1\cup\{i\},...,T_{m-1}\cup\{i\}$ for some $m\le r$.

% Define $C := \EE\vert^{\mathcal{B}}_{\PP'(J\cup\{i\})}$. 

% for which $\norm{\Delta}_{\max}\le\esamp$ for sampling error $\esamp>0$. It will be useful to remark now that this also means $\norm{\Delta}_{\infty}\le k\esamp$.

To check whether to add some $T'\subseteq[n]$ to $\B$, we could simply check whether there exists $\alpha^{T'}\in\R^{|\B|}$ for which \begin{equation}\EE\vert^{\mathcal{B}}_{\PP'(J\cup T')}\alpha^{T'} = \EE\vert^{T'}_{\PP'(J\cup T')}.\label{eq:presystem}\end{equation} In reality however, we only have access to $\tilde{\EE}$, so instead of solving \eqref{eq:presystem}, we will solve the regression problem \begin{equation}\tilde{\alpha}^{T'} := \argmin_{\alpha\in\R^{|\B|}}\norm{\tilde{\EE}\vert^{\mathcal{B}}_{\PP'(J\cup T')}\alpha - \tilde{\EE}\vert^{T'}_{\PP'(J\cup T')}}_{\infty}.\label{eq:system}\end{equation} Denote by $\epsilon(\tilde{E},T',\mathcal{B})$ the corresponding $L^{\infty}$ error of the optimal solution; where the context is clear, we will refer to this as $\epsilon_{err}$.

We can now give the following robust version of Lemma~\ref{lem:certified1} and Lemma~\ref{lem:certified2}.

% USING EXPECTATIONS MATRIX (ROBUST)
\begin{lem}[Robust version of Lemma~\ref{lem:certified1} and Lemma~\ref{lem:certified2}]
	There exist large enough constants $\Cr{robust}, \Cr{taubigc}>0$ for which the following holds. Fix a $[\tau_{small},\tau_{big}]$-avoiding rank-$k$ realization of $\D$, and let $\esamp < k^{-\Cl[c]{robust}k^2}\tau_{big}$ and $\rho = k^{-\Cr{taubigc}k^2}$. Let $\B = \{T_1,...,T_r\}$ be such that the rows of $\M'\vert_{\B}$ are linearly independent, and fix $T'\subseteq[n]$ for which $|J\cup T'|\le k'$, where $k'$ is the number of columns of $\M'$ and $J = T_1\cup\cdots\cup T_r$. Let $\tilde{\C}\vert^{\B}_{\PP'(J\cup T')}$ be any matrix of moment estimates satisfying $\norm{\tilde{\EE}\vert^{\B}_{\PP'(J\cup T')} - \EE\vert^{\B}_{\PP'(J\cup T')}}_{\max}\le\esamp$.

	\begin{itemize}
		\item If $\rank(\M'\vert_{\PP'(J\cup T')}) = k'$ and $\M'_{T'}$ is not in the span of $\{\M'_{T'}\}_{T\in\mathcal{B}}$, then $\epsilon_{err}\ge\frac{1}{2}k^{-\Cr{sigma}k^2}\tau_{big}$.

		\item If $\M'_{T'}$ is in the span of $\{\M'_T\}_{T\in\mathcal{B}}$ so that there exists $\alpha^{T'}\in\R^{|\mathcal{B}|}$ for which \begin{equation}\M'_{T'} = \sum_{T\in\mathcal{B}}\alpha^{T'}_T\M'_T,\label{eq:constraint5}\end{equation} then $\epsilon_{err} < k^{-\Cl[c]{robust2}k^2}\tau_{big}$ for some $\Cr{robust2}>\Cr{sigma}$.
	\end{itemize}\label{lem:certified_noisy}
\end{lem}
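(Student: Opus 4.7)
The plan is to exploit a decomposition of $\EE$ into a ``large-weight'' term supported on $\M'$ and a ``small-weight'' perturbation of norm $O(k\cdot\tau_{small})$, and then combine this with the well-conditioning guarantees provided by Lemmas~\ref{lem:cond_number} and \ref{lem:cond_number_e}. Concretely, splitting $\pi = \pi^{\ge}\sqcup\pi^{<}$ at the $\tau_{big}$ threshold and writing $\M = [\M'\vert\M'']$, Observation~\ref{obs:access} gives
$$\EE\vert^{\mathcal{B}'}_{\PP'(J')} = \M'\vert_{\PP'(J')}\cdot\diag(\pi^{\ge})\cdot(\M'\vert_{\mathcal{B}'})^{\top} + \M''\vert_{\PP'(J')}\cdot\diag(\pi^{<})\cdot(\M''\vert_{\mathcal{B}'})^{\top}$$
for any $\mathcal{B}'$ and $J'$, where the second term contributes at most $\tau_{small}$ per entry times the $L_{1}$ norm of the vector it is applied to. The two bullets of the lemma then reduce to pushing this decomposition through the regression residual.

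For the first bullet, I would augment $\mathcal{B}$ to $\mathcal{B}\cup\{T'\}$: by hypothesis the rows of $\M'\vert_{\mathcal{B}\cup\{T'\}}$ are linearly independent (so in particular $|\mathcal{B}|+1\le k'$) and $\rank(\M'\vert_{\PP'(J\cup T')}) = k'$ with $|J\cup T'|\le k'$, so Lemma~\ref{lem:cond_number_e} applies and yields $\sig(\EE\vert^{\mathcal{B}\cup\{T'\}}_{\PP'(J\cup T')})\ge k^{-\Cr{sigma}k^2}\tau_{big}$. Setting $\beta = (\tilde{\alpha}^{T'},-1)$, the quantity whose $L_{\infty}$ norm is $\epsilon_{err}$ equals $\tilde{\EE}\vert^{\mathcal{B}\cup\{T'\}}_{\PP'(J\cup T')}\beta$, which is within $\esamp\cdot\norm{\beta}_{1}\le\esamp(|\mathcal{B}|+1)\norm{\beta}_{\infty}$ of $\EE\vert^{\mathcal{B}\cup\{T'\}}_{\PP'(J\cup T')}\beta$ in $L_{\infty}$. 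Since $\norm{\beta}_{\infty}\ge 1$, this produces the lower bound $\epsilon_{err}\ge k^{-\Cr{sigma}k^2}\tau_{big} - \esamp\cdot(|\mathcal{B}|+1)$, and taking $\Cr{robust}>\Cr{sigma}+1$ absorbs the noise term to give $\epsilon_{err}\ge\tfrac{1}{2}k^{-\Cr{sigma}k^2}\tau_{big}$.

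For the second bullet, I would simply plug $\alpha^{T'}$ itself into the regression objective. Since the rows of $\M'\vert_{\mathcal{B}}$ are linearly independent $\{0,1/2,1\}$-valued entrywise products of rows of $\m$, Lemma~\ref{lem:cond_number} gives $\sig((\M'\vert_{\mathcal{B}})^{\top})\ge k^{-\Cr{precond}k}$, hence $\norm{\alpha^{T'}}_{\infty}\le k^{\Cr{precond}k}$ from \eqref{eq:constraint5}. Applying the decomposition above to $\EE\vert^{\mathcal{B}}_{\PP'(J\cup T')}\alpha^{T'} - \EE\vert^{T'}_{\PP'(J\cup T')}$, the $\M'$-terms cancel exactly using \eqref{eq:constraint5} and only $\M''\vert_{\PP'(J\cup T')}\cdot\diag(\pi^{<})\cdot\bigl[(\M''\vert_{\mathcal{B}})^{\top}\alpha^{T'} - \M''_{T'}\bigr]$ remains, with $L_{\infty}$ norm $O(\tau_{small}\cdot k^{\Cr{precond}k + 2})$. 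Switching from $\EE$ to $\tilde{\EE}$ adds $\esamp\cdot(\norm{\alpha^{T'}}_{1}+1)\le O(\esamp\cdot k^{\Cr{precond}k + 1})$; unpacking $\tau_{small} = k^{-\Cr{taubigc}k^2}\tau_{big}$ and $\esamp<k^{-\Cr{robust}k^2}\tau_{big}$ then makes the total at most $k^{-\Cr{robust2}k^2}\tau_{big}$ for any desired $\Cr{robust2}$, provided $\Cr{taubigc}$ and $\Cr{robust}$ are chosen sufficiently large relative to $\Cr{precond}$ and $\Cr{sigma}$.

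The main obstacle is nothing conceptual beyond orchestrating the constants coherently: $\Cr{taubigc}$ must dominate the $\Cr{precond}k$ blow-up from $\norm{\alpha^{T'}}_{\infty}$ so the uncancelled small-weight contribution in the second bullet stays subdominant, while $\Cr{robust}$ must simultaneously be large enough that the $O(|\mathcal{B}|)\esamp$ noise in the first bullet cannot spoil the $\sig$ lower bound from Lemma~\ref{lem:cond_number_e}, and that the $O(\norm{\alpha^{T'}}_{1})\esamp$ noise in the second does not push the upper bound above $k^{-\Cr{robust2}k^2}\tau_{big}$.
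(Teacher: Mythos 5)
Your proposal is correct and follows essentially the same route as the paper: for the first bullet you augment $\mathcal{B}$ with $T'$, invoke the condition-number bound on $\EE\vert^{\mathcal{B}\cup\{T'\}}_{\PP'(J\cup T')}$ from Lemma~\ref{lem:cond_number_e}, and absorb the additional sampling perturbation; for the second you plug the exact coefficient vector $\alpha^{T'}$ into the regression, split $\EE$ into the $\M'$-part (which cancels via \eqref{eq:constraint5}) and the $\tau_{small}$-weighted tail, and bound $\norm{\alpha^{T'}}_\infty$ via Lemma~\ref{lem:cond_number}. One small quibble: you quote $\sig((\M'\vert_{\mathcal{B}})^{\top}) \ge k^{-\Cr{precond}k}$, which is Lemma~\ref{lem:cond_number}'s bound for subsets of size $O(\log k)$; but the sets in $\mathcal{B}$ produced by \textsc{GrowByOne} can grow to size as large as $k'$, so the relevant bound is the weaker $2^{-\Cr{precond2}k^2}$ (the paper in \eqref{eq:alphabound} similarly uses a $k^{O(k^2)}$-sized bound). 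This only changes the constants $\Cr{robust}$, $\Cr{taubigc}$, $\Cr{robust2}$ — exactly the knobs the lemma allows you to tune — so the argument survives intact.
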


\begin{proof}
First suppose that $\rank(\M'\vert_{\PP'(J\cup T')}) = k'$ and that there exists no coefficients $\alpha^{T'}$ for which \eqref{eq:constraint5} holds, so $F := (\EE\vert^{\mathcal{B}}_{\PP'(J\cup T')} \| \EE\vert^{T'}_{\PP'(J\cup T')})$ satisfies the hypotheses of Lemma~\ref{lem:cond_number_e}. Also define $\tilde{F} := (\tilde{\EE}\vert^{\mathcal{B}}_{\PP'(J\cup T')} \| \tilde{\EE}\vert^{T'}_{\PP'(J\cup T')})$ and $\tilde{\alpha}'^{T'} = (\tilde{\alpha}^{T'} \| -1)$ so that $\epsilon_{err} = \norm{\tilde{F}\tilde{\alpha}'^{T'}}_{\infty}$. Applying Lemma~\ref{lem:cond_number_e} to $\tilde{F}$, we get $\sig(\tilde{F}) \ge \frac{1}{2}k^{-\Cr{sigma}k^2}\cdot \tau_{big}$ provided $\esamp\le \frac{1}{2}k^{-\Cr{sigma}k^2 - 1}\tau_{big}$. So we can ensure that \begin{equation}\epsilon_{err} = \norm{\tilde{F}\tilde{\alpha}'^{T'}}_{\infty}\ge\sig(\tilde{F})\norm{\tilde{\alpha}'^{T'}}_{\infty}\ge \frac{1}{2}k^{-\Cr{sigma}k^2}\cdot \tau_{big}.\label{eq:epslowerbound}\end{equation}

Now suppose instead that there do exist coefficients $\alpha^{T'}$ for which \eqref{eq:constraint5} holds. We claim that $\epsilon_{err}$ will not exceed the lower bound computed in \eqref{eq:epslowerbound}. Indeed, note that \begin{equation}\norm{\EE\vert^{\mathcal{B}}_{\PP'(J\cup T')}\alpha^{T'} - \EE\vert^{T'}_{\PP'(J\cup T')}}_{\infty} = \norm{\sum^k_{\ell=k'+1}\pi^{\ell}\cdot\M\vert^{\ell}_{\PP'(J\cup T')}\left((\M\vert^{\ell}_{\B})^{\top}\alpha^{T'} - (\M\vert^{\ell}_{T'})^{\top}\right)}_{\infty}.\label{eq:tempbound}\end{equation} But for any $k'+1\le\ell\le k$ and $T\subseteq J\cup T'$, $$\norm{\pi^{\ell}\cdot\M\vert^{\ell}_{\PP'(J\cup T')}(\M^{\ell}_T)^{\top}}_{\infty}\le \tau_{small}.$$ So by triangle inequality we can bound the right-hand side of \eqref{eq:tempbound} by $$\sum^k_{\ell = k'+1}\left(\tau_{small}\cdot|\mathcal{B}|\cdot\norm{\alpha^{T'}}_{\infty} + \tau_{small}\right)\le 2k^2\tau_{small}\norm{\alpha^{T'}}_{\infty}.$$ So we have that \begin{align}\epsilon_{err}&\le\norm{\EE\vert^{\mathcal{B}}_{\PP'(J\cup T')}\alpha^{T'} - \EE\vert^{T'}_{\PP'(J\cup T')}}_{\infty} + \norm{\Delta\vert^{\mathcal{B}}\alpha^{T'}}_{\infty} + \norm{\Delta^{T'}}_{\infty}\nonumber \\
&\le 2k^2\tau_{small}\norm{\alpha^{T'}}_{\infty} +  k\esamp\norm{\alpha^{T'}}_{\infty} + \esamp\nonumber \\
&\le (2k^2\tau_{small} + k\esamp)\cdot k^{\Cl[c]{temp2}k^2}\label{eq:esporig}\end{align} for some $\Cr{temp2}>0$, where in the last step we have bounded $\norm{\alpha^{T'}}_{\infty}$ using Lemma~\ref{lem:cond_number}: \begin{equation}\norm{\alpha^{T'}}_{\infty}\le\norm{(\M'\vert_{\mathcal{B}})^{\top}\alpha^{T'}}_{\infty}/\sig((\M'\vert_{\mathcal{B}})^{\top})\le \norm{\M'_{T'}}_{\infty} k^{\Cr{precond}k^2}\le k^{\Cr{precond}k^2}.\label{eq:alphabound}\end{equation} We conclude that by picking $\rho = k^{-\Cr{taubigc}k^2}$ and $\esamp = k^{-\Cr{robust}k^2}\tau_{big}$ small enough, then we will have $\epsilon_{err}<k^{-\Cr{robust2}k^2}\tau_{big}$ for some $\Cr{robust2}>\Cr{sigma}$.\end{proof}

Now we have a way to adapt \textsc{GrowByOne} to handle sampling noise as summarized in Algorithm~\ref{alg:matroid_noisy}. We do not know \emph{a priori} the window $[\tau_{small},\tau_{big}]$ used in the above analysis, so we include this as part of the input in \textsc{GrowByOne} and \textsc{InSpan}.

\begin{center}
\myalg{alg:matroid_noisy}{InSpan}{
	Input: Mixture of subcubes $\D$, certified full rank $\mathcal{B}$, $T'\subseteq[n]$, $[\tau_{small},\tau_{big}]$

	Output: If $\rank(\M'\vert_{\PP'(J\cup T')}) = k'$ for some realization of $\D$, the output is \textsc{True} if $\M'_{T'}$ lies in the row span of $\M'\vert_{\B}$, and \textsc{False} otherwise. 

	\begin{enumerate}
		\item Construct matrix $\tilde{E}$ with entries consisting of $\esamp$-close empirical estimates of the entries of $E := \EE\vert_{\PP'(J\cup T')}$.
		\item Solve \eqref{eq:system} and denote the corresponding $\epsilon(\tilde{E},T',\B)$ by $\epsilon_{err}$.
		\item If $\epsilon_{err}\ge\frac{1}{2}k^{-\Cr{sigma}k^2}\tau_{big}$, then output \textsc{False}. Otherwise, output \textsc{True}.\label{step:inspanthresh}
	\end{enumerate}
}
\end{center}

To put this in the context of the discussion in Section~\ref{subsec:linalgrels} and Section~\ref{subsec:localmax} note that the second statement in Lemma~\ref{lem:certified_noisy} \---- just like Lemma~\ref{lem:certified1} \---- does not require $\rank(\M'\vert_{\PP'(J\cup\{i\})}) = k'$. Thus if we use $\epsilon_{err}$ to decide whether to add to $\B$, we will only ever add sets corresponding to rows of $\M'$ that are linearly independent. This is the sampling noise-robust analogue of being certified full rank. Furthermore, when we implement \textsc{InSpan} as above, the condition for termination in Step~\ref{step:terminate} of \textsc{GrowByOne} together with the condition for not returning \textsc{Fail} in Step~\ref{step:checkfullrank} constitute the sampling noise-robust analogue of being locally maximal.

\begin{defn}
Given a collection $\B = \{T_1, T_2, \cdots, T_r\}$ of subsets we say that $\B$ is \emph{robustly certified full rank} if \textsc{InSpan}$(\D,\{T_1,...,T_i\},T_{i+1})$ returns \textsc{True} for all $i = 1,...,r-1$.
\end{defn}

\begin{defn}
 Let $\B= \{T_1, T_2, \cdots, T_r\}$ be robustly certified full column rank. Let $J = \cup_i T_i$. Suppose there is no
 \begin{enumerate}
 
 \item[(1)] $T' \subseteq J$ or
 
 \item[(2)] $T' = T_i \cup\{j\}$ for $j \notin J$
 
 \end{enumerate}
 for which \textsc{InSpan}$(\D,\B,T')$ returns \textsc{False}. Then we say that $\B$ is \emph{robustly locally maximal}. 
\end{defn}

If \textsc{GrowByOne} with the above implementation of \textsc{InSpan} outputs some $\B^*$ with $J^* = \cup_{T\in\B^*}T$, then Lemma~\ref{lem:certified_noisy} implies that as long as $\rank(\M'\vert_{\PP'(J^*\cup\{i\})}) = k'$ for all $i\not\in J^*$, $\B^*$ is both certified full rank and robustly certified full rank, as well as locally maximal and robustly locally maximal. Roughly this says that in non-degenerate mixtures, the robust and non-robust definitions coincide.

However, when $\rank(\M'\vert_{\PP'(J^*\cup T')}) < k'$ for some $T'\subseteq[n]$ and \textsc{InSpan}$(\D,\B^*,T')$ returns \textsc{True}, Lemma~\ref{lem:certified_noisy} tells us nothing about whether $\EE\vert^{T_m\cup\{i\}}_{\PP'(J^*\cup T')}$ lies inside the column span of $\EE\vert^{\B^*}_{\PP'(J^*\cup T')}$. So the output of \textsc{GrowByOne} under the above implementation of \textsc{InSpan} will not necessarily be certified full rank and locally maximal in the sense of Section~\ref{sec:subcubes_pre}. Still, it is not hard to modify the proofs of Lemmas~\ref{lem:growbyoneoutput} and \ref{lem:lmcfrbasis} to obtain the following sampling noise-robust analogues.

\begin{lem}[Robust version of Lemma~\ref{lem:growbyoneoutput}]
Suppose \textsc{GrowByOne} has \textsc{InSpan} implemented as Algorithm~\ref{alg:matroid_noisy} and has access to $\esamp$-close estimates of any moment of $\D$ for $\esamp < k^{-\Cr{robust}k^2}\tau_{big}$ and $\rho = k^{-\Cr{taubigc}k^2}$. If \textsc{GrowByOne} outputs \textsc{Fail} and some set $J^*$, then $\rank(\M'\vert_{\PP'(J^*)}) < k'$ for some rank-$k$ realization of $\mathcal{D}$. Otherwise, \textsc{GrowByOne} outputs $\B^* = \{T_1,...,T_r\}$, and $\B^*$ is robustly certified full rank and robustly locally maximal.\label{lem:growbyoneoutput_noisy}
\end{lem}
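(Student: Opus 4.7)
The plan is to mirror the proof of Lemma~\ref{lem:growbyoneoutput}, substituting Lemma~\ref{lem:certified_noisy} for Lemmas~\ref{lem:certified1} and~\ref{lem:certified2} in order to translate each \textsc{InSpan} verdict into a statement about linear (in)dependence among rows of $\M'$. Fix a $[\tau_{small},\tau_{big}]$-avoiding rank-$k$ realization of $\D$ with $\esamp$ and $\rho = \tau_{small}/\tau_{big}$ as in the hypotheses of Lemma~\ref{lem:certified_noisy}, and let $k'$ be the number of columns of $\M'$. Set $J^*$ to be the set returned together with \textsc{Fail} if \textsc{GrowByOne} fails, and $J^* = \bigcup_{T\in\B^*}T$ otherwise. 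Because $J$ only grows across iterations of Step~\ref{step:growbyoneloop}, we always have $\PP'(J^*)\subseteq\PP'(J)$ for the current $J$, so $\rank(\M'\vert_{\PP'(\cdot)})$ is monotone nonincreasing along the execution; this will be our tool for lifting any ``degeneracy witnessed mid-execution'' to a degeneracy at $J^*$.

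The heart of the argument is a loop invariant: every time the algorithm reaches Step~\ref{step:growbyoneloopstart} with $\B = \{T_1,\dots,T_r\}$ and a fresh coordinate $i$, either (i) some earlier iteration has already produced a state whose current $J$ satisfies $\rank(\M'\vert_{\PP'(J)}) < k'$, or (ii) $\M'\vert_{\B}$ is a row basis for $\M'\vert_{2^J}$. To inherit the invariant, I would split on the non-degeneracy hypothesis $\rank(\M'\vert_{\PP'(J\cup\{i\})}) = k'$. If it holds, then every \textsc{InSpan} call inside Step~\ref{step:growbyonefindbasis} satisfies the rank assumption of Lemma~\ref{lem:certified_noisy}, so each correctly decides whether $\M'_{T\cup\{i\}}$ lies in the span of the current $\M'\vert_{\B'}$. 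The argument from the proof of Lemma~\ref{lem:growbyoneoutput} then shows that at the end of the inner loop, $\M'\vert_{\B'}$ is a row basis for the rows of $\M'$ indexed by $T_1,\dots,T_r,T_1\cup\{i\},\dots,T_r\cup\{i\}$, and hence for all of $\M'\vert_{2^{J\cup\{i\}}}$, propagating case (ii). If the non-degeneracy hypothesis fails, then this iteration itself establishes case (i).

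When the main loop exits at Step~\ref{step:terminate}, either case (i) has occurred at some point, in which case monotonicity yields $\rank(\M'\vert_{\PP'(J^*)}) < k'$ directly, or case (ii) holds with the current $J$ equal to the final $J^*$. In the latter scenario, each \textsc{InSpan} call in Step~\ref{step:checkfullrank} is applied to some $S \subseteq J^*$ whose corresponding $\M'_S$ already lies in the span of $\M'\vert_{\B^*}$, so by the second bullet of Lemma~\ref{lem:certified_noisy} every such call produces a verdict making the check succeed, and \textsc{GrowByOne} reaches Step~\ref{step:outputB} and outputs $\B^*$. By construction $\B^*$ is then robustly certified full rank (each incremental extension in Step~\ref{step:growbyonefindbasis} was validated by the corresponding \textsc{InSpan} call) and robustly locally maximal (the exit condition in Step~\ref{step:terminate} together with the success of Step~\ref{step:checkfullrank} together record the absence of any violating $T'$).

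The main subtlety I expect is the mixed scenario in which case (i) occurs at some intermediate step yet \textsc{GrowByOne} nonetheless reaches Step~\ref{step:outputB} and outputs some $\B^*$. Here Lemma~\ref{lem:certified_noisy} does not pin down the individual \textsc{InSpan} outcomes after degeneracy is hit, but this is benign: robustly certified full rank and robustly locally maximal are defined purely in terms of the trace of \textsc{InSpan} outputs on the final $\B^*$, so these properties are automatically satisfied whenever \textsc{GrowByOne} chooses to output $\B^*$ rather than \textsc{Fail}. Thus in every branch the lemma's conclusion holds, and the remaining work is just to assemble these case-analyses and check that the constants $\Cr{robust},\Cr{taubigc}$ in the hypotheses of Lemma~\ref{lem:certified_noisy} are being invoked consistently throughout.
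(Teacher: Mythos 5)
Your proposal is correct and follows essentially the same approach as the paper: a win--win induction over iterations of the outer loop, splitting on whether $\rank(\M'\vert_{\PP'(J\cup\{i\})}) = k'$, inheriting the basis invariant via Lemma~\ref{lem:certified_noisy} in the non-degenerate case and recording degeneracy otherwise, then observing that monotonicity of $\PP'(\cdot)$ propagates any degeneracy to $J^*$. Your closing remark about the mixed scenario --- that robust certified-full-rankness and robust local maximality are tautologically satisfied by the \textsc{InSpan} trace whenever \textsc{GrowByOne} reaches Step~\ref{step:outputB} --- is a correct and slightly more careful articulation of a point the paper handles with ``by definition,'' but it does not change the argument.
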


\begin{proof}
The proof of the lemma follows many of the steps in Lemma~\ref{lem:growbyone} but uses \textsc{InSpan}. 
Set $J^*$ either to be the output of \textsc{GrowByOne} if it outputs \textsc{Fail}, or if it outputs $\B^*$ then set  $J^* = \cup_i T_i$. Now fix any rank-$k$ realization of $\mathcal{D}$ and let $\M'$ be the corresponding moment matrix. Whenever the algorithm reaches Step~\ref{step:growbyoneloopstart} for $i\in J^*$, $\mathcal{B}=\{T_1,\cdots,T_r\}$, there are two possibilities. If $\rank(\M'\vert_{\PP'(J\cup\{i\})}) < k$, then $\rank(\M'\vert_{J^*}) < k$ because $J^*$ obviously contains $J\cup\{i\}$. Otherwise, inductively we know that by Lemma~\ref{lem:certified_noisy} that $\M'\vert_{\mathcal{B}}$ is a row basis for $\M'\vert_{2^J}$. So rows $$T_1,\cdots,T_r,T_1\cup\{i\},\cdots,T_r\cup\{i\}$$ of $\M'$ span the rows of $\M'\vert_{2^{J\cup\{i\}}}$. If $\B'$ indexes a basis among these rows $$T_1,\cdots,T_r,T_1\cup\{i\},\cdots,T_r\cup\{i\}$$ then by the second part of Lemma~\ref{lem:certified_noisy}, \textsc{InSpan}$(\D,\B',T')$ outputs \textsc{True} for every $T'\subseteq J\cup\{i\}$. Step~\ref{step:growbyonefindbasis} of \textsc{GrowByOne} simply finds such a $\B'$.

	Therefore, when we exit the loop, either $(a)$ the $\B^*$ we end up with at the end of \textsc{GrowByOne} is such that \textsc{InSpan}$(\D,\B^*,T')$ outputs \textsc{True} for every $T'\subseteq J^*$ or $(b)$ at some iteration of Step~\ref{step:growbyoneloop} $J$ satisfies $\rank(\M'\vert_{\PP'(J)}) < k$ and thus$\rank(\M'\vert_{\PP'(J^*)}) < k$.

	If $(a)$ holds, \textsc{GrowByOne} will reach Step~\ref{step:outputB} and output $\B^*$ which is by definition robustly certified full rank and robustly locally maximal. On the other hand, if \textsc{GrowByOne} ever terminates at Step~\ref{step:checkfullrank}, we know that $(b)$ holds, so it successfully outputs \textsc{Fail} together with $J^*$ satisfying $\rank(\M'\vert_{\PP(J^*)}) < k$.
\end{proof}

\begin{lem}[Robust version of Lemma~\ref{lem:lmcfrbasis}]
	Fix a full rank realization of $\mathcal{D}$ and suppose $\rank(\M') = k'$. Let $\mathcal{B} = \{T_1, T_2, \cdots, T_r\}$ be robustly certified full rank and robustly locally maximal. Let $J = \cup_i T_i$ and
	$$K = \Big \{ i \Big | i \notin J \mbox{ and } \rank(\M'\vert_{\PP'(J\cup\{i\})}) = k' \Big \}$$
	If $K \neq \emptyset$ then the rows of $\M\vert_{\mathcal{B}}$ are a basis for the rows of $\M\vert_{2^{J \cup K}}$.\label{lem:lmcfrbasis_noisy}
\end{lem}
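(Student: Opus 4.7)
The plan is to mirror the proof of Lemma~\ref{lem:lmcfrbasis} essentially verbatim, but with $\M'$ in place of $\M$ and $k'$ in place of $k$, invoking Lemma~\ref{lem:certified_noisy} in place of Lemma~\ref{lem:certified2} and using robust (rather than exact) local maximality. As before, the strategy is to apply the matroid-style Lemma~\ref{lem:matroid} to the set $J\cup K$, so I need to verify its two hypotheses: (1) the rows $\M'\vert_{\B}$ span the rows of $\M'\vert_{2^J}$, and (2) for every $T_i\in\B$ and $j\in K$, the row $\M'_{T_i\cup\{j\}}$ lies in the span of $\M'\vert_{\B}$.

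The first step is the same rank-propagation argument as in Lemma~\ref{lem:lmcfrbasis}: pick any $i\in K$, so by definition $\rank(\M'\vert_{\PP'(J\cup\{i\})}) = k'$, then apply Lemma~\ref{lem:pigeon} to the set $[n]\setminus(J\cup\{i\})$ to obtain $\rank(\M'\vert_{\PP(J\cup\{i\})}) = k'$; since $\PP(J\cup\{i\})\subseteq\PP(J)$ and $\M'$ has only $k'$ columns, this forces $\rank(\M'\vert_{\PP(J)}) = k'$, and Lemma~\ref{lem:pigeon} applied once more yields $\rank(\M'\vert_{\PP'(J)}) = k'$.

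For condition (1), take any $T'\subseteq J$ not already in $\B$. Robust local maximality says \textsc{InSpan}$(\D,\B,T')$ returns \textsc{True}, i.e. the regression error of \eqref{eq:system} is below the threshold in Step~\ref{step:inspanthresh}. Since $J\cup T' = J$, we have $\rank(\M'\vert_{\PP'(J\cup T')}) = k'$ from the previous paragraph, so the contrapositive of the first bullet of Lemma~\ref{lem:certified_noisy} forces $\M'_{T'}$ into the span of $\{\M'_T\}_{T\in\B}$. For condition (2), fix $j\in K$ and $T_i\in\B$, let $T' = T_i\cup\{j\}$, and set $J' = J\cup\{j\}$; by definition of $K$, $\rank(\M'\vert_{\PP'(J')}) = k'$, robust local maximality gives \textsc{InSpan}$(\D,\B,T') = $ \textsc{True}, and a second application of Lemma~\ref{lem:certified_noisy} puts $\M'_{T'}$ in the span of $\{\M'_T\}_{T\in\B}$. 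Lemma~\ref{lem:matroid} applied to $J\cup K$ then finishes.

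The main place where care is required is checking the side hypothesis $|J\cup T'|\le k'$ needed to invoke Lemma~\ref{lem:certified_noisy}; this follows from the way \textsc{GrowByOne} constructs $\B$ (each outer iteration adds at most a single new coordinate to $J$ via a set of the form $T\cup\{i\}$ with $T\subseteq J$) together with the fact that the linear independence of the rows $\M'\vert_{\B}$ inside a $k'$-dimensional row space forces $r\le k'$. Everything else is a direct translation of Lemma~\ref{lem:lmcfrbasis} into the noisy setting, so I do not anticipate any further obstacle beyond carefully tracking the noise thresholds already built into the statement of Lemma~\ref{lem:certified_noisy}.
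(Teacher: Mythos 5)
Your proof is correct and follows essentially the same approach as the paper's: apply Lemma~\ref{lem:matroid} to $\M'$ and $J\cup K$, using robust local maximality and the contrapositive of the first bullet of Lemma~\ref{lem:certified_noisy} to verify both hypotheses. You are in fact slightly more careful than the paper's proof, which elides both the rank-propagation chain $\rank(\M'\vert_{\PP'(J\cup\{i\})}) = k' \Rightarrow \rank(\M'\vert_{\PP'(J)}) = k'$ and the check of the side condition $|J\cup T'|\le k'$ needed to invoke Lemma~\ref{lem:certified_noisy}.
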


\begin{proof}
	Our strategy is to apply Lemma~\ref{lem:matroid} to $\M'$ and the set $J \cup K$ which will give the desired conclusion. We need to verify that the two conditions of Lemma~\ref{lem:matroid} are met. The first condition of robust local maximality implies that there is no $T' \subseteq J$ for which \textsc{InSpan}$(\D,\B,T')$ returns \textsc{False}. Now we can invoke the first part of Lemma~\ref{lem:certified_noisy} which implies that $\M'_{T'}$ is in the span of $\M'\vert_{\mathcal{B}}$. This and the fact that $\B$ is robustly certified full rank imply that the rows of $\M'\vert_{\mathcal{B}}$ are indeed a basis for the rows of $\M'\vert_{2^J}$, which is the first condition we needed to check in Lemma~\ref{lem:matroid}.

	For the second condition, the chain of reasoning is similar. Consider any $i \in K$ and any $T_{i'} \in \mathcal{B}$. Set $T' = T_{i'} \cup \{i\}$ and $J' = J \cup \{i\}$. Then $\rank(\M'\vert_{\PP'(J')}) = k$. Now the second condition of robust local maximality implies that \textsc{InSpan}$(\D,\B,T')$ returns \textsc{True}. We can once again invoke the first part of Lemma~\ref{lem:certified_noisy} to conclude that $\M_{T'}$ is in the span of $\M\vert_{\mathcal{B}}$, which is the second condition we needed to verify. This completes the proof.
\end{proof}

\subsection{Robustly Tracking Down an Impostor}
\label{subsec:robusttrack}

The bulk of adapting \textsc{N-List} to be sampling noise-robust rests on adapting Step~\ref{step:try} and proving a sampling noise-robust analogue of Lemma~\ref{lem:goodguess}. Let $\mathcal{B} = T_1,...,T_r$ be the output of Algorithm~\ref{alg:matroid_noisy}. Instead of solving \eqref{eqn:solvemix}, we can solve the regression problem

\begin{equation}
	\tilde{\pi} := \argmin_{\pi\in[0,1]^{r}}\norm{\overbar{\M}\vert_{\mathcal{B}}\cdot\pi^{\top}-\tilde{\EE}\vert^{\emptyset}_{\mathcal{B}}}_{\infty}.\label{eq:findmixnoisy}
\end{equation}

We could then try solving an analogous regression problem for \eqref{eqn:solveotherrows}. The issue is that $\tilde{\pi}$ could have arbitrarily small entries (e.g. if $r < k$, in which case the assumption that $\D$ is $[\tau_{small},\tau_{big}]$-avoiding tells us nothing). We handle this in the same way that we handle the possibility of $\pi$ having small entries: sort the entries of $\tilde{\pi}$ as $\tilde{\pi}_1\ge\tilde{\pi}_2\ge\cdots\tilde{\pi}_r$, pick out the smallest $1\le r'<r$ for which $$\tilde{\pi}^{r'}/\tilde{\pi}^{r'+1} > 2^{\Cl[c]{gap}k^2} \ \text{and} \ \tilde{\pi}^{r'+1} < \upsilon$$ for sufficiently large $\Cr{gap}>0$ and $\upsilon$ to be chosen later \---- if no such $r'$ exists, then set $r' = r$ \---- and show it is possible to at least learn the first $r'$ columns of $\m$. Note that \begin{equation}\tilde{\pi}^{r'}\ge 2^{-\Cr{gap}k^3}\upsilon.\label{eq:lastentry}\end{equation}

For every $i\not\in J$, we can solve the regression problem

\begin{equation}
	\tilde{\m}_i := \argmin_{x\in[0,1]^{r'}}\norm{\overbar{\M}\vert^{[r']}_{\mathcal{B}}\cdot\diag(\tilde{\pi}^{[r']})\cdot x - \tilde{\EE}\vert^{\{i\}}_{\mathcal{B}}}_{\infty}.\label{eq:findrestnoisy}
\end{equation} We will show that for non-impostors $i$, these $\tilde{\m}_i$ can be rounded to the true values $\m'^{[r']}_i$.

% ROBUST VERSION OF GOOD GUESSES FOR NON-IMPOSTORS
\begin{lem}[Robust version of Lemma~\ref{lem:goodguess}]
	There exist constants $\Cr{gap}, \Cr{precond3}, \Cr{hard}>0$ for which the following is true. Let $\upsilon\le \epsilon\cdot k^{-\Cr{hypo}k - 1}/18$, $\esamp\le\min(2^{-\Cr{precond3}k^3}\upsilon,k^{-\Cr{robust}k^2}\tau_{big})$, $\tau_{small}\le\min(2^{-\Cr{hard}k^3}\upsilon,\rho\tau_{big})$. Suppose \textsc{GrowByOne} has access to $\esamp$-close estimates of any moment of $\D$. Let $\mathcal{B} = \{T_1,...,T_r\}$ be the output of \textsc{GrowByOne}, and let $K\subseteq[n]$ be the corresponding set of non-impostors, and suppose $\rank(\M'\vert_{\PP'(J)}) = k'$.

	If $K\neq\emptyset$, then there exists a guess $\overbar{\m}\vert_J\in\{0,1/2,1\}^{|J|\times r}$ for which the following holds:

	Let $\tilde{\pi}\in\R^r$ and $\tilde{\m}_i\in\R^{r'}$ for $i\in K$ be solutions to \eqref{eq:findmixnoisy} and \eqref{eq:findrestnoisy}. Assume without loss of generality that the entries of $\tilde{\pi}$ are sorted in nondecreasing order. For each $i\in K$, round $\tilde{\m}_i$ entrywise to the nearest $\overbar{\m}_i\in\{0,1/2,1\}^{r'}$, and define $\overbar{\pi}\in\Delta^{r'}$ to be the normalization of $\tilde{\pi}^{[r']}$. Define $\overbar{\m}\in  \{0,1/2,1\}^{|J\cup K|\times r'}$ to be the concatenation of $\overbar{\m}^{[r']}_J$ and $\overbar{\m}_i$ for all $i\in K$. Then the mixture $\overbar{\D}$ of subcubes in $\{0,1\}^{|J\cup K|}$ with mixing weights $\overbar{\pi}$ and marginals matrix $\overbar{\m}$ satisfies $$|\E_{\D}[x_S] - \E_{\overbar{\D}}[x_S]| < \frac{1}{2}\epsilon\cdot k^{-\Cr{hypo} k}$$ for all $S\subseteq J\cup K$ of size at most $2\log(2k)$.\label{lem:goodguess_noisy}
\end{lem}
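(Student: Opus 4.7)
The guess will be $\overbar{\m}\vert_J = \m'\vert_J$, where $\m'\in\{0,1/2,1\}^{|J\cup K|\times r}$ is the collapsed marginals matrix defined as follows. Since $\B$ is robustly certified full rank and robustly locally maximal with $K\neq\emptyset$ and $\rank(\M'\vert_{\PP'(J)}) = k'$, Lemma~\ref{lem:lmcfrbasis_noisy} shows that $\M'\vert_\B$ is a row basis for $\M'\vert_{2^{J\cup K}}$, so $\rank(\M'\vert_{2^{J\cup K}}) = r$. Applying Lemma~\ref{lem:collapse} to the restriction of the clean portion of $\D$ onto $J\cup K$---that is, to $\M'\vert_{2^{J\cup K}}$ with weight vector $\pi^{[k']}$, whose total mass $Z := \sum_{\ell\le k'}\pi^\ell$ lies in $[1-k\tau_{small},1]$---produces an $r$-component realization $(\pi',\m')$ with $\sum\pi' = Z$.

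Next, I would analyze the mixing-weight regression \eqref{eq:findmixnoisy}. By Observation~\ref{obs:access}, for every $T\in\B$,
\[\tilde\EE^\emptyset_T \;=\; \M_T\pi \pm \esamp \;=\; \overbar{\M}_T\pi' \pm (\esamp + k\tau_{small}),\]
since the at most $k$ small-weight centers contribute at most $k\tau_{small}$ in total. Thus $\pi'$ (and hence the minimizer $\tilde\pi$) achieves objective value $\le \esamp + k\tau_{small}$ in \eqref{eq:findmixnoisy}. By Lemma~\ref{lem:cond_number} applied to the $\{0,1/2,1\}$-valued, $r\times r$ invertible submatrix $\overbar{\M}\vert_\B$ (whose rows index subsets of size at most $k$), $\sig(\overbar{\M}\vert_\B) \ge 2^{-\Cr{precond2}k^2}$, so the triangle inequality yields
\[\|\tilde\pi - \pi'\|_\infty \;\le\; \epsilon_\pi \;:=\; 2(\esamp + k\tau_{small})\cdot 2^{\Cr{precond2}k^2}.\]

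Given \eqref{eq:lastentry} and the selection rule for $r'$, $\min_j \tilde\pi^{[r']}_j\ge 2^{-\Cr{gap}k^3}\upsilon$ while $\tilde\pi^j < \upsilon$ for all $j>r'$, so $\pi'^{[r'+1:r]}$ has total mass at most $(r-r')(\upsilon+\epsilon_\pi)\le 2k\upsilon$. I would then show that $\m'^{[r']}_i$ is a near-optimal solution of the marginals regression \eqref{eq:findrestnoisy} for each $i\in K$: expanding $\tilde\EE^{\{i\}}_\B$ via Observation~\ref{obs:access} gives $\overbar{\M}\vert_\B\diag(\pi')\overbar{\m}_i \pm (\esamp + k\tau_{small})$, and dropping the tail $j>r'$ and swapping $\diag(\pi'^{[r']})$ for $\diag(\tilde\pi^{[r']})$ incurs additional error at most $O(k\upsilon + k\epsilon_\pi)$. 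Combining with $\sig(\overbar{\M}\vert_\B^{[r']}) \ge 2^{-\Cr{precond2}k^2}$ (noting that $\overbar{\M}\vert_\B^{[r']}$ inherits full column rank from the invertibility of $\overbar{\M}\vert_\B$) and $\sig(\diag(\tilde\pi^{[r']})) \ge 2^{-\Cr{gap}k^3}\upsilon$ then gives
\[\|\tilde{\m}_i - \m'^{[r']}_i\|_\infty \;\le\; \frac{O(\esamp + k\upsilon + k\tau_{small} + k\epsilon_\pi)}{2^{-(\Cr{precond2}k^2+\Cr{gap}k^3)}\,\upsilon} \;<\; 1/4,\]
provided $\Cr{precond3}$ and $\Cr{hard}$ are taken large enough in the hypotheses; rounding then recovers $\overbar{\m}_i = \m'^{[r']}_i$ exactly.

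Finally, for any $S\subseteq J\cup K$ with $|S|\le 2\log(2k)$, I would decompose
\[\E_\D[x_S] \;=\; \sum_{j=1}^r \pi'^j\prod_{\ell\in S}\m'^\ell_j \;\pm\; O(k\tau_{small}),\]
then truncate the tail $j>r'$ at cost $O(k\upsilon)$ and rescale the weights $\pi'^{[r']}\to\overbar\pi = \tilde\pi^{[r']}/\tilde Z$ (where $\tilde Z := \sum\tilde\pi^{[r']}$) at cost $O(k\epsilon_\pi + k\tau_{small} + k\upsilon)$, which together yield $|\E_\D[x_S]-\E_{\overbar\D}[x_S]| = O(k\upsilon + k\tau_{small} + k\epsilon_\pi)$. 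The hypothesized bounds $\upsilon\le\epsilon k^{-\Cr{hypo}k-1}/18$, $\tau_{small}\le 2^{-\Cr{hard}k^3}\upsilon$, and $\esamp\le 2^{-\Cr{precond3}k^3}\upsilon$, with $\Cr{precond3},\Cr{hard}$ chosen to dominate the $2^{\Cr{precond2}k^2}$ factor inside $\epsilon_\pi$, then render this strictly less than $\frac{1}{2}\epsilon k^{-\Cr{hypo}k}$. The main technical obstacle is propagating the four error sources ($\esamp,\tau_{small},\upsilon,\epsilon_\pi$) through both regressions while absorbing the $2^{O(k^2)}$ condition-number blowup from Lemma~\ref{lem:cond_number}; setting a consistent hierarchy $\Cr{precond3}\gg\Cr{hard}\gg\Cr{gap}$ so that the rounding bound on $\|\tilde{\m}_i - \m'^{[r']}_i\|_\infty$ and the final moment bound hold simultaneously requires careful bookkeeping of these dependencies.
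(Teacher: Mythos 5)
Your proof takes essentially the same route as the paper's: the same guess $\overbar{\m}\vert_J = \m'\vert_J$, the same use of Lemma~\ref{lem:lmcfrbasis_noisy} and Lemma~\ref{lem:collapse} to manufacture the $r$-component realization $(\pi',\m')$ (the paper's $\vec{\pi}^{\dagger},\m^{\dagger}$), the same $L_\infty$-regression perturbation (Fact~\ref{fact:perturb} in the paper) paired with the condition-number bounds from Lemma~\ref{lem:cond_number} and \eqref{eq:lastentry} to control $\tilde\pi$, $\tilde{\m}_i$, and the rounding, and the same truncate-then-renormalize moment comparison at the end. The paper organizes the middle steps as Corollary~\ref{cor:muclose} and Corollary~\ref{cor:round}, but the estimates match yours up to the constant hierarchy, which both you and the paper leave to be set large enough at the end.
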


Note that Lemma~\ref{lem:goodguess_noisy} is obviously true when $\M'$ has a single column: \textsc{GrowByOne} outputs the empty set, $\overbar{\pi}$ has a single entry, 1, and $\overbar{\m}$ is the column of marginals of the single product distribution corresponding to the single column of $\M'$.

In general, we will show Lemma~\ref{lem:goodguess_noisy} holds when $\overbar{\m}\vert_J = \m'\vert_J$. Because $\rank(\M'\vert_{\PP'(J\cup\{i\})}) = k'$ for all $i\in K$, Lemma~\ref{lem:lmcfrbasis_noisy} tells us that $\M'\vert_{\mathcal{B}}$ is a row basis for $\M'\vert_{2^{J\cup K}}$. In particular, $\rank(\M'\vert_{2^{J\cup K}}) = r$, so by Lemma~\ref{lem:collapse}, there exists $r$ columns $\m^{\dagger}$ of $\m_{J\cup K}$ and $\vec{\pi}^{\dagger}\in[0,1]^{r}$ for which $\M^{\dagger}\cdot\vec{\pi}^{\dagger} = \M'\vert_{2^{J\cup K}}\cdot\vec{\pi}'$.

Here is a simple perturbation bound.

\begin{fact}
	Pick any $S\subseteq\R^n$. Let $A\in\R^{m\times n}$, $x\in S$, and $b\in R^m$. If $$x^* := \argmin_{y\in S}\norm{Ay - b}_{\infty},$$ then $\norm{x^* - x}_{\infty}\le 2\norm{Ax - b}_{\infty}/\sigma^{\infty}_{\min}(A).$\label{fact:perturb}
\end{fact}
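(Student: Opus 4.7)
The proof is a short $L_\infty$ perturbation argument that proceeds in three steps.

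First, I would use the optimality of $x^*$. Since $x \in S$ is a feasible point for the $\argmin$, the minimizer $x^*$ satisfies
\[\norm{A x^* - b}_\infty \;\le\; \norm{A x - b}_\infty.\]

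Second, I would apply the triangle inequality to the residuals at $x$ and $x^*$. Writing $A(x^* - x) = (Ax^* - b) - (Ax - b)$, we obtain
\[\norm{A(x^* - x)}_\infty \;\le\; \norm{Ax^* - b}_\infty + \norm{Ax - b}_\infty \;\le\; 2\,\norm{Ax - b}_\infty,\]
where the second inequality uses the bound from the first step.

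Third, I would invoke the definition $\sig(A) = \min_{v\neq 0}\norm{Av}_\infty/\norm{v}_\infty$, which gives $\norm{A(x^* - x)}_\infty \ge \sig(A)\cdot\norm{x^* - x}_\infty$. Combining this with the bound from the second step and dividing through by $\sig(A)$ yields
\[\norm{x^* - x}_\infty \;\le\; \frac{2\,\norm{Ax - b}_\infty}{\sig(A)},\]
as desired. There is no real obstacle here; the only subtlety is that the statement makes sense only when $\sig(A) > 0$ (so $A$ has trivial kernel and $x^* - x$ is uniquely controlled by $A(x^*-x)$), but this is the regime in which the fact will be applied in the sequel.
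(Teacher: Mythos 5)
Your proof is correct and is essentially identical to the paper's: both use optimality of $x^*$ to get $\norm{Ax^*-b}_\infty\le\norm{Ax-b}_\infty$, the triangle inequality to bound $\norm{A(x^*-x)}_\infty\le 2\norm{Ax-b}_\infty$, and then divide by $\sig(A)$. Your remark about needing $\sig(A)>0$ is a reasonable extra observation but does not change the substance.
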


\begin{proof}
	We have that $$\norm{Ax^* - b}_{\infty}\le \norm{Ax - b}_{\infty},$$ so by the triangle inequality $\norm{A(x^*-x)}_{\infty}\le 2\norm{Ax - b}_{\infty}$, from which the result follows.
\end{proof}

\begin{cor}
	$\norm{\tilde{\pi} - \vec{\pi}^{\dagger}}_{\infty}\le 2\esamp\cdot 2^{\Cr{precond2}k^2}$.\label{cor:muclose}
\end{cor}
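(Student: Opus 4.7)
The plan is to invoke Fact~\ref{fact:perturb} on the regression problem \eqref{eq:findmixnoisy}, taking $A = \overbar{\M}\vert_{\mathcal{B}} = \M^{\dagger}\vert_{\mathcal{B}}$ (using the guess $\overbar{\m}\vert_J = \m^{\dagger}\vert_J$), $x = \vec{\pi}^{\dagger}$, and $b = \tilde{\EE}\vert^{\emptyset}_{\mathcal{B}}$, so that $x^{*} = \tilde{\pi}$. This reduces the corollary to supplying two ingredients: an upper bound on the residual $\norm{\M^{\dagger}\vert_{\mathcal{B}}\vec{\pi}^{\dagger} - \tilde{\EE}\vert^{\emptyset}_{\mathcal{B}}}_{\infty}$ and a lower bound on $\sig(\M^{\dagger}\vert_{\mathcal{B}})$.

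For the residual, the defining property $\M^{\dagger}\cdot\vec{\pi}^{\dagger} = \M'\vert_{2^{J\cup K}}\cdot\vec{\pi}'$ restricted to rows in $\mathcal{B}\subseteq 2^{J\cup K}$ gives $\M^{\dagger}\vert_{\mathcal{B}}\vec{\pi}^{\dagger} = \M'\vert_{\mathcal{B}}\vec{\pi}'$. By part (1) of Observation~\ref{obs:access} applied to the full realization, $\M\vert_{\mathcal{B}}\pi = \EE\vert^{\emptyset}_{\mathcal{B}}$, so the discrepancy $\EE\vert^{\emptyset}_{\mathcal{B}} - \M'\vert_{\mathcal{B}}\vec{\pi}'$ equals the contribution of the $k - k'$ columns of $\M$ whose mixing weight lies below $\tau_{small}$. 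Since every entry of $\M$ is in $[0,1]$, this contribution is at most $k\tau_{small}$ in $L^{\infty}$ norm. Combining with the sampling noise bound $\norm{\tilde{\EE}\vert^{\emptyset}_{\mathcal{B}} - \EE\vert^{\emptyset}_{\mathcal{B}}}_{\infty}\le\esamp$ by the triangle inequality, the residual is at most $\esamp + k\tau_{small}$; under the parameter settings of Lemma~\ref{lem:goodguess_noisy}, taking $\Cr{taubigc}$ sufficiently larger than $\Cr{robust}$ ensures $k\tau_{small}\ll\esamp$, so the residual is bounded by $\esamp$ up to an absolute constant.

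For the lower bound on $\sig(\M^{\dagger}\vert_{\mathcal{B}})$, I would apply Lemma~\ref{lem:cond_number}. The matrix $\m^{\dagger}$ is $\{0,1/2,1\}$-valued with $r\le k$ columns, and the $r$ rows of $\M^{\dagger}\vert_{\mathcal{B}}$ are linearly independent by the construction of $\mathcal{B}$. Moreover, \textsc{GrowByOne} only ever enlarges $\mathcal{B}$ by sets of the form $T\cup\{i\}$ with $T\in\mathcal{B}$, so a simple induction on the number of additions (starting from $\mathcal{B}=\{\emptyset\}$) shows that every $T\in\mathcal{B}$ has size at most $|\mathcal{B}|-1 = r - 1\le k - 1$. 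Applying Lemma~\ref{lem:cond_number} with $d = k$ then yields $\sig(\M^{\dagger}\vert_{\mathcal{B}})\ge 2^{-\Cr{precond2}k^2}$.

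Plugging these into Fact~\ref{fact:perturb} gives $\norm{\tilde{\pi} - \vec{\pi}^{\dagger}}_{\infty}\le 2\esamp\cdot 2^{\Cr{precond2}k^2}$ as claimed, any constant factor from the residual being absorbed into the exponent. The only place requiring real care is verifying that $k\tau_{small}$ is dominated by $\esamp$ under the parameter choices; everything else is a routine combination of Observation~\ref{obs:access}, Lemma~\ref{lem:collapse}, and Lemma~\ref{lem:cond_number}, which is unsurprising because the corollary is essentially the statement that the $L^{\infty}$-regression in \eqref{eq:findmixnoisy} is well-conditioned whenever $\M^{\dagger}\vert_{\mathcal{B}}$ is.
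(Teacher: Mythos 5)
Your proposal is correct and follows essentially the same route as the paper's proof: invoke Fact~\ref{fact:perturb} on the regression \eqref{eq:findmixnoisy}, bound the residual $\norm{\M^{\dagger}\vert_{\mathcal{B}}\vec{\pi}^{\dagger} - \tilde{\EE}\vert^{\emptyset}_{\mathcal{B}}}_{\infty}$ by $\esamp + k\tau_{small}$ using the defining identity $\M^{\dagger}\vec{\pi}^{\dagger} = \M'\vert_{2^{J\cup K}}\vec{\pi}'$ together with part (1) of Observation~\ref{obs:access}, and lower-bound $\sig(\M^{\dagger}\vert_{\mathcal{B}})$ via Lemma~\ref{lem:cond_number} with $d = k$. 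You are actually slightly more careful than the paper on two points that it glosses over: you explicitly verify that every set in $\mathcal{B}$ has size at most $r-1 < k$ (so that the $d = k$ case of Lemma~\ref{lem:cond_number} applies), and you flag that the extra $k\tau_{small}$ term must be absorbed into $\esamp$ --- though note that since $\esamp$ and $\tau_{small}$ are both only \emph{upper}-bounded by the parameter constraints, making $\Cr{taubigc}$ large shrinks $\tau_{small}$ but does not by itself guarantee $k\tau_{small}\lesssim\esamp$; the paper's stated bound implicitly treats the $k\tau_{small}$ contribution as absorbed, which is the same level of informality as in the source.
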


\begin{proof}
	We know that \begin{equation}\norm{\M^{\dagger}\vert_{\mathcal{B}}\cdot(\vec{\pi}^{\dagger})^{\top} - \tilde{\E}[x_S]}_{\infty}\le \norm{\M\vert_{\mathcal{B}}\cdot\vec{\pi}^{\top} - \tilde{\E}[x_S]}_{\infty} + k\tau_{small}\le \esamp + k\tau_{small},\label{eq:ekt}\end{equation} and $\sigma^{\infty}_{\min}(\M^{\dagger}\vert_{\mathcal{B}}) \ge 2^{-\Cr{precond2}k^2}$, so we can apply Fact~\ref{fact:perturb} to get the desired bound on $\norm{\tilde{\pi} - \vec{\pi}^{\dagger}}_{\infty}$.
\end{proof}

To show Lemma~\ref{lem:goodguess_noisy}, we will bound the objective value of \eqref{eq:findrestnoisy} when $x$ is chosen to be $\m^{[r']}_i$. Fact~\ref{fact:perturb} will then let us conclude that the solution to \eqref{eq:findrestnoisy} cannot be entrywise 1/4-far from $\m^{[r']}_i$.

\begin{lem}
	Let $i\not\in J$ be a non-impostor. Then $$\norm{\M^{\dagger}\vert^{[r']}_{\mathcal{B}}\cdot\diag(\tilde{\pi}^{[r']})\cdot\m^{\dagger}\vert^{[r']}_i- \tilde{\EE}\vert^{\{i\}}_{\mathcal{B}}}_{\infty}\le (k+1)(\esamp + k\tau_{small}) + k\cdot\tilde{\pi}^{r'+1},$$ where we take $\tilde{\pi}^{r'+1}$ to be zero if $r' = r$.
\end{lem}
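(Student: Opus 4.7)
My plan is to bound the left-hand side by splitting it into a ``tail'' piece (coming from dropping coordinates $j>r'$ of $\tilde\pi$) and a ``full-$r$'' piece (the residual at moment $T\cup\{i\}$ after using all of $\tilde\pi$). The tail will be handled immediately, and then the full-$r$ piece will be attacked by passing through the identity $\M^\dagger_{T\cup\{i\},j}=m^\dagger_{i,j}\M^\dagger_{T,j}$ (valid because $i\notin T$) and comparing to the ground truth via $\vec\pi^\dagger$.

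Concretely, for any $T\in\B$ write
\begin{align*}
\Big(\M^{\dagger}|^{[r']}_{\mathcal{B}}\diag(\tilde\pi^{[r']})\m^{\dagger}|^{[r']}_i\Big)_T
&=\sum_{j=1}^{r'}\tilde\pi_j\,m^\dagger_{i,j}\M^\dagger_{T,j}
=\sum_{j=1}^{r'}\tilde\pi_j\M^\dagger_{T\cup\{i\},j}\\
&=\sum_{j=1}^{r}\tilde\pi_j\M^\dagger_{T\cup\{i\},j}\;-\;\sum_{j=r'+1}^{r}\tilde\pi_j\M^\dagger_{T\cup\{i\},j}.
\end{align*}
The second sum is a trivial tail: each $\tilde\pi_j\le\tilde\pi^{r'+1}$ by the assumed sorting and each $\M^\dagger_{T\cup\{i\},j}\in[0,1]$, so this piece is at most $(r-r')\tilde\pi^{r'+1}\le k\tilde\pi^{r'+1}$. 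This accounts for the $k\cdot\tilde\pi^{r'+1}$ term. It remains to show that the first sum is within $(k+1)(\esamp+k\tau_{small})$ of $\tilde\EE^{\{i\}}_T$.

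For the full-$r$ piece I will compare to the ``ground truth'' combination with $\vec\pi^\dagger$ in place of $\tilde\pi$. Since $\m^\dagger,\vec\pi^\dagger$ realize the same distribution on $J\cup K$ as the first $k'$ columns of $(\pi,\m)$ do (Lemma~\ref{lem:collapse}), and since $T\cup\{i\}\subseteq J\cup K$, one has
\[
\sum_{j=1}^{r}\pi^\dagger_j\M^\dagger_{T\cup\{i\},j}
=\sum_{\ell=1}^{k'}\pi_\ell\M_{T\cup\{i\},\ell}
=\E_\D[x_{T\cup\{i\}}]-\sum_{\ell=k'+1}^{k}\pi_\ell\M_{T\cup\{i\},\ell}.
\]
The omitted $\ell>k'$ tail contributes at most $k\tau_{small}$ in absolute value (each $\pi_\ell<\tau_{small}$ and at most $k$ of them). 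Combining with $|\E_\D[x_{T\cup\{i\}}]-\tilde\EE^{\{i\}}_T|\le\esamp$ gives
\[
\Big|\sum_{j=1}^{r}\pi^\dagger_j\M^\dagger_{T\cup\{i\},j}-\tilde\EE^{\{i\}}_T\Big|\le\esamp+k\tau_{small}.
\]

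The remaining job — and what I expect to be the main obstacle — is to bound
\[
\Big|\sum_{j=1}^{r}(\tilde\pi_j-\pi^\dagger_j)\,\M^\dagger_{T\cup\{i\},j}\Big|.
\]
I will exploit the fact that $i\in K$, which by Lemma~\ref{lem:lmcfrbasis_noisy} means $\M^\dagger_{T\cup\{i\}}$ lies in the row span of $\{\M^\dagger_{T''}\}_{T''\in\B}$. Writing $\M^\dagger_{T\cup\{i\}}=\sum_{T''\in\B}\alpha_{T''}\M^\dagger_{T''}$ converts the above expression into $\sum_{T''}\alpha_{T''}\big(\M^\dagger_{T''}(\tilde\pi-\pi^\dagger)\big)$. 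By the regression optimality defining $\tilde\pi$ in \eqref{eq:findmixnoisy}, comparing with the competitor $\vec\pi^\dagger$ and using the same ``drop the $\ell>k'$ tail'' estimate (eq.~\eqref{eq:ekt}), each coordinate $(\M^\dagger_\B(\tilde\pi-\pi^\dagger))_{T''}$ is at most $2(\esamp+k\tau_{small})$. The tricky piece is controlling $\sum_{T''}|\alpha_{T''}|$; since the coefficients $\alpha_{T''}$ arise from expressing a row of the $k\times k$ sub-matrix $\M^\dagger$, I can bound them via the singular-value estimate of Lemma~\ref{lem:cond_number} and then absorb the resulting factor into the choice of $\esamp\le 2^{-\Cr{precond3}k^3}\upsilon$ and $\tau_{small}\le 2^{-\Cr{hard}k^3}\upsilon$ fixed in Lemma~\ref{lem:goodguess_noisy}, obtaining the clean bound $(k-1)(\esamp+k\tau_{small})$ for this cross-term. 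Adding everything — the $\esamp+k\tau_{small}$ from the $\vec\pi^\dagger$ comparison, the $(k-1)(\esamp+k\tau_{small})$ from the $(\tilde\pi-\vec\pi^\dagger)$ cross-term, and the $k\tilde\pi^{r'+1}$ from the truncation tail — yields the stated $(k+1)(\esamp+k\tau_{small})+k\tilde\pi^{r'+1}$ bound.
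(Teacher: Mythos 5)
Your decomposition follows the paper's: split off the truncation tail (paying $k\tilde\pi^{r'+1}$), compare $\tilde\pi$ to the ground truth $\vec{\pi}^{\dagger}$ using the identity $\M^\dagger\cdot\vec{\pi}^{\dagger} = \M'\vert_{2^{J\cup K}}\cdot\vec{\pi}'$ to pay $\esamp + k\tau_{small}$, and then bound the remaining cross-term $\norm{\M^\dagger\vert_{\{T\cup\{i\}:T\in\B\}}(\tilde\pi - \vec{\pi}^{\dagger})}_\infty$. The first two pieces of your argument are correct and match the paper.

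The cross-term is where you have a real gap, and you flag it yourself. Expanding $\M^\dagger_{T\cup\{i\}}$ in the basis $\{\M^\dagger_{T''}\}_{T''\in\B}$ (valid since $i\in K$, via Lemma~\ref{lem:lmcfrbasis_noisy}) and using regression optimality to get $|\M^\dagger_{T''}(\tilde\pi-\vec{\pi}^{\dagger})|\le 2(\esamp+k\tau_{small})$ is fine, but the basis coefficients can only be controlled via Lemma~\ref{lem:cond_number}, which yields $\sum_{T''}|\alpha_{T''}|\le k\cdot 2^{\Cr{precond2}k^2}$. The proposal to ``absorb the resulting factor into the choice of $\esamp$ and $\tau_{small}$'' is not a legitimate proof step here: the lemma's conclusion, $(k+1)(\esamp+k\tau_{small}) + k\tilde\pi^{r'+1}$, is expressed \emph{in} $\esamp$ and $\tau_{small}$, so one cannot invoke external smallness constraints on those same parameters to delete a $2^{\Theta(k^2)}$ coefficient --- that would change the statement being proved. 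What your argument honestly establishes is a bound of order $(\esamp+k\tau_{small})\cdot O\bigl(k\cdot 2^{\Cr{precond2}k^2}\bigr) + k\tilde\pi^{r'+1}$. (Your final bookkeeping is also off by one: $(\esamp+k\tau_{small}) + (k-1)(\esamp+k\tau_{small}) = k(\esamp+k\tau_{small})$, not $(k+1)(\esamp+k\tau_{small})$.) The weaker bound would still suffice for Corollary~\ref{cor:round}, which already divides by $\sig(\M^\dagger\vert^{[r']}_\B)\ge 2^{-\Cr{precond2}k^2}$ and only needs $\Cr{hard}$ large enough, so nothing downstream breaks; but as a proof of the lemma as stated the gap is real. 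For what it is worth, the paper's own proof is terse at exactly this point --- it asserts the $k(\esamp+k\tau_{small})$ bound on this cross-term without elaboration --- so you have correctly located the one genuinely delicate piece of the argument.
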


\begin{proof}
	We have that \begin{align*}\norm{\M^{\dagger}\vert_{\mathcal{B}}\cdot\diag(\tilde{\pi})\cdot\m^{\dagger}_i - \tilde{\EE}\vert^{\{i\}}_{\mathcal{B}}}_{\infty}&\le \norm{\M^{\dagger}\vert_{\mathcal{B}}\cdot\diag(\vec{\pi}^{\dagger})\cdot\m^{\dagger}_i - \tilde{\EE}\vert^{\{i\}}_{\mathcal{B}}}_{\infty} + k(\esamp + k\tau_{small}) \\
	&\le (k+1)(\esamp + k\tau_{small}),\end{align*} where in the second step we used the fact that $$\M^{\dagger}\vert_{\mathcal{B}}\cdot\diag(\vec{\pi}^{\dagger})\cdot\m^{\dagger}_i = \M^{\dagger}\vert_{\{S\cup\{i\}: S\in\mathcal{B}\}}\cdot\diag(\vec{\pi}^{\dagger}) = \M'\vert_{\{S\cup\{i\}: S\in\mathcal{B}\}}\cdot\diag(\vec{\pi}').$$ For $r' < r$, $$\norm{\M^{\dagger}\vert^{[r']}_{\mathcal{B}}\cdot\diag(\tilde{\pi}^{[r']})\cdot\m^{\dagger}\vert^{[r']}_i - \M^{\dagger}\vert_{\mathcal{B}}\cdot\diag(\tilde{\pi})\cdot\m^{\dagger}_i}_{\infty}\le k\cdot\tilde{\pi}^{r'+1},$$ so by the triangle inequality the claim follows.
\end{proof}

\begin{cor}
	There exists some $\Cl[c]{hard}>0$ for which the following holds. Let $i\not\in J$ be a non-impostor. If $\esamp,\tau_{small} < 2^{-\Cr{hard}k^3}\upsilon$, then $\norm{\tilde{\m}_i - \m^{\dagger}\vert^{[r']}_i}_{\infty}< 1/4$.\label{cor:round}
\end{cor}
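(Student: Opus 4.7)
The plan is to apply Fact~\ref{fact:perturb} to the regression \eqref{eq:findrestnoisy}, taking $A = \overbar{\M}\vert^{[r']}_{\mathcal{B}}\cdot\diag(\tilde{\pi}^{[r']})$, $b = \tilde{\EE}\vert^{\{i\}}_{\mathcal{B}}$, and reference point $x = \m^{\dagger}\vert^{[r']}_i\in[0,1]^{r'}$. Because Lemma~\ref{lem:goodguess_noisy} is being proven under the guess $\overbar{\m}\vert_J = \m^{\dagger}\vert_J$, and every $T\in\mathcal{B}$ lies in $2^J$, the matrix $\overbar{\M}\vert_{\mathcal{B}}$ coincides with $\M^{\dagger}\vert_{\mathcal{B}}$, so $A$ is exactly the matrix whose regression residual was bounded in the lemma preceding this corollary.

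Given that setup, the numerator $\norm{Ax - b}_\infty$ is already at most $(k+1)(\esamp + k\tau_{small}) + k\tilde{\pi}^{r'+1}$. For the denominator I will use super-multiplicativity $\sig(A)\ge\sig(\M^{\dagger}\vert^{[r']}_{\mathcal{B}})\cdot\sig(\diag(\tilde{\pi}^{[r']}))$. The second factor is just the smallest diagonal entry $\tilde{\pi}^{r'}$, which \eqref{eq:lastentry} bounds below by $2^{-\Cr{gap}k^3}\upsilon$. For the first factor, $\M^{\dagger}\vert_{\mathcal{B}}$ is an $r\times r$ invertible matrix whose rows are entrywise products of at most $O(k)$ rows of the rank-$r$, $\{0,1/2,1\}$-valued marginals matrix $\m^{\dagger}$, and restricting to any $r'$ columns preserves full column rank, so Lemma~\ref{lem:cond_number} applied with its parameter $k$ set to $r'\le k$ yields $\sig(\M^{\dagger}\vert^{[r']}_{\mathcal{B}})\ge 2^{-\Cr{precond2}k^2}$.

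Feeding these bounds into Fact~\ref{fact:perturb} splits $\norm{\tilde{\m}_i - \m^{\dagger}\vert^{[r']}_i}_{\infty}$ into two pieces: a truncation piece proportional to $\tilde{\pi}^{r'+1}/\tilde{\pi}^{r'}$, and a sampling-noise piece proportional to $(\esamp + k\tau_{small})/(\sig(\M^{\dagger}\vert^{[r']}_{\mathcal{B}})\cdot\tilde{\pi}^{r'})$. For the truncation piece, the definition of $r'$ forces either $\tilde{\pi}^{r'+1}=0$ or $\tilde{\pi}^{r'+1}/\tilde{\pi}^{r'}\le 2^{-\Cr{gap}k^2}$, so taking $\Cr{gap}$ sufficiently larger than $\Cr{precond2}$ makes this piece at most $1/8$. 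For the sampling-noise piece, the hypothesis $\esamp,\tau_{small}\le 2^{-\Cr{hard}k^3}\upsilon$ combined with $\tilde{\pi}^{r'}\ge 2^{-\Cr{gap}k^3}\upsilon$ bounds it by $k^{O(1)}\cdot 2^{(\Cr{precond2}+\Cr{gap}-\Cr{hard})k^3}$, which taking $\Cr{hard}$ sufficiently large drives below $1/8$. Summing the two pieces gives the claimed $1/4$ bound.

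The one step to verify carefully is that Lemma~\ref{lem:cond_number} really applies to the $r\times r'$ submatrix $\M^{\dagger}\vert^{[r']}_{\mathcal{B}}$: since $\M^{\dagger}\vert_{\mathcal{B}}$ is invertible, any $r'$ of its columns remain linearly independent, and $\m^{\dagger}\vert^{[r']}$ is itself the $\{0,1/2,1\}$-valued marginals matrix of a rank-$r'$ mixture of subcubes, so the hypotheses of the lemma are met. Beyond that, the remaining work is just bookkeeping of exponents to confirm that $\Cr{gap}$ and $\Cr{hard}$ can be chosen as absolute constants independent of $n$ and $k$; the correct scaling is that $\Cr{hard}$ must be chosen after $\Cr{gap}$, which in turn must be chosen after $\Cr{precond2}$.
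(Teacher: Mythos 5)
Your proposal is correct and follows the paper's argument essentially line for line: both apply Fact~\ref{fact:perturb} to the regression \eqref{eq:findrestnoisy}, bound the numerator via the preceding lemma, bound the denominator by $\sig(\M^{\dagger}\vert^{[r']}_{\mathcal{B}})\cdot\tilde{\pi}^{r'}$ using Lemma~\ref{lem:cond_number} and \eqref{eq:lastentry}, and then split the resulting bound into a sampling-noise term and a truncation term that are each driven below $1/8$ by choosing $\Cr{gap}$ and then $\Cr{hard}$ large enough. Your extra paragraph verifying that Lemma~\ref{lem:cond_number} genuinely applies to the $r\times r'$ column-restricted submatrix (since $\M^{\dagger}\vert_{\mathcal{B}}$ is invertible and $\m^{\dagger}\vert^{[r']}$ is itself a $\{0,1/2,1\}$-valued marginals matrix of a rank-$r'$ mixture) is a point the paper leaves implicit, but the conclusion matches.
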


\begin{proof}
	By Fact~\ref{fact:perturb}, \begin{align*}|\tilde{\m}_i - \m^{\dagger}\vert^{[r']}_i| &\le\frac{2(k+1)(\esamp + k\tau_{small}) + 2\tilde{\pi}^{r'+1}}{\sigma^{\infty}_{\min}(\M^{\dagger}\vert^{[r']}_{\mathcal{B}})\cdot\tilde{\pi}^{r'}} \\
	&< 2^{\Cr{precond2}k^2}\cdot\left(\frac{2(k+1)(\esamp + k\tau_{small})}{2^{-\Cr{gap}k^3}\upsilon} + 2^{-\Cr{gap}k^2+1}\right).\end{align*} where the second step follows from Lemma~\ref{lem:cond_number} and \eqref{eq:lastentry} We conclude that as long as $\esamp,\tau_{small}\le 2^{-\Cr{hard}k^3}\upsilon$ for sufficiently large $\Cr{hard}>0$, and $\Cr{gap}$ is large enough relative to $\Cr{precond2}$, we have that $|\tilde{\m}_i - \m^{\dagger}\vert^{[r']}_i|< 1/4$.
\end{proof}

In other words, Corollary~\ref{cor:round} tells us that for every non-impostor $i$, if we round each entry of $\tilde{\m}_i$ to the nearest element of $\{0,1/2,1\}$, we will recover $\m^{\dagger}\vert^{[r']}_i$. We can now finish the proof of Lemma~\ref{lem:goodguess_noisy}.

\begin{proof}[Proof of Lemma~\ref{lem:goodguess_noisy}]
	We have already shown that $\overbar{\m}$ defined in the statement of the lemma is equal to $\m^{\dagger}\vert^{[r']}_{J\cup K}$. By Corollary~\ref{cor:muclose}, $\norm{\tilde{\pi} - \vec{\pi}^{\dagger}}_{\infty}\le 2\esamp\cdot 2^{\Cr{precond2}k^2}$, so \begin{align}\norm{\overbar{\M}\cdot(\tilde{\pi}^{[r']})^{\top} - \tilde{\EE}^{\emptyset}_{2^{J\cup K}}}_{\infty} &\le \norm{\M^{\dagger}\cdot(\vec{\pi}^{\dagger})^{\top} - \tilde{\EE}^{\emptyset}_{2^{J\cup K}}} + k\cdot (2\esamp\cdot 2^{\Cr{precond2}k^2} + \upsilon) \nonumber \\
	&= \norm{\M'_{2^{J\cup K}}\cdot(\vec{\pi}')^{\top} - \tilde{\EE}^{\emptyset}_{2^{J\cup K}}}_{\infty} + k\cdot(2\esamp\cdot 2^{\Cr{precond2}k^2} + \upsilon) \nonumber \\
	&= \esamp + k\cdot\tau_{small} + k\cdot(2\esamp\cdot 2^{\Cr{precond2}k^2} + \upsilon)\label{eq:discrep}
	\end{align}

	It remains to show that we don't lose much if we take $\overbar{\pi}$ to be the normalization of $\tilde{\pi}^{[r']}$. First note that $\sum^r_{i=1}\vec{pi}^{\dagger}_i = \M^{\dagger}_{\emptyset}\cdot(\vec{\pi}^{\dagger})^{\top}$. But $\emptyset\in\mathcal{B}$ and $$\norm{\M^{\dagger}\vert_{\mathcal{B}}\cdot(\vec{\pi}^{\dagger})^{\top} - \tilde{\E}[x_S]}_{\infty}\le \norm{\M^{\dagger}\vert_{\mathcal{B}}\cdot(\tilde{\pi})^{\top} - \tilde{\E}[x_S]}_{\infty}\le\esamp + k\tau_{small}$$ by \eqref{eq:ekt} and the definition of $\tilde{\pi}$. So we get that $$\sum^r_{i=1}\tilde{\pi}^i \ge \sum^r_{i=1}\vec{\pi}^{\dagger}_i - \esamp - k\tau_{small} = \sum^r_{i=1}\vec{\pi}'^i - \esamp - k\tau_{small} \ge 1 - \esamp - 2k\tau_{small},$$ where the equality follows from Lemma~\ref{lem:collapse}. We conclude that $$1/Z := \left(\sum^{r'}_{i=1}\tilde{\pi}^i\right)^{-1}\le (1 - \esamp - 2k\tau_{small} - k\upsilon)^{-1}\le 1 + 2\esamp + 4k\tau_{small} + 2k\upsilon$$ for $\esamp,\tau_{small},\upsilon$ small enough. So $$\norm{\overbar{\M}\cdot(\tilde{\pi}^{[r']})^{\top} - \frac{1}{Z}\overbar{\M}\cdot(\tilde{\pi}^{[r']})^{\top}}_{\infty}\le 2\esamp + 4k\tau_{small} + 2k\upsilon.$$ This together with \eqref{eq:discrep} give us the lemma provided the bounds on $\tau_{small},\esamp$ from the statement of Corollary~\ref{cor:round} hold and provided $\tau_{small}\le \epsilon\cdot k^{-\Cr{hypo}k-1}/30$, $\upsilon\le \epsilon\cdot k^{-\Cr{hypo}k - 1}/18$, and $\esamp\le 2^{-\Cl[c]{precond3}k^3}\cdot\epsilon$ for sufficiently large $\Cr{precond3}>0$.
\end{proof}

All of this gives us the subroutine \textsc{NonDegenerateLearn} specified by Algorithm~\ref{alg:outline_noisy}. Given $\D, S, s$ for which $\Pr_{y\sim\D}[y_S = s]$ is sufficiently large and enough samples from $\D$, \textsc{NonDegenerateLearn} either successfully learns $(\D\vert x_S = s)$ if there are no impostors or outputs a list of subsets of size $2\log(2k)$, of which at least one must contain an impostor, and such that the size of the list does not depend on $n$.

We don't \emph{a priori} know the interval $[\tau_{small},\tau_{big}]$, so we instead consider $k+1$ windows $$[\tau\rho,\tau], [\tau\rho^2,\tau\rho], ..., [\tau\rho^{k+1},\tau\rho^k].$$ The mixing weights of our $[\tau_{small},\tau_{big}]$-avoiding rank-$k$ realization of $\D$ avoid at least one of these windows, so \textsc{NonDegenerateLearn} will simply try each of them.

\begin{figure}[ht]
\centering
\myalg{alg:outline_noisy}{NonDegenerateLearn (for mixtures of subcubes)}{
	Input: Mixture of subcubes $(\D\vert x_S = s)$, counter $k$

	Output: Either a mixture of subcubes with mixing weights $\overbar{\pi}$ and marginals matrix $\overbar{\m}$ realizing $\D'$ for which $\tvd(\D',\D\vert x_S = s)\le\epsilon$, or a set $\mathcal{U}$ of at most $3^{k^2}$ subsets $W$, each of size at most $k + 2\log(2k)$ and for which $\rank(\M'\vert_{\PP'(T)})<k'$ for at least one $T$

	Parameters: $\esamp= k^{-O(k^3)}\epsilon$, $\tau= 2^{-O(k^3)}\epsilon$, $\rho = k^{-O(k^2)}$

	\begin{enumerate}
		\item Take $R$ samples $y$ from $\D$. If none of them are such that $y_S = s$, then return the distribution supported solely on $1^n$.\label{step:trunc}
		\item Initialize $\mathcal{U}$ to empty.
		\item For every choice of $[\tau_{small},\tau_{big}]\in\{[\rho\tau,\tau],[\rho^2\tau,\rho\tau],...,[\rho^{k+1}\tau,\rho^{k}\tau]\}$, run \textsc{GrowByOne} to obtain $\B = \{T_1,...,T_r\}$.\label{step:growbyone}
		\item For each $\B$ obtained in this way:
		\begin{enumerate}[(a)]
		 	\item Define $J = T_1\cup\cdots\cup T_r$\label{step:defineJ}
		 	\item For every guess $\overbar{\m}\vert_J\subseteq\{0,1/2,1\}^{|J|\times r}$:
			\begin{enumerate}[(i)]
				\item Form estimates $\tilde{\EE}\vert^{\emptyset}_{\mathcal{B}}$ and solve \eqref{eq:findmixnoisy} for $\tilde{\pi}\in\Delta^r$. Sort the entries of $\tilde{\pi}$ so that $\tilde{\pi}^1\ge\cdots\ge\tilde{\pi}^r$.
				\item Pick the largest $1\le r' < r$ for which $\tilde{\pi}^{r'}/\tilde{\pi}^{r'+1}\ge 2^{\Cr{gap}k^2}$ and define $\tilde{\pi}^{[r']}$ to be the first $r'$ entries of $\tilde{\pi}$ and $\overbar{\M}^{[r']}$ to be the first $r'$ columns of $\overbar{\M}$. If no such $r'$ exists, pick $r' = r$.
				\item For each $i\not\in J$, form estimates $\tilde{\EE}\vert^{\{i\}}_{\mathcal{B}}$, solve \eqref{eq:findrestnoisy}, and round entrywise to the nearest $\overbar{\m}_i\in\{0,1/2,1\}^r$.
				\item Normalize $\tilde{\pi}^{[r']}$ to $\overbar{\pi}\in\Delta^{r'}$. Define $\overbar{\m}\in \{0,1/2,1\}^{n\times r'}$ to be the concatenation of $\overbar{\m}^{[r']}$ and $\overbar{\m}_i$ for all $i\not\in J$.
				\item For every $T\subseteq[n]$ of size at most $2\log(2k)$: compute estimate $\tilde{\E}_{\D}[x_T]$ of $\E_{\D}[x_T]$ to within $\frac{1}{2}\epsilon\cdot k^{-\Cr{hypo} k}$. If $|\overbar{\M}_T\cdot\overbar{\pi}^{\top} - \tilde{\E}_{\D}[x_T]| > \frac{1}{2}\epsilon\cdot k^{-\Cr{hypo}k}$ for some $|T|\le 2\log(2k)$, add $J\cup T$ to $\mathcal{U}$ and return to 3.
				\item Otherwise, return the mixture of subcubes with mixing weights $\overbar{\pi}$ and marginals matrix $\overbar{\m}$.
			\end{enumerate}
		\end{enumerate} 
		\item If $k = 1$, return \textsc{Fail}. Else, return $\mathcal{U}$.
	\end{enumerate}
}
\end{figure}

As we note in the fact below, the purpose of Step~\ref{step:trunc} is to ignore $S,s$ for which $\Pr_{y\sim\D}[y_S = s]$ is too small for one to reliably simulate samples from $(\D\vert x_S = s)$.

\begin{fact}
The following holds for any $\delta > 0$. Let $R = \tau_{trunc}^{-1}\cdot\ln(1/\delta)$, and let $\D$ be a mixture of $k$ subcubes, $S\subseteq[n]$, and $s\in\{0,1\}^{|S|}$. If $\Pr_{y\sim\D}[y_S = s]\le\tau_{trunc}\delta/\ln(1/\delta)$, then with probability at least $1 - \delta$, \textsc{NonDegenerateLearn} terminates at Step~\ref{step:trunc}. If $\Pr_{y\sim\D}[y_S = s]\ge\tau_{trunc}$, then with probability at most $\delta$, \textsc{NonDegenerateLearn} terminates at Step~\ref{step:trunc}. \label{fact:trunc}
\end{fact}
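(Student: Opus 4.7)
The plan is to prove both implications by a direct calculation of the probability that none of the $R$ samples drawn in Step~\ref{step:trunc} satisfy $y_S = s$, since this is precisely the termination event for that step. Let $p := \Pr_{y\sim\D}[y_S = s]$. Because the $R = \tau_{trunc}^{-1}\ln(1/\delta)$ samples are i.i.d., the termination probability equals $(1-p)^R$.

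For the first implication, I would apply Bernoulli's inequality $(1-p)^R \ge 1 - pR$. Substituting the hypothesis $p \le \tau_{trunc}\delta/\ln(1/\delta)$ gives $pR \le \delta$, so the termination probability is at least $1-\delta$. For the second implication, I would instead use the standard bound $(1-p)^R \le e^{-pR}$, and substitute $p \ge \tau_{trunc}$ to get $pR \ge \ln(1/\delta)$, so the termination probability is at most $e^{-\ln(1/\delta)} = \delta$.

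There is no real obstacle here: both bounds are one-line tail estimates on the number of successes in $R$ independent Bernoulli trials, and the value of $R$ has been chosen precisely so that the two probability thresholds $\tau_{trunc}\delta/\ln(1/\delta)$ and $\tau_{trunc}$ line up with what these estimates require. The only minor care needed is to use Bernoulli's inequality (rather than $e^{-pR}$) in the first direction, since in that regime we need a \emph{lower} bound on $(1-p)^R$ that remains informative when $p$ is very small.
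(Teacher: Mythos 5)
Your proof is correct and is the standard elementary argument; the paper states this as a Fact without a written proof, and the termination event in Step~\ref{step:trunc} is indeed exactly that none of the $R$ i.i.d.\ samples satisfy $y_S = s$, so the probability is $(1-p)^R$ with $p = \Pr_{y\sim\D}[y_S = s]$. Your two bounds (Bernoulli's inequality for the lower bound when $p$ is small, and $(1-p)^R \le e^{-pR}$ for the upper bound when $p \ge \tau_{trunc}$) are precisely what the choice $R = \tau_{trunc}^{-1}\ln(1/\delta)$ is designed to make work.
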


Below, we summarize the guarantees for \textsc{NonDegenerateLearn}, which simply follow from Lemma~\ref{lem:condition} and the contrapositive of Lemma~\ref{lem:goodguess_noisy} applied to $(\D\vert x_S = s)$ instead of $\D$.

\begin{lem}
There exist $\esamp= k^{-O(k^3)}\epsilon$, $\tau = k^{-O(k^3)}\epsilon$, and $\rho = k^{-O(k^2)}$ for which the following holds. Let $\D$ be a mixture of $k$ subcubes, $S\subseteq[n]$, and $s\in\{0,1\}^{|S|}$. Suppose $(\D\vert x_S = s)$ has a rank-$r$ realization. An $\esamp$-sample-rich invocation of \textsc{NonDegenerateLearn} on $\D\vert x_S = s$ that does not terminate at Step~\ref{step:trunc} outputs either a mixture of subcubes $\overbar{\D}$ for which $\tvd(\D,\overbar{\D})\le\epsilon$, or a collection $\mathcal{U}$ of at most $3^{k^2}$ subsets $W\subseteq[n]\backslash S$ containing some $W$ for which $(\D\vert x_{S\cup W} = s\circ t)$ has a rank-$(r-1)$ realization for every $t\in\{0,1\}^{|W|}$.
	\label{lem:subcubeeitheror}
\end{lem}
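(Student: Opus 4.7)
The plan is to combine the structural results of this section --- the noisy rank certification of \textsc{GrowByOne} (Lemma~\ref{lem:growbyoneoutput_noisy}), the guessing and regression lemma (Lemma~\ref{lem:goodguess_noisy}), the robust low-degree identifiability lemma (Lemma~\ref{lem:hypo}), and the collapse/conditioning lemmas (Lemma~\ref{lem:collapse} and Lemma~\ref{lem:condition}) --- with a pigeonhole argument on the mixing weights of the fixed rank-$r$ realization, applied at the avoided interval among the $k+1$ windows tried in Step~\ref{step:growbyone} of \textsc{NonDegenerateLearn}.

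First I would do the windowing. Since the rank-$r$ realization of $(\D|x_S=s)$ has $r\le k$ mixing weights while Step~\ref{step:growbyone} probes $k+1$ disjoint intervals $[\tau\rho^{j+1},\tau\rho^j]$ for $j=0,\ldots,k$, pigeonhole produces an avoided interval $[\tau_{small},\tau_{big}]$ making the realization $[\tau_{small},\tau_{big}]$-avoiding. Let $k'\le r$ be the number of components with weight at least $\tau_{big}$ and $\M'$ the corresponding columns of the moment matrix; since $\tau=k^{-O(k^3)}\epsilon$, the components whose weights lie below $\tau_{small}$ contribute at most $k\tau_{small}\ll\epsilon$ in total variation distance, so it suffices to recover a mixture close to these $k'$ significant components. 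For the pass corresponding to this avoided window, Lemma~\ref{lem:growbyoneoutput_noisy} dichotomizes the output of \textsc{GrowByOne}: either it returns \textsc{Fail} with $J^*$ satisfying $\rank(\M'\vert_{\PP'(J^*)})<k'$, in which case Lemma~\ref{lem:collapse} combined with Lemma~\ref{lem:condition} implies that $(\D|x_{S\cup J^*}=s\circ t)$ has a rank-$(r-1)$ realization for every $t$ (and $|J^*|\le k$ by the greedy growth of \textsc{GrowByOne}), or it returns $\B$ that is robustly certified full rank and robustly locally maximal.

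In the successful case, set $J=\bigcup_{T\in\B}T$, observe $|J|\le k$ since every acceptance in \textsc{GrowByOne} strictly increases $|\B|\le k'\le k$ while adding exactly one new coordinate to $J$, and partition $[n]\setminus J$ into non-impostors $K$ and impostors $I$. When $K\neq\emptyset$, Lemma~\ref{lem:goodguess_noisy} promises that the correct guess $\overbar{\m}\vert_J=\m'\vert_J$ yields a candidate mixture $\overbar{\D}$ whose degree-$\le 2\log(2k)$ moments indexed by $T\subseteq J\cup K$ match those of $\D$ within $\tfrac{1}{2}\epsilon k^{-\Cr{hypo}k}$. Either some guess passes every moment test of Step 4(b)(v) --- in which case all its degree-$\le 2\log(2k)$ moments are $\epsilon k^{-\Cr{hypo}k}$-close to those of $\D$ and the contrapositive of Lemma~\ref{lem:hypo} (applied to mixtures of at most $2k$ subcubes) yields $\tvd(\overbar{\D},\D)\le\epsilon$ --- or the correct guess triggers a failure on some witness $T$, and since the match on subsets of $J\cup K$ is guaranteed, this $T$ must intersect $I$, so $J\cup T$ contains an impostor $i$; the inclusion $\PP'(J\cup T)\subseteq\PP'(J\cup\{i\})$ together with Lemma~\ref{lem:collapse} then shows that conditioning on $J\cup T$ strictly reduces the rank. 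The case $K=\emptyset$ is analogous: every coordinate outside $J$ is already an impostor, so any failing $T\not\subseteq J$ immediately gives a valid witness $J\cup T$, while for the correct guess, $\overbar{\M}_T=\M'_T$ and the approximately correct $\overbar{\pi}$ recovered from~\eqref{eq:findmixnoisy} ensure that tests on $T\subseteq J$ themselves pass.

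The chief obstacle will be threading a single choice of $\esamp,\tau,\rho$ through the cascading preconditions: \textsc{GrowByOne} via Lemma~\ref{lem:certified_noisy} demands $\esamp\le k^{-\Cr{robust}k^2}\tau_{big}$, Lemma~\ref{lem:goodguess_noisy} demands $\esamp,\tau_{small}\le 2^{-\Cr{precond3}k^3}\upsilon$ with $\upsilon=\Theta(\epsilon k^{-\Cr{hypo}k})$, and the moment-test tolerance of Step 4(b)(v) must simultaneously be tight enough to detect genuine failures and loose enough to pass whenever Lemma~\ref{lem:hypo} guarantees TV-closeness. Tracking the resulting nested exponentials in $k$ yields the stated parameters $\esamp=k^{-O(k^3)}\epsilon$, $\tau=k^{-O(k^3)}\epsilon$, and $\rho=k^{-O(k^2)}$, with the size bound $|\mathcal{U}|\le(k+1)\cdot 3^{|J|\cdot|\B|}\le 3^{k^2}$ following from $|J|\cdot|\B|\le k^2$ and the support bound $|W|\le k+2\log(2k)$ from $|J|\le k$ and $|T|\le 2\log(2k)$.
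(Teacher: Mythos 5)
Your proof takes essentially the same route as the paper, which condenses the argument to one sentence (``\ldots simply follow from Lemma~\ref{lem:condition} and the contrapositive of Lemma~\ref{lem:goodguess_noisy} applied to $(\D\vert x_S = s)$ instead of $\D$''): pigeonhole over the $k+1$ windows to find an avoided interval, dichotomize by the outcome of \textsc{GrowByOne} via Lemma~\ref{lem:growbyoneoutput_noisy}, apply Lemma~\ref{lem:goodguess_noisy} for the correct guess, use Lemma~\ref{lem:hypo} to certify a passing candidate, and use impostors plus Lemma~\ref{lem:collapse}/Lemma~\ref{lem:condition} to justify the recursion. The key ingredients and the logical skeleton match.

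One step is left implicit and deserves to be made explicit, since it is the only place where the conclusion is not a direct quotation of an earlier lemma. You correctly observe that when the window is avoided, $k'\le r$ of the $r$ columns of the rank-$r$ realization's moment matrix $\M$ are ``big,'' and that the degeneracy certificates produced along the way are statements of the form $\rank(\M'\vert_{\PP'(W)})<k'$ about the big-column submatrix $\M'$. But the lemma's conclusion is that $(\D\vert x_{S\cup W}=s\circ t)$ has a rank-$(r-1)$ realization, which via Lemma~\ref{lem:condition} and Lemma~\ref{lem:collapse} requires $\rank(\M\vert_{\PP(W)})<r$ for the \emph{full} $r$-column matrix. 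When $k'<r$ this does not follow formally without the column-block rank bound
\[
\rank(\M\vert_{\PP(W)}) = \rank(\M\vert_{\PP'(W)}) \le \rank(\M'\vert_{\PP'(W)}) + (r - k') \le (k'-1) + (r-k') = r-1,
\]
where the first equality is Lemma~\ref{lem:pigeon} and the middle inequality is subadditivity of rank over a partition of the columns into the $k'$ big and $r-k'$ small ones. Your preamble about the small columns being ignorable in total variation distance is a learning-guarantee observation and does not substitute for this counting argument, which is what lets the induction in Theorem~\ref{thm:leavesgood} go through on the \emph{original} conditional distribution rather than on the truncated mixture $\D^{big}$. Two further small points: the $K=\emptyset$ branch needs a sentence appealing to Corollary~\ref{cor:muclose} to justify that tests on $T\subseteq J$ pass for the correct guess (which you gesture at), and the tally $(k+1)\cdot 3^{|J|\cdot|\B|}$ is not literally bounded by $3^{k^2}$ — but the paper's $3^{k^2}$ is itself loose, so this is cosmetic.
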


% \begin{remark}
% 	In fact, in the case where \textsc{NonDegenerateLearn} outputs a a mixture of subcubes, it is not hard to see that the proof of Lemma~\ref{lem:subcubeeitheror} shows that \textsc{NonDegenerateLearn} can be modified to output a list containing \emph{all} $\overbar{\m}\in\{0,1/2,1\}^{(n-|S|)\times r}$ for which there exists $\vec{\mu}$ such that $(\vec{\mu},\m)$ realizes $(\D\vert x_S = s)$. This will be crucial in Section~\ref{subsec:proper} when we give a proper learning algorithm.\label{remark:infact}
% \end{remark}

\subsection{Correctness of \textsc{N-List}}

We complete the proof of Theorem~\ref{thm:main012} by verifying that the conditions of Lemma~\ref{lem:parameters} are satisfied by the output of a $(\esamp(\cdot),\delta_{edge},\tau_{trunc})$-sample-rich run of \textsc{N-List} on $\D$ and counter $k$.

\begin{thm}
There exists $\esamp = \epsilon\cdot k^{-O(k^3)}$ and absolute constant $\Cl[c]{totalinvokes}>0$ such that the following holds for any $\delta > 0$. Let $\D$ be a mixture of $k$ subcubes. If a run of \textsc{N-List} is $(\esamp,\delta_{edge},\tau_{trunc})$-sample-rich on input $\D$ and counter $k$, then with probability $1 - 3^{\Cr{totalinvokes}k^3}\cdot k^k\cdot\delta$, the output is a sampling tree such that all leaves $v_{T,t}$ for which $\Pr_{y\sim\D}[x_T = t]\ge\tau_{trunc}$ correspond to distributions $\epsilon$-close to $(\D\vert x_T = t)$.\label{thm:leavesgood}
\end{thm}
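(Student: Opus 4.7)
\medskip

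\noindent\textbf{Proof Proposal.} The plan is to proceed by induction on the counter $k$, propagating both a correctness guarantee and a failure-probability bound through the recursive structure of \textsc{N-List}. Throughout, we condition on the event that the run is $(\esamp,\delta_{edge},\tau_{trunc})$-sample-rich, where $\esamp = \epsilon\cdot k^{-O(k^3)}$ is chosen as in Lemma~\ref{lem:subcubeeitheror} and $\tau_{trunc}$ is chosen small enough that Fact~\ref{fact:trunc} gives meaningful truncation behavior. In the base case $k=1$, $(\D\vert x_S = s)$ is a single subcube (i.e., a single product distribution with $\{0,1/2,1\}$-valued marginals), so \textsc{NonDegenerateLearn} on counter $1$ returns either the learned mixture directly (and by Lemma~\ref{lem:subcubeeitheror} it is $\epsilon$-close) or \textsc{Fail} if it cannot. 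For the leaf analysis at $k=1$, the only nodes that matter are those where $\Pr_{\D}[x_S=s]\ge\tau_{trunc}$, and for these nodes Step~\ref{step:trunc} of \textsc{NonDegenerateLearn} does not abort (by Fact~\ref{fact:trunc}), so the returned distribution is $\epsilon$-close as desired.

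For the inductive step on counter $k$, consider any node $v_{S,s}$ reached in the recursion for which $\Pr_{\D}[x_S=s]\ge\tau_{trunc}$. By sample-richness, the invocation of \textsc{NonDegenerateLearn}$(\D\vert x_S = s,k)$ is $\esamp$-sample-rich, so Lemma~\ref{lem:subcubeeitheror} guarantees one of two outcomes: either it outputs a mixture $\overbar{\D}$ with $\tvd(\overbar{\D},\D\vert x_S = s)\le\epsilon$ (in which case we simply add the corresponding leaf-distribution to the candidate list $\mathcal{S}$), or it outputs a collection $\mathcal{U}$ of at most $3^{k^2}$ subsets $W$, among which some $W^\star$ has the property that each conditional $(\D\vert x_{S\cup W^\star}=s\circ t)$ is realizable as a mixture of strictly fewer than $k$ subcubes. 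For that good $W^\star$, every child node $v_{S\cup W^\star,s\circ t}$ with non-negligible probability satisfies the inductive hypothesis at counter $k-1$, so the recursive calls \textsc{N-List}$(\D,S\cup W^\star,s\circ t,k-1)$ each return a sampling tree whose leaves with probability $\ge\tau_{trunc}$ are $\epsilon$-close to the corresponding conditionals. Combining these trees using edge weights estimated to within $\delta_{edge}$ (via Step~\ref{step:weights}) and invoking a standard hybrid argument over the $2^{|W^\star|}$ children shows that the combined tree at $v_{S,s}$ is $O(\epsilon)$-close to $(\D\vert x_S = s)$ (the edge-weight error contributes only $2^{|W^\star|}\delta_{edge}\le k\cdot 2^{k+2\log(2k)}\delta_{edge}$, which is absorbed into $\epsilon$ by the choice of $\delta_{edge}$).

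Thus in either outcome the candidate list $\mathcal{S}$ at $v_{S,s}$ contains at least one distribution that is $O(\epsilon)$-close to $(\D\vert x_S=s)$. Applying \textsc{Select} (Lemma~\ref{lem:scheffe}) with $\epsilon_{select} = O(\epsilon)$ and sample-richness of the Scheff\'e tournament produces a sampling tree whose root distribution is $9.1\cdot O(\epsilon) = O(\epsilon)$-close to $(\D\vert x_S=s)$, as required. Adjusting the constants in $\esamp$, $\delta_{edge}$, and $\epsilon_{select}$ at the outset makes this final bound exactly $\epsilon$.

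For the failure probability, the invocations of \textsc{NonDegenerateLearn} across the recursion tree number at most $(3^{k^2}\cdot 2^{k+2\log(2k)})^k \le 3^{\Cr{totalinvokes}k^3}\cdot k^k$ for a suitable $\Cr{totalinvokes}$; each can fail (due to either a moment estimate being out of tolerance, a transition probability being misestimated, or \textsc{Select} picking a bad hypothesis) with probability at most $\delta$, so a union bound yields the stated $1 - 3^{\Cr{totalinvokes}k^3}\cdot k^k\cdot\delta$ success probability. The main obstacle is the hybrid argument that controls how the $\epsilon$-error at each child combines with the $\delta_{edge}$-error in the estimated transition probabilities as one walks down the sampling tree --- this has to be done carefully so that errors do not multiply across the up-to-$k$ levels of recursion but only add, which forces a careful setting of $\delta_{edge}$ as a small polynomial in $\epsilon/k$. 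A secondary subtlety is that we must tolerate nodes with $\Pr_{\D}[x_S=s]<\tau_{trunc}$: for such nodes the tree may contain arbitrary distributions, but their total weight under the sampling procedure is bounded by $\tau_{trunc}$ times the number of such nodes, and choosing $\tau_{trunc}$ appropriately small absorbs this contribution into $\epsilon$ in the overall statistical distance (this part will be used later in the proof of Theorem~\ref{thm:main012}, but is not strictly required by the leaf-wise closeness statement of this theorem).
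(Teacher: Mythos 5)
Your proposal has the right skeleton---induction on the counter, Lemma~\ref{lem:subcubeeitheror} to produce either a close mixture or a good conditioning set $W^\star$, and a union bound over all invocations of \textsc{NonDegenerateLearn}---but the inductive step proves the wrong claim and, in doing so, creates a gap. The theorem asserts a \emph{leaf-wise} property: every leaf $v_{T,t}$ of the output tree with $\Pr_\D[x_T = t]\ge\tau_{trunc}$ carries a distribution $\epsilon$-close to $(\D\vert x_T = t)$. Your inductive step instead aims to show that the \emph{root distribution} of the selected tree is $O(\epsilon)$-close to $(\D\vert x_S = s)$, via a hybrid argument over the $2^{|W^\star|}$ children and an accounting of $\delta_{edge}$-error. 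That is a statement about total variation of the whole tree, not about its leaves, and it does not imply the leaf-wise claim: \textsc{Select} picks a tree $\mathcal{T}$ only on the basis of total-variation closeness, so a priori the picked $\mathcal{T}$ could have a leaf that is far from its conditional yet contributes little mass. The paper's proof avoids this by observing that \emph{every} non-failing candidate tree is ``good'' (in the leaf sense): each leaf's distribution is the direct output of a sample-rich invocation of \textsc{NonDegenerateLearn} that did not terminate at Step~\ref{step:trunc}, so Lemma~\ref{lem:subcubeeitheror} applies at that leaf directly, regardless of which $W$ produced the subtree. Your argument only establishes goodness for the subtree coming from $W^\star$; you never argue that the other candidates in $\mathcal{S}$ (which \textsc{Select} might pick instead) also have good leaves.

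Two further, smaller issues. First, the hybrid argument is both unnecessary here and cannot be run with the freedom you need: $\tau_{trunc}$ and $\delta_{edge}$ are free parameters in the theorem statement, so you cannot ``absorb'' the accumulated edge-weight error into $\epsilon$ by ``the choice of $\delta_{edge}$''---the theorem must hold for whatever $\delta_{edge}$ the run uses. (The paper deliberately defers that error-accumulation analysis to Lemma~\ref{lem:parameters}, which is applied afterward in the proof of Theorem~\ref{thm:main012} with its own constraints on $\tau_{trunc}$ and $\delta_{edge}$.) Second, you mis-attribute the source of the residual failure probability: having conditioned on the run being sample-rich, the moment estimates and transition-probability estimates are already within tolerance by definition, and the Scheff\'e guarantee is folded into the runtime/sample bounds of Lemma~\ref{lem:runtime}. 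The $3^{\Cr{totalinvokes}k^3}k^k\delta$ failure probability in the theorem is specifically a union bound over Fact~\ref{fact:trunc}---the event that each invocation of \textsc{NonDegenerateLearn} terminates (or does not terminate) at Step~\ref{step:trunc} correctly according to whether $\Pr_\D[x_S = s]$ is below or above the truncation threshold.
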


\begin{proof}
By the proof of Lemma~\ref{lem:runtime} with $S = k + 2\log(2k)$ and $U = k\cdot 3^{k^2}$, the total number of invocations of \textsc{NonDegenerateLearn} is at most $2^{Sk}U^k\le 3^{\Cr{totalinvokes}k^3}\cdot k^k$. By Fact~\ref{fact:trunc} and a union bound over these invocations, with probability at least $1 - 3^{\Cr{totalinvokes}k^3}k^k\delta$ every invocation of \textsc{NonDegenerateLearn} on $(\D\vert x_S = s)$ for which $\Pr_{y\sim\D}[y_S = s]<\tau_{trunc}\delta/\ln(1/\delta)$ (resp. $\Pr_{y\sim\D}[y_S = s]\ge\tau_{trunc}$) does (resp. does not) terminate on Step~\ref{step:trunc}. Henceforth suppose this is the case.

We call a sampling tree \emph{good} if its leaves $v_{T,t}$ all satisfy that either $\Pr_{y\sim\D}[y_T = t]<\tau_{trunc}$ or they are $\epsilon$-close to $(\D\vert x_T = t)$.

It suffices to show by induction on $r$ that if $(\D\vert x_S = s)$ has a rank-$r$ realization then \textsc{N-List}($\D, S, s, r$) returns a good sampling tree. This is certainly true for $r = 1$, in which case \textsc{N-List} returns the sampling tree given by a single node with distribution that's actually equal to $(\D\vert x_T = t)$.

Consider $r > 1$. There are three possibilities: \begin{enumerate}
	\item If $\Pr_{y\sim\D}[y_S = s]<\tau_{trunc}\delta/\ln(1/\delta)$, then \textsc{NonDegenerateLearn} terminates on Step~\ref{step:trunc} instead of potentially returning \textsc{Fail}, and the inductive step is vaculously complete.

	\item If $\Pr_{y\sim\D}[y_S = s]\ge\tau_{trunc}$, then \textsc{NonDegenerateLearn} does not terminate at Step~\ref{step:trunc}.

	\item If $\tau_{trunc}\delta/\ln(1/\delta)\le \Pr_{y\sim\D}[y_S = s]<\tau_{trunc}$, then either \textsc{NonDegenerateLearn} terminates on Step~\ref{step:trunc} and the inductive step is vacuously complete, or \textsc{NonDegenerateLearn} does not terminate at Step~\ref{step:trunc}.
\end{enumerate}

In cases 2) and 3) above where \textsc{NonDegenerateLearn} does not terminate, the invocation of \textsc{NonDegenerateLearn} is $\esamp$-sample-rich because \textsc{N-List} is $(\esamp,\delta_{edge},\tau_{trunc})$-sample-rich by assumption, so by Lemma~\ref{lem:subcubeeitheror}, either \textsc{NonDegenerateLearn} outputs a mixture with mixing weights $\pi$ and marginals matrix $\m$ which is $\epsilon$-close to $(\D\vert x_S = s)$ and we're done, or it outputs some collection $\mathcal{U}$.

We claim it's enough to show there is \emph{some} guess $W\in\mathcal{U}$ for which \textsc{N-List}($\D, S\cup W, s\circ t, r-1$) does not return \textsc{Fail}. Suppose we instead get some sampling tree $\mathcal{T}$. For any leaf node $v_{T,t}$ of $\mathcal{T}$ for which $\Pr_{y\sim\D}[y_T = t]\ge\tau_{trunc}$, the corresponding distribution is $\epsilon$-close to $(\D\vert x_T = t)$ by Lemma~\ref{lem:subcubeeitheror}. So any such $\mathcal{T}$ would be good, and \textsc{Select} would simply pick one of these.

Finally, to show the existence of such a guess $W$, we appeal once more to Lemma~\ref{lem:subcubeeitheror}, which implies that $\mathcal{U}$ must contain some $W$ for which $(\D\vert x_{S\cup W} = s\circ t)$ has a rank-$(r-1)$ realization for every $t\in\{0,1\}^{|W|}$, and we're done by induction.
\end{proof}

To complete the proof of Theorem~\ref{thm:main012}, we use the following simple fact about sampling trees, which says that in a sampling tree $\mathcal{T}$, if all internal transition probabilities out of nodes $v_{S,s}$ for which $\Pr_{y\sim\D}[y_S = s]$ is sufficiently large are accurate, and if all distributions associated to leaves $v_{S,s}$ for which $\Pr_{y\sim\D}[y_S = s]$ is sufficiently large are accurate, then the distribution associated to $\mathcal{T}$ is close to $\D$.

\begin{lem}
	Let $\mathcal{T}$ be a sampling tree with depth $k$, maximal fan-out $d$, and $M := d^{\Theta(k)}$ nodes corresponding to a distribution $\D^*$. Denote by $V_{trunc}$ the set of $S,s$ indexing nodes $v_{S,s}$ of $\mathcal{T}$ for which $\Pr_{y\sim\D}[y_S = s]\le\tau_{trunc} := \epsilon/M$. Suppose $|w_{S,W,s,t} - \Pr_{y\sim\D}[y_W = t\vert y_S = s]|\le\eta := \frac{\epsilon}{2^kM}$ for all $S,s\not\in V_{trunc}$, and suppose that $\tvd(\D_{T,t},\D\vert x_T = t)\le\epsilon$ for any leaf $v_{S,s}$ with $S,s\in V_{trunc}$. Then $\tvd(\D_{\emptyset,\emptyset},\D)\le O(\epsilon)$.\label{lem:parameters}
\end{lem}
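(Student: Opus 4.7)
The plan is to reduce the lemma to a straightforward coupling argument. First, define an ideal sampling tree $\mathcal{T}^{*}$ with the same underlying graph structure as $\mathcal{T}$, but with every internal edge weight $w_{S,W,s,t}$ replaced by the true conditional probability $\Pr_{y\sim\D}[y_W = t\mid y_S = s]$ and every leaf distribution $\D_{S,s}$ replaced by $(\D\mid x_S = s)$. By the chain rule, the distribution associated to $\mathcal{T}^{*}$ is exactly $\D$, so it suffices to bound the total variation distance between the distributions associated to $\mathcal{T}$ and $\mathcal{T}^{*}$.

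Next I would construct a coupling $(X,X^{*})$ of a sample $X\sim \D_{\emptyset,\emptyset}$ and $X^{*}\sim \D$ by walking down the shared tree: at each internal node, use a maximal coupling of the two transition distributions prescribed by $\mathcal{T}$ and $\mathcal{T}^{*}$, and at each leaf, use a maximal coupling of the two leaf distributions. Then $\tvd(\D_{\emptyset,\emptyset},\D)\le\Pr[X\neq X^{*}]$, and I would decompose the failure event as the union of (a) the coupled walk (in $\mathcal{T}^{*}$) ever visits a node in $V_{trunc}$, (b) the two walks diverge at some internal node $v_{S,s}$ with $(S,s)\notin V_{trunc}$, or (c) the walks end at the same leaf $v_{T,t}$ with $(T,t)\notin V_{trunc}$ but their leaf samples disagree.

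For (a), since there are at most $M$ nodes in $V_{trunc}$ and each is reached in $\mathcal{T}^{*}$ with probability at most $\tau_{trunc}=\epsilon/M$, a union bound yields probability at most $\epsilon$. For (b), at any non-truncated internal node the two transition distributions differ entrywise by at most $\eta$ across at most $2^{|W|}\le 2^k$ outcomes, so their TV is at most $2^{k-1}\eta$; a union bound over the at most $k$ internal nodes on the sampled path and the choice $\eta=\epsilon/(2^k M)$ gives at most $k\epsilon/(2M)\le\epsilon$ (using $M\ge k$ since $\mathcal{T}$ contains a root-to-leaf path of length $k$). For (c), each non-truncated leaf distribution is $\epsilon$-close to the corresponding conditional, so the leaf coupling fails with probability at most $\epsilon$. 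Summing the three contributions yields $\Pr[X\neq X^{*}]\le 3\epsilon$.

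The only delicate point is the logical ordering of the failure modes: one must first handle (a) so that in the remaining regime both walks only encounter non-truncated nodes, whose transitions and leaf distributions are by hypothesis close to truth, and only then invoke (b) and (c). There is no substantial obstacle beyond bookkeeping; the argument is essentially a hybrid-style telescoping over the at most $k$ levels of the tree, and the main thing to check is that the chosen parameters $\tau_{trunc}=\epsilon/M$ and $\eta=\epsilon/(2^{k}M)$ precisely cancel the factors of $M$ and $2^k$ that arise from union bounding over nodes and over transition outcomes.
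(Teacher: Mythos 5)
Your coupling argument is correct and takes a genuinely different route than the paper. The paper defines the $L_1$-style quantity $\err_{trunc}(v_{S,s})$ (the $L_1$ error between $\D_{S,s}$ and $\D\vert x_S = s$ restricted away from strings that hit a truncated descendant), shows $\err_{trunc}(v_{\emptyset,\emptyset})\le O(\epsilon)$ suffices, and then propagates a bottom-up recurrence $\epsilon'\mapsto(d\eta+1)\epsilon'+d\eta$ across the $k$ levels, unrolling it at the end. You instead introduce the ideal tree $\mathcal{T}^*$ (whose associated distribution is exactly $\D$ by the chain rule), couple the two random walks via maximal couplings at each node, and union-bound the probability of disagreement over three failure events. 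Your decomposition into (a) touching a truncated node, (b) transition-coupling failure at a non-truncated internal node, and (c) leaf-coupling failure at a non-truncated leaf, is clean and the inclusion $\{X\neq X^*\}\subseteq(a)\cup(b)\cup(c)$ holds: if the walks first separate at $v_{S,s}$, then depending on whether $(S,s)\in V_{trunc}$ this is event (a) or (b), and similarly for leaves. Two small points: first, the per-node transition TV should be bounded by $\tfrac{1}{2}\,2^{|W|}\eta\le\tfrac{1}{2}d\eta$ rather than $2^{k-1}\eta$ (you don't know $|W|\le k$; the relevant quantity is the fan-out $d$), but the final accounting still goes through since $\eta=\epsilon/(2^kM)$ and $M=d^{\Theta(k)}$ easily force $\tfrac{1}{2}kd\eta\le\epsilon$; second, your worry about the "logical ordering" of (a), (b), (c) is actually unnecessary — these are just three events and a plain union bound applies, since (b) and (c) are by definition already restricted to non-truncated nodes so no conditioning on the complement of (a) is needed. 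Overall the coupling formulation avoids the paper's somewhat fiddly bookkeeping with the sets $\mathcal{U}^{S,s}_{trunc}$, at the cost of having to set up the ideal tree and the maximal couplings; both proofs isolate exactly the same three sources of error and pay for each with $O(\epsilon)$.
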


\begin{proof}
Denote by $\mathcal{U}^{S,s}_{trunc}$ the set of all $x\in\{0,1\}^{n-|S|}$ for which there exist $W\subseteq[n], t\in\{0,1\}^{|W|}$ such that $x_W = t$, $v_{S\oplus W,s\oplus t}$ is a node of $\mathcal{T}$ (not necessarily the direct descendent of $v_{S,s}$) and $\Pr_{\D}[x_{S\cup W} = s\oplus t]\le\tau_{trunc}$. In other words, $\mathcal{U}^{S,s}_{trunc}$ corresponds to strings $t$ over the coordinates $W$ such that further conditioning on $x_W = t$ leads to a vertex of $\mathcal{T}$ which occurs rarely enough that it doesn't matter how well we learn the posterior distribution $\D\vert x_{S\cup W} = s\oplus t$.

For any node $v_{S,s}$ associated to distribution $\D_{S,s}$, define $$\err_{trunc}(v_{S,s}) := \sum_{y\not\in\mathcal{U}^{S,s}_{trunc}}\left|\Pr_{\D_{S,s}}[y] - \Pr_{\D\vert x_S = s}[y]\right|.$$ In particular, if $\mathcal{U}^{S,s}_{trunc}$ were empty, $\err_{trunc}(v_{S,s})$ would just be $2\tvd(\D_{S,s},\D\vert x_S = s)$. 

First observe that it is enough to show that \begin{equation}\err_{trunc}(v_{\emptyset,\emptyset})\le O(\epsilon).\label{eq:suffice}\end{equation} To show this, first note that $\sum_{x\in\mathcal{U}^{\emptyset,\emptyset}_{trunc}}\Pr_{\D}[x]\le M\tau_{trunc} = \epsilon$. Furthermore, for any $y\in\mathcal{U}^{\emptyset,\emptyset}_{trunc}$, if $v_{W,t}$ is the closest node to the root for which $y_W = t$ and $\Pr_{\D}[x_{W} = t]\le\tau_{trunc}$, then because the weights on the edges of $\mathcal{T}$ are additively $\eta$-close to the true values and $v_{W,t}$ is distance at most $k$ from the root, $$|\Pr_{\D^*}[x_W = t] - \Pr_{\D}[x_W = t]|\le 2^k\eta.$$ So $$\sum_{x\in\mathcal{U}^{\emptyset,\emptyset}_{trunc}}\Pr_{\D^*}[x] \le \sum_{x\in\mathcal{U}^{\emptyset,\emptyset}_{trunc}}\Pr_{\D}[x] + 2^k\cdot M\cdot\eta\le M\tau_{trunc} + 2^k\cdot M\cdot\eta.$$ By triangle inequality we would then be able to conclude that $$2\tvd(\D^*,\D)\le O(\epsilon) + \sum_{x\in\mathcal{U}^{\emptyset,\emptyset}_{trunc}}(\Pr_{\D}[x] + \Pr_{\D^*}[x]) \le O(\epsilon) + 2M\tau_{trunc} + 2^k\cdot M\cdot\eta,$$ and by picking $\tau_{trunc}\le\epsilon/M$ and $\eta\le\epsilon/2^kM$, this would tell us that $\tvd(\D^*,\D)\le O(\epsilon)$.

To show \eqref{eq:suffice}, we show by induction that $\err_{trunc}(v_{S,s})\le O(\epsilon) \ \forall \ \text{vertices} \ v_{S,s} \ \text{of} \ \mathcal{T}$. This is vacuously true if $\Pr_{\D}[x_S = s]\le\tau_{trunc}$, so suppose otherwise.

If $v_{S,s}$ is a leaf, then we're done by assumption. Otherwise, by induction we know \begin{equation}\err_{trunc}(v_{S\cup W, s\oplus t})\le\epsilon'\label{eq:induct}\end{equation} for some $\epsilon'>0$ for all immediate descendants of $v_{S,s}$. Decompose $[n]\backslash S$ as $W\cup W'$. The true probability of drawing some string $t\oplus u\in\{0,1\}^{n-|S|}$ from $(\D\vert x_S = s)$ can be written as $$\P_{\D\vert x_S = s}[t\oplus u] = \P_{y\sim\D\vert x_S = s}[y_W = t]\cdot \P_{\D\vert x_{S\cup W} = s\oplus t}[u] := w_t\cdot p_u.$$ Let $\Pr_{\D_{S\cup W,s\oplus t}}[u] = p_u + \delta_u$ for some $\delta_u > 0$ for all $u\not\in\mathcal{U}^{S\cup W,s\oplus t}_{trunc}$. By inductive assumption \eqref{eq:induct}, $$\sum_{u\not\in\mathcal{U}^{S\cup W,s\oplus t}_{trunc}}|\delta_u| = \err_{trunc}(v_{S\cup W, s\oplus t})\le \epsilon'$$ for all $t\in\{0,1\}^{|W|}$. Then \begin{align*}\err_{trunc}(v_{S\cup W,s\oplus t}) &\le \sum_{t\in\{0,1\}^{|W|}}\sum_{u\not\in\mathcal{U}^{S\cup W, s\oplus t}_{trunc}}\left|\P_{\D_{S,s}}[t\oplus u] - \P_{\D\vert x_S = s}[t\oplus u]\right|\\
&\le \sum_{t\in\{0,1\}^{|W|}}\sum_{u\not\in\mathcal{U}^{S\cup W,s\oplus t}_{trunc}} w_t\cdot|\delta_u| + \sum_{t\in\{0,1\}^{|W|}}(\eta + \eta\epsilon')\\
&\le (2^{|W|}\eta + 1)\epsilon' + 2^{|W|}\eta \le (d\eta + 1)\epsilon' + d\eta.\end{align*} Unrolling the resulting recurrence tells us that $$\err_{trunc}(v_{\emptyset,\emptyset})\le (d\eta + 1)^k\epsilon + (d\eta+1)^{k+1}-1,$$ so as long as $\eta\le \frac{\epsilon}{dk^2}$, we have $\err_{trunc}(v_{\emptyset,\emptyset})\le 3\epsilon$. Because we are assuming $\eta \le \epsilon/2^kM$ and $M = d^{\Theta(k)}$, we certainly have that $\eta\le\frac{epsilon}{dk^2}$, completing the proof of \eqref{eq:suffice}.
\end{proof}

\begin{proof}[Proof of Theorem~\ref{thm:main012}]
Let $\alpha = \delta/(3^{\Cr{totalinvokes}k^3}k^k)$. Apply Lemma~\ref{lem:runtime} with $\tau_{trunc} = \epsilon/2^{k^2}$, $Z = n^{O(\log k)}$, $M = 1$, $U = 3^{k^2}$, $S = k + 2\log(2k)$, $T(r) = n^{O(\log k)} + \tau^{-1}_{trunc}\ln(1/\alpha)$, $\esamp = k^{-O(k^3)}\epsilon$, and $\epsilon_{select} = O(\epsilon)$ to get that achieving a $(\esamp,\delta_{edge},\tau_{trunc}\alpha/\ln(1/\alpha))$-sample-rich run of \textsc{N-List} on $\D$ with counter $k$ with probability $1 - \delta$ requires $O(k^{O(k^3)}\epsilon^{-3}\ln(1/\delta)n^{O(\log k)})$ time and $O(k^{O(k^3)}\epsilon^{-2}\log(n)\log(1/\delta))$ samples. By taking $\delta_{edge} = \epsilon/2^{k+k^2}$, we conclude by Theorem~\ref{thm:leavesgood} and Lemma~\ref{lem:parameters} with $d = 2^k$ that the output of \textsc{N-List} is $2\epsilon$-close to $\D$.\end{proof}

\section{Learning Mixtures of Product Distributions Over \texorpdfstring{$\{0,1\}^n$}{01n}}
\label{app:general}

In Section~\ref{sec:general_pre} we described our algorithm for learning general mixtures of product distributions over $\{0,1\}^n$ under the assumption that we had exact access to the accessible entries of $\EE$, when in reality we only have access to them up to some sampling noise $\esamp > 0$. In this section, we show how to remove the assumption of zero sampling noise and thereby give a complete description of the algorithm for learning mixtures of product distributions.

It will be convenient to define the following:

\begin{defn}
	Two distributions $\D$ and $\D'$ over $\{0,1\}^n$ are \emph{$(\epsilon,d)$-moment-close} if $|\E_{\D'}[x_S] - \E_{\D}[x_S]|\le\epsilon$ for all $S\subseteq[n]$ such that $|S|\le d$. We say mixing weights $\pi$ and marginals matrix $\m$ constitute an $(\epsilon,d)$-moment-close realization of $\D$ if the distribution they realize is $(\epsilon,d)$-moment close to $\D$.
\end{defn}

Let $\D$ be a mixture of $k$ product distributions over $\{0,1\}^n$. As in Section~\ref{app:subcube}, we will use $\tilde{\E}[x_S]$ to denote any $\esamp$-close estimate of $\E[x_S]$ and $\tilde{\EE}$ to denote a matrix consisting of $\esamp$-close estimates of the accessible entries of $\EE$.

\subsection{\textsc{NonDegenerateLearn} and Its Guarantees}
\label{sec:oneshotlearnguarantees}

Let $s(k) = 2k + 1 + (1 + 2 + \cdots + (k-1))$. We will define $\PP^{\dagger}_k(J)$ to be all subsets of $[n]\backslash J$ of size at most $s(k-1)$. The main properties we need about $s$ are that $s(0) = 1$ and that \begin{equation}
	s(k) = k+1 + s(k-1)\label{eq:basicsfact}m
\end{equation} Note that $s(k) = \Theta(k^2)$ even though we showed in Lemma~\ref{lem:hypo2} that degree-$O(k)$ moments are enough to robustly identify any mixture of $k$ product distributions. Roughly, the reason for doing so is that whereas we can always perfectly collapse matrices that are not full rank as in Lemma~\ref{lem:collapse} for learning mixtures of subcubes, collapsing matrices that are merely ill-conditioned as in Lemma~\ref{lem:cond_collapse} for learning mixtures of product distributions necessarily incurs some loss every time. We must ensure after collapsing ill-conditioned matrices $k'$ times from recursively conditioning $\D$ $k'$ times for any $k'\le k$ that these losses do not compound so that the resulting moment matrix of the conditional distribution is still close to a mixture of at most $k - k'$ product distributions. In particular, \eqref{eq:basicsfact} will prove crucial in the proof of Lemma~\ref{lem:inductive_step_oneshot} in the next subsection.

We now recall the algorithm outlined in Section~\ref{sec:general_pre}: 1) exhaustively search for a barycentric spanner $J\subseteq[n]$ for the rows of $\m$ which may be any size $r\le k$, 2) express the remaining rows of $\m$ as linear combinations of rows $J$ by solving \begin{equation}\tilde{\alpha}_i := \argmin_{\alpha\in[-1,1]^r}\norm{\tilde{\EE}\vert^{\{i_1\},...,\{i_r\}}_{\PP^{\dagger}_r(J\cup\{i\})}\alpha - \tilde{\EE}\vert^{\{i\}}_{\PP^{\dagger}_r(J\cup\{i\})}}_{\infty}.\label{eq:regressfindas2}\end{equation} for each $i\not\in J$, and 3) grid the mixing weights and entries of rows $J$. The details of this are given in Algorithm~\ref{alg:general} below.

The main technical lemma of this section, Lemma~\ref{lem:oneshotgeneral} below, tells us that as long as the gridding in Step~\ref{step:generalgrid} of Algorithm~\ref{alg:general} below is done with $\poly(\epsilon,1/k,1/n)$ granularity and $\D$ obeys a suitable non-degeneracy condition, the above algorithm will produce a list of mixtures containing a mixture which is close in parameter distance to $\D$. In fact it says more: for \emph{any} $k$, if $\D$ has \emph{any} moment-close rank-$k$ realization by mixing weights $\pi$ and marginals matrix $\m$, the output list of \textsc{NonDegenerateLearn} with $\D$ and counter $k$ as inputs will contain a mixture with mixing weights close to $\pi$ and marginals matrix close to $\m$.

By Lemma~\ref{lem:hypo2}, moment-closeness implies closeness in total variation distance, and by Lemmas~\ref{lem:param1}, \ref{lem:param2}, and \ref{lem:param3}, parameter closeness also implies closeness in total variation distance. The upshot of all of this is that for \emph{any} $k$ for which $\D$ has a moment-close rank-$k$ realization, applying hypothesis selection to the output list of \textsc{NonDegenerateLearn} with $\D$ and counter $k$ as inputs will yield a distribution close to $\D$. This will allow us to leverage our insights from Section~\ref{subsec:collapseill} about collapsing ill-conditioned moment matrices to give a full proof of correctness of \textsc{N-List} in later subsections.

\begin{figure}[ht]
\centering
\myalg{alg:general}{NonDegenerateLearn (for mixtures of product distributions over $\{0,1\}^n$)}{
	Input: Mixture of product distributions $\D$, counter $k$

	Output: A list of mixtures of product distributions containing one that is $\epsilon$-close to $\D$, and/or the set $\mathcal{U}$ of all subsets $W$ of size at most $k + 1$

	Parameters: $\sigma_{cond}(k) = \left(\frac{\epsilon^2}{\Cr{generalsigma}nk^22^k}\right)^k$ as in Theorem~\ref{thm:mainstronger}, $\esamp(k) = \sigma_{cond}(k)\cdot\frac{\Cr{evssigma}\epsilon^2}{k^3n}$, $\alpha = 2\epsilon/3k^2$

	\begin{enumerate}
		\item Initialize an empty list $\mathcal{M}$ of candidate mixtures of product distributions.
		\item For all guesses of coordinates $J = i_1,...,i_r\subseteq[n]$ where $r\le k$ and all guesses of mixture weights $\overbar{\pi}^1,...,\overbar{\pi}^{k-1}\in\{0,\alpha,2\alpha,...,\lfloor 1/\alpha\rfloor\alpha\}^k$:
		\begin{enumerate}
			\item Let $\overbar{\pi}_{k-1} = 1 - \overbar{\pi}^1 - \cdots - \overbar{\pi}^{k-1}$.\label{step:generalgrid}
			\item For every $i\not\in J$: compute an entrywise $\esamp(k)$-close estimate $\tilde{E}$ for the entries of $\EE\vert^{\{i_1\},...,\{i_r\}}_{\PP^{\dagger}_r(J\cup\{i\})}$ and an entrywise $\esamp(k)$-close estimate $\tilde{b}$ for the entries of $\EE\vert^{\{i\}}_{\PP^{\dagger}_r(J\cup\{i\})}$. Solve for $\tilde{\alpha}_i$ in \eqref{eq:regressfindas2}.
			\item Let $\delta = \frac{\epsilon}{8k^2n}$. For every guess $\overbar{\m}\vert_J\subseteq\{0,\delta,2\delta,...,1\}^{r}$:
			\begin{enumerate}
				\item For every $i\not\in J$ define $\overbar{\m}_i = \overbar{\m}\vert_J\cdot\alpha_i$.
				\item Append the mixture with mixing weights $\overbar{\pi}$ and marginals matrix $\overbar{\m}$ to $\mathcal{M}$.
			\end{enumerate}
		\end{enumerate}
		\item Output $\mathcal{M}$. If $k > 1$, also output the set of all $W\subseteq[n]$ of size at most $k + 1$.
	\end{enumerate}
}
\end{figure}

% key lemma: guarantees for nondegeneratelearn
\begin{lem}
	For some small absolute constant $0<\Cl[c]{evssigma}<1$, the following holds for any $\sigma_{cond}(k) > 0$. Suppose $\esamp(k) = \sigma_{cond}(k)\cdot\frac{\Cr{evssigma}\epsilon^2}{k^3n}$. Let mixing weights $\pi$ and marginals matrix $\m$ constitute any $(\esamp(k),s(k))$-moment-close realization of $\D$. If $J = \{i_1,...,i_r\}\subseteq[n]$ is a barycentric spanner for the rows of $\m$ for some $r\le k$ and $\sigma^{\infty}_{\min}(\M\vert_{\PP^{\dagger}_k(J\cup\{i\})})\ge\sigma_{cond}(k)$ for all $i\not\in J$, then for any $\overbar{\m}\vert_J\in[0,1]^{r\times k}$ for which $\overbar{\m}\vert_J$ and $\m$ are entrywise $\delta$-close for $\delta = \frac{\epsilon}{8k^2n}$, we have that $|\tilde{\alpha}_i^{\top}\cdot\overbar{\m}\vert^j_J - \m^j_i|\le\epsilon/4kn$ for all $i\not\in J$ and $j$ for which $\pi^j\ge\epsilon/6k$, where $\tilde{\alpha}_i\in[-1,1]^r$ is defined in \eqref{eq:regressfindas2}.\label{lem:oneshotgeneral}
\end{lem}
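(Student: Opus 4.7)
\textbf{Proof proposal for Lemma~\ref{lem:oneshotgeneral}.} The plan is to show that the regression solution $\tilde{\alpha}_i$ is close enough to the \emph{true} coefficients $\alpha_i$ expressing $\m_i$ as a combination of $\m\vert_J$, in the sense that contracting against any column of $\overbar{\m}\vert_J$ with non-negligible mixing weight recovers $\m_i^j$ up to the target error. First, because $J$ is a barycentric spanner for the rows of $\m$, there exist true coefficients $\alpha_i \in [-1,1]^r$ with $\alpha_i^{\top} \m\vert_J = \m_i$. Let $\D'$ denote the distribution realized by $(\pi,\m)$ and let $\EE'$ be its cross-check matrix. By Observation~\ref{obs:access} and the relation $\M_{\{i\}} = \sum_l \alpha_i^l \M_{\{i_l\}}$, the true coefficients satisfy exactly
\begin{equation*}
\EE'\vert^{\{i_1\},\ldots,\{i_r\}}_{\PP^{\dagger}_k(J\cup\{i\})} \alpha_i \;=\; \EE'\vert^{\{i\}}_{\PP^{\dagger}_k(J\cup\{i\})}.
\end{equation*}

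Next I would bound the regression objective at $\alpha_i$. Every accessible entry involved has degree at most $1 + s(k-1) \le s(k)$, so the $(\esamp(k), s(k))$-moment-closeness of $(\pi,\m)$ gives $\|\EE'\vert^{T}_{S} - \EE\vert^{T}_{S}\|_{\max} \le \esamp(k)$ for the relevant entries; combined with the fact that $\tilde{E}$ and $\tilde{b}$ are $\esamp(k)$-close to entries of $\EE$, both $\tilde{E}$ and $\tilde{b}$ are within $2\esamp(k)$ of the corresponding $\EE'$-entries. Plugging $\alpha_i$ into the regression objective and using $\|\alpha_i\|_{\infty} \le 1$ gives $\|\tilde{E}\alpha_i - \tilde{b}\|_{\infty} \le 2(r+1)\esamp(k) \le 4k\esamp(k)$. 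Since $\tilde{\alpha}_i$ is the minimizer, $\|\tilde{E}\tilde{\alpha}_i - \tilde{b}\|_{\infty} \le 4k\esamp(k)$ as well, and the triangle inequality together with $\|\tilde{\alpha}_i - \alpha_i\|_{\infty} \le 2$ yields $\|\EE'\vert^{\{i_1\},\ldots,\{i_r\}}_{\PP^{\dagger}_k(J\cup\{i\})} (\tilde{\alpha}_i - \alpha_i)\|_{\infty} \le 12k\esamp(k)$.

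Now I would exploit the factorization $\EE'\vert^{\{i_1\},\ldots,\{i_r\}}_{\PP^{\dagger}_k(J\cup\{i\})} = M \cdot \diag(\pi) \cdot N^{\top}$ where $M := \M\vert_{\PP^{\dagger}_k(J\cup\{i\})}$ and $N := \m\vert_J$. Setting $v := N^{\top}(\tilde{\alpha}_i - \alpha_i) \in \R^k$ and $u := \diag(\pi) v$, we have $\|Mu\|_{\infty} \le 12k\esamp(k)$, so the condition number hypothesis $\sigma_{\min}^{\infty}(M) \ge \sigma_{cond}(k)$ gives $\|u\|_{\infty} \le 12k\esamp(k)/\sigma_{cond}(k)$, i.e.\ $|\pi^j v^j| \le 12k\esamp(k)/\sigma_{cond}(k)$ for every $j$. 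For any $j$ with $\pi^j \ge \epsilon/(6k)$, this yields $|v^j| \le 72k^2\esamp(k)/(\sigma_{cond}(k)\,\epsilon)$. Observe that $v^j = \tilde{\alpha}_i^{\top} \m\vert_J^{j} - \m_i^j$, so this is the bulk of the claimed bound, expressed in terms of the \emph{exact} rows $\m\vert_J$.

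Finally, I would account for the gridding: replacing $\m\vert_J$ by the entrywise $\delta$-close $\overbar{\m}\vert_J$ and using $\|\tilde{\alpha}_i\|_{\infty} \le 1$ contributes at most $r\delta \le k\cdot\tfrac{\epsilon}{8k^2 n} = \tfrac{\epsilon}{8kn}$. Combining,
\begin{equation*}
\bigl|\tilde{\alpha}_i^{\top} \overbar{\m}\vert_J^{j} - \m_i^j\bigr| \;\le\; \frac{72k^2\esamp(k)}{\sigma_{cond}(k)\,\epsilon} \;+\; \frac{\epsilon}{8kn},
\end{equation*}
and plugging in $\esamp(k) = \sigma_{cond}(k)\cdot\tfrac{\Cr{evssigma}\epsilon^2}{k^3 n}$ with $\Cr{evssigma} \le 1/576$ makes the first term at most $\epsilon/(8kn)$, giving the desired $\epsilon/(4kn)$ bound. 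The main obstacle is the bookkeeping around the three different matrices $\EE, \EE', \tilde{E}$ and ensuring that the regression optimality can be converted into a componentwise bound on $\tilde{\alpha}_i - \alpha_i$ restricted to the ``significant'' mixture weights; this is exactly where the $\sigma_{\min}^{\infty}$-conditioning assumption and the diagonal mixing-weight factor come together to give an entrywise (rather than merely $\ell_{\infty}$-aggregate) guarantee, which then threshold against $\pi^j \ge \epsilon/(6k)$.
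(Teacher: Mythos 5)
Your proposal is correct and follows essentially the same route as the paper: pass to the true coefficients $\alpha_i \in [-1,1]^r$ guaranteed by the barycentric spanner property, bound the regression objective at $\alpha_i$ using the moment-closeness hypotheses, exploit the factorization $\EE'\vert^{\{i_1\},\ldots,\{i_r\}}_{\PP^{\dagger}_k(J\cup\{i\})} = \M\vert_{\PP^{\dagger}_k(J\cup\{i\})}\cdot\diag(\pi)\cdot(\m\vert_J)^{\top}$ together with the $\sigma^{\infty}_{\min}$ hypothesis to convert the $L_\infty$ bound into an entrywise bound $|\pi^j(\tilde{\alpha}_i^{\top}\m\vert_J^j - \m_i^j)|$, threshold against $\pi^j\ge\epsilon/6k$, and finally account for the gridding of $\m\vert_J$. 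The only cosmetic difference is that you route the estimate through $\tilde{\alpha}_i - \alpha_i$ (bounding both regression objectives separately and combining by triangle inequality, giving a constant of $12k$) whereas the paper works directly with $\tilde E\tilde{\alpha}_i - \tilde b = PD(\tilde{\alpha}_i^{\top}\m\vert_J - \m_i)^{\top} + (\text{noise})$ to get $4(r+1)$; both of course are absorbed into the choice of $\Cr{evssigma}$.
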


\begin{proof}
Let $\EE$ be the expectations matrix of the distribution realized by mixing weights $\pi$ and marginals matrix $\m$, and let $\tilde{\EE}$ be the empirical expectations matrix of $\D$ which approximates the expectations matrix of $\D$ to entrywise error $\esamp(k)$. Denote $\EE\vert^{\{i_1\},...,\{i_r\}}_{\PP^{\dagger}_k(J\cup\{i\})}$ and $\tilde{\EE}\vert^{\{i_1\},...,\{i_r\}}_{\PP^{\dagger}_k(J\cup\{i\})}$ by $E,\tilde{E}\in[0,1]^{n^{O(k)}\times r}$ respectively. Because the entries of $E$ and $\tilde{E}$ correspond to moments of degree at most $s(k-1) + 1\le s(k)$, by triangle inequality and moment-closeness of $\D$ to the mixture given by $\pi$ and $\m$, we have that $\tilde{E} = E + \Delta_E$ for $\norm{\Delta_E}_{\max}\le 2\esamp(k)$. Likewise, denote $\tilde{\EE}\vert^{\{i\}}_{\PP^{\dagger}_k(J\cup\{i\})}$ and $\tilde{\EE}\vert^{\{i\}}_{\PP^{\dagger}_k(J\cup\{i\})}$ by $b, \tilde{b}\in [0,1]^{n^{O(k)}}$ respectively so that $\tilde{b} = b + \Delta_b$ for $\norm{\Delta_b}_{\infty}\le 2\esamp(k)$. Also define $D = \diag(\pi^1,...,\pi^{k})$ and $P = \M\vert_{\PP^{\dagger}_k(J\cup\{i\})}$. As in the proof of Lemma~\ref{lem:cond_number_e}, we have the decompositions \begin{equation}E = PD(\m\vert_J)^{\top}, \ b = PD(\m_i)^{\top}\label{eq:decompeb}\end{equation} Because $J$ is a barycentric spanner for the rows of $\m$, there exists $\alpha_i\in [-1,1]^r$ for which $(\m\vert_J)^{\top}\alpha_i - (\m_i)^{\top} = \vec{0}$. We conclude that for $\tilde{\alpha}_i$ defined by \eqref{eq:regressfindas2}, \begin{equation}\norm{\tilde{E}\tilde{\alpha}_i - \tilde{b}}_{\infty}\le \norm{\tilde{E}\alpha_i - \tilde{b}}_{\infty}\le\norm{E\alpha_i - b} + \norm{\Delta_E\alpha_i} + \norm{\Delta_b}\le 2(r+1)\esamp(k).\label{eq:analyzeregressfindas2}\end{equation} By \eqref{eq:decompeb} we can express $$\tilde{E}\tilde{\alpha}_i - \tilde{b} = PD(\tilde{\alpha}^{\top}_i\m\vert_J - \m_i)^{\top} + \Delta_E\tilde{\alpha}_i - \Delta_b\tilde{\alpha}_i.$$ Because $\tilde{\alpha}_i\in[-1,1]^r$, $\norm{\Delta_E\tilde{\alpha}_i - \Delta_b}_{\infty}\le 2(r+1)\esamp(k)$ as in \eqref{eq:analyzeregressfindas2}. It follows that $$\norm{PD(\tilde{\alpha}^{\top}_i\m\vert_J - \m_i)^{\top}}_{\infty}\le 4(r+1)\esamp(k).$$ Because $\sigma^{\infty}_{\min}(P)\ge\sigma_{cond}(k)$, we get that $$|\tilde{\alpha}^{\top}_i\m\vert^j_J - \m^j_i)|\le\frac{4(r+1)\esamp(k)}{\sigma_{cond}(k)\cdot\pi^j}$$ for all $j\in[k]$. Lastly, because $\tilde{\alpha}_i\in[-1,1]^r$, it follows that $\norm{\tilde{\alpha}^{\top}_i(\m\vert_J - \overbar{\m}\vert_J)}_{\infty}\le r\delta$ for any $\overbar{\m}_J$ which is entrywise $\delta$-close to $\m\vert_J$, and we conclude that $$\norm{\tilde{\alpha}^{\top}_i\overbar{\m}\vert_J - \m_i}_{\infty}\le \frac{4(r+1)\esamp(k)}{\sigma_{cond}(k)\cdot\pi^j} + r\delta.$$ Obviously $r\le k$, so by picking $\esamp(k) = \sigma_{cond}(k)\cdot\frac{\epsilon}{192(k+1)^2kn}$, and $\delta = \frac{\epsilon}{8k^2n}$, we obtain the desired bound of $|\tilde{\alpha}^{\top}_i\overbar{\m}\vert^j_J - \m^j_i|\le\epsilon/4kn$ for all $j$ such that $\pi^j\ge\epsilon/6k$. 
\end{proof}

We conclude this subsection by deducing that when $\D$ obeys the non-degeneracy condition in the statement of Lemma~\ref{lem:oneshotgeneral}, the mixture that is output by \textsc{NonDegenerateLearn} is not just close in parameter distance to a moment-close realization of $\D$, but close in total variation distance to $\D$ itself. This is the only place in our analysis where we need the robust low-degree identifiability machinery developed in Section~\ref{sec:hypotestinghard}, but as it will form the base case for our inductive proof of the correctness of \textsc{N-List} in later subsections, this corollary is essential.

\begin{cor}[Corollary of Lemma~\ref{lem:oneshotgeneral}]
	The following holds for any $\sigma_{cond}(k)>0$. Suppose that $\esamp(k) = \sigma_{cond}(k)\cdot\frac{\Cr{evssigma}\epsilon^2}{k^3n}$ and that $\esamp(k)\le\eta(n,2k,\epsilon)$. If there exists an $(\esamp(k),s(k))$-moment-close rank-$k$ realization of $\D$ by mixing weights $\pi$ and marginals matrix $\m$ and $\sigma^{\infty}_{\min}(\M\vert_{\PP^{\dagger}_k(W)}\ge\sigma_{cond}(k)$ for all $W\subseteq[n]$ of size at most $k + 1$, then an $\esamp$-sample-rich run of \textsc{NonDegenerateLearn} on input $\D$ outputs a list of mixtures among which is a mixture $\overbar{\D}$ for which $\tvd(\D,\overbar{\D})\le 2\epsilon$.
	\label{cor:oneshotgeneral}
\end{cor}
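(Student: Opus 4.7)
The plan is to exhibit a specific choice of guesses in \textsc{NonDegenerateLearn} that produces a mixture $\overbar{\D}$ close in total variation distance to the moment-close realization $\D^*$ of $\D$ given by $(\pi,\m)$, and then invoke robust low-degree identifiability (Lemma~\ref{lem:hypo2}) to transfer this closeness to $\D$ itself. First, by Lemma~\ref{lem:bary}, the rows of $\m$ admit a barycentric spanner $J = \{i_1,\ldots,i_r\}$ of size at most $k$, which one iteration of the outer loop of \textsc{NonDegenerateLearn} enumerates. Moreover, since the algorithm grids the first $k-1$ mixing weights to granularity $\alpha = 2\epsilon/3k^2$ (determining the last weight as $1$ minus the sum) and the entries of $\overbar{\m}\vert_J$ to granularity $\delta = \epsilon/(8k^2n)$, among the enumerated guesses there is some $(\overbar{\pi}, \overbar{\m}\vert_J)$ that is entrywise $O(\epsilon/k)$-close to $\pi$ and $\delta$-close to $\m\vert_J$.

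For this choice, the hypotheses of Lemma~\ref{lem:oneshotgeneral} are satisfied: $J$ is a barycentric spanner for the rows of $\m$, and the blanket assumption $\sigma^{\infty}_{\min}(\M\vert_{\PP^{\dagger}_k(W)}) \ge \sigma_{cond}(k)$ for all $|W|\le k+1$ subsumes the required condition for every $i\notin J$. The lemma then guarantees $|\tilde{\alpha}_i^\top\overbar{\m}\vert^j_J - \m^j_i| \le \epsilon/(4kn)$ for every $i\notin J$ and every column $j$ with $\pi^j \ge \epsilon/(6k)$, so the recovered $\overbar{\m}_i$ agrees with $\m_i$ to entrywise $\epsilon/(4kn)$ on these columns. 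I would then bound $\tvd(\overbar{\D}, \D^*)$ by a three-step hybrid argument: by Lemma~\ref{lem:param3}, restricting both mixtures to the columns $j$ with $\pi^j \ge \epsilon/(6k)$ and renormalizing loses at most $O(\epsilon)$ in total variation; on these restricted mixtures, swapping $\overbar{\pi}$ for $\pi$ via Lemma~\ref{lem:param2} and then $\overbar{\m}$ for $\m$ via Lemma~\ref{lem:param1} each incur at most another $O(\epsilon)$, where Lemma~\ref{lem:param1} uses the $\epsilon/(4kn)$ entrywise bound from Lemma~\ref{lem:oneshotgeneral} for $i\notin J$ and the $\delta$ gridding bound for $i\in J$. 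Choosing the constants carefully, the total cost is $\tvd(\overbar{\D}, \D^*) \le \epsilon$.

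Finally, because $(\pi,\m)$ is an $(\esamp(k), s(k))$-moment-close realization of $\D$ and $s(k) \ge 2k$, every moment of $\D^*$ and $\D$ of degree at most $2k$ differs by at most $\esamp(k) \le \eta(n,2k,\epsilon)$. Applying Lemma~\ref{lem:hypo2} to $\D$ and $\D^*$, both mixtures of at most $k$ product distributions, yields $\tvd(\D, \D^*) \le \epsilon$, and the triangle inequality completes the proof: $\tvd(\D,\overbar{\D}) \le \tvd(\D,\D^*) + \tvd(\D^*,\overbar{\D}) \le 2\epsilon$. The main technical care required is in tuning the various thresholds so that the three contributions to $\tvd(\overbar{\D},\D^*)$ combine cleanly---in particular, ensuring that the $\epsilon/(6k)$ weight threshold from Lemma~\ref{lem:oneshotgeneral} lines up with the threshold used when invoking Lemma~\ref{lem:param3}, and that the $O(\epsilon/k)$ slack in the mixing-weight guess fits inside the tolerance of Lemma~\ref{lem:param2}---but each individual step is essentially mechanical once the hybrid is set up.
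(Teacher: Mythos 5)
Your proposal is correct and takes essentially the same approach as the paper: fix the iteration of the outer loop where the guessed coordinates form a barycentric spanner $J$ for the rows of $\m$ and the gridded $(\overbar{\pi},\overbar{\m}\vert_J)$ are the rounded true parameters, apply Lemma~\ref{lem:oneshotgeneral} under the blanket condition-number hypothesis, then use Lemmas~\ref{lem:param1}, \ref{lem:param2}, \ref{lem:param3} to convert parameter closeness to $\tvd(\overbar{\D},\D^*)\le\epsilon$, and finish with Lemma~\ref{lem:hypo2} and the triangle inequality. The only cosmetic difference is the order in which you apply the param-closeness lemmas in the hybrid; the paper applies Lemma~\ref{lem:param3} and Lemma~\ref{lem:param1} first, then Lemma~\ref{lem:param2}, whereas you do param3, then param2, then param1 — either bookkeeping yields the same bound.
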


\begin{proof}
	Lemma~\ref{cor:oneshotgeneral} certifies that under these assumptions, there is at least one mixture close to $\D$ among the candidate mixtures compiled by \textsc{NonDegenerateLearn}. Specifically, consider the following marginals matrix $\overbar{\m}$. Let $J\subseteq[n]$ be a barycentric spanner for the rows of $\m$. Round the entries of $\m\vert_J$ to the nearest multiples of $\frac{\epsilon}{8k^2n}$ to get $\overbar{\m}_J\in[0,1]^{r\times k}$ satisfying the assumptions of Lemma~\ref{lem:oneshotgeneral}. By Lemma~\ref{cor:oneshotgeneral} then implies that if we define $\overbar{\m}_i = \overbar{\m}\vert_J\cdot\alpha_i$ for $i\not\in J$ as in \textsc{NonDegenerateLearn}, then $|\overbar{\m}^j_i - \m^j_i|\le\epsilon/4kn$ for all $i\not\in J$ and $j$ for which $\pi^j\ge\epsilon/6k$. By restricting to those entries of $\pi$ and normalizing to obtain some $\tilde{\pi}$, and restricting $\overbar{\m}$ to the corresponding columns, we get by Lemmas~\ref{lem:param1} and \ref{lem:param3} that the mixture $(\tilde{\pi},\overbar{\m})$ is $2\epsilon/3$-close to the mixture $(\pi,\m)$. We can round every entry of $\tilde{\pi}$ except the last one to the nearest multiple of $2\epsilon/3k^2$ and replace the last entry by 1 minus these rounded entries. The resulting vector $\overbar{\pi}$ is entrywise $2\epsilon/3k$-close to $\tilde{\pi}$, so by Lemma~\ref{lem:param2}, $(\overbar{\pi},\overbar{\m})$ is $2\epsilon/3 + \epsilon/3 = \epsilon$-close to $(\pi,\m)$, which is $\epsilon$-close to $\D$ by Lemma~\ref{lem:hypo2}.
\end{proof}

\subsection{Making Progress When \texorpdfstring{$\M\vert_{\PP^{\dagger}_k(J\cup\{i\})}$}{Pad} is Ill-Conditioned}

\textsc{NonDegenerateLearn} will successfully output a mixture close to $\D$ provided $\esamp$ is sufficiently small relative to $\sig(\M\vert_{\PP^{\dagger}_k(J\cup\{i\})})$, i.e. provided $\M\vert_{\PP^{\dagger}_k(J\cup\{i\})}$ is sufficiently well-conditioned. In this subsection, we argue that when this is not the case and \textsc{NonDegenerateLearn} fails, we can condition on some set of coordinates and recursively learn the resulting conditional distributions.

Specifically, we show that Lemma~\ref{lem:cond_collapse} and the contrapositive of Lemma~\ref{lem:oneshotgeneral} imply that if \textsc{NonDegenerateLearn} fails to output a mixture of $r$ product distributions close to $\D$, one of the subsets $W$ that \textsc{NonDegenerateLearn} outputs satisfies that $(\D\vert x_W = s)$ for all $s\in\{0,1\}^{|W|}$ is moment-close to some mixture $\D'$ of fewer than $r$ product distributions.

\begin{lem}
	The following holds for any $\tau_{trunc} > 0$ for which $\esamp(r)\le\tau_{trunc}/2$ holds for all $r > 1$. If there exists an $(\esamp(r),s(r))$-moment-close rank-$r$ realization of $\D$ by mixing weights $\pi$ and marginals matrix $\m$ but none of the mixtures $\overbar{\D}$ output by an $\esamp$-sample-rich run of \textsc{NonDegenerateLearn} on input $\D$ satisfies $\tvd(\D,\overbar{\D})\le 2\epsilon$, then in the set of subsets $\mathcal{U}$ in the output there exists some $W$ such that for all $s\in\{0,1\}^{|W|}$, either $\Pr_{y\sim\D}[y_W = s]\le\tau_{trunc}$ or there is a $(\delta,s(r-1))$-moment-close rank-$r'$ realization $\D'$ of $(\D\vert x_W = s)$, where $\delta := 3\sigma_{cond}(r)k^2/\sqrt{2} + 2^{r+3}\esamp(r)/\tau_{trunc}$ and $r' < r$.\label{lem:inductive_step_oneshot}
\end{lem}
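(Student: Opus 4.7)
The plan is to prove the lemma by contrapositive, invoking Corollary~\ref{cor:oneshotgeneral} to extract an ill-conditioned submatrix of the moment matrix, then applying Lemma~\ref{lem:cond_collapse} to collapse to a lower-rank mixture, and finally transferring moment-closeness through Bayes rule.

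First, let $\D^*$ denote the distribution realized by $(\pi,\m)$, so that $\D^*$ is genuinely a rank-$r$ mixture that is $(\esamp(r),s(r))$-moment-close to $\D$. Since the empirical estimates used by \textsc{NonDegenerateLearn} are $\esamp$-close to moments of $\D$, they are $2\esamp$-close to moments of $\D^*$, so with appropriate rescaling of constants we may apply Corollary~\ref{cor:oneshotgeneral} to $\D^*$. The contrapositive then says: since no output mixture is $2\epsilon$-close to $\D$ (hence none is close to $\D^*$), the non-degeneracy hypothesis of the corollary must fail for $\D^*$. That is, there exists some $W^{*}\subseteq[n]$ with $|W^{*}|\le r+1$ for which $\sig(\M\vert_{\PP^{\dagger}_{r}(W^{*})})<\sigma_{cond}(r)$. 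Since $\mathcal{U}$ consists of all subsets of $[n]$ of size at most $r+1$, this $W^{*}$ lies in $\mathcal{U}$.

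Next, fix any $s\in\{0,1\}^{|W^{*}|}$ with $\Pr_{y\sim\D}[y_{W^{*}}=s]>\tau_{trunc}$. By Lemma~\ref{lem:condition}, $\D^{*}\vert x_{W^{*}}=s$ is a mixture of at most $r$ product distributions whose moment matrix, restricted to rows indexed by subsets of $[n]\backslash W^{*}$ of size at most $s(r-1)$, coincides entry-for-entry with $\M\vert_{\PP^{\dagger}_{r}(W^{*})}$ (the conditional distribution changes only the mixing weights, not the column marginals). Tracing through the proof of Lemma~\ref{lem:cond_collapse}, one sees that the proof uses only the bound $\norm{\M v}_{\infty}\le\tau$ over whatever rows $\M$ actually has, so the conclusion $|\E_{\D_+}[x_S]-\E_{\D_-}[x_S]|\le 3\tau k$ holds for every $S$ indexing a row of the supplied matrix. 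Applying this strengthened form with threshold $\sigma_{cond}(r)$, we obtain a mixture $\D'_s$ of at most $r-1$ product distributions that is $(\eta,s(r-1))$-moment-close to $\D^{*}\vert x_{W^{*}}=s$, where $\eta=3\sigma_{cond}(r)r^{2}/\sqrt{2}\le 3\sigma_{cond}(r)k^{2}/\sqrt{2}$.

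It remains to transfer moment-closeness from $\D^{*}\vert x_{W^{*}}=s$ to $\D\vert x_{W^{*}}=s$. For any $S\subseteq[n]\backslash W^{*}$ with $|S|\le s(r-1)$, both $\Pr[x_S=1^{|S|},x_{W^{*}}=s]$ and $\Pr[x_{W^{*}}=s]$ can be written by inclusion-exclusion as signed sums of at most $2^{|W^{*}|}\le 2^{r+1}$ moments of degree at most $|S|+|W^{*}|\le s(r-1)+(r+1)=s(r)$, using the key identity \eqref{eq:basicsfact}. Since $\D$ and $\D^{*}$ are $(\esamp(r),s(r))$-moment-close, the numerator and denominator each differ between $\D$ and $\D^{*}$ by at most $2^{r+1}\esamp(r)$; the assumption $\esamp(r)\le\tau_{trunc}/2$ keeps both denominators above $\tau_{trunc}/2$. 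A standard ratio bound for $|a/b-a'/b'|$ with $a\le b$, $a'\le b'$ yields $|\E_{\D\vert x_{W^{*}}=s}[x_S]-\E_{\D^{*}\vert x_{W^{*}}=s}[x_S]|\le 2^{r+3}\esamp(r)/\tau_{trunc}$. Combining this with the previous paragraph via the triangle inequality completes the proof.

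The main obstacle is the delicate bookkeeping in the final step: because we only ever access moments of $\D$ up to degree $s(r)$, the design of the exponent $s(r-1)$ for the collapsed mixture is exactly tight, and it is crucial that conditioning on a set of size $r+1$ increases the moment degree by precisely the amount permitted by the recursion $s(r)=s(r-1)+r+1$. Everything else reduces to routine error propagation.
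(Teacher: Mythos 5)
Your proposal is correct and follows essentially the same route as the paper's proof: extract an ill-conditioned submatrix via the contrapositive of the non-degeneracy guarantee, collapse to $r-1$ components via Lemma~\ref{lem:cond_collapse}, and transfer moment-closeness through Bayes' rule using the recursion $s(r)=s(r-1)+(r+1)$.

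A few remarks on where you are more careful than the paper. You correctly invoke Corollary~\ref{cor:oneshotgeneral} rather than Lemma~\ref{lem:oneshotgeneral}: the hypothesis of the lemma is phrased in terms of $\tvd(\D,\overbar{\D})\le 2\epsilon$, so the corollary (which converts parameter-closeness into total-variation-closeness) is the right thing to take the contrapositive of; the paper cites the bare lemma, which is slightly imprecise. You also spell out explicitly that Lemma~\ref{lem:cond_collapse} is being applied to the matrix $\M\vert_{\PP^{\dagger}_r(W)}$ rather than the full moment matrix, and that the proof of that lemma only uses $\norm{\M v}_{\infty}\le\tau$ row-by-row, so the conclusion $|\E_{\D_+}[x_S]-\E_{\D_-}[x_S]|\le 3\tau k$ holds precisely for the rows supplied; this is a genuine subtlety that the paper's proof glosses over (and the fact that $\emptyset\in\PP^{\dagger}_r(W)$ guarantees $\one$ is still a row, which the argument needs). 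Finally, your ratio bound $|a/b-a'/b'|\le|a-a'|/b+(a'/b')\cdot|b-b'|/b$ with $a\le b$, $a'\le b'$ gives $2^{|W|+1}\esamp(r)/\tau_{trunc}$, matching the $\tau_{trunc}$ (not $\tau_{trunc}^2$) in the stated $\delta$; the paper's displayed intermediate bound reads $2^{|W|+2}\esamp(r)/\tau_{trunc}^2$, which is inconsistent with its own lemma statement and appears to be a slip. One tiny caveat: your phrase that ``$\esamp(r)\le\tau_{trunc}/2$ keeps both denominators above $\tau_{trunc}/2$'' is not literally needed (and not literally true without a factor $2^{|W|}$), since the $a'/b'\le 1$ trick sidesteps lower-bounding $b'$ entirely; but the bound you state is nonetheless correct.
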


\begin{proof}
	Let $\tilde{\D}$ denote the mixture realized by mixing weights $\pi$ and marginals matrix $\m$. Because \textsc{NonDegenerateLearn} fails to output a mixture of $r$ product distributions, by the contrapositive of Lemma~\ref{lem:oneshotgeneral} we know that $\sigma^{\infty}_{\min}(\M\vert_{\PP^{\dagger}_k(J\cup\{i\})})\le\sigma_{cond}(r)$. Let $W = J\cup\{i\}$. By Lemma~\ref{lem:cond_collapse}, for any $s\in\{0,1\}^{|W|}$ there exists a mixture of at most $r - 1$ product distributions $\D'$ such that $(\tilde{\D}\vert x_{W} = s)$ and $\D'$ are $(\sigma_{cond}(r)\cdot 3k^2/\sqrt{2},s(r-1))$-moment-close. And because $|W|\le r + 1$, $W \in\mathcal{U}$.

	It remains to show that $(\tilde{\D}\vert x_W = s)$ and $(\D\vert x_W = s)$ are moment-close. Take any $T\subseteq[n]\backslash W$ of size at most $s(r-1)$. By Bayes' we have \begin{align*}
		\left|\E_{\D\vert x_W = s}[x_T] - \E_{\tilde{\D}\vert x_W = s}[x_T]\right| &= \left|\frac{\Pr_{y\sim\D}[y_W = s \wedge y_T = 1^{|T|}]}{\Pr_{y\sim\D}[y_W = s]} - \frac{\Pr_{y\sim\tilde{\D}}[y_W = s \wedge y_T = 1^{|T|}]}{\Pr_{y\sim\tilde{\D}}[y_W = s]}\right|\\
		&\le \frac{\esamp(r)\cdot 2^{|W|}\left(\Pr_{y\sim\D}[y_W = s \wedge y_T = 1^{|T|}] + \Pr_{y\sim\tilde{\D}}[y_W = s \wedge y_T = 1^{|T|}]\right)}{\Pr_{y\sim\D}[y_W = s]\cdot(\Pr_{y\sim\D}[y_W = s] - \esamp(r))} \\
		&\le 2^{|W|+2}\cdot\esamp(r)/\tau_{trunc}^2 \le 2^{r+3}\cdot\esamp(r)/\tau^2_{trunc},
		% a/b - (a+e)/(b+e) = (ae - be)/(b*(b+e))
	\end{align*} The first inequality follows from 1) the fact that the probability that $y_W = s$ and $y_T = 1^{|T|}$ may be written as a linear combination (with $\pm 1$ coefficients) of at most $2^{|W|}$ moments of degree at most $$|W| + |T|\le r + 1 + s(r-1) \le s(r),$$ where the last inequality follows from \eqref{eq:basicsfact}, and 2) the fact that $\D$ and $\tilde{\D}$ are $(\esamp(r),s(r))$-close. The second inequality follows from the fact that the probabilities in the numerator are both bounded above by 1, while the probabilities in the denominator are bounded below by $\tau_{trunc}$ and $\tau_{trunc} - \esamp(r)\ge \tau_{trunc}/2$ respectively.

	By the triangle inequality, we conclude that $\D'$ and $(\D\vert x_W = s)$ are $(\delta,s(r-1))$-moment-close, where $\delta$ is as defined above.
\end{proof}

\subsection{Correctness of \textsc{N-List}}

Finally, we are ready to prove Theorem~\ref{thm:maingeneral}. We will prove the following stronger statement which is more amenable to induction.

\begin{thm}
There is an absolute constant $\Cl[c]{generalsigma}>0$ for which the following holds. Let $$\sigma_{cond}(r) = \left(\frac{\Cr{generalsigma}\min(\tau_{trunc},\epsilon^2)}{2^kn}\right)^r, \ \ \ \ \  \esamp(r) = \sigma_{cond}(r)\cdot \frac{\Cr{evssigma}\epsilon^2}{k^3n}, \ \ \ \ \  \tau_{trunc},\delta_{edge}\le\frac{2\epsilon}{2^{r+1}\cdot 5}.$$ If there is an $(\esamp(r),s(r))$-moment-close rank-$r$ realization of $\D$ by mixing weights $\pi$ and marginals matrix $\m$, then an $(\esamp(r), \delta_{edge},\tau_{trunc}^r)$-sample-rich run of \textsc{N-List} on $\D$ will output a distribution $\overbar{\D}$ for which $\tvd(\D,\overbar{\D})\le 10^r\epsilon$.\label{thm:mainstronger}
\end{thm}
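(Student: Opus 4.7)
The plan is to prove Theorem~\ref{thm:mainstronger} by induction on the counter $r$. For the base case $r = 1$, the hypothesis gives a $(\esamp(1), s(1))$-moment-close realization of $\D$ as a single product distribution; the associated single-column moment matrix trivially has $\sig\ge 1\ge\sigma_{cond}(1)$, so Corollary~\ref{cor:oneshotgeneral} places a $2\epsilon$-close candidate into the list $\mathcal{M}$ produced by \textsc{NonDegenerateLearn}, and the subsequent call to \textsc{Select} returns a distribution within $O(\epsilon)\le 10\epsilon$ of $\D$.

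For the inductive step, fix $r > 1$ and assume the theorem for counter $r-1$. The invocation of \textsc{NonDegenerateLearn} on $\D$ at the root of the sampling tree splits into two cases. If the non-degeneracy condition $\sig(\M\vert_{\PP^{\dagger}_r(W)})\ge\sigma_{cond}(r)$ holds for every $W\subseteq[n]$ of size at most $r+1$ under the given moment-close rank-$r$ realization, then Corollary~\ref{cor:oneshotgeneral} again deposits a $2\epsilon$-close candidate into the list and \textsc{Select} finishes with TV error well within $10^r\epsilon$. Otherwise, applying Lemma~\ref{lem:inductive_step_oneshot} with truncation threshold $\tau_{trunc}^{r}$ certifies that the returned set $\mathcal{U}$ contains a ``good'' $W$ of size at most $r+1$ such that for every $s\in\{0,1\}^{|W|}$ with $\Pr_{y\sim\D}[y_W = s] > \tau_{trunc}^{r}$, the conditional distribution $(\D\vert x_W = s)$ admits a $(\delta, s(r-1))$-moment-close rank-$(r-1)$ realization with
\[\delta \;=\; 3\sigma_{cond}(r)\,r^2/\sqrt{2} \;+\; 2^{r+3}\esamp(r)/\tau_{trunc}^{2r}.\]

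The critical parameter verification is that the stated choices of $\sigma_{cond}(\cdot)$ and $\esamp(\cdot)$ force $\delta\le\esamp(r-1)$. The form $\sigma_{cond}(r) = (\Cr{generalsigma}\min(\tau_{trunc},\epsilon^2)/(2^kn))^r$ is engineered so that each decrement of $r$ multiplies $\sigma_{cond}$ by a factor inversely proportional to $\tau_{trunc}$, which cancels the $1/\tau_{trunc}^{2r}$ blow-up in the second summand of $\delta$ and still leaves headroom to dominate the subsequent division by $r^3 n$ in forming $\esamp(r-1)$. This bookkeeping, together with the analogous check that $(\esamp(r),\delta_{edge},\tau_{trunc}^r)$-sample-richness of the outer run implies $(\esamp(r-1),\delta_{edge},\tau_{trunc}^{r-1})$-sample-richness for the recursive call (which is immediate since $\esamp$ is increasing in $r$ downward while $\tau_{trunc}^{(\cdot)}$ is decreasing, making the outer condition strictly stronger), is the main technical obstacle.

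With $\delta\le\esamp(r-1)$ in hand, the inductive hypothesis applies to each recursive call \textsc{N-List}$(\D, W, s, r-1)$ for $s$ with $\Pr_{y\sim\D}[y_W=s]>\tau_{trunc}^r$, producing sampling trees $\mathcal{T}_s$ whose associated distributions are within $10^{r-1}\epsilon$ of $(\D\vert x_W = s)$; for $s$ with $\Pr_{y\sim\D}[y_W=s]\le\tau_{trunc}^r$ the recursive output may be arbitrary but contributes at most $2^{|W|+1}\tau_{trunc}^r$ to the overall TV distance. Grafting these $\mathcal{T}_s$ onto a root $v_{\emptyset,\emptyset}$ with edge weights $\delta_{edge}$-close to the true transition probabilities and running an error-propagation calculation in the spirit of Lemma~\ref{lem:parameters}, we obtain a candidate sampling tree whose distribution is within
\[10^{r-1}\epsilon \;+\; 2^{|W|}\delta_{edge} \;+\; 2^{|W|+1}\tau_{trunc}^r \;\le\; 10^{r-1}\epsilon + O(\epsilon)\]
of $\D$, where we used $\tau_{trunc},\delta_{edge}\le 2\epsilon/(5\cdot 2^{r+1})$ and $|W|\le r+1$. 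The outer \textsc{Select} then inflates this by the factor $9.1$ of Lemma~\ref{lem:scheffe}, which after straightforward arithmetic is absorbed into $10^r\epsilon$, completing the induction and hence the proof of Theorem~\ref{thm:maingeneral}.
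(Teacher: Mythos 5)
Your overall plan matches the paper's: induct on $r$, use Corollary~\ref{cor:oneshotgeneral} for the base case and for the non-degenerate branch, use Lemma~\ref{lem:inductive_step_oneshot} to obtain a subset $W$ to condition on otherwise, apply the inductive hypothesis to the conditional distributions, and combine via a sampling-tree error lemma plus \textsc{Select}. However, there is a substantive gap in your parameter bookkeeping that undermines the step you yourself flag as ``the main technical obstacle.''

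You apply Lemma~\ref{lem:inductive_step_oneshot} with truncation threshold $\tau_{trunc}^r$, yielding the bound
$\delta = 3\sigma_{cond}(r)k^2/\sqrt{2} + 2^{r+3}\esamp(r)/\tau_{trunc}^{2r}$, and claim that $\sigma_{cond}(r) = (\Cr{generalsigma}\min(\tau_{trunc},\epsilon^2)/(2^kn))^r$ is engineered so that one decrement of $r$ ``cancels the $1/\tau_{trunc}^{2r}$ blow-up.'' That is not the case: decrementing $r$ multiplies $\sigma_{cond}$ (hence $\esamp$) by only a single factor $2^kn/(\Cr{generalsigma}\min(\tau_{trunc},\epsilon^2)) \sim 1/\tau_{trunc}$, which cannot offset a $\tau_{trunc}^{-2r}$ term for $r\ge 2$. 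Concretely, the requirement $2^{r+3}\esamp(r)/\tau_{trunc}^{2r} \le \esamp(r-1)$ unwinds to roughly $2^{r+4}\Cr{generalsigma}\min(\tau_{trunc},\epsilon^2)\le 2^kn\,\tau_{trunc}^{2r}$, which fails for small $\tau_{trunc}$ no matter how small the absolute constant $\Cr{generalsigma}$ is chosen. The paper instead invokes Lemma~\ref{lem:inductive_step_oneshot} with the \emph{single-level} threshold $\tau_{trunc}$ (not $\tau_{trunc}^r$); the $\tau_{trunc}^r$ in the sample-richness hypothesis is the \emph{cumulative} probability threshold governing which nodes of the sampling tree receive accurate moment estimates, not the per-level conditioning threshold. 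With $\tau_{trunc}$ rather than $\tau_{trunc}^{2r}$ in the denominator, the check $\delta \le \esamp(r-1)$ becomes a constant-factor inequality that the stated choices actually satisfy.

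A second, related issue: you assert that $(\esamp(r),\delta_{edge},\tau_{trunc}^r)$-sample-richness implies $(\esamp(r-1),\delta_{edge},\tau_{trunc}^{r-1})$-sample-richness of the recursive call ``since $\esamp$ is increasing in $r$ downward while $\tau_{trunc}^{(\cdot)}$ is decreasing, making the outer condition strictly stronger.'' That reasoning is incorrect (and if true as stated would make the $\tau_{trunc}^r$ in the theorem unnecessary). The correct argument is multiplicative: the recursive call conditions on $(W,s)$ with $\Pr_{y\sim\D}[y_W = s]\ge\tau_{trunc}$, so any node of the recursive call's sampling tree with conditional probability $\ge\tau_{trunc}^{r-1}$ has cumulative probability $\ge\tau_{trunc}\cdot\tau_{trunc}^{r-1}=\tau_{trunc}^r$ under $\D$ and is therefore covered by the outer sample-richness guarantee. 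Finally, a minor point: the one-level error propagation you outline is exactly the content of the paper's Lemma~\ref{lem:basicsamplingtree}, which is tailored for this step and gives the clean bound $\tvd(\D^*,\D)\le \epsilon' + \epsilon$; citing Lemma~\ref{lem:parameters} instead would give a needlessly worse guarantee.
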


To prove this, we first record a simple fact about sampling trees, similar in spirit to Lemma~\ref{lem:parameters}.

\begin{lem}
	Let $\mathcal{T}$ be a sampling tree such that for each of the \emph{immediate} descendants $v_{W,s}$ of the root, either $\tvd(\D_{W,s},\D\vert x_W = s)\le\epsilon'$, or $\Pr_{y\sim\D}[y_W = s]\le\tau$ for $\tau = \frac{2\epsilon}{2^{|W|}\cdot 5}$. Suppose further that all weights $w_{\emptyset,W,\emptyset,s}$ satisfy $|w_{\emptyset,W,\emptyset,s} - \Pr_{y\sim\D}[y_W = s]|\le\delta_{edge}$ for $\delta_{edge} = \frac{2\epsilon}{2^{|W|}\cdot 5}$. Then $\tvd(\D^*,\D)\le \epsilon' + \epsilon$, where $\D^*$ is the distribution associated to $\mathcal{T}$.\label{lem:basicsamplingtree}
\end{lem}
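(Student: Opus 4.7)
The plan is to decompose the total variation distance $\tvd(\D^*,\D)$ according to the single set $W$ used at the root of $\mathcal{T}$ (recall that in a sampling tree, each internal vertex specifies a single coordinate-set for its children, so the root has children $v_{W,s}$ for $s \in \{0,1\}^{|W|}$ for one fixed $W$). Let me write $p_s = \Pr_{y\sim\D}[y_W = s]$ and $w_s = w_{\emptyset,W,\emptyset,s}$. Sampling from $\D^*$ amounts to choosing $s$ with weight $w_s$ and then sampling the remaining coordinates from $\D_{W,s}$, while sampling from $\D$ amounts to choosing $s$ with weight $p_s$ and sampling from $(\D\vert x_W = s)$. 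So for $y = (s,y')$ (where $y_W = s$ and $y'$ is the assignment to $[n]\setminus W$), we have $\D^*(y) = w_s\cdot \D_{W,s}(y')$ and $\D(y) = p_s\cdot\Pr_{\D\vert x_W = s}[y']$.

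First I would apply the add-and-subtract trick pointwise: $|w_s\D_{W,s}(y') - p_s\Pr_{\D|x_W=s}[y']| \le |w_s - p_s|\D_{W,s}(y') + p_s\cdot|\D_{W,s}(y')-\Pr_{\D|x_W=s}[y']|$. Summing over $y'$ (which makes $\D_{W,s}$ integrate to $1$ and gives $2\tvd(\D_{W,s},\D|x_W=s)$ in the second piece), then halving and summing over $s$, I obtain the clean bound
\[
\tvd(\D^*,\D) \le \tfrac{1}{2}\sum_s|w_s - p_s| + \sum_s p_s\cdot\tvd(\D_{W,s},\D\vert x_W = s).
\]

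For the first sum there are $2^{|W|}$ summands each of size at most $\delta_{edge} = \frac{2\epsilon}{2^{|W|}\cdot 5}$, giving a contribution of at most $\epsilon/5$. For the second sum I would split $s$ according to the dichotomy in the hypothesis. For those $s$ with $p_s \le \tau = \frac{2\epsilon}{2^{|W|}\cdot 5}$, I use the trivial bound $\tvd\le 1$ so that the contribution is at most $2^{|W|}\cdot\tau = 2\epsilon/5$. For the remaining $s$ we have $\tvd(\D_{W,s},\D\vert x_W=s)\le\epsilon'$, and using $\sum_s p_s \le 1$ the contribution is at most $\epsilon'$. Adding the three pieces gives $\tvd(\D^*,\D)\le \epsilon/5 + 2\epsilon/5 + \epsilon' = 3\epsilon/5 + \epsilon' \le \epsilon + \epsilon'$, completing the proof.

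There is no real obstacle here; the only thing to watch is that the two choices of the constant $\tau$ and $\delta_{edge}$ (which both depend on $|W|$) are exactly tuned so that the $2^{|W|}$ factor coming from the union over children cancels, leaving an $\epsilon$-sized slack — this is why the statement does not need any explicit dependence on $|W|$ in the conclusion.
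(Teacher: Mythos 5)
Your proof is correct and gives a cleaner constant ($3\epsilon/5 + \epsilon'$ rather than the paper's $\epsilon + \epsilon'$). The paper's proof is organized a bit differently: it first partitions $\{0,1\}^n$ into $\mathcal{U}_{trunc}$ (the strings landing in children with $p_s \le \tau$) and its complement, and bounds the $\ell_1$ discrepancy on each piece separately, which forces it to carry a cross-term of the form $\delta_{edge}\cdot\epsilon'$ through the computation. Your version instead performs a single triangle-inequality split up front, arriving at the cleaner intermediate inequality
\[
\tvd(\D^*,\D) \le \tfrac{1}{2}\sum_s|w_s - p_s| + \sum_s p_s\cdot\tvd\bigl(\D_{W,s},\D\vert x_W = s\bigr),
\]
which is the standard coupling-style decomposition of the TV distance of a mixture in terms of the TV distance of the mixing weights plus the expected TV distance of the components. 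This isolates the edge-weight error from the leaf-distribution error before the dichotomy is invoked, so the two hypotheses (weights $\delta_{edge}$-accurate; each child either accurate or low-mass) each handle exactly one term, with no cross-terms. Both approaches hinge on the same Bayes decomposition at the root and the same tuning of $\tau$ and $\delta_{edge}$ against the $2^{|W|}$ fan-out; yours is just the more modular way of organizing it.
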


\begin{proof}
We wish to bound $\sum_{x\in\{0,1\}^n}|\Pr_{\D^*}[x] - \Pr_{\D}[x]| = 2\tvd(\D^*,\D)$. Denote by $\mathcal{U}_{trunc}$ the set of all $x\in\{0,1\}^n$ for which $x_W  = s$ and $\Pr_{y\sim\D}[y_W = s]\le\tau$. Also, let $M$ be the number of $s\in\{0,1\}^{|W|}$ for which $\Pr_{y\sim\D}[y_W = s]\le\tau$. Then $$\sum_{x\in\mathcal{U}_{trunc}}\Pr_{\D^*}[x]\le\sum_{x\in\mathcal{U}_{trunc}}\Pr_{\D}[x] + M\delta_{edge}\le M(\tau + \delta_{edge}),$$ so by triangle inequality we have that $\sum_{x\in\mathcal{U}_{trunc}}|\Pr_{\D^*}[x] - \Pr_{\D}[x]|\le 2^{|W|}(2\tau + \delta_{edge})$. For $x\not\in\mathcal{U}_{trunc}$, decompose $x$ as $s\circ t$ for $s\in\{0,1\}^{|W|}$ and $t\in\{0,1\}^{n-|W|}$. By Bayes' we have $\Pr_{\D^*}[x] = w_{\emptyset,W,\emptyset,s}\cdot\Pr_{\D_{W,s}}[t]$ and $\Pr_{\D}[x] = \Pr_{y\sim\D}[y_W = s]\cdot\Pr_{\D\vert x_W = s}[t] := w_s\cdot p_t$. For all $t\in\{0,1\}^{n-|W|}$, let $\Pr_{\D_{W,s}}[t] = p_t + \delta_t$ for some $\delta_t>0$. Because $\tvd(\D_{W,s},\D\vert x_W = s)\le\epsilon'$, we have that $\sum_t|\delta_t|\le 2\epsilon'$ for all immediate descendants $v_{W,s}$ of the root of $\mathcal{T}$. Moreover, by assumption we have that $|w_s - w_{\emptyset,W,\emptyset,s}|\le\delta_{edge}$. We conclude that \begin{align*}\sum_{x\not\in\mathcal{U}_{trunc}}|\Pr_{\D^*}[x] - \Pr_{\D}[x]| &= \sum_{\substack{s\in\{0,1\}^{|W|}: \\ \tvd(\D_{W,s},\D\vert x_W = s)\le\epsilon'}}w_s\cdot\left(\delta_{edge} + 2\delta_{edge}\epsilon' + \sum_{t\in\{0,1\}^{n-|W|}}|\delta_t|\right) \\ &\le 2\epsilon' + (2^{|W|}-M)(\delta_{edge}\epsilon' + \delta_{edge}).\end{align*} When $\delta_{edge}, \tau\le \frac{2\epsilon}{2^{|W|}\cdot 5}$, we conclude that $\tvd(\D^*,\D)\le \epsilon' + \epsilon$.
\end{proof}

\begin{remark}
Lemma~\ref{lem:basicsamplingtree} is weaker than Lemma~\ref{lem:parameters} in that it can be used to give an inductive proof of Lemma~\ref{lem:parameters} with far worse guarantees. Specifically, we would need $\tau_{trunc}$ in the statement of Lemma~\ref{lem:parameters} to be $O(\epsilon^d)$ instead of $O(\epsilon)$, where $d\le k$ is the depth of the sampling tree.

However, using Lemma~\ref{lem:basicsamplingtree} instead of Lemma~\ref{lem:parameters} here greatly simplifies our inductive analysis of \textsc{N-List}. And the need to grid the entries of $\m$ in \textsc{NonDegenerateLearn} already makes our algorithm run in time $(n/\epsilon)^{\Omega(k^2)}$ to begin with, so we can afford the cost of this simplification.
\end{remark}

\begin{proof}[Proof of Theorem~\ref{thm:mainstronger}]
We induct on $r$. If $r = 1$ so that $\pi$ and $\m$ realize a single product distribution, then for any $W\subseteq[n]$, $\M\vert_{\PP^{\dagger}_r(W)}$ is a single column whose entries contain 1 (corresponding to the empty set), so $\sigma^{\infty}_{\min}(\M\vert_{\PP^{\dagger}_r(W)})\ge 1 > \sigma_{cond}(1)$. The base case then follows by Corollary~\ref{cor:oneshotgeneral}.

Suppose $r > 1$ and let $\mathcal{M}$, $\mathcal{U}$ be the output of \textsc{NonDegenerateLearn} on $\D$ and counter $r$. For each $W\in\mathcal{U}$, we are recursively calling \textsc{N-List} on $(\D\vert x_W = s)$ for each $s\in\{0,1\}^{|W|}$ and connecting the resulting sampling trees to $v_{\emptyset,\emptyset}$ to obtain some sampling tree rooted at $v_{\emptyset,\emptyset}$. Call this collection of sampling trees $\mathcal{M}'$. We are done by Lemma~\ref{lem:scheffe} if we can show that $\mathcal{S} = \mathcal{M}\cup\mathcal{M}'$ contains a distribution close to $\D$.

Suppose $\mathcal{M}$ contains no distribution $\overbar{\D}$ for which $\tvd(\D,\overbar{\D})\le 2\epsilon$. Then by Lemma~\ref{lem:inductive_step_oneshot}, there is some $W\in\mathcal{U}$ such that $(\D\vert x_W = s)$ is $(\delta,s(r-1))$-moment-close to a mixture of at most $r - 1$ product distributions $\D'$ for every $s$ for which $\Pr_{y\sim\D}[y_W = s]>\tau_{trunc}$, where $\delta = 3\sigma_{cond}k^2/\sqrt{2} + 2^{r+3}\esamp(r)/\tau_{trunc}$. One can check that for the above choice of $\esamp(\cdot)$ and $\sigma_{cond}(\cdot)$, $\delta < \esamp(r - 1)$. By induction on $r$, the distribution output by \textsc{N-List} on input $(\D\vert x_W = s)$ is $10^{r-1}\epsilon$-close to $(\D\vert x_W = s)$ for each $s$ such that $\Pr_{y\sim\D}[y_W = s]>\tau_{trunc}$. By Lemma~\ref{lem:basicsamplingtree} we conclude that in $\mathcal{M}'$, there is some distribution $\overbar{\D}$ for which $\tvd(\D,\overbar{\D})\le 10^{r-1} + \epsilon$, so by Lemma~\ref{lem:scheffe}, \textsc{Select}($\mathcal{S},\D$) outputs a distribution at most $9.1(10^{r-1}\epsilon + \epsilon)\le 10^r\epsilon$-close to $\D$.
\end{proof}

\begin{proof}[Proof of Theorem~\ref{thm:maingeneral}]
Apply Lemma~\ref{lem:runtime} with $\tau_{trunc} = \frac{2\epsilon}{2^{k+1}\cdot 5}$, $Z = n^{O(k^2)}$, $M = n^{O(k)}\cdot 2^{k^2}$, $U = n^{O(k)}$, $S = k + 1$, $T(r) = (nk^2/\epsilon)^{O(k^2)}$, $\esamp(\cdot)$ as defined in Theorem~\ref{thm:mainstronger}, and $\epsilon_{select} = O(\epsilon)$ to get that achieving a $(\esamp,\delta_{edge},\tau_{trunc})$-sample-rich run of \textsc{N-List} on $\D$ with counter $k$ with probability $1 - \delta$ requires $\poly(n,k,1/\epsilon)^{k^2}\ln(1/\delta)$ time and $n^{O(k^2)}\epsilon^{O(k)}\ln(1/\delta)$ samples. By taking $\delta_{edge} = \frac{2\epsilon}{2^{k+1}\cdot 5}$, we conclude by Theorem~\ref{thm:mainstronger} that the output of \textsc{N-List} is $10^k\epsilon$-close to $\D$. Replace $\epsilon$ by $\epsilon/10^k$ and the result follows.
\end{proof}

%!TEX root = fullpaper.tex

\section{Application: Learning Stochastic Decision Trees}
\label{app:concept_classes}

In this section, we prove Theorem~\ref{thm:sdts}. We begin with a warmup:

\begin{example}[Parity and juntas]
	The uniform distribution $\D$ over the positive examples of a $k$-junta $f:\{0,1\}^n\to\{0,1\}$ is a mixture of at most $2^k$ subcubes in $\{0,1\}^n$. Let $I\subseteq[n]$ be the $k$ coordinates that $f$ depends. Every $s\in\{0,1\}^{|I|}$ for which $f(x) = 1$ for all $x$ satisfying $x_I = s$ corresponds to a subcube with mixture weight $1/N$, where $N\le 2^k$ is the number of such $s$ (e.g. when $f$ is a parity, $N = 2^{k-1}$). In the same way we can show that the uniform distribution over the negative examples is also a mixture of at most $2^k$ subcubes.

	So given access to examples $(x,f(x))$ where $x$ is uniformly distributed over $\{0,1\}^n$, we can learn $f$ as follows. With high probability, we can determine $b^*\in\{0,1\}$ for which $f$ outputs $b^*$ on at least $1/3$ of the inputs. As we have shown in this work, our algorithm can then learn some $\D'$ that is $\epsilon$-close to the uniform distribution over $\{x: f(x) = b^*\}$. We then output the hypothesis $g$ given by $g(x) = b^*$ if $\D'(x)\le 1/2^{n+1}$ and $g(x) = 1-b^*$ otherwise. It is easy to see that $g$ is $\epsilon$-accurate.

	This approach can handle mild random classification noise $\gamma$: if we take the distribution over examples $(x,b)$ where $x$ is drawn from the uniform distribution over $\{0,1\}^n$ and $b$ is labeled by $f(x)$ with probability $1- \gamma$ and $1 - f(x)$ with probability $\gamma$, and we condition on $b = 1$, the resulting distribution is still a mixture of subcubes: every $s$ for which $f(x) = 1$ for all $x_I = s$ corresponds to a subcube of weight $(1 - \gamma)/N$, and every other $s$ corresponds to a subcube of weight $\gamma/N$. This mixture is $O(\gamma)$-far from the uniform distribution over $\{x: f(x) = 1\}$, so in the above analysis, our algorithm would give an $(\epsilon + O(\gamma))$-accurate hypothesis.

	Finally, note that if mixing weights $\pi$ and marginals matrix $\m$ realize $\D$, then $\m_i\in\{0,1\}^k$ if $f$ depends on coordinate $k$, and $\m_i = (1/2,...,1/2)$ otherwise, meaning the rows of $\M$ are spanned by all entrywise products of degree less than $\log_2(N)\le k$, rather than $2\log(N)$ as is required in general by \textsc{N-List}. So the algorithm we described above has the same performance as the brute-force algorithm.
\end{example}

The above example serves simply to suggest the naturality of the problem of learning mixtures of subcubes, but because there are strong SQ lower bounds against learning sparse noisy parity \cite{blumsq}, it's inevitable that our algorithm gives no new improvements over such problems. We now describe an application of \textsc{N-List} which does achieve a new result on a classical learning theory problem.

\begin{defn}
	A \emph{stochastic decision tree} $T$ on $n$ bits is a tree with leaves labeled by 0 or 1 and with internal nodes of two types: \emph{decision nodes} and \emph{stochastic nodes}. Each decision node is labeled with some $i\in[n]$ and has two outgoing edges, one labeled with 0 and the other with 1. Each stochastic node $u$ has some number of outgoing edges $uv$ each labeled with a probability $p_{uv}$ such that $\sum_{v} p_{uv} = 1$.

	$T$ defines a joint probability distribution $D_T$ on $\{0,1\}^n\times\{0,1\}$. The $x\in\{0,1\}^n$ is sampled uniformly at random. Then given $x$, the conditional distribution can be sampled from by walking down the tree as follows. At a decision node labeled with $i$, traverse along the edge labeled by $x_i$. At a stochastic node $u$ with outgoing edges labeled $p_{uv}$, pick edge $uv$ with probability $p_{uv}$ and traverse along that edge. When we reach a leaf node, output its value $b$. In this case we say that $x$ \emph{evaluates to} $b$ along this path.

	If $T$ has $m$ decision nodes and some stochastic nodes $u$ each with some outdegree $d_u$, then $T$ has $m + \sum_u (d_u - 1)$ leaves.
\end{defn}

\begin{lem}
	For any $k$-leaf stochastic decision tree $T$ on $n$ bits, the distribution of $(x,b)\sim D_T$ conditioned on $b = 1$ is a mixture of $k$ subcubes.\label{lem:sdts}
\end{lem}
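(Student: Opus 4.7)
The plan is to decompose $D_T\vert_{b=1}$ as an explicit mixture indexed by the root-to-leaf paths ending at leaves labeled $1$. First I enumerate the leaves of $T$ labeled $1$, call them $\ell_1,\ldots,\ell_m$ with $m\le k$. For each $\ell_j$, I look at the unique root-to-$\ell_j$ path $P_j$ and split its edges into two groups: the decision edges, which impose a constraint $x_{S_j}=s_j$ for some $S_j\subseteq[n]$ and $s_j\in\{0,1\}^{|S_j|}$; and the stochastic edges, whose probabilities multiply to some $q_j$. On coordinates outside $S_j$, no constraint is imposed by $P_j$.

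Next I view the sampling process $(x,b)\sim D_T$ as the composition of two independent sources of randomness: the uniform draw of $x$, and the i.i.d.\ choices made at each stochastic node. Under this decomposition, the event ``the walk terminates at $\ell_j$'' is exactly the conjunction of the $x$-event $\{x_{S_j}=s_j\}$, which has probability $2^{-|S_j|}$, and the independent stochastic-node event whose probability is $q_j$. Hence the total probability of reaching $\ell_j$ is $p_j = q_j/2^{|S_j|}$, and more importantly, conditional on reaching $\ell_j$ the coordinates $x_{[n]\setminus S_j}$ remain uniformly distributed, since neither of the two independent events we conditioned on involves them. Therefore the conditional distribution of $x$ given the walk ends at $\ell_j$ is the uniform distribution on the subcube with center $\mu^j$ defined by $\mu^j_i=s_{j,i}$ for $i\in S_j$ and $\mu^j_i=1/2$ otherwise.

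Finally, conditioning on $b=1$ is precisely conditioning on the disjoint union of the events ``walk ends at $\ell_j$'' for $j=1,\ldots,m$, so by the law of total probability
\[
  D_T\bigl(\,\cdot\,\mid b=1\bigr) \;=\; \sum_{j=1}^{m} \pi^j\cdot U_{j}, \qquad \pi^j = \frac{p_j}{\sum_{i=1}^m p_i},
\]
where $U_j$ is the uniform distribution on the subcube determined by $(S_j,s_j)$. This exhibits $D_T\vert_{b=1}$ as a mixture of $m\le k$ subcubes with centers $\mu^1,\ldots,\mu^m\in\{0,1/2,1\}^n$ and mixing weights $\pi^1,\ldots,\pi^m$, completing the proof. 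There is no substantive obstacle; the one point that requires care is cleanly separating the two independent sources of randomness so that the uniformity on unconstrained coordinates of $x$ is manifestly preserved after conditioning on reaching a particular leaf.
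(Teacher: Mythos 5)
Your proof is correct and follows essentially the same approach as the paper's: decompose $D_T\vert_{b=1}$ by root-to-leaf paths ending at $1$-labeled leaves, identify each path's decision constraints as a subcube, and take the mixing weight proportional to (product of stochastic edge probabilities)$/2^{(\text{number of decision nodes on the path})}$. You spell out more explicitly the independence between the uniform draw of $x$ and the stochastic-node choices, which the paper leaves implicit, but the decomposition and the resulting mixture are the same.
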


\begin{proof}
	Consider any path $p$ in $T$ from the root to a leaf labeled with 1. If along this path there are $m$ decision nodes corresponding to some variables $i_1,...,i_m\in[n]$ and with outgoing edges $b_1,...,b_m$, then any $x\in\{0,1\}^n$ from the subcube corresponding to the conjunction $(x_{i_1} = b_1)\wedge\cdots\wedge(x_{i_m} = b_m)$ evaluates to 1 along this path with probability equal to the product $\mu_p$ of the edge weights along this path which emanate from stochastic nodes. So the distribution of $(x,b)\sim D_T$ conditioned on $b = 1$ is a mixture of $k$ such subcubes, where the $p$-th subcube has mixture weight proportional to $\mu_p/2^{d_p}$, where $d_p$ is the number of decision nodes along path $p$.
\end{proof}

The following immediately implies Theorem~\ref{thm:sdts}.

\begin{lem}
	Let $T$ be any $k$-leaf stochastic decision tree corresponding to a joint probability distribution $D_T$ on $\{0,1\}^n\times\{0,1\}$. Given access to samples from $D_T$, $D_T$ can be learned to within total variation distance $\epsilon$ with probability at least $1 - \delta$ in time $O_{k,s}(n^{O(s + \log k)} (1/\epsilon)^{O(1)} \log 1/\delta)$ and with sample complexity $O_{k,s}((\log n/\epsilon)^{O(1)} \log 1/\delta)$\label{lem:pre_sdts}
\end{lem}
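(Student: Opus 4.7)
The plan is to view $D_T$, which lives on $\{0,1\}^n\times\{0,1\}$, as a distribution on the enlarged hypercube $\{0,1\}^{n+1}$ (with the extra coordinate recording the label $b$), and to argue that $D_T$ is itself a mixture of $k$ subcubes on $\{0,1\}^{n+1}$. The lemma then follows by a single black-box application of Theorem~\ref{thm:main012}.

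Generalizing the argument of Lemma~\ref{lem:sdts}, enumerate \emph{every} root-to-leaf path $p_1,\ldots,p_k$ of $T$ (not only those ending at a $1$-leaf). Each path $p$ is specified by (i) a subcube $S_p\subseteq\{0,1\}^n$ cut out by the decision constraints along $p$, (ii) a leaf label $b_p\in\{0,1\}$, and (iii) a probability $\mu_p$ equal to the product of the edge weights along $p$ emanating from stochastic nodes. A direct computation then gives
\[
D_T(x,b) \;=\; \sum_{p=1}^{k}\pi_p\cdot U_p(x,b),\qquad \pi_p := \mu_p/2^{|D_p|},
\]
where $|D_p|$ is the number of decision nodes on $p$ and $U_p$ is the uniform distribution over the $(n{+}1)$-dimensional subcube $\{(x,b_p):x\in S_p\}$. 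The weights $\pi_p$ are nonnegative and sum to $1$, because for every $x\in\{0,1\}^n$ one has $\sum_{p:\,x\in S_p}\mu_p=1$ (the walk down $T$ terminates at some leaf with probability one), so summing first over $x$ yields $\sum_p\mu_p\cdot|S_p|/2^n = 1$. Hence $D_T$ is literally a mixture of $k$ subcubes on $\{0,1\}^{n+1}$, with centers in $\{0,1/2,1\}^{n+1}$.

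We then feed samples $(x,b)\sim D_T$ directly to the algorithm of Theorem~\ref{thm:main012}, instantiated with dimension $n+1$, $k$ components, accuracy $\epsilon$, and failure probability $\delta$. This outputs a distribution $\overbar{D}$ with $d_{TV}(D_T,\overbar{D})\le\epsilon$ with probability $\ge 1-\delta$, in time $O_k((n{+}1)^{O(\log k)}(1/\epsilon)^{O(1)}\log(1/\delta))$ and with $O_k((\log(n{+}1)/\epsilon)^{O(1)}\log(1/\delta))$ samples. Since $n^{O(\log k)}\le n^{O(s+\log k)}$, this matches the stated bounds. There is essentially no obstacle: the heavy lifting is all inside Theorem~\ref{thm:main012}, and the parameter $s$ plays no active role in the reduction; it merely absorbs the slack in the lemma's stated runtime (its presence in Theorem~\ref{thm:sdts} comes from the surrounding applications, e.g.\ learning from positive examples only, rather than from this lemma).
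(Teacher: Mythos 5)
Your proof is correct, and it takes a genuinely different (and in fact cleaner) route than the paper. The paper first estimates the marginal $\pi(b^*)=\Pr[b=b^*]$, then invokes Lemma~\ref{lem:sdts} to view the \emph{conditional} distribution $(\D\mid b=b^*)$ as a mixture of $k$ subcubes on $\{0,1\}^n$, learns that via Theorem~\ref{thm:main012}, and finally reassembles an approximation to $D_T$ from the learned conditional and the estimated marginal — a reconstruction step that is slightly fiddly (and in the paper's write-up the formula for $D'(x,1-b^*)$ has a typo). You instead observe that $D_T$ is \emph{itself} a mixture of $k$ subcubes on $\{0,1\}^{n+1}$ by enumerating all root-to-leaf paths (not only the $1$-leaves), with the $(n{+}1)$-st coordinate of the $p$-th center being the deterministic leaf label $b_p$, weight $\pi_p=\mu_p/2^{|D_p|}$, and you correctly verify both the density identity and $\sum_p\pi_p=1$. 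This folds the entire reduction into a single black-box call to Theorem~\ref{thm:main012} on dimension $n+1$, sidestepping the recombination step entirely. Your observation is also strictly stronger than the paper's Lemma~\ref{lem:sdts}: conditioning on the last coordinate recovers it via Lemma~\ref{lem:condition}. The only reason the paper does not phrase it this way is that it states and uses Lemma~\ref{lem:sdts} (the conditional version) independently for the positive-examples-only variant of the problem; for proving this particular lemma your approach is preferable.
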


\begin{proof}
	Denote by $A$ our algorithm for learning mixtures of subcubes, given by Theorem~\ref{thm:main012}. To learn $D_T$, we can first estimate $\pi(b) := \Pr_{(x,b')\sim D_T}[b' = b]\ge 1/3$ for each $b\in\{0,1\}$ to within accuracy $\epsilon$ and confidence $1 - \alpha/3$ by drawing $O((1/\epsilon)^2\log(1/\alpha))$ samples, by Fact~\ref{fact:2}. We pick $b^*\in\{0,1\}$ for which $\Pr_{(x,b)\sim D_T}[b = b^*]\ge 1/3$ and denote our estimate for $\pi(b^*)$ by $\pi'(b^*)$.

	By Lemma~\ref{lem:sdts}, $\D$ is a mixture of $k$ subcubes, so we can run $A$ with error parameter $\epsilon/2$ and confidence parameter $\alpha/3$ on $\D$ and get a distribution $\D'$ for which $\tvd(\D,\D')\le\epsilon/4$. Our algorithm outputs the distribution $D'$ given by $D'(x,b^*) = \pi'(b^*)\cdot\D(x)$ and $D'(x,1-b^*) = 1 - \pi'(b^*)\cdot\D(x)$.

	Now because $D_T(x,b^*) = \pi_{b^*}\cdot\D(x)$, we have that $$\sum_{x\in\{0,1\}^n}|D_T(x,b^*) - D'(x,b^*)| \le \frac{\epsilon}{2}\cdot\sum_{x\in\{0,1\}^n}\D(x) + \pi'(b^*)\cdot\sum_{x\in\{0,1\}^n}|\D(x) - \D'(x)| \le \frac{\epsilon}{2} + 2\cdot\frac{\epsilon}{4} = \epsilon.$$ We thus also get that $\sum_{x\in\{0,1\}^n}|D_T(x,1-b^*) - D'(x,1-b^*)| = \sum_{x\in\{0,1\}^n}|D_T(x,b^*) - D'(x,b^*)|\le\epsilon{}$, so $\tvd(D_T,D')\le\epsilon$ as desired.
\end{proof}

\end{document}